\documentclass[11pt]{article}
\usepackage{ifthen}
\usepackage{mdwlist}
\usepackage{subcaption}
\usepackage{amsmath,amssymb,amsfonts,amsthm,mathtools,turnstile}
\usepackage{bm}
\usepackage[dvipsnames]{xcolor}
\usepackage[nottoc,notlof,notlot,notindex]{tocbibind}
\usepackage{hyperref}

\definecolor{darkmodelink}{RGB}{156,220,254}
\usepackage{enumitem}
\usepackage{graphicx}
\usepackage{newpxtext} \usepackage{textcomp} \usepackage[varg,bigdelims]{newpxmath}
\usepackage[scr=rsfso]{mathalfa}\usepackage{xspace}
\usepackage{verbatim}
\usepackage{algorithm}
\usepackage{algpseudocode}
\usepackage{bbold}
\usepackage[letterpaper,margin=1in]{geometry}
\usepackage{cleveref}

\usepackage{thm-restate}
\usepackage{latexsym}
\usepackage{epsfig}
\usepackage[colorinlistoftodos]{todonotes}
\usepackage[normalem]{ulem}

\usepackage{microtype,needspace}
\definecolor{darkcolor1}{RGB}{15, 43, 109}
\definecolor{darkcolor2}{RGB}{90, 15, 28}

\hypersetup{
  colorlinks=true,
  urlcolor=blue,
  linktocpage=true,
  linkcolor=darkcolor2,
  citecolor=darkcolor1,
}

\def\withcolors{0}
\def\withnotes{0}

\newcommand{\email}[1]{\href{mailto:#1}{\textcolor{black}{\nolinkurl{#1}}}}

\makeatletter
\@ifpackageloaded{siamproceedings}{\newsiamremark{remark}{Remark}
  \newsiamthm{fact}{Fact}
  \newsiamthm{claim}{Claim}
  \newsiamthm{problem}{Problem}
  \providecommand{\qed}{\hfill\proofbox}
}{\newtheorem{theorem}{Theorem}
  \newtheorem*{theorem*}{Theorem}

  \newtheorem{problem}{Problem}
  
  \newtheorem{proposition}[theorem]{Proposition}
  
  \newtheorem{remark}[theorem]{Remark}
  \newtheorem{fact}[theorem]{Fact}
  \newtheorem{lemma}[theorem]{Lemma}
  
  \newtheorem{corollary}[theorem]{Corollary}

  \newtheorem{definition}[theorem]{Definition}

  \numberwithin{theorem}{section}
  \numberwithin{assumption}{section}
  \numberwithin{tournament}{section}
  \numberwithin{problem}{section}
  \numberwithin{nontheorem}{section}
  \numberwithin{proposition}{section}
  \numberwithin{observation}{section}
  \numberwithin{remark}{section}
  \numberwithin{fact}{section}
  \numberwithin{lemma}{section}
  \numberwithin{claim}{section}
  \numberwithin{corollary}{section}
  \numberwithin{case}{section}
  \numberwithin{dfn}{section}
  \numberwithin{definition}{section}
  \numberwithin{question}{section}
  \numberwithin{openquestion}{section}
  \numberwithin{res}{section}
  \numberwithin{algorithm}{section}

  \makeatletter

}
\makeatother

\newcommand{\sodared}[1]{{#1}}
\newenvironment{sodarevision}{\begingroup}{\endgroup}

\crefname{algorithm}{Algorithm}{Algorithms}
\Crefname{algorithm}{Algorithm}{Algorithms}

\floatstyle{plain}
\restylefloat{algorithm}
\newsavebox{\algbox}
\newenvironment{boxedalgorithm}[1]{\begin{algorithm}[!t]\refstepcounter{algorithm}\centering
  \setlength{\fboxsep}{10pt}\begin{lrbox}{\algbox}\begin{minipage}{0.88\linewidth}\textbf{Algorithm \thealgorithm} (#1).\par\medskip
}{\end{minipage}\end{lrbox}\noindent\fbox{\usebox{\algbox}}\end{algorithm}}

\newcommand{\alginput}{\par\noindent\textbf{Input:}\ }
\newcommand{\algoutput}{\par\noindent\textbf{Output:}\ }
\newcommand{\algoperations}{\par\noindent\textbf{Operations:}\par}
\newenvironment{algsteps}{\begin{enumerate}[leftmargin=2.25em,itemsep=0.25em,topsep=0.35em]
}{\end{enumerate}
}

\ifnum\withcolors=1
      \newcommand{\gcolor}[1]{{\color{red}#1}}

  \newenvironment{codexblock}{\begingroup\color{MidnightBlue}}{\endgroup}
\else

  \newcommand{\gcolor}[1]{{#1}}

\fi

\ifnum\withnotes=1
  \newcommand{\gnote}[1]{\par\gcolor{\textbf{G: }\sf #1}} \newcommand{\gfootnote}[1]{\footnote{{\bf \gcolor{Gautam}}: {#1}}}

\else
  \newcommand{\gnote}[1]{}
  \newcommand{\gfootnote}[1]{}

\fi
\newcommand{\ignore}[1]{\leavevmode\unskip} 

\newcommand{\paren}[1]{(#1)}
\newcommand{\Paren}[1]{\left(#1\right)}

\newcommand{\Brac}[1]{\left[#1\right]}

\newcommand{\floor}[1]{\lfloor#1\rfloor}

\newcommand{\ceil}[1]{\lceil#1\rceil}

\newcommand{\abs}[1]{\lvert#1\rvert}
\newcommand{\Abs}[1]{\left\lvert#1\right\rvert}

\newcommand{\set}[1]{\{#1\}}
\newcommand{\Set}[1]{\left\{#1\right\}}

\newcommand{\norm}[1]{\lVert#1\rVert}
\newcommand{\Norm}[1]{\left\lVert#1\right\rVert}

\newcommand{\normt}[1]{\norm{#1}_2}

\newcommand{\normo}[1]{\norm{#1}_1}

\newcommand{\iprod}[1]{\langle#1\rangle}

\newcommand{\normsch}[1]{\vert\kern-0.25ex\vert\kern-0.25ex\vert #1
  \vert\kern-0.25ex\vert\kern-0.25ex\vert}
\newcommand{\Normsch}[1]{\left\vert\kern-0.25ex\left\vert\kern-0.25ex\left\vert #1
  \right\vert\kern-0.25ex\right\vert\kern-0.25ex\right\vert}
\newcommand{\Esymb}{\mathbb{E}}
\newcommand{\pE}{\tilde{\mathbb{E}}}
\newcommand{\Psymb}{\mathbb{P}}

\DeclareMathOperator*{\E}{\Esymb}

\DeclareMathOperator*{\ProbOp}{\Psymb}
\renewcommand{\Pr}{\ProbOp}

\newcommand{\suchthat}{\;\middle\vert\;}
\newcommand{\tensor}{\otimes}

\newcommand{\mper}{\,.}
\newcommand{\mcom}{\,,}
\newcommand\bdot\bullet

\DeclareMathOperator{\poly}{poly}

\DeclareMathOperator{\supp}{supp}

\DeclareMathOperator{\sign}{sign}

\newcommand{\N}{\mathbb N}
\newcommand{\R}{\mathbb R}

\newcommand{\cA}{\mathcal A}
\newcommand{\cB}{\mathcal B}

\newcommand{\cL}{\mathcal L}

\newcommand{\cR}{\mathcal R}
\newcommand{\cS}{\mathcal S}

\newcommand{\cY}{\mathcal Y}

\renewcommand{\leq}{\leqslant}
\renewcommand{\le}{\leqslant}
\renewcommand{\geq}{\geqslant}
\renewcommand{\ge}{\geqslant}
\let\epsilon=\varepsilon
\newcommand\MYcurrentlabel{xxx}
\newcommand{\MYstore}[2]{\global\expandafter \def \csname MYMEMORY #1 \endcsname{#2}}
\newcommand{\MYload}[1]{\csname MYMEMORY #1 \endcsname }
\newcommand{\MYnewlabel}[1]{\renewcommand\MYcurrentlabel{#1}\MYoldlabel{#1}}
\newcommand{\MYdummylabel}[1]{}
\newcommand{\torestate}[1]{\let\MYoldlabel\label \let\label\MYnewlabel #1\MYstore{\MYcurrentlabel}{#1}\let\label\MYoldlabel }
\newcommand{\restatetheorem}[1]{\let\MYoldlabel\label
  \let\label\MYdummylabel
  \begin{theorem*}[Restatement of \Cref{#1}]
    \MYload{#1}
  \end{theorem*}
  \let\label\MYoldlabel
}
\newcommand{\restatelemma}[1]{\let\MYoldlabel\label
  \let\label\MYdummylabel
  \begin{lemma*}[Restatement of \cref{#1}]
    \MYload{#1}
  \end{lemma*}
  \let\label\MYoldlabel
}
\newcommand{\restateprop}[1]{\let\MYoldlabel\label
  \let\label\MYdummylabel
  \begin{proposition*}[Restatement of \cref{#1}]
    \MYload{#1}
  \end{proposition*}
  \let\label\MYoldlabel
}
\newcommand{\restatefact}[1]{\let\MYoldlabel\label
  \let\label\MYdummylabel
  \begin{fact*}[Restatement of \cref{#1}]
    \MYload{#1}
  \end{fact*}
  \let\label\MYoldlabel
}
\newcommand{\restate}[1]{\let\MYoldlabel\label
  \let\label\MYdummylabel
  \MYload{#1}
  \let\label\MYoldlabel
}

\newcommand{\eps}{\epsilon}

\allowdisplaybreaks
\newcommand*{\normtv}[1]{\norm{#1}_{\mathrm{TV}}}

\newcommand*{\normf}[1]{\Norm{#1}_{\mathrm{F}}
}
\newcommand*{\normop}[1]{\norm{#1}_{\mathrm{op}}
}

\DeclareMathOperator{\diag}{diag}

\newcommand*{\transpose}[1]{{#1}{}^{\mkern-1.5mu\mathsf{T}}}

\makeatletter
\newcommand{\distas}[1]{\mathbin{\overset{#1}{\kern\z@\sim}}}\newsavebox{\mybox}\newsavebox{\mysim}
\newcommand{\distras}[1]{\savebox{\mybox}{\hbox{\kern3pt$\scriptstyle#1$\kern3pt}}\savebox{\mysim}{\hbox{$\sim$}}\mathbin{\overset{#1}{\kern\z@\resizebox{\wd\mybox}{\ht\mysim}{$\sim$}}}}
\makeatother

\newcommand{\RNum}[1]{\uppercase\expandafter{\romannumeral #1\relax}}

\let\originalleft\left
\let\originalright\right
\renewcommand{\left}{\mathopen{}\mathclose\bgroup\originalleft}
\renewcommand{\right}{\aftergroup\egroup\originalright}

\usepackage[strict]{changepage}

\usepackage{framed}

\definecolor{formalshade}{rgb}{0.95,0.95,1}
\definecolor{darkblue}{rgb}{0.0, 0.0, 0.55}

\newcommand*\tageq{\refstepcounter{equation}\tag{\theequation}}

\DeclareMathOperator{\Res}{Res}
\DeclareMathOperator{\Ext}{Ext}
\DeclareMathOperator{\Proj}{Proj}

\newcommand{\etaabs}{\eta_{\mathrm{abs}}}
\newcommand{\etapriv}{\eta_{\mathrm{priv}}}

\hypersetup{pdftitle={Efficient and Optimal Learning of Discrete Distributions with Person-Level Privacy},pdfauthor={Gautam Kamath, Mahbod Majid, Ankur Moitra, Argyris Mouzakis, Jonathan Ullman}}

\title{Efficient and Optimal Learning of Discrete \\ Distributions with Person-Level Privacy}
\author{Gautam Kamath\thanks{New York University (\email{g@csail.mit.edu}). Supported by an NSERC Discovery Grant and an Ontario Early Researcher Award.}
  \hspace{3em}Mahbod Majid\thanks{Massachusetts Institute of Technology (\email{mahbod@mit.edu}).}
  \hspace{3em}Ankur Moitra\thanks{Massachusetts Institute of Technology (\email{moitra@mit.edu}). Supported in part by DARPA expMath, a Microsoft Trustworthy AI Grant, NSF-CCF 2430381, an ONR grant, and a David and Lucile Packard Fellowship. }
  \\[0.8em]
  Argyris Mouzakis\thanks{University of Texas at Austin and University of Waterloo (\email{argymouz@cs.utexas.edu}). Supported by an Ontario Early Researcher Award and a David R. Cheriton Graduate Scholarship.}
  \hspace{3em}Jonathan Ullman\thanks{Northeastern University (\email{jullman@ccs.neu.edu}). Supported by NSF CNS-2232692 and CNS-2247484.}}
\date{}

\begin{document}
  \maketitle

  \begin{abstract}
    We study learning a discrete distribution under person-level differential privacy in the batched setting. We are given $n$ users, each contributing a batch of $m$ i.i.d.\ samples from an unknown distribution $\mu$ over $[k]$, and seek an estimate of $\mu$ up to total-variation error $\alpha$. We give polynomial-time algorithms under both pure and approximate differential privacy. Under pure $(\varepsilon,0)$-DP, we achieve
    \begin{equation*}
      n = \widetilde O\!\left(\frac{k}{\alpha^2m}+\frac{k}{\alpha\sqrt m\,\varepsilon}+\frac{k}{\varepsilon}\right),
    \end{equation*}
    the first polynomial-time pure-DP guarantee for this problem that matches the information-theoretic optimum up to logarithmic factors. Under approximate $(\varepsilon,\delta)$-DP, we achieve
    \begin{equation*}
      n = \widetilde O\!\left(\frac{k}{\alpha^2m}+\frac{k}{\alpha\sqrt m\,\varepsilon}+k+\frac{\log(1/\delta)}{\varepsilon}\right).
    \end{equation*}
    At $\delta=2^{-\Theta(k)}$, the bound remains $\smash[t]{\widetilde O}(k)$ for fixed $\alpha,\varepsilon,m$, and failure probability. Our algorithms use a new degree-$6$ sum-of-squares estimator for learning from untrusted batches. Its key ingredient is a trace-bounded diagonal majorizer for the difference between the empirical covariance of the repaired batch averages and the multinomial covariance predicted by their mean. We prove that clean data admit this certificate through semidefinite-programming duality and a new application of the stronger symmetric form of Grothendieck's inequality due to Friedland and Lim. The certificate preserves the mean-dependent variance of multinomial batch averages, yielding the near-optimal robust rate $O(\eta\smash[t]{\sqrt{\log(1/\eta)/m}})$ and providing the score required by the privacy reduction. We also prove a statistical query lower bound that rules out a uniformly smaller order of error under fixed polynomial SQ resource bounds.
  \end{abstract}

  \newpage
  \tableofcontents
  \newpage

  \section{Introduction}
Federated learning~\cite{McMahanMRHY17,KairouzMABBBBCCCDEEEGGGGGGHHHHHHJJJKKKKKLLMMNOPQRRRSSSSSTVWXXYYYZ21} is motivated by the tension between two important goals in modern machine learning:
\begin{enumerate}
  \item to train machine learning models from large and diverse data sets, and
  \item while doing so, to preserve the data privacy, security, and ownership of the data, since data is increasingly distributed across a network of devices.
\end{enumerate}
Motivated by such considerations, Qiao and Valiant~\cite{QiaoV18} introduced the following problem which they called \emph{learning from untrusted batches}. There exists an unknown discrete distribution $\mu$ on $[k]$ and data is drawn in $n$ batches, each of size $m$. When all batches are drawn i.i.d.\ from $\mu$ and can be used without restriction, the empirical distribution achieves total-variation error $\alpha$, provided that the number of batches satisfies $$n = \Theta\left(\frac{k + \log(1/\beta)}{\alpha^2 m}\right),$$
where $\beta$ is the failure probability. This follows from the classical bound for $nm$ independent samples~\cite{Canonne20a}.

However, in federated learning, we can rarely assume that all the batches are sampled i.i.d.\ from $\mu$, and instead must consider the possibility that some are corrupted, possibly in an adversarial way.  To address this question, Qiao and Valiant studied the problem with the twist that an $\eta$-fraction of the batches are adversarially corrupted. Again, it is helpful to compare to a naive baseline. If we ignore the batch structure and treat the data as a single dataset of $nm$ samples, then it is easy to see that we cannot learn the distribution to accuracy better than $\eta$ in total variation distance, no matter how much data we have.  Qiao and Valiant demonstrated that when we exploit the batched structure of the data, it is possible to do much better. In particular, they showed that it is information-theoretically possible to achieve error ${O}(\eta/\sqrt{m})$, and that this is the best possible.  Batch information makes it possible to robustly distinguish distributions at much finer scale.  Without batch information, we cannot distinguish two distributions $\mu$ and $\tilde{\mu}$ that are $\gamma$-close in total variation distance in the face of an adversary, because they can convert one distribution to the other by moving just a $\gamma$ fraction of the probability mass.  But when data is batched and the adversary's corruptions must respect the batch structure, the total variation distance between $\mu^{\otimes m}$ and $\tilde{\mu}^{\otimes m}$ grows at least as $\Omega(\min\{1,\gamma \sqrt{m}\})$, meaning the adversary has to corrupt a much larger fraction of batches to make these distributions indistinguishable.

Qiao and Valiant gave two algorithms that solve the learning from untrusted batches problem, running in time exponential in $k$ or $m$, respectively. Jain and Orlitsky~\cite{JainO20a} gave a semidefinite-programming algorithm based on Grothendieck's inequality that roughly matches the information-theoretic bounds and runs in polynomial time. In independent and concurrent work, Chen, Li, and Moitra~\cite{ChenLM20a} gave similar algorithms and explored extensions to more complex settings, including improving sample complexity when there are shape constraints on $\mu$.

Qiao and Valiant motivated their model by the need to learn from batches supplied by potentially malicious agents. Privacy, communication, and computational costs are also central concerns in federated learning. Keeping raw data local does not by itself ensure privacy: shared gradients can still leak private training data~\cite{ZhuLH19}.

In this work, we revisit the learning from batches problem using the lens of differential privacy~\cite{DworkMNS06}.  There is an unknown discrete distribution $\mu$ on $[k]$ and data is drawn in $n$ batches of size $m$.  As is common in federated learning, we think of each batch of data as belonging to one person (e.g.\ a batch consists of the data on one person's device), and want to design a \emph{batch-level} differentially private\footnote{This notion of privacy is also frequently called \emph{person-level} or \emph{user-level} differential privacy in the literature.} algorithm that, informally, hides the information of any one batch of data. Formally, two datasets are adjacent if they differ in one person's entire batch. Batch-level privacy protects a person's entire contribution, whereas sample-level privacy protects only one data item. Differential privacy bounds how much the output distribution can change when one person's data is replaced.

We pursue three goals. First, we require the strong guarantee of pure batch-level differential privacy, formally $(\eps,0)$-DP.\footnote{For readers unfamiliar with the formalism of differential privacy, $\eps > 0$ quantifies privacy loss, and pure differential privacy is the most stringent standard variant.} Second, we seek the information-theoretically optimal user complexity. Prior work established that, up to logarithmic factors, the optimum for learning to total-variation error $\alpha$ is~\cite[Theorem~43]{GhaziKKMMZ23b}
\begin{equation*}
  n = \widetilde O\left(\frac{k}{\alpha^2 m} + \frac{k}{\alpha \sqrt{m} \varepsilon} + \frac{k}{\varepsilon}\right).
\end{equation*}
Our final goal is to attain this bound in polynomial time.

Several groups have considered private learning from batches, but none give an algorithm satisfying all three goals.  Liu et al.~\cite{LiuSYKR20} introduced the problem, giving an efficient algorithm achieving the weaker notion of \emph{approximate} differential privacy, or, formally, $(\varepsilon, \delta)$-differential privacy. One can strengthen the privacy guarantee of their algorithm to $(\varepsilon, 0)$-differentially private by appealing to basic composition of pure differential privacy instead of advanced composition of approximate differential privacy~\cite{DworkRV10}, but doing so would increase the required number of batches by a $\mathrm{poly}(k)$ factor and would not achieve optimal sample complexity.  Ghazi, Kamath, Kumar, Manurangsi, Meka, and Zhang~\cite{GhaziKKMMZ23b} presented a pure differentially private algorithm with the information-theoretically optimal sample complexity, but their stated construction applies private hypothesis selection over an exponentially large cover and is not polynomial-time.

We use the efficient robustness-to-privacy reduction of Hopkins, Kamath, Majid, and Narayanan~\cite{HopkinsKMN23}. Our contribution is to identify the robust certificate that makes this reduction work for discrete batches. It is a single degree-$6$ sum-of-squares (SoS) certificate that represents discarded batches, mean feasibility, and the mean-dependent covariance of multinomial batch averages. This certificate simultaneously yields a robust estimator and the score used for privacy.

\Needspace{6\baselineskip}
\subsection{Results}
Our certificate is satisfied by clean data at the right scale, forces every feasible pseudo-mean to be close to the truth, remains stable when batches are discarded, and admits polynomial-time optimization. It yields the robust estimator in \Cref{thm:main-robust-intro} and, through the reduction of Hopkins et al.~\cite{HopkinsKMN23}, the private algorithms in \Cref{thm:main-pure-private-intro}.

\begin{theorem}[Robust SoS Estimation from Discrete Batches]
  \label{thm:main-robust-intro}
  Let $\mu \in \Delta^{k-1}$, $\beta \in (0,1)$, and $m \in \mathbb{N}$.
  Let $\eta>0$ be smaller than a sufficiently small absolute constant.
  Then there exists a polynomial-time algorithm, obtained by solving a degree-$6$ SoS relaxation, that takes in $n$ user batches of size $m$ forming an $\eta$-corruption of independent clean batches
  \begin{equation*}
    B_i^\star = (U_{i,1}^\star,\ldots,U_{i,m}^\star),
    \qquad
    U_{i,1}^\star,\ldots,U_{i,m}^\star \stackrel{\mathrm{iid}}{\sim} \mu,
  \end{equation*}
  and outputs an estimate $\hat{\mu}$ such that, with probability at least $1-\beta$,
  \begin{equation*}
    \normtv{\hat{\mu} - \mu} \le O\Paren{\eta\sqrt{\log(1/\eta)/m}},
  \end{equation*}
  provided that
  \begin{equation*}
    n = \Omega\Paren{\frac{k + \log(1/\beta)}{\alpha^2 m}}
    \qquad \text{for } \alpha = \Theta\Paren{\eta\sqrt{\log(1/\eta)/m}} \mper
  \end{equation*}
\end{theorem}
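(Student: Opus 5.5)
We will follow the standard \emph{robust-mean-via-SoS} template: write down a degree-$6$ polynomial system whose feasible solutions describe a ``repaired'' dataset together with a certificate of bounded covariance, show that the clean data witnesses feasibility, and show that any pseudo-expectation satisfying the system has pseudo-mean close to $\mu$. Concretely, let $X_i=\frac1m\sum_j e_{U_{i,j}}\in\Delta^{k-1}$ be the batch averages of the observed batches. The program has variables $w_i\in\{0,1\}$ (with $w_i^2=w_i$ and $\sum_i w_i\ge(1-\eta)n$), repaired points $X_i'$ with $w_i(X_i'-X_i)=0$, pseudo-mean $\nu=\frac1n\sum_i X_i'$ constrained to the simplex, and a diagonal majorizer $D=\diag(d)$ with $d\ge 0$ and $\sum_\ell d_\ell\le C\log(1/\eta)/m$. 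The key constraint is the matrix SoS inequality
\begin{equation*}
  \frac1n\sum_i (X_i'-\nu)(X_i'-\nu)^{\mathsf T} - \frac1m\bigl(\diag(\nu)-\nu\nu^{\mathsf T}\bigr) \preceq D,
\end{equation*}
imposed in the form $\iprod{v,(\cdot)v}\le \iprod{v,Dv}$ for all $v$ with SoS degree $6$. We solve the relaxation by the ellipsoid method/SDP and output $\hat\mu=\pE[\nu]$ (projected onto the simplex).

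\textbf{Step 1 (clean data is feasible).} Set $w$ to the indicator of the uncorrupted batches and $X_i'=X_i^\star$. We must show that with $n=\Omega((k+\log(1/\beta))/(\alpha^2m))$ the empirical covariance of the clean averages, restricted to any $(1-\eta)$-fraction and compared with $\frac1m(\diag(\nu)-\nu\nu^{\mathsf T})$, is dominated by a diagonal $D$ of trace $O(\log(1/\eta)/m)$. The quadratic form $\max_{v\in\{\pm1\}^k}$ of the deviation is the natural test (TV error is an $\ell_1$ quantity, dual to $\ell_\infty$), so we bound the cut norm of $M=\widehat\Sigma-\Sigma_\nu$ by concentration plus a union bound over $2^k$ sign vectors and over deleted subsets (using sub-Gaussian tails of $\iprod{v,X_i^\star-\mu}$ with variance proxy $\iprod{v,\Sigma_\mu v}\le 1/m$, which yields the $\log(1/\eta)$ factor for $\eta$-tails). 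Then SDP duality converts ``$\max_{v\in\{\pm1\}^k}v^{\mathsf T}Mv\le t$'' into existence of a diagonal $D\succeq M$ with $\operatorname{tr}D\le K\,t$, where the loss $K$ is a Grothendieck-type constant; since $M$ need not be PSD we invoke the symmetric Grothendieck inequality of Friedland--Lim to obtain a constant $K$. This is the step I expect to be the main obstacle: it is where the mean-dependent term $\Sigma_\nu$ (a variable, not $\Sigma_\mu$) must be handled, so we need the cut-norm bound uniformly over $\nu$ near $\mu$, and the symmetric (indefinite) Grothendieck form is needed precisely because subtracting $\Sigma_\nu$ destroys positivity.

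\textbf{Step 2 (identifiability in SoS).} For any sign vector $v$ (and in SoS, the pseudo-expectation of the relevant polynomial), we decompose $\iprod{v,\nu-\mu}$ over the $\ge(1-2\eta)n$ indices where both the program's $w$ and the clean indicator are $1$ (where $X_i'=X_i^\star$) and the $\le 2\eta n$ bad indices. The contribution of bad indices is bounded by SoS Cauchy--Schwarz/Hölder by $\sqrt{2\eta}$ times the square root of the covariance term of both the program solution and the clean solution, i.e.\ by $\sqrt\eta\cdot\sqrt{\iprod{v,(\Sigma_\nu/m + D)v}}$-type quantities; to get the sharp $\eta\sqrt{\log(1/\eta)/m}$ rather than $\sqrt{\eta/m}$ we use the standard refinement that the covariance \emph{deviation} from the model (not the covariance itself) is what is controlled, so the bad-set contribution is $O(\eta)\cdot$ the tail scale $\sqrt{\log(1/\eta)/m}$ plus a term absorbing $\iprod{v,\nu-\mu}^2$. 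Degree $6$ accommodates the products $w_i\cdot(X_i'-\nu)$ squared against $v$-quadratics and the $\nu\nu^{\mathsf T}$ term. Because $D$ is diagonal with trace bound and $v\in\{\pm1\}^k$, $\iprod{v,Dv}=\operatorname{tr}D$, which is what lets the TV (i.e.\ $\max_v$) error be controlled without a $k$-dependent loss.

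\textbf{Step 3 (rounding).} Taking $\pE$ and using $\pE[p]^2\le\pE[p^2]$, we get $|\iprod{v,\pE\nu-\mu}|\le O(\eta\sqrt{\log(1/\eta)/m})$ simultaneously for all $v\in\{\pm1\}^k$, hence $\normtv{\hat\mu-\mu}=\frac12\max_v\iprod{v,\hat\mu-\mu}$ is bounded as claimed; projecting onto the simplex at most doubles the error. Polynomial time follows from the degree-$6$ SoS SDP having $\poly(n,k)$ size with bit-complexity handled in the usual way.
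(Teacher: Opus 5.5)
\textbf{The trace allowance is wrong.} Your architecture matches the paper's: a selector with repaired points, a trace-bounded diagonal majorizer for $\widehat\Sigma-\Sigma_\nu$, Friedland--Lim plus SDP duality for clean feasibility, testing on sign vectors (where $\iprod{v,Dv}=\operatorname{tr}D$), and $\ell_1$ duality for rounding. However, you impose $\sum_\ell d_\ell\le C\log(1/\eta)/m$, while the paper uses $\tau_{\mathrm{cov}}=C\eta\log(1/\eta)/m$. With your allowance the program is not sound at the target rate. Take $k=2$, $\mu=(1/2,1/2)$, $m\gtrsim\log(1/\eta)/\eta$, and let the adversary move $\eta n$ batch averages by $\delta\asymp\sqrt{\log(1/\eta)/(\eta m)}$ along $e_1-e_2$. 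The excess of the empirical variance over the multinomial variance is then only $\asymp\eta\delta^2\asymp\log(1/\eta)/m$. So the corrupted data itself ($w\equiv1$, $X_i'=X_i$) is a feasible integral solution, yet its mean is at TV distance $\asymp\eta\delta\asymp\sqrt{\eta\log(1/\eta)/m}$ from $\mu$. The same loss shows up in your Step 2. Weighted Cauchy--Schwarz bounds the bad-index contribution by $\sqrt{2\eta}$ times the square root of the repaired-minus-clean second-moment gap, and the only upper bound on that gap is the trace allowance. This yields $\sqrt{\eta\log(1/\eta)/m}$, not $\eta\sqrt{\log(1/\eta)/m}$. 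Once you fix the allowance, Step 1 must prove a much sharper fact: the clean covariance gap $\Delta^\star_{\mathrm{cov}}$ has cut values $O(\eta\log(1/\eta)/m)$. This is the paper's $\eta$-good event. Under $n\gtrsim(k+\log(1/\beta))/(\eta^2\log(1/\eta))$, full-sample second moments concentrate at accuracy $\eta\log(1/\eta)/m$ in every direction. In addition, any $2\eta n$ clean points carry second-moment mass at most $O(\eta\log(1/\eta)/m)\cdot n$.

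\textbf{The mean-dependence difficulty is in the wrong step.} In the feasibility witness, $\nu$ is pinned to $\bar\mu=\frac1n\sum_iX_i^\star$, so no uniformity over $\nu$ near $\mu$ is needed. The bound $\abs{V(\bar\mu,s)-V(\mu,s)}\le\abs{\iprod{\bar\mu-\mu,s}}/m$ is already negligible. The fact that $\Sigma_\nu$ is a variable matters in soundness instead. There you upper-bound the repaired second moment through $\Sigma_\nu$ but lower-bound the clean-kept second moment through $\Sigma_\mu$. This leaves a term $O(\eta)\,(v^\top\Sigma_\nu v-v^\top\Sigma_\mu v)$, which is a priori of order $\eta/m$; bounding it crudely gives only $O(\sqrt{\eta/m})$ error. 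You need an explicit SoS inequality bounding it by $\frac14\iprod{\nu-\mu,v}^2+O(\eta^2/m^2)$. It can then be absorbed into the left-hand side, together with the $O(\eta)\iprod{\nu-\mu,v}^2$ term that comes from recentering the repaired points at $\mu$. The paper does this with \Cref{lem:sos-cov} and Young's inequality. In your sign-vector formulation it is simpler: $\sum_j\nu_j=1$ and $v_j^2=1$ give $v^\top\Sigma_\nu v-v^\top\Sigma_\mu v=-\frac1m\iprod{\nu-\mu,v}^2-\frac2m\iprod{\mu,v}\iprod{\nu-\mu,v}$, and Young's inequality handles the linear part at degree $2$. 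Your Step 2 only alludes to ``a term absorbing $\iprod{v,\nu-\mu}^2$''. It should state and prove this bound, since this absorption is exactly what separates the $\eta\sqrt{\log(1/\eta)/m}$ rate from the $\sqrt{\eta/m}$ rate.
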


\begin{remark}[Robustness to per-batch distribution shift]
  Our robust estimator also tolerates mild heterogeneity across the clean batches, with the modified covariance allowance in \Cref{cor:robust-heterogeneous}. Namely, in addition to the $\eta$-corruption model in \Cref{thm:main-robust-intro}, one may allow the samples in each clean batch $i$ to be drawn i.i.d.\ from a distribution $\mu_i$ satisfying $\normtv{\mu_i-\mu} \le \omega$. In that case the guarantee worsens by only an additive $O(\omega)$ term, giving
  \begin{equation*}
    \normtv{\hat{\mu}-\mu} \le O\Paren{\eta\sqrt{\log(1/\eta)/m} + \omega} \mper
  \end{equation*}
\end{remark}

\medskip
The diagonal-majorizer certificate defines a low-sensitivity, quasi-convex score (\Cref{def:score-function}), so the efficient robustness-to-privacy reduction gives the private guarantees stated next.

\begin{sodarevision}
  \begin{theorem}[Main Result, Private Estimation from Discrete Batches]
    \label{thm:main-pure-private-intro}
    \label{thm:main-approx-private-intro}
    Let $\mu \in \Delta^{k-1}$ be a discrete distribution over $[k]$.
    Let $\alpha,\beta\in(0,1)$, $\eps>0$, and $m\in\N$. Suppose that the batches are independent across users and each of the $n$ users contributes a batch
    \begin{equation*}
      B_i=(U_{i,1},\ldots,U_{i,m}),
      \qquad
      U_{i,1},\ldots,U_{i,m}\stackrel{\mathrm{iid}}{\sim}\mu.
    \end{equation*}
    Then the following guarantees hold.
    \begin{enumerate}[label=(\roman*)]
      \item There is a polynomial-time $(\eps,0)$-DP algorithm that outputs $\hat\mu$ satisfying $\normtv{\hat\mu-\mu}\le\alpha$ with probability at least $1-\beta$, provided
        \begin{equation*}
          n=\widetilde O\Paren{\frac{k+\log(1/\beta)}{\alpha^2m}+\frac{k+\log(1/\beta)}{\alpha\sqrt m\,\eps}+\frac{k+\log(1/\beta)}{\eps}} \mper
        \end{equation*}
      \item For every $\delta\in(0,1)$, there is a polynomial-time $(\eps,\delta)$-DP algorithm with the same accuracy and failure probability, provided
        \begin{equation*}
          n=\widetilde O\Paren{k+\frac{k+\log(1/\beta)}{\alpha^2m}+\frac{k+\log(1/\beta)}{\alpha\sqrt m\,\eps}+\frac{\log(1/\delta)+\log(1/\beta)}{\eps}} \mper
        \end{equation*}
    \end{enumerate}
  \end{theorem}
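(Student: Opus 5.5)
We derive \Cref{thm:main-pure-private-intro} from the robust certificate behind \Cref{thm:main-robust-intro} via the robustness-to-privacy reduction of Hopkins, Kamath, Majid, and Narayanan~\cite{HopkinsKMN23}.

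\textbf{Score.} For a dataset $D$ of $n$ batches and a candidate $\nu$ in a fine net of $\Delta^{k-1}$, let $S(\nu;D)$ be the smallest number of batches that must be discarded or replaced so that the degree-$6$ SoS program is feasible with pseudo-mean $\nu$. The program has variables for repaired batch averages, mean feasibility, and a trace-bounded diagonal majorizer of the difference between the empirical covariance of the averages and the multinomial covariance $(\diag(\nu)-\nu\nu^\top)/m$. This is the score of \Cref{def:score-function}. We need three properties.
\begin{enumerate}
\item \emph{Sensitivity one.} Changing one batch changes $S$ by at most $1$, because the program is stable under discarding batches.
\item \emph{Quasi-convexity.} The sublevel sets $\{\nu : S(\nu;D)\le t\}$ are convex, since they are projections of a convex SoS feasibility region. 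This gives polynomial-time membership and volume sampling.
\item \emph{Utility.} On clean data, with $n \gtrsim (k+\log(1/\beta))/(\alpha^2 m)$, we have $S(\mu;D)=0$ with probability $1-\beta$. Clean data satisfy the certificate by the SDP-duality and Friedland--Lim symmetric Grothendieck argument. Conversely, the identifiability part of \Cref{thm:main-robust-intro} shows that any $\nu$ with $S(\nu;D)\le \eta n$ satisfies $\normtv{\nu-\mu}\le C\eta\sqrt{\log(1/\eta)/m}$.
\end{enumerate}

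\textbf{Pure DP.} Run the exponential mechanism (in the form of \cite{HopkinsKMN23}) with density proportional to $\exp(-\eps S(\nu;D)/2)$ over the simplex, sampled in polynomial time using quasi-convexity. Volume ratios control the error. Let $r(\eta)=\eta\sqrt{\log(1/\eta)/m}$. The good set $\{S\le \eta n\}$ contains a TV-ball around $\mu$ of radius roughly $r(\eta)$, obtained by perturbing the certificate. The bad set is contained in the whole simplex. The mechanism therefore succeeds once
\[
\eps\,\eta n \gtrsim k\log\bigl(1/r(\eta)\bigr)+\log(1/\beta).
\]
Choose $\eta$ so that $r(\eta)\asymp\alpha$, which gives $\eta\approx\alpha\sqrt m$ up to logarithms, and require $\eta\le\eta_0$. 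This yields:
\begin{itemize}
\item the term $k/(\alpha\sqrt m\,\eps)$ when $\alpha\sqrt m\le\eta_0$;
\item the term $k/\eps$ when $\alpha\sqrt m$ exceeds $\eta_0$, where we cap $\eta$ at a constant;
\item the statistical term from the requirement in \Cref{thm:main-robust-intro}.
\end{itemize}
The $\log(1/\beta)$ factors are carried throughout.

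\textbf{Approximate DP.} Replace the volume argument with the propose-test-release or Laplace-on-score variant of \cite{HopkinsKMN23}. First privately test that $\min_\nu S(\nu;D)$ is at distance at least $\log(1/\delta)/\eps$ from infeasibility, which costs $\log(1/\delta)/\eps$ batches. Then release a sample restricted to the low-score region. The remaining $\widetilde O(k)$ term comes from needing enough batches that the good region is nondegenerate at constant $\eta$.

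\textbf{Main obstacle.} The hardest step is the lower bound on the good set's volume. We must show that every $\nu$ in a ball of radius $\approx r(\eta)$ around $\mu$, in a norm compatible with the simplex parametrization, has score at most $\eta n$. The approach is to take the clean certificate for $\mu$ and argue that moving the mean to $\nu$ can be absorbed by reweighting and discarding an $\eta$ fraction of batches. The mean-dependent covariance must stay within the trace-bounded diagonal majorizer, and this uses the $\sqrt{\log(1/\eta)}$ slack. The simplex boundary, where coordinates of $\mu$ are tiny, will be handled by mixing $\nu$ slightly with uniform.
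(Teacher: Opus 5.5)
Your skeleton matches the paper's: a sensitivity-one, quasi-convex SoS-certificate score fed into the robustness-to-privacy reduction (\Cref{thm:pure_dp_reduction}), with $\eta\approx\alpha\sqrt m$ capped at a constant. However, the step you flag as the main obstacle fails as proposed, because your score differs from \Cref{def:certifiable-mean}. You require the pseudo-mean to \emph{equal} $\nu$, whereas the paper only requires $\normo{\pE\mu'-\nu}\le\alpha+\tau T$. With exact equality, the good set need not contain the ball you need.

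Already at $T=0$ the constraints force $\pE\mu'=\bar\mu$, so $S(\mu;D)=0$ is false. More importantly, the constraints $w_iy_i'=w_iy_i$, $w_i^2=w_i$ and $y'_{ij}\ge0$ give $\bar\mu_j-\pE\mu'_j\le\pE\bigl[\tfrac1n\sum_i(1-w_i)y_{ij}\bigr]$. By \Cref{lem:sos-selector-lemma}, this is at most a constant times the sum of the $\eta n$ largest values $y_{ij}$, divided by $n$. So discarding $\eta n$ batches can remove only that much mass from symbol $j$. Take uniform $\mu$ with $k=m\gg\log(1/\eta)$. With high probability this capacity is $O(\eta\log(1/\eta)/m)$, which is below both $\mu_j=1/m$ and $r(\eta)=\eta\sqrt{\log(1/\eta)/m}$. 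Hence points of your ball that lower a single coordinate have score above $\eta n$, even after shrinking the ball by any constant. Mixing with the uniform distribution does nothing here, and you also leave the covariance-majorizer constraint after the shift unverified.

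The missing idea is to build the $\alpha$-slack into the score. Then the clean point-mass certificate with pseudo-mean $\bar\mu$ certifies the whole $\ell_1$ ball of radius $\alpha$ about $\bar\mu$, with no perturbation argument (\Cref{lem:vol}, item~1). The same slack yields the low-score neighborhood the reduction demands (\Cref{lem:eff_low_score}), which an exact-mean score would also have to establish. Soundness then places every score-$\eta'n$ point within $\alpha+O(\eta'\sqrt{\log(1/\eta')/m})$ of $\bar\mu$, which bounds the volume ratio directly.

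Your approximate-DP step has a second gap: you cannot replace the volume argument by a private test of $\min_\nu S$. \Cref{thm:approx-dp-reduction} is private only under a \emph{worst-case} bound on $\log(V_{\eta^*}(\cY)/V_{0.8\eta^*}(\cY))$. This bound must hold for every dataset with a point of score at most $0.7\eta^*n$, where there is no true mean and soundness is unavailable. The paper proves an $\ell_1$-diameter bound for the score-$\eta^*n$ set (\Cref{lem:approx_dp_vol_ub}). It applies the one-dimensional obstruction of \Cref{lem:approx-dp-1d} along every sign vector $\sigma\in\{\pm1\}^k$, where the majorizer caps the directional variance by $1/m+\tau_{\mathrm{cov}}$, and then uses exact $\ell_1$ duality. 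Combined with the $\alpha$-ball of \Cref{lem:approx_dp_vol_lb}, this gives $O(k\log(1+1/(\alpha\sqrt m)))$, which is absorbed by $k/(\alpha\sqrt m\,\eps)$. That is why $\log(1/\delta)/\eps$ enters only additively.

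Finally, consider pure DP with $\eps>1$ and large $\alpha\sqrt m$. Capping $\eta$ at a constant still requires $n\gtrsim k$ for the clean certificate, which exceeds the stated bound. The paper handles this regime by splitting each batch into $\ceil{\eps}$ sub-batches and using group privacy, and by using the Laplace mechanism when $\eps>m$.
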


  The $\smash[t]{\widetilde O}$ notation suppresses logarithmic factors in $1/\alpha$; Remark~\ref{rem:explicit-private-logs} gives explicit bounds, including the square-root logarithm in the approximate-DP privacy term. Our pure-DP rate matches known lower bounds up to logarithmic factors~\cite{LiuSYKR20,GhaziKKMMZ23b}; \Cref{thm:dp-lower-bound} gives the dependence on the failure probability explicitly.

  Prior polynomial-time approximate-DP results cover different user regimes. Suppressing logarithmic factors other than those displayed below, the relevant privacy terms and user-count requirements are
  \begin{equation*}
    \begin{aligned}
      \text{Liu et al.~\cite{LiuSYKR20}:}\quad
      &\frac{k\sqrt{\log(1/\delta)}}{\alpha\eps\sqrt m},
      & n&\gtrsim\frac{\sqrt{k\log(1/\delta)}}{\eps},\\
      \text{(few users) Esfandiari et al.~\cite{NarayananME22}:}\quad
      &\frac{k\log(k/\delta)\sqrt{\log(1/\delta)\log(kn/\beta)}}{\alpha\eps\sqrt m},
      & n&\gtrsim\frac{\log(1/(\beta\delta))}{\eps}.
    \end{aligned}
  \end{equation*}
  Esfandiari et al.'s few-user guarantee removes the dimension-dependent user floor required by Liu et al. Their all-regime result combines this guarantee with earlier algorithms for the high-user regime; its larger logarithmic factor is the cost of reaching the few-user regime. For negligible approximate-privacy parameters $\delta=2^{-k^{\Omega(1)}}$ and fixed $\alpha,\eps,m$, the privacy terms of both Liu et al. and Esfandiari et al. are superlinear in $k$. Our bound has additive dependence on $\log(1/\delta)$ and includes a $\smash[t]{\widetilde O}(k)$ user term for clean-data feasibility. At $\delta=2^{-\Theta(k)}$, our bound is $\smash[t]{\widetilde O}(k)$ for fixed $\alpha,\eps,m$, and failure probability.

  \paragraph{Dependence on the batch size.}
  For constant failure probability and $0<\eps\le1$, when $\alpha\sqrt m\le1$, the term $k/(\alpha\sqrt m\,\eps)$ absorbs $k/\eps$, so the pure-DP theorem already gives the approximate-DP rate up to logarithmic factors. When $\alpha\sqrt m\ge1$, we cap the internal corruption parameter at a sufficiently small constant. The robust error is then at most $\alpha$, giving a pure-DP rate of $\smash[t]{\widetilde O}(k/\eps)$.

  The additional $k$ in the approximate-DP bound is inherent in using our robust estimator at its stated accuracy. For any fixed sufficiently small corruption fraction $\eta$, \Cref{thm:main-robust-intro} guarantees error $O(\eta\smash[t]{\sqrt{\log(1/\eta)/m}})=O(1/\smash[t]{\sqrt m})$, which requires $\Omega(k)$ users even without corruptions or privacy. Indeed, the nonprivate sample bound $\Omega(k/(\alpha^2m))$ becomes $\Omega(k)$ at this accuracy. This requirement is already implicit in the robust theorem through $\alpha=\Theta(\eta\smash[t]{\sqrt{\log(1/\eta)/m}})$, and is absorbed by $k/\eps$ for pure DP. It appears explicitly for approximate DP because the other terms can be smaller than $k$.

  For larger batches, the target error $\alpha$ can be much larger than $1/\sqrt m$, and approximate DP can use fewer than $k$ users~\cite{NarayananME22}. Exploiting this regime through robustness-to-privacy requires a robust guarantee adapted to the coarser target accuracy. Remark~\ref{rem:approx-batch-splitting} describes how splitting batches reduces the user requirement while retaining additive dependence on $\log(1/\delta)$, and how to combine our estimator with that of Esfandiari et al.\ using only the public parameters.

  \paragraph{On optimality under efficient computation.}
  Our polynomial-time robust algorithm achieves error $O(\eta\smash[t]{\sqrt{\log(1/\eta)/m}})$, while the information-theoretic optimum is $O(\eta/\sqrt m)$~\cite{QiaoV18}. Such an extra $\smash[t]{\sqrt{\log(1/\eta)}}$ factor has precedent in statistical query (SQ) lower bounds for robust estimation of binary product distributions~\cite{DiakonikolasKS22} and Gaussians~\cite{DiakonikolasKS17}. In \Cref{sec:sq-lower-bound}, we prove a lower bound directly for categorical batches. For every fixed integer $t\ge1$, we give a query--tolerance tradeoff at error a constant $c_t>0$ times
  \begin{equation*}
    \eta\min\Set{1,\sqrt{\frac{\log(1/\eta)}{m}}} \mper
  \end{equation*}
  For all sufficiently large $k$ depending on $t$ and every batch size, achieving this error requires either a query of tolerance less than $2k^{-(t+1)/8}$ or $\exp(\Omega(\smash[t]{\sqrt k}))$ queries. Consequently, no SQ estimator with fixed polynomial resource bounds in $k,m,1/\eta$ achieves uniformly smaller order of error. This rate matches the dependence on $m$ and $\eta$ of our robust estimator when $m\ge\log(1/\eta)$. For smaller batches, the contaminated empirical distribution attains error $O(\eta)$ with sufficiently many samples. Our construction uses moment matching as in~\cite{DiakonikolasKS22}, with a discrete polynomial correction and a balanced subset of the original alphabet. If this SQ lower bound captures the computational threshold for robust estimation, then the privacy-to-robustness connection of Georgiev and Hopkins~\cite{GeorgievH22} predicts a corresponding threshold for private estimation with high success probability. For $0<\eps\le1$ and $\beta=\exp(-\Theta(k))$, group privacy gives robustness to an $\eta=\Theta(k/(n\eps))$ fraction of corrupted batches, provided this fraction is sufficiently small. When $\alpha\sqrt m$ is small and $m\ge\log(1/\eta)$, substituting this value into the robust threshold predicts a privacy term of order
  \begin{equation*}
    \frac{k\sqrt{\log(1/(\alpha\sqrt m))}}{\alpha\sqrt m\,\eps}.
  \end{equation*}
  This agrees with the explicit privacy term in Remark~\ref{rem:explicit-private-logs}, including the square-root logarithm. Georgiev and Hopkins derive an analogous conditional square-root logarithmic gap for Gaussian mean estimation~\cite[full version, Corollary~4.2]{GeorgievH22}. This interpretation also applies to approximate DP for $\delta\le\eps\exp(-Ck)$ with a sufficiently large constant $C$.
\end{sodarevision}

\begin{remark}[Robustness of the private estimators to corruptions]
  For $0<\eps\le1$, both private estimators tolerate an $\eta$-fraction of fully corrupted user batches, under their respective sample bounds in \Cref{thm:main-pure-private-intro}, provided
  \begin{equation*}
    \alpha\ge C\eta\sqrt{\log(1/\eta)/m}
  \end{equation*}
  for a sufficiently large absolute constant $C$ and sufficiently small $\eta$. Both algorithms also tolerate clean-user distributions $\mu_i$ satisfying $\normtv{\mu_i-\mu}\le\omega$, provided the algorithm is given $\omega$ and the accuracy floor includes an additive $C\omega$ term. For approximate DP, a sufficient sample bound includes an additional $\smash[t]{\widetilde O}(k\omega/(\alpha\eps))$ term, which is absorbed by the stated bound when $\alpha\sqrt m\le1$ or when $\eps$ is constant. We prove these claims in \Cref{cor:private-robustness}.
\end{remark}

\subsection{Techniques}
\label{sec:techniques}

\begin{sodarevision}
  We first describe the subset-variance condition that identifies the unknown distribution, then encode it by a polynomial-size semidefinite constraint. We define the score used by the privacy reduction as the minimum number of batches that must be discarded to satisfy this constraint.

  \paragraph{Subset-variance control.}
  Let $y_i\in\Delta^{k-1}$ denote the empirical distribution of the $i$th batch. Fix a subset $S\subseteq[k]$ and let $s\in\Set{0,1}^k$ be its indicator. Then $\iprod{y_i,s}$ is simply the fraction of the $i$th batch that falls in $S$, and has variance $\mu(S)(1-\mu(S))/m$. We write this variance as $V(\nu,s)\coloneq\iprod{\nu,s}(1-\iprod{\nu,s})/m$ when the underlying distribution is $\nu$.

  \begin{samepage}
    A statistically natural goal is to find a set $T\subseteq[n]$ with $\abs{T}\ge(1-\eta)n$ such that, for $\bar\mu_T\coloneq\abs{T}^{-1}\sum_{i\in T}y_i$,
    \begin{equation}
      \label{eq:intro-subset-variance}
      \left|
      \frac1{\abs{T}}\sum_{i\in T}\iprod{y_i-\bar\mu_T,s}^2
      -V(\bar\mu_T,s)
      \right|
      \le O\Paren{\frac{\eta\log(1/\eta)}{m}}
      \qquad\text{for every }s\in\Set{0,1}^k.
    \end{equation}
    This condition says that, in every subset direction, the empirical variance should agree with the multinomial variance predicted by the empirical mean $\bar\mu_T$. 
  \end{samepage}

  Representing this condition poses a problem: there are exponentially many subset directions, and listing all of them does not give the succinct, low-sensitivity certificate needed for privacy. Earlier robust algorithms do not directly provide such a certificate. Jain and Orlitsky~\cite{JainO20a} obtain the right rate through filtering and randomized rounding. Their guarantee does not directly supply the efficiently computable, sensitivity-one, quasi-convex score needed by the privacy reduction. Chen, Li, and Moitra~\cite{ChenLM20a} give an SoS formulation. Bounding only second moments in their argument gives $O(\smash[t]{\sqrt{\eta/m}})$ error, polynomially worse in $\eta$ than our rate. Their higher-moment formulation gives error $O(\eta^{1-1/t}\smash[t]{\sqrt{t/m}})$ using degree-$O(t)$ constraints; obtaining $O(\eta\smash[t]{\sqrt{\log(1/\eta)/m}})$ by this approach uses $t=\Theta(\log(1/\eta))$ and hence quasipolynomial resources.

  \paragraph{Compressing the subset constraints.}
  The subset-variance condition \eqref{eq:intro-subset-variance} contains one inequality for each of the $2^k$ subsets of $[k]$, so it cannot be inserted directly into a polynomial-size program. We replace this family by one semidefinite condition: the difference between the empirical covariance and the multinomial covariance predicted by the empirical mean must lie below a diagonal matrix with small trace. We call this diagonal matrix a majorizer.

  Two facts justify the replacement. First, clean data admit such a majorizer; proving this uses semidefinite duality and the stronger symmetric Grothendieck inequality of Friedland and Lim~\cite{FriedlandL20}. Second, testing any feasible majorizer on sign vectors recovers the required upper variance bound in every subset direction. Thus the majorizer is a polynomial-size encoding of the one-sided subset control we need. 

  \paragraph{Robust SoS program.}
  We define the relaxation in \Cref{subsec:sos-based-robust-algo}. Its constraints have the following schematic interpretation: retain almost all batches, repair the discarded ones, let $\mu'$ be the mean of the repaired data, and require the repaired covariance to lie below the multinomial covariance plus a low-trace diagonal matrix. We write $\E_i z_i\coloneq n^{-1}\sum_{i=1}^n z_i$ and search over selectors $w_i$, repaired batch averages $y_i'$, a candidate mean $\mu'$, a diagonal vector $\lambda$, and a matrix $M$ satisfying
  \begin{equation*}
    \begin{aligned}
      &w_i^2=w_i,\qquad w_i y_i'=w_i y_i,\qquad \sum_{i=1}^n w_i\ge(1-\eta)n,\\
      &y_i'\in\Delta^{k-1}\text{ for all }i,\qquad \mu'\in\Delta^{k-1},\qquad \mu'=\E_i y_i',\\
      &\widehat\Sigma'+MM^\top=\Sigma(\mu')+\operatorname{Diag}(\lambda),
      \qquad \sum_{j=1}^k\lambda_j\le\tau_{\mathrm{cov}},
    \end{aligned}
  \end{equation*}
  where
  \begin{equation*}
    \widehat\Sigma'\coloneq\E_i(y_i'-\mu')\transpose{(y_i'-\mu')},
    \qquad
    \Sigma(\nu)\coloneq\frac1m\Paren{\diag(\nu)-\nu\transpose{\nu}},
    \qquad
    \tau_{\mathrm{cov}}\asymp\frac{\eta\log(1/\eta)}{m}.
  \end{equation*}
  We denote this system by $\mathcal A_\eta$. In the first line, we select the retained batches and enforce consistency with the observed data; in the second, we require the repaired batch averages and their mean to be distributions. For every subset indicator $s$, we have $s^\top\Sigma(\nu)s=V(\nu,s)$, so the covariance line encodes the upper bound in \eqref{eq:intro-subset-variance} for the repaired data. Here $M$ is an auxiliary slack variable: writing the positive-semidefinite slack as $MM^\top$ encodes the inequality $\smash[t]{\widehat\Sigma}'-\Sigma(\mu')\preccurlyeq\operatorname{Diag}(\lambda)$ as a polynomial equality.

  \begin{samepage}
    The program repairs discarded batches rather than removing and renormalizing them. On the clean witness, a discarded batch is filled in by its latent clean value; in the soundness proof, the influence of all repaired batches is controlled through their second moment. This represents the upper bound in \eqref{eq:intro-subset-variance} by a polynomial system over all $n$ batches.
  \end{samepage}

  We solve the degree-$6$ relaxation and use its pseudo-mean $\pE\mu'$ as the estimate, with an $\ell_1$ projection onto the simplex to account for numerical error. There is no filtering or randomized rounding afterward. It remains to justify the covariance constraint: clean data must admit a low-trace majorizer, and every feasible majorizer must recover the required subset bounds. We prove these statements in this order.

  \paragraph{Clean-data feasibility.}
  We assign the selector, repaired batches, and candidate mean directly from the latent clean data. Let $r_i=\mathbb{1}[y_i=y_i^*]$ and $\bar\mu=\E_i y_i^*$, and set
  \begin{equation*}
    w_i=r_i,\qquad y_i'=y_i^*,\qquad \mu'=\bar\mu.
  \end{equation*}
  These assignments satisfy the selector and simplex constraints by inspection. It remains to construct $\lambda$. Define
  \begin{equation*}
    \widehat\Sigma^\star\coloneq\E_i(y_i^*-\bar\mu)\transpose{(y_i^*-\bar\mu)},
    \qquad
    \Delta_{\mathrm{cov}}^\star\coloneq\widehat\Sigma^\star-\Sigma(\bar\mu).
  \end{equation*}
  If $\Delta_{\mathrm{cov}}^\star$ has a diagonal majorizer with trace at most $\tau_{\mathrm{cov}}$, then we choose $MM^\top=\operatorname{Diag}(\lambda)-\Delta_{\mathrm{cov}}^\star$ and the clean witness is complete.

  The only probabilistic input is the $\eta$-good clean-sample event. Under the sample-size assumption in \Cref{thm:main-robust-intro}, \Cref{cor:eta-good} guarantees this event with the required probability. For feasibility, we use only the following two consequences:
  \begin{equation*}
    \sup_{s\in\Set{0,1}^k}\Abs{s^\top\Delta_{\mathrm{cov}}^\star s}
    \le O\Paren{\frac{\eta\log(1/\eta)}{m}},
    \qquad
    \Delta_{\mathrm{cov}}^\star\bm{1}=0.
  \end{equation*}
  We prove the subset-discrepancy bound in \Cref{lem:clean-cov-gap-subset}, and the identity $\Delta_{\mathrm{cov}}^\star\bm1=0$ in \Cref{lem:clean-cov-gap-nullspace}. From this point on, the feasibility argument is deterministic.

  \begin{samepage}
    \paragraph{From subset bounds to a diagonal majorizer.}
    Let $\Delta$ be any symmetric matrix satisfying $\Delta\bm{1}=0$. Combining diagonal-majorizer duality from \Cref{cor:diagonal-majorizer-sdp-duality}, the Friedland--Lim inequality from \Cref{cor:friedland-lim-correlation}~\cite{FriedlandL20}, and the substitution $\sigma=2s-\bm{1}$ gives
    \begin{equation*}
      \begin{aligned}
        \min_{\lambda\in\R^k:\ \operatorname{Diag}(\lambda)\succcurlyeq\Delta}
        \sum_{j=1}^k\lambda_j
        &=
        \max_{\substack{X\succcurlyeq0\\\diag(X)=\bm{1}}}\iprod{\Delta,X}\\
        &\le
        \max_{\substack{X\succcurlyeq0\\\diag(X)=\bm{1}}}\Abs{\iprod{\Delta,X}}\\
        &\le C_{\mathrm{FL}}
        \max_{\sigma\in\Set{\pm1}^k}\Abs{\sigma^\top\Delta\sigma}
        =4C_{\mathrm{FL}}
        \max_{s\in\Set{0,1}^k}\Abs{s^\top\Delta s},
      \end{aligned}
    \end{equation*}
  \end{samepage}
  where $C_{\mathrm{FL}}$ is a universal constant. The final equality follows by expanding $\sigma=2s-\bm{1}$ and using $\Delta\bm{1}=0$. This explains why the nullspace condition matters: it turns every sign-vector quadratic form into four times a subset quadratic form. Friedland--Lim passes from sign vectors to the correlation SDP, and duality turns the SDP bound into the diagonal vector $\lambda$ used by the program. See \Cref{lem:subset-to-sign-nullspace,cor:friedland-lim-correlation,cor:diagonal-majorizer-sdp-duality} for the details.

  Applying this chain to $\Delta_{\mathrm{cov}}^\star$ produces a majorizer of trace $O(\eta\log(1/\eta)/m)$. Choosing the constant in $\tau_{\mathrm{cov}}$ sufficiently large therefore completes the clean witness. We next recover the one-sided subset bound needed for identifiability.

  \paragraph{Recovering the subset bounds.}
  To recover the subset bounds, let $\Delta_{\mathrm{cov}}'\coloneq\smash[t]{\widehat\Sigma}'-\Sigma(\mu')$. Both terms annihilate $\bm{1}$, so $\Delta_{\mathrm{cov}}'\bm{1}=0$. For any fixed $s\in\Set{0,1}^k$, set $\sigma=2s-\bm{1}$. Testing the majorizer on $\sigma$ gives
  \begin{equation*}
    4s^\top\Delta_{\mathrm{cov}}'s
    =\sigma^\top\Delta_{\mathrm{cov}}'\sigma
    \le\sigma^\top\operatorname{Diag}(\lambda)\sigma
    =\sum_{j=1}^k\lambda_j.
  \end{equation*}
  Thus testing the majorizer on the sign vector associated with $s$ recovers the desired bound in that subset direction. Since $s^\top\Sigma(\mu')s=V(\mu',s)$, we obtain the following degree-$6$ SoS upper bound on the repaired directional variance:
  \begin{equation*}
    \mathcal A_\eta\sststile{6}{}
    \Set{
      \E_i\iprod{y_i'-\mu',s}^2
      \le V(\mu',s)+O\Paren{\frac{\eta\log(1/\eta)}{m}}
    }.
  \end{equation*}
  This is the upper bound in \eqref{eq:intro-subset-variance}, applied to the repaired data. The two-sided subset-discrepancy bound in \Cref{lem:clean-cov-gap-subset} was needed to construct the majorizer. For identifiability, the program uses only the repaired-variance upper bound, together with the clean-retained second-moment lower bound in \eqref{eq:missing-clean-second-moment}.

  \paragraph{Low-degree SoS identifiability proof.}
  We argue in one dimension. Fix a subset direction $s\in\Set{0,1}^k$. Readers unfamiliar with SoS may read $\mathcal A_\eta\sststile{6}{}\Set{\cdots}$ as ``the program certifies the inequality inside the braces.'' The degree only records that the certificate remains valid under the degree-$6$ pseudoexpectation; statistically, this is the usual directional robust-mean argument.

  On the $\eta$-good clean-sample event, at most a $2\eta$ fraction of the batches fail to be both clean and retained. Weighted Cauchy--Schwarz reduces their contribution to a comparison between the repaired and clean-retained second moments. Since $\mu'=\E_i y_i'$, expanding $y_i'-\mu=(y_i'-\mu')+(\mu'-\mu)$ makes the cross term vanish. The majorizer then upper-bounds the repaired second moment through $V(\mu',s)$, while the clean-sample event lower-bounds the clean-retained second moment through $V(\mu,s)$, giving
  \begin{equation}
    \label{eq:intro-directional-identifiability}
    \mathcal A_\eta\sststile{6}{}
    \Set{
      \iprod{\mu'-\mu,s}^2
      \le O\Paren{\frac{\eta^2\log(1/\eta)}{m}}
      +4\eta\Paren{V(\mu',s)-V(\mu,s)}
      +4\eta\iprod{\mu'-\mu,s}^2
    }.
  \end{equation}
  We give the full calculation in \Cref{subsec:proof-of-main-sos-lemma}. In \eqref{eq:intro-directional-identifiability}, the $O(\eta^2\log(1/\eta)/m)$ term has the desired size, and the term $4\eta\iprod{\mu'-\mu,s}^2$ can be absorbed into the left-hand side. It remains to control the variance difference $V(\mu',s)-V(\mu,s)$.

  To control the variance difference, we keep the covariance predicted by $\mu'$ rather than use a universal upper bound. For every distribution $\nu$, we have $s^\top\Sigma(\nu)s=V(\nu,s)=\iprod{\nu,s}(1-\iprod{\nu,s})/m$. It follows that $V(\mu',s)-V(\mu,s)$ is controlled by the directional mean error $\iprod{\mu'-\mu,s}$. By contrast, using only $V(\mu',s)\le 1/(4m)$ would leave an $O(\eta/m)$ term and yield only $O(\smash[t]{\sqrt{\eta/m}})$ directional error. Bounding second moments in the formulation of Chen, Li, and Moitra~\cite{ChenLM20a} encounters this limitation; their higher-moment constraints improve the rate.

  To absorb the term $4\eta(V(\mu',s)-V(\mu,s))$ in \eqref{eq:intro-directional-identifiability}, \Cref{lem:sos-cov} bounds the square of the variance difference by the squared mean error, as shown in the first line below. Apply Young's inequality from \Cref{fact:sos-young-inequality} with $a=V(\mu',s)-V(\mu,s)$, $b=4\eta$, and $\lambda=m^2/16$. This gives the second line, and substituting the first line gives the third:
  \begin{equation*}
    \mathcal A_\eta\sststile{6}{}
    \Set{
      \begin{aligned}
        \Paren{V(\mu',s)-V(\mu,s)}^2
        &\le\frac{4}{m^2}\iprod{\mu'-\mu,s}^2,\\
        4\eta\Paren{V(\mu',s)-V(\mu,s)}
        &\le\frac{m^2}{16}\Paren{V(\mu',s)-V(\mu,s)}^2
        +\frac{64\eta^2}{m^2}\\
        &\le\frac14\iprod{\mu'-\mu,s}^2+\frac{64\eta^2}{m^2}.
      \end{aligned}
    }
  \end{equation*}
  Substituting into \eqref{eq:intro-directional-identifiability} leaves $(1/4+4\eta)\iprod{\mu'-\mu,s}^2$ on the right. For sufficiently small $\eta$ this term is absorbed, while $64\eta^2/m^2=O(\eta^2\log(1/\eta)/m)$. Hence
  \begin{equation*}
    \mathcal A_\eta\sststile{6}{}
    \Set{
      \iprod{\mu'-\mu,s}^2
      \le O\Paren{\frac{\eta^2\log(1/\eta)}{m}}
    }.
  \end{equation*}

  Finally, pseudoexpectation Cauchy--Schwarz and total-variation duality turn this directional certificate into $\normtv{\hat{\mu}-\mu}\le O(\eta\smash[t]{\sqrt{\log(1/\eta)/m}})$. Here $s$ is chosen, only in the analysis, to indicate the coordinates on which $\hat\mu-\mu$ is positive.

  \paragraph{From the robust certificate to a private score.}
  Clean data admit a polynomial-size certificate, and every certificate with a small discard budget is accurate. For privacy, we retain these two facts and allow the discard budget $T$ to vary. The score of a candidate $\nu$ is the smallest $T$ for which $\mathcal A_\eta$, with the count constraint replaced by $\sum_iw_i\ge n-T$, admits a degree-$6$ pseudoexpectation whose pseudo-mean lies within $\ell_1$-distance $\alpha$ of $\nu$. Thus a low score says that the data admit a covariance-aware certificate near $\nu$ after only a small number of deletions.

  \begin{samepage}
    The score satisfies the conditions of the robustness-to-privacy reduction: changing one batch costs one additional discard, so sensitivity is one; combining certificates gives quasi-convexity; and the polynomial-size semidefinite formulation gives efficient computation. One further ingredient is needed: the low-score sets must have controlled volume.

    \paragraph{$\ell_1$ geometry of low-score sets.}
    To compare the volumes of low-score sets, we use an inner and an outer ball. An exact clean certificate certifies the full $\ell_1$-ball of radius $\alpha$ around its mean. For a sufficiently small absolute constant $\eta^*$, robust soundness places all points of score at most $\eta^*n$ inside the concentric ball of radius $\alpha+O(1/\sqrt m)$ (\Cref{lem:vol}). Since the volume of an $\ell_1$-ball scales as the $k$th power of its radius, these two balls bound the ratio used by the pure-DP reduction of Hopkins, Kamath, Majid, and Narayanan~\cite{HopkinsKMN23}. Part~3 of \Cref{lem:vol} gives a sharper radius bound at each score level.

    Approximate DP requires the analogous comparison at score levels $0.8\eta^*n$ and $\eta^*n$ for an arbitrary dataset, where there may be no true mean to use as a center. We obtain the needed outer ball by adapting the one-dimensional obstruction argument of~\cite{HopkinsKMN23} to sign directions. We use signs because of exact $\ell_1$ duality:
    \begin{equation*}
      \normo{x}=\max_{\sigma\in\Set{\pm1}^k}\iprod{x,\sigma}.
    \end{equation*}
    If the pseudo-means of two certificates with discard budget at most $\eta^*n$ were farther than $O(1/\sqrt m)$ in $\ell_1$, some sign vector would separate them by more than $O(1/\sqrt m)$, contradicting their directional variance bounds. This gives the required $\ell_1$-diameter bound directly, without an $\ell_2$-to-$\ell_1$ conversion and its extra $\smash[t]{\sqrt{k}}$ loss. Adding the $\alpha$-radius candidate neighborhoods gives an intermediate log-volume ratio of $O(k\log(1+C/(\alpha\sqrt m)))$. Keeping the inner radius close to $\alpha$, as in \Cref{lem:approx_dp_vol_lb}, is needed when $\alpha\sqrt m$ is large. The clean-data certificate still requires $n=\smash[t]{\widetilde\Omega}(k)$ in this regime, which accounts for the extra user term in the approximate-DP theorem. We prove the volume comparison in \Cref{cor:approx_dp_vol_ratio}, and give the diameter argument in \Cref{app:approx-dp}.

  \end{samepage}
\end{sodarevision}

\subsection{Related Work}

Private learning discrete distributions from batches was introduced by Liu et al.~\cite{LiuSYKR20}, and was subsequently studied by others~\cite{NarayananME22,GhaziKKMMZ23b}. This problem also fits into the broader literature on user-level privacy~\cite{LevySAKKMS21}, which studies adjacent tasks including mean estimation~\cite{GirgisDD22,NarayananME22,CummingsFMT22,GeorgeRST24,ZhaoLSLWL24,AgarwalKMMSU25}, PAC learning~\cite{GhaziKM21}, and convex optimization~\cite{GhaziKKMMZ23a,BassilyS23,LiuA24,GhaziKLM25}. There is likewise an extensive literature on private estimation in the item-level setting, corresponding to $m=1$; see~\cite{KamathU20} for a partial survey. In that setting, estimating a discrete distribution over $[k]$ under item-level $(\varepsilon,0)$-differential privacy is straightforward: applying the Laplace mechanism to the empirical distribution achieves the optimal sample complexity~\cite{DiakonikolasHS15}. This estimator, followed by $\ell_1$ projection onto the simplex, tolerates replacement of an $\eta$-fraction of the items: with constant probability its total-variation error is $O(\eta+\smash[t]{\sqrt{k/n}}+k/(n\eps))$. Indeed, the replacements change the empirical distribution by at most $\eta$ in total variation, and projection increases its distance from the true distribution by at most a factor of two.

Qiao and Valiant~\cite{QiaoV18} introduced the batch-robust setting, which was subsequently studied by several groups~\cite{JainO20a,ChenLM20a}, as well as in related settings including structured distributions~\cite{JainO20b,JainO21,ChenLM20b}, linear regression~\cite{DasJKS23,JainSKDO24,DiakonikolasKKLP25}, and local differential privacy~\cite{ChhorS23,AcharyaLS23}. Our work, together with many of these results, also fits into the broader line of algorithmic robust statistics, which began with work on robust mean estimation~\cite{DiakonikolasKKLMS16,LaiRV16} and has since expanded substantially; see~\cite{DiakonikolasK22} for a textbook treatment. As in the private setting, the interesting algorithmic issues for learning discrete distributions arise because of the batch structure: when $m=1$, the information-theoretic optimum is already achieved by the contaminated empirical distribution.

Our private algorithm uses the robustness-to-privacy framework introduced by Hopkins, Kamath, Majid, and Narayanan~\cite{HopkinsKMN23} and independently by Asi, Ullman, and Zakynthinou~\cite{AsiUZ23}, which converts robust estimators into private ones. Efficient versions of this framework have already yielded near-optimal private and robust algorithms for Gaussian estimation~\cite{HopkinsKMN23}, parameter estimation in random graphs~\cite{ChenDHS24}, and linear regression~\cite{AndersonBMT25}. Related sum-of-squares exponential mechanisms also give private algorithms for graphon estimation~\cite{ChenDDHLS24}. Robustness-to-privacy methods also underlie private Bayesian estimation~\cite{ChenDMM26} and improved sample bounds for privately learning Gaussian mixtures~\cite{AshtianiMN24}. To our knowledge, this is the first polynomial-time application of this framework that converts batch-level robustness into user-level privacy for learning discrete distributions. Georgiev and Hopkins~\cite{GeorgievH22} prove a partial converse, showing settings in which privacy automatically yields robustness.

Starting from the original work of Dwork and Lei~\cite{DworkL09}, there has been substantial work on connections between robustness and privacy~\cite{BunKSW19,KamathSU20,LiuKO22,BrownGSUZ21,RamsayC21,KothariMV22,HopkinsKM22,AlabiKTVZ23}. Some of these papers give alternative general methods for converting robust estimators into private ones~\cite{LiuKO22,KothariMV22}. Other work from the statistics community studies these connections from a different perspective~\cite{AvellaMedina20,AvellaMedina21,RamsayJC22,SlavkovicM22}, including in the local model~\cite{LiBY22}. A recent line of work studies estimators that are simultaneously private and robust for high-dimensional problems~\cite{BunKSW19,GhaziKMN21,LiuKKO21,NarayananME22,AshtianiL22,KothariMV22,TsfadiaCKMS22,LiuKO22,LiuJKOS23}. See the survey of Kamath~\cite{Kamath24} for a broader discussion of the robustness--privacy interface.

\paragraph{Paper roadmap.}
\begin{sodarevision}
  \Cref{sec:prelim} shows how subset control produces a diagonal majorizer. \Cref{sec:robust-algorithm} uses the majorizer in both directions---first for clean-data feasibility and then for degree-$6$ SoS identifiability. \Cref{sec:private-algorithm} turns this certificate into a score function and proves the categorical private theorems. \Cref{sec:sq-lower-bound} proves the SQ lower bound for robust learning from discrete batches. The appendices give the approximate-DP $\ell_1$ obstruction and the matching pure-DP lower bound.
\end{sodarevision}

  \section{Preliminaries}
\label{sec:prelim}

\begin{problem}[Robust Learning from Discrete Batches]
  \label{prob:robust-learning}
  Let $\mu \in \Delta^{k-1}$ be an unknown discrete distribution over $[k]$.
  Let $m \in \N$ be the batch size, and $\eta$ be the corruption fraction.
  Let $\set{B_i^*}_{i \in [n]}$ be $n$ batches of samples, where each clean batch
  \begin{equation*}
    B_i^* = \set{U_{i,1}^*, \ldots, U_{i,m}^*}
  \end{equation*}
  and each $U_{i,j}^* \sim \mu$ is drawn independently from $\mu$.
  An $\eta$-fraction of these batches are then adversarially corrupted to form the observed
  batches $\set{B_i}_{i \in [n]}$, meaning that there exists a set
  $S \subseteq [n]$ with $|S| \leq \eta n$ such that $B_i = B_i^*$ for every $i \notin S$.
  We seek an algorithm that, given the observed batches $\set{B_i}_{i \in [n]}$, outputs a distribution $\hat{\mu} \in \Delta^{k-1}$ such that with high probability,
  \begin{equation*}
    \normtv{\hat{\mu} - \mu} \leq \alpha \mcom
  \end{equation*}
  where $\alpha \ge \eta\smash[t]{\sqrt{\log(1/\eta)/m}}$.
\end{problem}

\subsection{Data Model and Notation}
\label{subsec:data-model-notation}

We use calligraphic letters such as $\mathcal{B}$ and $\mathcal{Y}$ for datasets. A star marks a latent clean batch or average, and a prime marks a repaired point or candidate mean.

For a batch $B_i = \set{U_{i,1},\ldots,U_{i,m}}$ of symbols in $[k]$, define its empirical average vector by
\begin{equation*}
  y_i \coloneq \frac{1}{m}\sum_{j=1}^m e_{U_{i,j}} \in \Delta^{k-1},
\end{equation*}
where $e_1,\dots,e_k$ are the standard basis vectors in $\R^k$.
Likewise, each clean batch $B_i^*$ induces a clean batch-average vector
\begin{equation*}
  y_i^* \coloneq \frac{1}{m}\sum_{j=1}^m e_{U_{i,j}^*} \in \Delta^{k-1}.
\end{equation*}
We write
\begin{equation*}
  \mathcal{B}^* = \set{B_i^*}_{i \in [n]},
  \qquad
  \mathcal{B} = \set{B_i}_{i \in [n]},
  \qquad
  \mathcal{Y}^* = \set{y_i^*}_{i \in [n]},
  \qquad
  \mathcal{Y} = \set{y_i}_{i \in [n]} \mper
\end{equation*}
From now on, we work with the averaged vectors $\mathcal{Y}^*$ and $\mathcal{Y}$. The robust program uses repaired points $y_i'$ and their mean $\mu'$; $\hat\mu$ denotes the output estimate, and $\nu$ denotes a candidate for the private score.

\subsection{Probability Simplex}
We will repeatedly make use of the probability simplex of distributions over $[k]$.

\begin{definition}[Probability
  Simplex $\Delta^{k-1}$]
  \label{def:probability-simplex}
  Let $\Delta^{k-1}$ denote
  \begin{equation*}
    \Delta^{k-1} \coloneq \Set{x \suchthat \iprod{\bm{1}, x} = 1, x \in \R^{k}_{\ge 0}} \mper
  \end{equation*}
\end{definition}

\begin{remark}[Total Variation Convention]
  \label{rem:tv-convention}
  Whenever $x \in \R^k$ satisfies $\iprod{\bm{1},x}=0$, we write
  \begin{equation*}
    \normtv{x} \coloneq \frac12 \normo{x} = \max_{s \in \Set{0,1}^k} \iprod{x,s} \mper
  \end{equation*}
  In particular, for probability vectors $\mu,\nu \in \Delta^{k-1}$, $\normtv{\mu-\nu}$ denotes the usual total variation distance.
\end{remark}

\begin{fact}[Diagonal minus rank-one is PSD on the simplex]\label{fact:diag-minus-xxT-psd}
  Let $x\in\R_{\ge 0}^k$ with $s:=\sum_{i=1}^k x_i\le 1$ (in particular, for $x\in\Delta^{k-1}$ one has $s=1$). Define
  \[
    M(x)\;:=\;\diag(x)\;-\;x x^\top \;\in\R^{k\times k}.
  \]
  Then $M(x)\succeq 0$. Moreover, on $\supp(x)$,
  \[
    M(x)\ \succeq\ (1-s)\,\diag(x).
  \]
\end{fact}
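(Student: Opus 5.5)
We prove both claims by computing the quadratic form of $M(x)$ directly and applying Cauchy--Schwarz with weights $x_i$. Fix $v\in\R^k$. Then
\[
  v^\top M(x)v=\sum_{i=1}^k x_i v_i^2-\Bigl(\sum_{i=1}^k x_i v_i\Bigr)^2 .
\]
Only coordinates in $\supp(x)$ contribute to either term, so we may restrict attention to $i\in\supp(x)$. On these coordinates, Cauchy--Schwarz with weights $x_i\ge 0$ gives
\[
  \Bigl(\sum_i x_i v_i\Bigr)^2=\Bigl(\sum_i \sqrt{x_i}\cdot\sqrt{x_i}v_i\Bigr)^2\le\Bigl(\sum_i x_i\Bigr)\Bigl(\sum_i x_i v_i^2\Bigr)=s\sum_i x_i v_i^2 .
\]
It follows that
\[
  v^\top M(x)v\ \ge\ (1-s)\sum_i x_i v_i^2=(1-s)\,v^\top\diag(x)v .
\]
This holds for every $v$, so $M(x)\succeq(1-s)\diag(x)$ as quadratic forms. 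The same inequality therefore holds for the principal submatrix on $\supp(x)$, which is the ``moreover'' claim.

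For the first claim, the assumption $s\le 1$ gives $1-s\ge 0$. Since $x\ge 0$, we also have $\diag(x)\succeq 0$. Hence $(1-s)\diag(x)\succeq 0$, and combining with the bound above yields $M(x)\succeq 0$.

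There is no substantive obstacle here. The one point requiring care is that $x$ may have zero coordinates. Such coordinates drop out of both $\sum_i x_i v_i^2$ and $\sum_i x_i v_i$, so the weighted Cauchy--Schwarz step is valid for every $v\in\R^k$, not only for $v$ supported on $\supp(x)$.
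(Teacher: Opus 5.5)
Your proof is correct and is essentially the paper's argument. The paper's Cauchy--Schwarz step $\langle D^{1/2}u,y\rangle^2\le\|y\|^2\|D^{1/2}u\|^2$ with $y=D^{1/2}\bm{1}$ is exactly your weighted Cauchy--Schwarz inequality; the only difference is that the paper first obtains $M(x)\succeq 0$ from the factorization $M(x)=D^{1/2}(I-yy^\top)D^{1/2}$ with $\|y\|^2=s\le 1$, whereas you get it directly as a corollary of $M(x)\succeq(1-s)\diag(x)$, which, as you note, holds on all of $\R^k$.
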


\begin{proof}
  Let $D:=\diag(x)$ and $y:=D^{1/2}\bm{1}$. Then $M(x)$ factors as
  \[
    M(x) \;=\; D^{1/2}\bigl(I-yy^\top\bigr)D^{1/2},\qquad \|y\|^2=\bm{1}^\top D\,\bm{1}=s\le 1.
  \]
  Here $I-yy^\top$ has eigenvalues $1$ (with multiplicity $k-1$ on $y^\perp$) and $1-\|y\|^2=1-s\ge 0$ (in the direction of $y$). Hence $I-yy^\top\succeq 0$, and conjugation by $D^{1/2}$ preserves positive semidefiniteness, yielding $M(x)\succeq 0$. To prove $M(x)\succeq(1-s)\diag(x)$, observe that
  \[
    u^\top M(x)u \;=\; \|D^{1/2}u\|^2-\langle D^{1/2}u, y\rangle^2
    \;\ge\; (1-\|y\|^2)\,\|D^{1/2}u\|^2
    \;=\; (1-s)\,u^\top D\,u,
  \]
  by Cauchy–Schwarz. In the special case $s=1$ (the standard simplex), $I-yy^\top$ has a simple zero eigenvalue, so $M(x)$ has rank at most $k-1$ with $\bm{1}$ in its nullspace; equivalently
  \[
    u^\top M(x)u \;=\; \tfrac12\sum_{i,j} x_i x_j\,(u_i-u_j)^2 \;\ge 0 \mper
  \]
\end{proof}

\begin{fact}[Mean and Covariance of a Clean Batch Average]
  \label{fact:batch-average-mean-cov}
  Let $\mu \in \Delta^{k-1}$ be a discrete distribution over $[k]$, let $m \in \N$, let
  \begin{equation*}
    U_1,\dots,U_m \stackrel{\mathrm{iid}}{\sim} \mu,
    \qquad
    Y \coloneq \frac1m \sum_{\ell=1}^m e_{U_\ell} \in \Delta^{k-1},
  \end{equation*}
  and write
  \begin{equation*}
    \Sigma(\mu) \coloneq \frac{1}{m} \Paren{\diag(\mu) - \mu \transpose{\mu}} \mper
  \end{equation*}
  Then
  \begin{align*}
    \E[Y] &= \mu \mcom \\
    \E[Y\transpose{Y}] - \E[Y]\transpose{\E[Y]} &= \Sigma(\mu) \mper
  \end{align*}
\end{fact}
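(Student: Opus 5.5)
We compute the first two moments of $Y$ directly from the one-hot representation, using linearity of expectation and independence of the $U_\ell$.

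For the mean, each $e_{U_\ell}$ is a random standard basis vector whose $j$th coordinate is the indicator $\mathbb{1}[U_\ell=j]$. Its expectation is therefore $\Pr[U_\ell=j]=\mu_j$, so $\E[e_{U_\ell}]=\mu$. Averaging over $\ell$ gives $\E[Y]=\mu$.

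For the covariance, we first compute the single-sample second moment. Since $e_{U}\transpose{e_{U}}=e_Ue_U^\top$ is the matrix with a single $1$ in position $(U,U)$, its expectation is $\diag(\mu)$. Hence
\begin{equation*}
  \operatorname{Cov}(e_{U_\ell})=\diag(\mu)-\mu\transpose{\mu} .
\end{equation*}
Next we expand
\begin{equation*}
  Y-\mu=\frac1m\sum_{\ell=1}^m\Paren{e_{U_\ell}-\mu}
\end{equation*}
and write $\E[(Y-\mu)\transpose{(Y-\mu)}]$ as a double sum over pairs $(\ell,\ell')$. By independence, the cross terms with $\ell\ne\ell'$ factor as $\E[e_{U_\ell}-\mu]\,\transpose{\E[e_{U_{\ell'}}-\mu]}=0$. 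Only the $m$ diagonal terms survive, each equal to $\diag(\mu)-\mu\transpose{\mu}$. This gives
\begin{equation*}
  \frac1{m^2}\cdot m\Paren{\diag(\mu)-\mu\transpose{\mu}}=\Sigma(\mu).
\end{equation*}
Finally, we note that $\E[Y\transpose{Y}]-\E[Y]\transpose{\E[Y]}=\E[(Y-\mu)\transpose{(Y-\mu)}]$, which completes the argument.

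There is no real obstacle here. The only step requiring care is identifying $\E[e_Ue_U^\top]=\diag(\mu)$ and verifying that the cross terms vanish by independence.
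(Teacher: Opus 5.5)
Your proof is correct and is essentially the paper's argument: both expand the double sum over sample pairs and use independence to factor the off-diagonal terms. The only cosmetic difference is that you center first, so the cross terms vanish outright, whereas the paper computes the raw second moment $\frac{1}{m^2}\bigl(m\diag(\mu)+m(m-1)\mu\transpose{\mu}\bigr)$ and then subtracts $\mu\transpose{\mu}$.
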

\begin{proof}
  Linearity of expectation gives the mean.
  For the second moment, we have
  \begin{align*}
    \E[Y\transpose{Y}] &= \E\Brac{\frac{1}{m^2} \sum_{\ell,t \in [m]} e_{U_\ell} \transpose{e_{U_t}}} \\
    &= \frac{1}{m^2} \Paren{\sum_{\ell \in [m]} \E[e_{U_\ell} \transpose{e_{U_\ell}}] + \sum_{\ell \neq t} \E[e_{U_\ell}] \transpose{\E[e_{U_t}]}} \\
    &= \frac{1}{m^2} \Paren{m \diag(\mu) + m(m-1) \mu \transpose{\mu}} \mper
  \end{align*}
  Thus,
  \begin{align*}
    \E[Y\transpose{Y}] - \E[Y]\transpose{\E[Y]} &= \frac{1}{m^2} \Paren{m \diag(\mu) + m(m-1) \mu \transpose{\mu}} - \mu \transpose{\mu} \\
    &= \frac{1}{m} \Paren{\diag(\mu) - \mu \transpose{\mu}} \mper
  \end{align*}
\end{proof}
We will also repeatedly use the variance of a clean batch average along a direction $s \in \set{0,1}^k$, whose formula is immediate from \Cref{fact:batch-average-mean-cov}.
\begin{definition}[Subset-Direction Mean and Variance]
  \label{def:variance}
  Let $\mu \in \Delta^{k-1}$, and let $s$ be a $k$-dimensional vector in $\set{0,1}^k$. We define the mean and variance of a clean batch average along direction $s$ by
  \begin{align*}
    \mu(s) &\coloneq \iprod{s, \mu} \mcom \\
    V(\mu, s) &\coloneq \frac{\iprod{s, \mu} (1 - \iprod{s, \mu})}{m} \mper
  \end{align*}
\end{definition}

\subsection{Concentration Bounds}
We use a version of the clean-sample condition in \cite{JainO20a} with an explicit failure probability.

\begin{fact}[cf.\ Lemma~3 in \cite{JainO20a}]
  \label{fact:main-resilience-lemma}
  Let $\mu\in\Delta^{k-1}$, $\eta,\beta\in(0,1/2)$, and $m\in\N$. Let
  \begin{equation*}
    B_i^* = (U_{i,1}^*,\ldots,U_{i,m}^*),
    \qquad
    U_{i,1}^*,\ldots,U_{i,m}^* \stackrel{\mathrm{iid}}{\sim} \mu,
    \qquad
    y_i^* \coloneq \frac1m \sum_{j=1}^m e_{U_{i,j}^*}
  \end{equation*}
  for each $i \in [n]$.
  Then as long as
  $$n \ge \Omega\Paren{\frac{k + \log(1/\beta)}{\eta^2\log(1/\eta)}} \mcom $$
  with probability $1-\beta$, for all subsets of indices $T \subseteq [n]$ with $\abs{T} \ge (1-\eta) n$, and all vectors $s \in \Set{0,1}^k$, we have
  \begin{equation*}
    \Abs{\frac{1}{\abs{T}} \sum_{i \in T} \iprod{y_i^*, s} - \iprod{\mu, s}} \le O\Paren{\eta\sqrt{\log (1/\eta)/m}}\mcom
  \end{equation*}
  and
  \begin{equation*}
    \Abs{\frac{1}{\abs{T}} \sum_{i \in T} \iprod{s, y_i^* - \mu}^2 - V(\mu, s)} \leq O\Paren{\eta \log(1/\eta) / m} \mper
  \end{equation*}
\end{fact}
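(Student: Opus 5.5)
Fix a direction $s\in\{0,1\}^k$ and reduce to a one-dimensional statement about the i.i.d.\ scalars $X_i\coloneq\iprod{s,y_i^*}$. Each $X_i$ is an average of $m$ Bernoulli$(p)$ variables with $p=\mu(s)$, so $X_i-p$ has variance $V(\mu,s)=p(1-p)/m$. Its tails are governed by Bernstein's inequality:
\begin{equation*}
\Pr\bigl[\abs{X_i-p}\ge t\bigr]\le 2\exp\bigl(-c\min\{t^2m/(p(1-p)),\,tm\}\bigr).
\end{equation*}
I would prove both bounds for a single $s$ with failure probability $\beta 2^{-k}$, uniformly over all $T$. A union bound over the $2^k$ directions then accounts for the $k+\log(1/\beta)$ term in the sample size.

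\textbf{Reduction to tail sums.} For the mean, observe that removing an $\eta$-fraction only changes the average over $T$ by at most $\eta$ times the average of the removed points. So it suffices to show two things. First, the full empirical mean $\frac1n\sum_i(X_i-p)$ deviates by $O(\sqrt{(k+\log(1/\beta))/(nm)})$, which is at most $\eta\sqrt{\log(1/\eta)/m}$ under the stated $n$. Second, for every set $R$ with $\abs{R}\le\eta n$, we have $\frac1n\sum_{i\in R}\abs{X_i-p}\le O(\eta\sqrt{\log(1/\eta)/m})$. The variance claim follows the same pattern. The full average of $(X_i-p)^2$ concentrates around $V(\mu,s)$ within $O(\eta\log(1/\eta)/m)$, and the worst removed set $R$ contributes $\frac1n\sum_{i\in R}(X_i-p)^2\le O(\eta\log(1/\eta)/m)$. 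The normalization $1/\abs{T}$ versus $1/n$ only costs a factor $1+O(\eta)$ times $V\le 1/(4m)$, giving an additive $O(\eta/m)$ term, which is fine.

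\textbf{Worst-case removed sets.} The supremum over $R$ is attained by the $\eta n$ largest values of $\abs{X_i-p}$. Bound this by the expected truncated moment $\E[(X-p)^2\mathbb 1\{\abs{X-p}\ge t_\eta\}]$, where $t_\eta$ is the $\eta$-quantile, plus a concentration term. In the sub-Gaussian regime, $t_\eta\approx\sqrt{\log(1/\eta)/m}$, and the truncated second moment is $O(\eta\log(1/\eta)/m)$. The first-moment analogue gives $O(\eta\sqrt{\log(1/\eta)/m})$. To make this hold uniformly over sets, I would apply the standard argument that counts the number of indices exceeding each dyadic threshold $t_j=2^j t_\eta$ via a Chernoff bound, with failure probability $\beta2^{-k}$, and then sum over the levels.

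\textbf{Main obstacle.} The hard part is the small-$p$ regime, where $p\lesssim \log(1/\eta)/m$ and Bernstein is in its Poisson (sub-exponential) regime, so tails are heavier than Gaussian. Here I would use that $V=p/m$ is small, which gives truncated moments of the order $\eta\log(1/\eta)/m$ via the bound $\E[(X-p)^2\mathbb 1\{\cdot\}]\le t_\eta^2\eta+\int_{t_\eta}^{\infty}2t\Pr[\abs{X-p}\ge t]\,dt$ with $t_\eta\lesssim\log(1/\eta)/m$. The bound of $\log(1/\eta)/m$ then appears only linearly times $\eta$, which is still within budget. The count-per-level Chernoff bounds need $n\eta\log(1/\eta)\gtrsim k+\log(1/\beta)$ at the coarsest level, and that is what the sample-size assumption $n\gtrsim (k+\log(1/\beta))/(\eta^2\log(1/\eta))$ supplies, with room to spare. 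Since this matches Lemma~3 of \cite{JainO20a}, I would follow their argument and track $\beta$ explicitly through the union bound.
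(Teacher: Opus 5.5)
Your outline is correct and has the same skeleton as the paper's proof. You control the full-sample mean and second moment of $\xi_i(s)=\iprod{y_i^*-\mu,s}$, bound the worst removed set, union bound over the $2^k$ directions, and pay $2\eta V(\mu,s)\le\eta/(2m)$ for renormalizing by $\abs{T}$. The two substantive steps differ, however.

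\textbf{Tails.} The paper uses only Hoeffding, $\Pr[\abs{\xi_i(s)}>t]\le 2e^{-2mt^2}$, which holds for every value of $p=\iprod{\mu,s}$. The target errors are at scale $1/m$ rather than scaled by $V(\mu,s)$, so this suffices. Hence the small-$p$ Poisson regime you single out as the main obstacle is not one: relative to the proxy $1/m$, the tails are never heavier than Gaussian. Your Bernstein case split is valid but buys nothing here; it would matter only for variance-adaptive bounds.

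\textbf{Uniformity over removed sets.} The paper fixes $U$ with $\abs{U}=\lceil\eta n\rceil$ and uses the exponential moment to bound $\sum_{i\in U}\xi_i(s)^2\le O((\abs{U}+t)/m)$ outside probability $e^{-t}$. It then union bounds over the $2^k$ directions and all $\binom{n}{\lceil\eta n\rceil}\le(en/\lceil\eta n\rceil)^{\lceil\eta n\rceil}$ such sets. The entropy $\approx\eta n\log(e/\eta)$ of this family is exactly the source of the $\eta\log(1/\eta)/m$ term. The first-moment bound on removed sets then follows from Cauchy--Schwarz, $\frac1n\abs{\sum_{i\in U}\xi_i(s)}\le\sqrt{\eta\cdot O(\eta\log(1/\eta)/m)}$, so you do not need a separate first-moment analysis.

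Your order-statistic route avoids the union over subsets but shifts the work into tail bookkeeping. If you use per-level dyadic Chernoff counts, each level must fail with probability at most $\beta 2^{-k}$. A level at which exceedances cannot be ruled out outright is then charged roughly $(k+\log(1/\beta))/m$. With $J$ such levels, staying within budget needs $J=O(1/\eta)$, not just the coarsest-level condition you state. That can fail when $m$ and $k+\log(1/\beta)$ are exponentially large in $1/\eta$.

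Your truncation idea avoids the issue. Take $t_0^2=\log(1/\eta)/(2m)$; then the $\eta n$ largest values of $\xi_i(s)^2$ sum to at most $\eta n t_0^2+\sum_i(\xi_i(s)^2-t_0^2)_+$. The last sum has mean at most $\eta n/m$. A single Bernstein bound puts it at $O((\eta n+k+\log(1/\beta))/m)$, which is within budget under the stated sample size.
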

\begin{proof}
  Write $\xi_i(s)=\iprod{y_i^*-\mu,s}$.
  For each fixed $s$, Hoeffding's inequality gives
  $\Pr[|\xi_i(s)|>t]\le 2\exp(-2mt^2)$.
  Thus $\xi_i(s)^2$ has subexponential norm $O(1/m)$. Applying the corresponding mean and second-moment concentration bounds and taking a union bound over the $2^k$ choices of $s$ gives, with probability at least $1-\beta/2$,
  \begin{align*}
    \Abs{\E_i\xi_i(s)}&\le O\Paren{\sqrt{\frac{k+\log(2/\beta)}{nm}}},\\
    \Abs{\E_i\xi_i(s)^2-V(\mu,s)}
    &\le \frac{O(1)}m\Paren{\sqrt{\frac{k+\log(2/\beta)}n}+\frac{k+\log(2/\beta)}n}
  \end{align*}
  simultaneously for all $s$.
  The assumed sample size ensures $\eta n\ge1$.
  For a fixed set $U$ of size $\lceil\eta n\rceil$, the exponential-moment bound gives
  $\sum_{i\in U}\xi_i(s)^2\le O((\lceil\eta n\rceil+t)/m)$ except with probability $e^{-t}$.
  A union bound over $s$ and the $\binom n{\lceil\eta n\rceil}\le(en/\lceil\eta n\rceil)^{\lceil\eta n\rceil}$ such sets therefore gives, with probability at least $1-\beta/2$,
  \begin{equation*}
    \max_{|U|\le\eta n}\frac1n\sum_{i\in U}\xi_i(s)^2
    \le \frac{O(1)}m\Paren{\eta\log(1/\eta)+\frac{k+\log(2/\beta)}n}
    \le O\Paren{\frac{\eta\log(1/\eta)}m}
  \end{equation*}
  for every $s$.
  For $T=[n]\setminus U$, Cauchy--Schwarz bounds
  $|n^{-1}\sum_{i\in U}\xi_i(s)|$ by
  $\smash[t]{\sqrt{(|U|/n)(n^{-1}\sum_{i\in U}\xi_i(s)^2)}}$.
  Subtracting the contribution of $U$ from the full mean and second moment, and dividing by $|T|/n\ge1/2$, proves the mean and second-moment bounds for $T$. For the second moment, the normalization contributes at most $2\eta V(\mu,s)\le\eta/(2m)$.
\end{proof}
\subsection{Sum-of-Squares}

\begin{fact}[Triangle inequality]
  \label{fact:sos-triangle-inequality}
  Let $a$ and $b$ be indeterminates, then
  \begin{equation*}
    \sststile{2}{a, b} \Set{(a+b)^2 \le 2a^2 + 2b^2} \mper
  \end{equation*}
\end{fact}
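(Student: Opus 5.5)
We exhibit an explicit degree-$2$ sum-of-squares certificate: the gap between the two sides is the square of a linear form in $a,b$. Expanding both sides gives
\begin{equation*}
  2a^2+2b^2-(a+b)^2 \;=\; 2a^2+2b^2-a^2-2ab-b^2 \;=\; a^2-2ab+b^2 \;=\; (a-b)^2 \mper
\end{equation*}
Since $(a-b)^2$ is a square of a degree-$1$ polynomial in the indeterminates $a,b$, the difference $2a^2+2b^2-(a+b)^2$ is a sum of squares of degree $2$. By the definition of an SoS proof, with no axioms required, this yields $\sststile{2}{a,b}\Set{(a+b)^2\le 2a^2+2b^2}$.

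The identity above is the whole proof, so there is no real obstacle. The only point to check is the degree bookkeeping. Every polynomial in the certificate, namely $(a+b)^2$, $a^2$, $b^2$ and $(a-b)^2$, has degree exactly $2$, so the certificate lies within the stated degree bound.

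For later use we also record that the same identity holds when $a$ and $b$ are themselves polynomials in other indeterminates. Substituting gives $(a+b)^2\le 2a^2+2b^2$ with an SoS certificate of degree $2\max(\deg a,\deg b)$. This is the form in which the fact will be invoked inside the degree-$6$ arguments.
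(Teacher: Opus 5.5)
Your proof is correct: the identity $2a^2+2b^2-(a+b)^2=(a-b)^2$ is a degree-$2$ SoS certificate, and it is the standard argument. The paper records this fact without proof, so your certificate, together with your remark that substituting polynomials $a,b$ gives a certificate of degree $2\max(\deg a,\deg b)$, supplies exactly what is needed for its uses inside the degree-$6$ arguments.
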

\begin{sodarevision}
  \begin{fact}[Young's inequality]
    \label{fact:sos-young-inequality}
    Let $a$ and $b$ be indeterminates and let $\lambda>0$ be a fixed scalar. Then
    \begin{equation*}
      \sststile{2}{a,b}
      \Set{ab\le \lambda a^2+\frac{b^2}{4\lambda}} \mper
    \end{equation*}
  \end{fact}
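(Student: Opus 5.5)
The statement is the SoS form of Young's inequality: for indeterminates $a,b$ and a fixed scalar $\lambda>0$, the polynomial $\lambda a^2+\frac{b^2}{4\lambda}-ab$ should have a degree-$2$ sum-of-squares representation. I will exhibit this representation explicitly as a single square. Since $\lambda>0$, the scalar $\sqrt{\lambda}$ is a well-defined positive real, and completing the square gives
\begin{equation*}
  \lambda a^2+\frac{b^2}{4\lambda}-ab
  =\Paren{\sqrt{\lambda}\,a-\frac{b}{2\sqrt{\lambda}}}^2 \mper
\end{equation*}

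To verify the identity, expand the right-hand side. The square of the first term is $\lambda a^2$, and the square of the second term is $b^2/(4\lambda)$. The cross term is $-2\cdot\sqrt{\lambda}\,a\cdot\frac{b}{2\sqrt{\lambda}}=-ab$. These three pieces recover the left-hand side exactly.

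The right-hand side is the square of a polynomial of degree $1$ in $a,b$, so the difference $\lambda a^2+\frac{b^2}{4\lambda}-ab$ is a sum of squares of degree $2$. This is precisely the certificate required for $\sststile{2}{a,b}\Set{ab\le\lambda a^2+\frac{b^2}{4\lambda}}$. No axioms are used, so the certificate holds unconditionally, in the same way as \Cref{fact:sos-triangle-inequality}.

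There is no real obstacle. The only point needing care is that $\lambda$ must be strictly positive, so that $\sqrt{\lambda}$ is real and $1/\sqrt{\lambda}$ is defined; this makes the coefficients of the linear form real constants.
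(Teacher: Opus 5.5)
Your proof is correct and is the same as the paper's. Both complete the square, writing $\lambda a^2+\frac{b^2}{4\lambda}-ab=\bigl(\sqrt{\lambda}\,a-\frac{b}{2\sqrt{\lambda}}\bigr)^2$, which is a single square of a linear form and hence a degree-$2$ SoS certificate.
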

  \begin{proof}
    We can write
    \begin{equation*}
      \sststile{2}{a,b}
      \Set{
        \begin{aligned}
          \lambda a^2+\frac{b^2}{4\lambda}-ab
          &=\Paren{\sqrt{\lambda}a-\frac{b}{2\sqrt{\lambda}}}^2 \\
          &\ge 0
        \end{aligned}
      } \mcom
    \end{equation*}
    as desired.
  \end{proof}
\end{sodarevision}
\begin{lemma}[SoS Selector Lemma \cite{AndersonBMT25}]
  \label{lem:sos-selector-lemma}
  Let $B, a_1, \dots, a_n \in \R$ and $k \in \mathbb{N}$. Suppose $\sum_{i \in T} a_i \le B$ for every $T \subseteq [n]$ with $\abs{T} \le k$. Then
  \begin{equation*}
    \Set{\forall i \in [n]: 0 \le b_i \le 1, \sum_{i=1}^n b_i = k} \sststile{2}{b_1, \dots, b_n} \Set{\sum_{i=1}^n b_i a_i \le B} \mper
  \end{equation*}
  Additionally, if the $a_i$'s are non-negative, the constraint $\sum_{i \in [n]} b_i = k$ may be replaced by $\sum_{i \in [n]} b_i \le k$, retaining $0\le b_i\le1$ for every $i$.
  More generally, let $t\in[0,n]$ be real, let $a_i\ge0$, and suppose $\sum_{i\in T}a_i\le B$ whenever $|T|\le t$. Then
  \begin{equation*}
    \Set{\forall i:\ b_i^2=b_i,\ \sum_i b_i\le t}
    \sststile{4}{}\Set{\sum_i a_i b_i\le 2B}.
  \end{equation*}
\end{lemma}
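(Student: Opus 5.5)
The first two statements are about a linear function over a polytope, so I would prove them with an explicit degree-$1$ (Farkas/LP-duality) certificate, built by hand. Order the $a_i$ decreasingly, let $T^\star$ be the indices of the $k$ largest values, and set $\theta\coloneq a_{(k)}$, the $k$-th largest value. (If $k>n$, pad with zeros; if $k=0$, take $T^\star=\emptyset$ and $\theta\coloneq\max(0,\max_i a_i)$.) The hypothesis applied to $T^\star$ gives $\sum_{i\in T^\star}a_i\le B$. I then plan to verify the polynomial identity
\begin{equation*}
  \sum_{i\in T^\star}a_i-\sum_{i=1}^n a_ib_i
  =\sum_{i\in T^\star}(a_i-\theta)(1-b_i)+\sum_{i\notin T^\star}(\theta-a_i)\,b_i+\theta\Paren{k-\sum_{i=1}^n b_i}.
\end{equation*}
The identity holds by expanding both sides and using $|T^\star|=k$. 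In the first sum every coefficient $a_i-\theta$ is nonnegative, and in the second every coefficient $\theta-a_i$ is nonnegative. Hence each term is a nonnegative multiple of a constraint $1-b_i\ge0$ or $b_i\ge0$.

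Under $\sum_i b_i=k$ the last term vanishes, which gives the first statement with a degree-$1$ (hence degree-$2$) proof. For the nonnegative variant we only have $\sum_i b_i\le k$, so we need the coefficient $\theta$ to be nonnegative. This holds because $\theta=a_{(k)}\ge0$ when the $a_i$ are nonnegative, and when $k=0$ we chose $\theta\ge0$ directly.

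For the boolean statement, I would first derive $0\le b_i\le1$ in degree $2$ from $b_i=b_i^2\ge0$ and $1-b_i=(1-b_i)^2\ge0$. I then split on the size of $t$.

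\emph{Case $t\ge1$.} Set $k=\lfloor t\rfloor\ge1$ and reuse the identity above with this $k$. Its last term is now $\theta(k-\sum_ib_i)\ge\theta(k-t)\ge-\theta$, using $\theta\ge0$ and $\sum_ib_i\le t<k+1$. Because $t\ge1$, singletons are allowed in the hypothesis, so $\theta\le\max_ia_i\le B$. Therefore $\sum_ia_ib_i\le B+\theta\le 2B$.

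\emph{Case $t<1$.} Here the hypothesis only says $B\ge0$, so I must derive $\sum_i b_i\le0$ using the boolean constraints. This is where degree $4$ enters. Write $S=\sum_ib_i$. Then
\begin{equation*}
  S^2=S+2\sum_{i<j}b_ib_j ,
\end{equation*}
and each $b_ib_j=(b_ib_j)^2$ modulo the constraints, which is a degree-$4$ square. Hence $S\le S^2$ in degree $4$. Next, $S^2\le tS$ follows by multiplying $t-S\ge0$ by $S=\sum_i b_i^2\ge0$. Combining, $(1-t)S\le0$, so $S\le0$ since $1-t>0$. Finally $a_i\ge0$ and $b_i\le S$ (the other $b_j$ are nonnegative) give $\sum_ia_ib_i\le(\max_ia_i)S\le0\le 2B$.

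The main obstacle is the fractional part of $t$ in the boolean statement. Integrality of $\sum_ib_i$ is not available at low degree, which is exactly why the bound degrades to $2B$ and the degree rises to $4$. The step to check carefully is the small-$t$ case, where the degree-$4$ derivation $S\le S^2\le tS$ must be written out with every multiplier kept a square or a product of constraints.
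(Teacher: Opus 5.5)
Your proposal is correct. For the real-budget extension, which is the only part the paper proves (it cites the first two statements from \cite{AndersonBMT25}), it matches the paper's argument: for $t\ge1$ you use the same top-$\lfloor t\rfloor$ threshold bound, absorbing the fractional excess via $\theta\le\max_i a_i\le B$ rather than the paper's factor $t/\lfloor t\rfloor$, and for $t<1$ your degree-$4$ certificate for $(1-t)\sum_i b_i\le0$ is exactly the sum over $i$ of the paper's per-coordinate identity $(t-1)b_i=b_i^2(t-\sum_j b_j)+\sum_{j\ne i}(b_ib_j)^2$. Your explicit degree-$1$ identity also gives a self-contained proof of the two cited statements, including the edge cases $k=0$ and $k>n$.
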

\begin{proof}[Proof of the real-budget extension]
  For $t\ge1$, order the coefficients so that $a_1\ge\cdots\ge a_n$. By the Boolean constraints, $0\le b_i\le1$, and hence
  \begin{align*}
    \sum_i a_i b_i
    &\le a_{\lfloor t\rfloor}\sum_i b_i+\sum_{i\le\lfloor t\rfloor}(a_i-a_{\lfloor t\rfloor}) \\
    &\le \sum_{i\le\lfloor t\rfloor}a_i+(t-\lfloor t\rfloor)a_{\lfloor t\rfloor} \\
    &\le \frac{t}{\lfloor t\rfloor}\sum_{i\le\lfloor t\rfloor}a_i
    \le2B.
  \end{align*}
  These are degree-$2$ SoS inequalities. If $t<1$, the Boolean identities give the degree-$4$ certificate
  \begin{equation*}
    (t-1)b_i=b_i^2\Paren{t-\sum_j b_j}+\sum_{j\ne i}(b_i b_j)^2\ge0.
  \end{equation*}
  Thus $b_i\le0$, and $\sum_i a_i b_i\le0\le2B$.
\end{proof}

\begin{definition}[Approximately Satisfying Linear Functionals]
  \label{def:approx-sos-satisfaction}
  Let $n \in \mathbb N$, $T \in [0, n]$.
  Let $\mathcal{A}_T$ be a system of polynomial constraints parameterized by $T$ in indeterminates $w = (w_1,\dots, w_n)$ and $z$ (of arbitrary size) such that
  \begin{equation*}
    \cA_T
    =
    \Set{\forall i \in [m_{\ge}]: q_i(z, w) \ge 0}
    \cup
    \Set{\forall j \in [m_{\mathrm{eq}}]: r_j(z, w) = 0}
    \cup
    \Set{\sum_{i \in [n]} w_i \ge n- T} \mper
  \end{equation*}
  Let $\cR$ denote the operator that maps a polynomial to a vector representing its coefficients.
  Let $\tau \ge 0$. We say that a linear functional $\cL$ $\tau$-approximately satisfies $\cA_T$ at degree $D$ if
  \begin{enumerate}
    \item $\cL 1 = 1$.
    \item For all degree at most $D/2$ polynomials $p$, such that $\normt{\cR(p)}\le 1$, we have that $\cL p^2 \ge -\tau T$.
    \item For all $i \in [m_{\ge}]$ and polynomials $p$ such that $\deg(p^2 q_i) \le D$, and $\normt{\cR(p)}\le 1$, we have
      \begin{equation*}
        \cL p^2 q_i \ge -\tau T \mper
      \end{equation*}
    \item For all $j \in [m_{\mathrm{eq}}]$ and polynomials $p$ such that $\deg(p^2 r_j) \le D$, and $\normt{\cR(p)}\le 1$, we have
      \begin{equation*}
        \Abs{\cL p^2 r_j} \le \tau T \mper
      \end{equation*}
    \item For all polynomials $p$ such that $\deg(p^2(\sum_{i} w_i -n + T)) \le D$, and $\normt{\cR(p)} \le 1$, we have that $\cL \Brac{ p^2(\sum_i w_i - n +T)} \ge -5 \tau T n$.
  \end{enumerate}
\end{definition}

\subsection{Subset-Variance Control and the Diagonal-Majorizer SDP}
\label{subsec:grothendieck-duality-bridge}

We isolate the abstract bridge from subset-direction control of a covariance-gap matrix to a trace-bounded diagonal majorizer. Let $\Delta \in \R^{k\times k}$ be symmetric. We consider the Boolean quadratic $s^\top \Delta s$ over subset directions $s \in \Set{0,1}^k$.

\begin{lemma}[Subset-to-sign reduction under a nullspace condition]
  \label{lem:subset-to-sign-nullspace}
  Let $\Delta \in \R^{k\times k}$ be symmetric and satisfy $\Delta \bm{1} = 0$. Then for every $s \in \Set{0,1}^k$ and $\sigma = 2s-\bm{1}$,
  \begin{equation*}
    \Abs{\sigma^\top \Delta \sigma} = 4 \Abs{s^\top \Delta s} \mper
  \end{equation*}
\end{lemma}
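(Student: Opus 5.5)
The plan is a direct algebraic expansion of the quadratic form; no earlier result is needed beyond the symmetry of $\Delta$ and the hypothesis $\Delta\bm{1}=0$. Fix $s\in\Set{0,1}^k$ and set $\sigma=2s-\bm{1}$. By bilinearity,
\begin{equation*}
  \sigma^\top\Delta\sigma
  = 4\,s^\top\Delta s - 2\,s^\top\Delta\bm{1} - 2\,\bm{1}^\top\Delta s + \bm{1}^\top\Delta\bm{1} \mper
\end{equation*}

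The second step removes the three extra terms. The hypothesis $\Delta\bm{1}=0$ gives $s^\top\Delta\bm{1}=0$ and $\bm{1}^\top\Delta\bm{1}=0$ immediately. For the remaining term, symmetry of $\Delta$ gives $\bm{1}^\top\Delta s=(\Delta\bm{1})^\top s=0$. Hence $\sigma^\top\Delta\sigma=4\,s^\top\Delta s$ exactly, as an identity rather than only in absolute value. Taking absolute values then yields the claimed equality $\abs{\sigma^\top\Delta\sigma}=4\abs{s^\top\Delta s}$.

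There is no real obstacle. The only point that needs care is the cross term $\bm{1}^\top\Delta s$, where symmetry of $\Delta$ is essential: the hypothesis annihilates $\bm{1}$ only on the right, and symmetry transfers this to annihilation on the left. It is also worth noting that the map $s\mapsto 2s-\bm{1}$ is a bijection from $\Set{0,1}^k$ onto $\Set{\pm1}^k$. The identity therefore also gives $\max_{\sigma}\abs{\sigma^\top\Delta\sigma}=4\max_s\abs{s^\top\Delta s}$, which is the form used later in the Friedland--Lim chain.
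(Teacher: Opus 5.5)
Your proof is correct and follows the paper's argument: the paper also expands $(2s-\bm{1})^\top\Delta(2s-\bm{1})$ and uses $\Delta\bm{1}=0$ to obtain the exact identity $\sigma^\top\Delta\sigma=4s^\top\Delta s$. The paper leaves implicit that symmetry is what handles the cross term $\bm{1}^\top\Delta s$, and your proof makes this explicit.
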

\begin{proof}
  \begin{equation*}
    \sigma^\top \Delta \sigma
    =
    (2s-\bm{1})^\top \Delta (2s-\bm{1})
    =
    4 s^\top \Delta s
  \end{equation*}
  because $\Delta \bm{1} = 0$.
\end{proof}

\begin{definition}[Elliptope]
  \label{def:elliptope}
  Define
  \begin{equation*}
    \mathcal E_k \coloneq \Set{X \in \R^{k \times k} \suchthat X \succcurlyeq 0,\ \diag(X)=\bm{1}} \mper
  \end{equation*}
  Each rank-one matrix $\sigma \sigma^\top$ with $\sigma \in \Set{\pm1}^k$ belongs to $\mathcal E_k$.
\end{definition}

\begin{theorem}[Symmetric Grothendieck inequality, real $\gamma$-case of Friedland--Lim~\cite{FriedlandL20}]
  \label{thm:friedland-lim-symmetric-grothendieck}
  Let $A = (a_{ij}) \in \R^{k \times k}$ be symmetric, and let $d \ge k$.
  Then
  \begin{equation*}
    \max_{\substack{x_1,\dots,x_k \in \R^d \\ \normt{x_i}=1\ \forall i}}
    \Abs{\sum_{i,j=1}^k a_{ij}\iprod{x_i,x_j}}
    \le
    \sinh(\pi/2)
    \max_{\sigma \in \Set{\pm1}^k}
    \Abs{\sum_{i,j=1}^k a_{ij}\sigma_i\sigma_j} \mper
  \end{equation*}
  Equivalently, this is the real symmetric $\gamma$-seminorm inequality from Friedland--Lim, Theorem~4.3, equation~(55), together with the bound on the constant from equation~(57).
\end{theorem}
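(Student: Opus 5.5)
This statement is cited from Friedland--Lim, so the plan is to show how it follows from their Theorem~4.3, not to re-derive it from scratch. The only real content is matching our formulation to theirs.

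\emph{Reduction to a linear functional.} For vectors $x_1,\dots,x_k\in\R^d$ of unit norm with $d\ge k$, the Gram matrix $X=(\iprod{x_i,x_j})$ ranges over exactly the elliptope $\mathcal E_k$ of \Cref{def:elliptope}. Conversely, any $X\in\mathcal E_k$ has rank at most $k\le d$, so it factors as a Gram matrix of unit vectors in $\R^d$. The left-hand side is therefore $\max_{X\in\mathcal E_k}\abs{\iprod{A,X}}$. The right-hand side is the same maximum restricted to the rank-one extreme points $\sigma\sigma^\top$. For symmetric $A$, both quantities are the $\gamma$-seminorms in Friedland--Lim: the vector-valued (or positive-semidefinite) one and the Boolean one. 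Their equation~(55) states that the ratio of these seminorms is at most a constant $K_\gamma$. Their equation~(57) bounds that constant by $\sinh(\pi/2)$ in the real case, and $\sinh(\pi/2)\approx 2.30$.

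\emph{Sketch of why the constant holds.} This follows the Krivine route. Use Krivine's embedding with $c=\operatorname{arcsinh}(1)$ to map the unit vectors $x_i$ to unit vectors $u_i$ satisfying $\sin(c\,\iprod{u_i,u_j})=\ldots$, or more precisely the inverse construction. Then apply Grothendieck's identity $\E[\sign\iprod{g,u_i}\sign\iprod{g,u_j}]=\tfrac{2}{\pi}\arcsin\iprod{u_i,u_j}$ for a Gaussian $g$. The key point is that every random sign pattern $\sigma$ produced this way satisfies $\abs{\sigma^\top A\sigma}\le\max_\sigma\abs{\sigma^\top A\sigma}$. So, unlike the bilinear Grothendieck inequality, we may use \emph{one} vector family for both indices, and the Taylor expansion of $\sinh$ has all positive coefficients. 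This positivity is exactly what makes the symmetric $\gamma$-seminorm argument work with constant $\sinh(\pi/2)$ rather than $\pi/(2\log(1+\sqrt2))$.

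\emph{Main obstacle.} The main obstacle is the sign-alternation issue. In the symmetric setting, the odd-power tensor terms $\iprod{x_i,x_j}^{2\ell+1}$ that arise from expanding $\sin$ or $\sinh$ must each be realized as inner products of unit vectors placed on the \emph{same} side of the form. Negative coefficients cannot be absorbed as they are in the bilinear case, which uses two independent families. Using $\sinh$, whose coefficients are all positive, is what resolves this, and it is why the constant is $\sinh(\pi/2)$. We would defer to \cite{FriedlandL20} for this step and record only the Gram-matrix identification above, since that identification is all we use later, in \Cref{cor:friedland-lim-correlation}.
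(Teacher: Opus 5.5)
The paper gives no argument for this statement beyond the citation to Friedland--Lim, and it uses the result only through \Cref{cor:friedland-lim-correlation}. Your deferral plus the Gram-matrix identification (unit vectors in $\R^d$ with $d\ge k$ realize exactly $\mathcal E_k$) is therefore the same treatment, and that part is correct.

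Your sketch of why the constant is $\sinh(\pi/2)$, however, does not work and misidentifies the mechanism.
\begin{itemize}
\item With one family and the all-positive series at $c=\operatorname{arcsinh}(1)$, the embedded vectors satisfy $\iprod{u_i,u_j}=\sinh(c\iprod{x_i,x_j})$. Hyperplane rounding then gives $\E[\sigma_i\sigma_j]=\tfrac{2}{\pi}\arcsin\bigl(\sinh(c\iprod{x_i,x_j})\bigr)$, which is not proportional to $\iprod{x_i,x_j}$. Nothing then relates $\sum_{i,j}a_{ij}\iprod{x_i,x_j}$ to $\E[\sigma^\top A\sigma]$.
\item Krivine's linearization $\arcsin(\sin(ct))=ct$ needs the alternating $\sin$ series and hence a second family. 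The resulting bilinear form $\epsilon^\top A\delta$ admits no pointwise bound by the $\gamma$-seminorm. For $A=\operatorname{Diag}(1,-1)$ one has $\max_\sigma\abs{\sigma^\top A\sigma}=0$, yet $\epsilon^\top A\delta=2$ at $\epsilon=(1,1)$, $\delta=(1,-1)$.
\item No step of a construction at $c=\operatorname{arcsinh}(1)$ ever evaluates $\sinh$ at $\pi/2$.
\end{itemize}

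The argument that does give $\sinh(\pi/2)$ is Grothendieck's original bound, transplanted to the symmetric setting. It rounds once and never linearizes. Let $X$ be the Gram matrix, let $\epsilon_i=\sign\iprod{g,x_i}$ for a standard Gaussian $g$, and let $W=\E[\epsilon\epsilon^\top]$. Then $W_{ij}=\tfrac{2}{\pi}\arcsin X_{ij}$ and $W\in\operatorname{conv}\Set{\sigma\sigma^\top}$. Taking independent copies of $\epsilon$ shows that every Hadamard power $W^{\circ(2\ell+1)}$ also lies in $\operatorname{conv}\Set{\sigma\sigma^\top}$. Since $X_{ij}=\sin(\tfrac{\pi}{2}W_{ij})$ entrywise,
\[
\iprod{A,X}=\sum_{\ell\ge0}\frac{(-1)^\ell(\pi/2)^{2\ell+1}}{(2\ell+1)!}\iprod{A,W^{\circ(2\ell+1)}},
\qquad\text{so}\qquad
\Abs{\iprod{A,X}}\le\sinh(\pi/2)\max_{\sigma\in\Set{\pm1}^k}\Abs{\sigma^\top A\sigma}.
\]
Thus $\sinh(\pi/2)$ is the total absolute coefficient mass of the alternating series for $\sin(\tfrac{\pi}{2}w)$. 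The negative coefficients are absorbed termwise by the triangle inequality. This is possible precisely because the $\gamma$-seminorm carries an absolute value. Positivity of the $\sinh$ coefficients is not what resolves the sign issue in your \emph{main obstacle} paragraph.
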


\begin{corollary}[Correlation-matrix form]
  \label{cor:friedland-lim-correlation}
  Let $A \in \R^{k \times k}$ be symmetric.
  Then
  \begin{equation*}
    \max_{X \succcurlyeq 0,\ \diag(X)=\bm{1}} \Abs{\iprod{A,X}}
    \le
    \sinh(\pi/2)
    \max_{\sigma \in \Set{\pm1}^k} \Abs{\sigma^\top A\sigma} \mper
  \end{equation*}
\end{corollary}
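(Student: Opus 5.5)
The plan is to derive \Cref{cor:friedland-lim-correlation} from \Cref{thm:friedland-lim-symmetric-grothendieck} by realizing every correlation matrix as a Gram matrix of unit vectors. Fix $X \in \mathcal E_k$ as in \Cref{def:elliptope}, that is, $X \succcurlyeq 0$ and $\diag(X)=\bm{1}$. Since $X$ is positive semidefinite, it has a factorization $X = V^\top V$ with $V \in \R^{k\times k}$; for instance, take $V = X^{1/2}$ from the spectral decomposition. Let $x_1,\dots,x_k \in \R^k$ be the columns of $V$. Then $X_{ij} = \iprod{x_i,x_j}$, and the diagonal constraint gives $\normt{x_i}^2 = X_{ii} = 1$, so each $x_i$ is a unit vector.

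With $d = k$, which satisfies the requirement $d \ge k$, we can expand the objective as
\begin{equation*}
  \iprod{A,X} = \sum_{i,j=1}^k a_{ij} X_{ij} = \sum_{i,j=1}^k a_{ij}\iprod{x_i,x_j} \mper
\end{equation*}
This expression is one of the quantities maximized on the left-hand side of \Cref{thm:friedland-lim-symmetric-grothendieck}. Hence $\abs{\iprod{A,X}}$ is at most $\sinh(\pi/2)\max_{\sigma\in\Set{\pm1}^k}\abs{\sum_{i,j}a_{ij}\sigma_i\sigma_j}$. The double sum in this bound equals $\sigma^\top A \sigma$.

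The bound holds for every $X \in \mathcal E_k$, so we may take the supremum over $X$. The maximum is attained because $\mathcal E_k$ is compact: it is closed, and it is bounded since $\abs{X_{ij}} \le \sqrt{X_{ii}X_{jj}} = 1$. Taking this maximum yields the corollary.

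No step here is a real obstacle. The only points to check are that the factorization produces vectors in a dimension $d \ge k$, which holds since we may always use $d=k$, and that the unit-norm condition matches $\diag(X)=\bm{1}$. If one prefers a larger ambient dimension $d > k$, the columns $x_i$ can be padded with zeros without changing any inner products.
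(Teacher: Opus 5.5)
Your proposal is correct and is essentially the paper's proof: write $X$ as a Gram matrix of unit vectors (unit length coming from $\diag(X)=\bm{1}$), expand $\iprod{A,X}=\sum_{i,j}a_{ij}\iprod{x_i,x_j}$, and invoke \Cref{thm:friedland-lim-symmetric-grothendieck}. Taking the factor $X^{1/2}$ with $d=k$ (or padding with zeros) directly meets that theorem's $d\ge k$ hypothesis, which the paper's phrasing ``$\R^r$ for some $r\le k$'' leaves implicit. Your compactness remark also justifies that the maximum over $X$ is attained.
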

\begin{proof}
  Every feasible $X$ is a correlation matrix, hence a Gram matrix: there exist unit vectors $x_1,\dots,x_k \in \R^r$ for some $r \le k$ such that
  \begin{equation*}
    X_{ij} = \iprod{x_i,x_j}
    \qquad\text{for all } i,j \in [k] \mper
  \end{equation*}
  Therefore
  \begin{equation*}
    \iprod{A,X} = \sum_{i,j=1}^k a_{ij} X_{ij} = \sum_{i,j=1}^k a_{ij}\iprod{x_i,x_j},
  \end{equation*}
  and the claim follows from \Cref{thm:friedland-lim-symmetric-grothendieck}.
\end{proof}

We use the real symmetric indefinite absolute-value form from Friedland--Lim because $\Delta$ is not assumed positive semidefinite.

\begin{fact}[Semidefinite duality with equality constraints]
  \label{fact:sdp-duality-equality-form}
  Let $C,A_1,\dots,A_m \in \R^{d\times d}_{\mathrm{sym}}$ and let $b \in \R^m$. Consider the primal--dual pair
  \begin{align*}
    \textnormal{(P)}\qquad
    \max_{X \succcurlyeq 0}\ &\iprod{C,X}
    \qquad \textnormal{such that } \iprod{A_t,X}=b_t \textnormal{ for all } t\in[m],\\
    \textnormal{(D)}\qquad
    \min_{y \in \R^m}\ &\sum_{t=1}^m b_t y_t
    \qquad \textnormal{such that } \sum_{t=1}^m y_t A_t - C \succcurlyeq 0.
  \end{align*}
  If both programs are strictly feasible, then strong duality holds and the two optimal values are equal.
\end{fact}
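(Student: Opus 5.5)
The plan is to prove weak duality directly and then obtain the reverse inequality from a separating-hyperplane argument on a convex cone, with strict feasibility supplying the Slater-type regularity that closes the duality gap. This is the standard conic duality theorem specialized to the PSD cone. Since the paper only uses it as a black box, I would give the argument at the level of detail needed to make the roles of the two strict-feasibility hypotheses clear.

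\emph{Weak duality.} Take $X\succcurlyeq0$ feasible for (P) and $y$ feasible for (D), and set $Z\coloneq\sum_t y_tA_t-C\succcurlyeq0$. Then
\begin{equation*}
  \sum_t b_ty_t-\iprod{C,X}=\sum_t y_t\iprod{A_t,X}-\iprod{C,X}=\iprod{Z,X}\ge0,
\end{equation*}
because the trace inner product of two PSD matrices equals $\normf{Z^{1/2}X^{1/2}}^2\ge0$. Hence $\mathrm{val}(P)\le\mathrm{val}(D)$, and both values are finite because each program is feasible.

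\emph{Strong duality.} I would use strict dual feasibility to show the primal optimum is attained and equals the dual value. First, a dual point $y^0$ with $Z^0\coloneq\sum_t y^0_tA_t-C\succ0$ makes the primal feasible set bounded, since for feasible $X$
\begin{equation*}
  \lambda_{\min}(Z^0)\operatorname{tr}X\le\iprod{Z^0,X}=\iprod{b,y^0}-\iprod{C,X}\le\iprod{b,y^0}-\mathrm{val}(P)+1
\end{equation*}
on near-optimal $X$. Compactness then gives a primal optimizer with value $p^\ast$. Next, consider the convex set
\begin{equation*}
  \mathcal K\coloneq\Set{\Paren{(\iprod{A_t,X})_t,\ \iprod{C,X}}\suchthat X\succcurlyeq0}\subseteq\R^{m+1}.
\end{equation*}
For every $\epsilon>0$, the point $(b,p^\ast+\epsilon)$ lies outside $\mathcal K$. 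Separate it from $\mathcal K$ by some $(y,-y_0)$, normalize, and let $\epsilon\to0$. This yields multipliers with
\begin{equation*}
  \iprod{\textstyle\sum_t y_tA_t-y_0C,\,X}\ge0\ \text{for all }X\succcurlyeq0
  \quad\text{and}\quad
  \iprod{b,y}\le y_0p^\ast,
\end{equation*}
with $(y,y_0)\ne0$.

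\emph{The main obstacle} is ruling out the degenerate case $y_0=0$; this is exactly where strict primal feasibility enters. If $y_0=0$, then $\sum_ty_tA_t\succcurlyeq0$ and $\iprod{b,y}\le0$. Evaluating against a strictly feasible $X^0\succ0$ gives $0\le\iprod{\sum_ty_tA_t,X^0}=\iprod{b,y}\le0$. This forces $\sum_ty_tA_t=0$, and the separation would then be trivial, contradicting $(y,y_0)\ne0$ once we reduce to linearly independent $A_t$. (The closedness and normalization details around $\mathcal K$ need care here; I would instead separate the open point set $\Set{(b,p^\ast+\epsilon)}$ from the image of the open cone $X\succ0$, which avoids closure issues.) With $y_0>0$, rescale so that $y_0=1$. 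Then $y$ is dual feasible with $\iprod{b,y}\le p^\ast$, and combining this with weak duality gives equality of the two optimal values.
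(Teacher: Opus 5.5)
The paper gives no proof of this statement. It records standard SDP strong duality under Slater's condition as a known fact and uses it only as a black box in \Cref{cor:diagonal-majorizer-sdp-duality}. Your proposal is a self-contained version of the textbook separating-hyperplane proof, and it is correct in substance. Strict dual feasibility gives compactness of the near-optimal primal set and hence attainment. Separating $(b,p^\ast+\epsilon)$ from the cone $\mathcal K$ produces multipliers $(y,y_0)$. Strict primal feasibility, together with linear independence of the $A_t$, rules out $y_0=0$. What your route buys over the citation is that it shows exactly which hypothesis does which job; the citation buys brevity, which is all the paper needs.

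A few steps need tightening.

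\emph{The sign of $y_0$.} You never show $y_0\ge0$, yet you need it before rescaling to $y_0=1$. A negative $y_0$ would give $\sum_t y_tA_t+\abs{y_0}C\succcurlyeq0$, which is not dual feasibility. The fix is one line: the separating functional is nonnegative on $\mathcal K$, which contains $(b,p^\ast)$, and nonpositive at $(b,p^\ast+\epsilon)$. Hence $\epsilon\, y_0^{(\epsilon)}\ge0$ for each $\epsilon$, and this passes to the limit.

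\emph{Boundedness.} Strict dual feasibility does not make the primal feasible set bounded. For example, take $d=2$, $A_1=e_1e_1^\top$, $b_1=1$, $C=-e_2e_2^\top$: both programs are strictly feasible, yet $X_{22}$ is unbounded on the feasible set. Only the superlevel set $\{\iprod{C,X}\ge\mathrm{val}(P)-1\}$ is bounded. That is what your display actually shows, and it is all you use.

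\emph{Closure of $\mathcal K$.} The closure worry is unnecessary. In finite dimensions, a point outside a convex set can always be weakly separated from it by a nonzero linear functional, so closedness of $\mathcal K$ is never needed. Your parenthetical workaround is also not right as stated: a singleton is not open, and the image of the open PSD cone need not be open when $X\mapsto((\iprod{A_t,X})_t,\iprod{C,X})$ is not surjective.

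\emph{Reduction to independent constraints.} This should be spelled out. If $\sum_tc_tA_t=0$, primal feasibility forces $\iprod{b,c}=0$. The dual objective and constraint are invariant under $y\mapsto y+sc$, so dropping a redundant constraint preserves both optimal values and strict feasibility. In the paper's only application, the constraints $A_j=e_je_j^\top$ are already linearly independent, so this step is not even needed there.
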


\begin{corollary}[Diagonal-majorizer SDP duality]
  \label{cor:diagonal-majorizer-sdp-duality}
  Let $\Delta \in \R^{k\times k}$ be symmetric. Then
  \begin{equation*}
    \max_{X \succcurlyeq 0,\ \diag(X)=\bm{1}} \iprod{\Delta,X}
    =
    \min_{\lambda \in \R^k}
    \Set{
      \sum_{j \in [k]} \lambda_j
      \suchthat
      \operatorname{Diag}(\lambda)-\Delta \succcurlyeq 0
    } \mper
  \end{equation*}
  Moreover, both programs are strictly feasible.
\end{corollary}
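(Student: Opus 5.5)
The plan is to cast the max-cut-type program over the elliptope in the equality form of \Cref{fact:sdp-duality-equality-form}, check strict feasibility of both sides, and read off equality of the optimal values. Take $d=k$, $C=\Delta$, and for each $j\in[k]$ set $A_j=e_je_j^\top$ and $b_j=1$. The constraints $\iprod{A_j,X}=1$ are then exactly $\diag(X)=\bm{1}$, so (P) is the left-hand program. In (D), the dual variable $y=\lambda\in\R^k$ gives objective $\sum_j b_j\lambda_j=\sum_j\lambda_j$ and constraint $\sum_j\lambda_jA_j-C=\operatorname{Diag}(\lambda)-\Delta\succcurlyeq0$. This is the right-hand program verbatim. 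Before invoking the fact, I will confirm that this is the standard max/min pairing, i.e.\ that weak duality $\iprod{\Delta,X}\le\iprod{\operatorname{Diag}(\lambda),X}=\sum_j\lambda_j$ holds for any feasible pair. This follows from $\iprod{\operatorname{Diag}(\lambda)-\Delta,X}\ge0$ for PSD matrices together with $\diag(X)=\bm{1}$.

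Next I will verify strict feasibility of both programs. For the primal, $X=I$ is positive definite and satisfies $\diag(I)=\bm{1}$, so it lies in the relative interior of the elliptope. For the dual, choose $\lambda=(\normop{\Delta}+1)\bm{1}$. Then $\operatorname{Diag}(\lambda)-\Delta\succcurlyeq I\succ0$, so the dual is strictly feasible. With both programs strictly feasible, \Cref{fact:sdp-duality-equality-form} gives that the optimal values coincide. It also ensures the optimum is attained, which justifies writing max and min rather than sup and inf. If attainment needs a separate check, the primal feasible set is compact because every entry satisfies $|X_{ij}|\le1$. On the dual side, a strictly feasible primal point gives bounded level sets. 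Concretely, $\iprod{\operatorname{Diag}(\lambda)-\Delta,I}=\sum_j\lambda_j-\operatorname{tr}\Delta$, and each $\lambda_j\ge\Delta_{jj}$, so restricting to $\sum_j\lambda_j\le c$ confines $\lambda$ to a compact set.

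The main obstacle is the bookkeeping that matches the sign conventions and the max/min orientation of the general duality fact to this specific pair. Strict feasibility itself is immediate. Once the identification above is made, the corollary follows directly.
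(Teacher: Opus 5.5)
Your proposal is correct and matches the paper's proof: both apply the equality-form SDP duality fact with $A_j=e_je_j^\top$, $b_j=1$, $C=\Delta$, and certify strict feasibility with $X=I$ and a sufficiently large multiple of $\bm{1}$ for $\lambda$. Your additional weak-duality and attainment checks (the entries of $X$ are bounded by $1$, and $\lambda_j\ge\Delta_{jj}$ bounds the dual level sets) are valid details that the paper leaves implicit.
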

\begin{proof}
  Apply \Cref{fact:sdp-duality-equality-form} with $A_j = e_j e_j^\top$, $b_j=1$, and $C=\Delta$. Strict primal feasibility is witnessed by $X = I_k$, and strict dual feasibility is witnessed by $\lambda = C\bm{1}$ for any sufficiently large scalar $C$.
\end{proof}

\begin{proposition}[Subset Control Implies Diagonal-Majorizer Feasibility]
  \label{prop:subset-control-majorizer}
  Let $\Delta \in \R^{k \times k}$ be symmetric.
  Assume that
  \begin{equation*}
    \Delta \bm{1} = 0,
    \qquad
    \sup_{s \in \Set{0,1}^k} \Abs{s^\top \Delta s} \le \rho \mper
  \end{equation*}
  Then there exists $\lambda \in \R^k$ such that
  \begin{equation*}
    \operatorname{Diag}(\lambda) - \Delta \succcurlyeq 0,
    \qquad
    \sum_{j \in [k]} \lambda_j \le O(\rho) \mper
  \end{equation*}
\end{proposition}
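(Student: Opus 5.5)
The plan is to chain the three ingredients assembled just above: the diagonal-majorizer SDP duality (\Cref{cor:diagonal-majorizer-sdp-duality}), the Friedland--Lim correlation-matrix bound (\Cref{cor:friedland-lim-correlation}), and the subset-to-sign identity (\Cref{lem:subset-to-sign-nullspace}). The quantity we need to bound is the dual optimum $\min\{\sum_j\lambda_j : \operatorname{Diag}(\lambda)\succcurlyeq\Delta\}$, which we will control through the primal correlation SDP.

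The steps, in order, are as follows.

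\emph{Step 1.} By \Cref{cor:diagonal-majorizer-sdp-duality}, both programs are strictly feasible. Hence strong duality holds, and the dual minimum equals $\max_{X\in\mathcal E_k}\iprod{\Delta,X}$. We also need the dual minimum to be attained. I will argue attainment by compactness: $\mathcal E_k$ is compact, so the primal value is finite. The dual feasible set intersected with $\{\sum_j\lambda_j\le c\}$ is bounded, because $\lambda_j\ge\Delta_{jj}$ bounds each coordinate below and the trace bound then bounds each coordinate above. It is also closed, so a minimizer $\lambda$ exists. Alternatively, one can simply take a near-optimal $\lambda$ with value at most the optimum plus $\rho$, since only an $O(\rho)$ bound is needed.

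\emph{Step 2.} Bound the primal value trivially by its absolute value, $\max_{X\in\mathcal E_k}\iprod{\Delta,X}\le\max_{X\in\mathcal E_k}\abs{\iprod{\Delta,X}}$. Then apply \Cref{cor:friedland-lim-correlation} to obtain the bound $\sinh(\pi/2)\max_{\sigma\in\{\pm1\}^k}\abs{\sigma^\top\Delta\sigma}$.

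\emph{Step 3.} Every $\sigma\in\{\pm1\}^k$ can be written as $2s-\bm1$ with $s=(\sigma+\bm1)/2\in\{0,1\}^k$. Using $\Delta\bm1=0$, \Cref{lem:subset-to-sign-nullspace} gives $\abs{\sigma^\top\Delta\sigma}=4\abs{s^\top\Delta s}\le4\rho$.

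Combining the three steps, the optimal $\lambda$ satisfies $\operatorname{Diag}(\lambda)-\Delta\succcurlyeq0$ and $\sum_j\lambda_j\le4\sinh(\pi/2)\rho=O(\rho)$.

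The only delicate point is the attainment and strong-duality step. Strict feasibility is already supplied by the corollary, so the remaining work is the short compactness argument; alternatively one can pass to an $\epsilon$-optimal $\lambda$. The nullspace hypothesis is essential in Step 3: without it, the sign quadratic form would pick up cross terms with $\bm1$ that are not controlled by subset bounds. Everything else is direct substitution.
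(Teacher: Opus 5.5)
Your proposal is correct and follows the paper's proof exactly: you use the subset-to-sign identity to get $\max_\sigma\abs{\sigma^\top\Delta\sigma}\le4\rho$, Friedland--Lim to bound the correlation SDP, and diagonal-majorizer duality to produce $\lambda$ with trace at most $4\sinh(\pi/2)\rho$. Your compactness argument for attainment of the dual minimum, via $\lambda_j\ge\Delta_{jj}$ plus the trace bound, is a valid detail that the paper leaves implicit.
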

\begin{proof}
  By \Cref{lem:subset-to-sign-nullspace},
  \begin{equation*}
    \max_{\sigma \in \Set{\pm1}^k} \Abs{\sigma^\top \Delta \sigma}
    \le 4\rho \mper
  \end{equation*}
  By \Cref{cor:friedland-lim-correlation},
  \begin{equation*}
    \max_{X \succcurlyeq 0,\, \diag(X)=\bm{1}} \iprod{\Delta, X}
    \le
    \max_{X \succcurlyeq 0,\, \diag(X)=\bm{1}} \Abs{\iprod{\Delta, X}}
    \le
    \sinh(\pi/2)
    \max_{\sigma \in \Set{\pm1}^k} \Abs{\sigma^\top \Delta \sigma}
    \le O(\rho) \mper
  \end{equation*}
  Applying \Cref{cor:diagonal-majorizer-sdp-duality} yields the claim.
\end{proof}

\subsection{Robustness to Privacy}

We apply the following theorem in order to transform robust algorithms into private ones.

\begin{theorem}[\cite{HopkinsKMN23}]
  \label{thm:pure_dp_reduction}
  Let $\eps > 0$, $\beta \in (0,1)$, and $0 < \eta, r < 1 < R$ be fixed parameters.
  Suppose we have a score function $\cS(\theta, \mathcal{Y}) \in [0, n]$ that takes as input a dataset $\mathcal{Y} = \{y_1, \dots, y_n\}$ and a parameter $\theta \in \Theta \subset \mathbb{B}(R)^d$ (where $\Theta$ is convex and contained in a ball of radius $R$), with the following properties:
  \begin{itemize}
    \item (Bounded Sensitivity) For any two adjacent datasets $\mathcal{Y}, \mathcal{Y}'$ and any $\theta \in \Theta$, $|\cS(\theta, \mathcal{Y})-\cS(\theta, \mathcal{Y}')| \le 1.$
    \item (Quasi-Convexity) For any fixed dataset $\mathcal{Y}$, any $\theta, \theta' \in \Theta$, and any $0 \le \lambda \le 1$, we have that $\cS(\lambda \theta + (1-\lambda) \theta', \mathcal{Y}) \le \max(\cS(\theta, \mathcal{Y}), \cS(\theta', \mathcal{Y}))$.
    \item (Efficiently Computable) For any given $\theta \in \Theta$ and dataset $\mathcal{Y}$, we can compute $\cS(\theta, \mathcal{Y})$ up to error $\gamma$ in $\poly(n, d, \log \frac{R}{r}, \log \gamma^{-1})$ time for any $\gamma > 0$.
    \item (Robust algorithm finds low-scoring point) For a given dataset $\mathcal{Y}$, let $T = \min_{\theta_0 \in \Theta} \cS(\theta_0, \mathcal{Y})$. Then, we can find some point $\theta$ such that for all $\theta'$ within distance $r$ of $\theta$, $\cS(\theta', \mathcal{Y}) \le T+1$, in time $\poly(n, d, \log \frac{R}{r})$.
    \item (Volume) For any given dataset $\mathcal{Y}$ and $\eta' \ge \eta$, let $V_{\eta'}(\mathcal{Y})$ represent the $d$-dimensional volume of points $\theta \in \Theta \subset \R^d$ with score at most $\eta' n$. (Note that $V_1(\mathcal{Y})$ is the full volume of $\Theta$).
  \end{itemize}
  Then,
  we have a pure $\eps$-DP algorithm $\mathcal{A}$ on datasets of size $n$, that runs in $\poly(n, d, \log \frac{R}{r})$ time, with the following property. For any dataset $\mathcal{Y}$, if there exists $\theta$ with $\cS(\theta, \mathcal{Y}) \le \eta n$ and if
  \begin{equation*}
    n \ge \Omega\left(\max_{\eta': \eta \le \eta' \le 1}\frac{\log(V_{\eta'}(\mathcal{Y})/V_{\eta}(\mathcal{Y})) + \log (1/(\beta \cdot \eta))}{\eps \cdot \eta'}\right),
  \end{equation*}
  then $\mathcal{A}(\mathcal{Y})$ outputs some $\theta \in \Theta$ of score at most $2\eta n$ with probability $1-\beta$.
\end{theorem}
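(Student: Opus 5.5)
The plan is to build the mechanism around the exponential mechanism with score $\cS$. The ideal (inefficient) mechanism outputs $\theta\in\Theta$ with density proportional to $\exp(-\eps\,\cS(\theta,\mathcal Y)/2)$. By bounded sensitivity, replacing one user changes the unnormalized density pointwise by a factor at most $e^{\eps/2}$ and the normalizer by at most $e^{\eps/2}$, so the ideal mechanism is $(\eps,0)$-DP. The remaining steps are, in order: (i) prove utility for the ideal mechanism; (ii) implement it approximately in polynomial time; (iii) repair the approximation so that the guarantee stays pure.

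\textbf{Utility.} Assume some $\theta$ has score at most $\eta n$. We compare the mass of bad points (score $>2\eta n$) with the mass of good points (score $\le\eta n$).
\begin{itemize}
\item Every point of score at most $\eta n$ has unnormalized weight at least $e^{-\eps\eta n/2}$, so the normalizer is at least $V_\eta e^{-\eps\eta n/2}$.
\item Partition the bad points into geometric shells $\eta' n<\cS\le 2\eta' n$, with $\eta'=2^j\eta$ for $j=1,\dots,\log_2(1/\eta)$.
\item The mass of one shell is at most $V_{2\eta'}e^{-\eps\eta' n/2}$. Hence its share is at most
\[
\exp\bigl(\log(V_{2\eta'}/V_\eta)-\eps(\eta'-\eta)n/2\bigr)\le\exp\bigl(\log(V_{2\eta'}/V_\eta)-\eps\eta' n/4\bigr).
\]
\end{itemize}
The hypothesis on $n$, applied with $2\eta'$ in place of $\eta'$ and a large enough hidden constant, makes each term at most $\beta\eta$. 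Summing over the $O(\log(1/\eta))$ shells gives total failure probability at most $\beta$; this summation is where the $\log(1/\eta)$ term in the hypothesis is spent.

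\textbf{Efficient implementation.} By quasi-convexity, each sublevel set $K_t=\{\theta\in\Theta:\cS(\theta)\le t\}$ is convex. Efficient computability of $\cS$ gives an approximate membership oracle for $K_t$, and $K_t\subseteq\mathbb B(R)$.
\begin{enumerate}
\item Run the robust algorithm to find $\theta_0$ whose $r$-ball lies in $K_{T+1}$, where $T=\min\cS$. Then $\theta_0$ is a well-rounded starting point for every $K_t$ with $t\ge T+1$.
\item Rewrite the ideal density as a mixture over levels. Discretize $t$ on a fine grid of $[0,n]$ with spacing of order $1/\eps$ or finer. Choose level $t$ with probability proportional to $\bigl(\mathrm{vol}(K_t)-\mathrm{vol}(K_{t^-})\bigr)e^{-\eps t/2}$, then sample uniformly from $K_t\setminus K_{t^-}$. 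Equivalently, and more cleanly, pick $t$ with weight proportional to $\mathrm{vol}(K_t)\cdot(e^{-\eps t/2}-e^{-\eps t^+/2})$ and sample uniformly from the convex body $K_t$.
\item Estimate the volumes and produce the samples with standard randomized convex-body algorithms. Their running time is polynomial in $d$, $\log(R/r)$, and the logarithm of the inverse accuracy.
\item Levels below $T+1$ hold mass that the algorithm cannot see. This mass is at most a factor $e^{\eps/2}$ of what is already counted, and perturbing the score by $1$ is harmless for both privacy and utility, so it can be absorbed.
\end{enumerate}

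\textbf{Main obstacle: keeping privacy pure.} Approximate samplers and volume estimators only give small total-variation error or small relative error with some failure probability, and such errors usually yield $(\eps,\delta)$-DP rather than pure DP. I would first try the following.
\begin{itemize}
\item Run every subroutine with failure probability and TV error $2^{-\poly(n,d,\log(R/r))}$, which costs only polynomial time.
\item Mix the final output with the uniform distribution on $\Theta$, with mixing weight $\zeta=e^{-n\eps}\cdot(r/R)^{d}$ times a small constant.
\item The ideal density is then bounded below pointwise by roughly $\zeta/\mathrm{vol}(\Theta)$. Any additive error far below this floor becomes a multiplicative $e^{O(\eps)}$ perturbation of the density on every measurable set, which restores pure $\eps$-DP after rescaling $\eps$ by a constant.
\item The mixture changes utility by only $\zeta\ll\beta$.
\end{itemize}
The delicate point is that the sampler's TV error is not pointwise. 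To handle it, I would make the volume estimates multiplicatively accurate so that the level-selection probabilities are pointwise close. For the within-level uniform sampling, I would use a sampler with a warm-start density ratio bound, or apply rejection against a grid discretization of $\Theta$ at scale $r$, so that the output density ratio to the ideal is pointwise $1\pm\gamma$ outside an event of exponentially small probability, which the uniform mixture then dominates.
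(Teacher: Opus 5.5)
The paper does not prove this theorem; it imports it from \cite{HopkinsKMN23}. \Cref{lem:refined-private-utility} shows the shape of that proof: an exponential mechanism over the points of a fine grid, with weights $\propto\exp(-c\eps\cS)$ up to constant factors, grid-point counts of sublevel sets compared with their volumes, and the same dyadic-shell bound you give. Your utility calculation is correct and is that argument. The union bound over the $\log_2(1/\eta)$ shells is exactly where $\log(1/(\beta\eta))$ comes from, and the geometric decay of the shell bounds is what lets \Cref{lem:refined-private-utility} replace it by $\log(2/\beta)$.

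The gap is in making the efficient implementation \emph{pure}. With a continuous output space, your step ``pointwise $1\pm\gamma$ outside an event of exponentially small probability, which the uniform mixture then dominates'' fails. Pure DP must hold for every measurable event, and a failure branch or TV error can put its mass on a Lebesgue-null set, to which the uniform mixture assigns nothing. Concretely, a lazy ball walk started at the data-dependent point $\theta_0$ stays there for all steps with positive probability. Hence $\Pr[\mathcal A(\cY)=\theta_0(\cY)]>0=\Pr[\mathcal A(\cY')=\theta_0(\cY)]$ whenever $\theta_0(\cY')\neq\theta_0(\cY)$. For the same reason, the standard convex-body samplers do not provide a pointwise density-ratio guarantee from a start at $\theta_0$.

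The repair is the discrete route you mention last, but it must be the mechanism itself rather than a patch on the continuous one. Run the exponential mechanism for the clipped score on a finite grid $G\subset\Theta$: choose a level, then a grid point of $K_t$ (e.g.\ by a grid walk). Every point of $G$ then has ideal probability at least $e^{-\eps n}/|G|$, with $\log|G|=\poly(d,\log(R/r))$. Making all additive errors and failure probabilities smaller than $\eps e^{-\eps n}/|G|$ costs only $O(\eps n+\log|G|+\log(1/\eps))$ bits of accuracy. It turns those errors into multiplicative $e^{O(\eps)}$ factors on every atom, hence on every event.

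The utility analysis must then be redone with $|K_t\cap G|$ in place of $\mathrm{vol}(K_t)$. For a convex $K_t$ containing an $r$-ball, these agree up to constant factors once the spacing is $r/\poly(d)$; spacing of order $r/d^{3/2}$ suffices. Spacing $r$ does not work: for $d\ge5$ the covering radius $r\sqrt d/2$ of $r\mathbb Z^d$ exceeds $r$, so an $r$-ball can contain no grid point at all. The ball comes from the robust-algorithm hypothesis, $B(\theta_0,r)\subseteq K_{T+1}$ with $T\le\eta n$, at the cost of the unit score shift you already make. With this comparison your shell argument goes through unchanged.

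Two smaller corrections. First, the level grid must have spacing $O(1)$ in score units, not of order $1/\eps$. Your layer-cake sampler is the exponential mechanism for the score rounded up to the level grid, and rounding to spacing $s\ge1$ can raise the sensitivity to $s$. So $s=1/\eps$ gives privacy loss of order $1$ rather than $\eps$ when $\eps<1$. Second, with only a $\gamma$-accurate score oracle, the sets on which the oracle answers ``score $\le t$'' are merely sandwiched between the convex sets $K_{t-\gamma}$ and $K_{t+\gamma}$. Your use of convex-body samplers and volume estimators has to account for this.
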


For context we also record the corresponding approximate-DP reduction theorem. Relative to \Cref{thm:pure_dp_reduction}, the only extra input is the intermediate volume-ratio condition at levels $0.8\eta^*$ and $\eta^*$.

\begin{theorem}[\cite{HopkinsKMN23}]
  \label{thm:approx-dp-reduction}
  Let $\eps > 0$, $\delta, \beta \in (0,1)$, $0 < \eta < 0.1$, and $0<r<1<R$ be fixed parameters. Suppose we have a score function $\cS(\theta, \cY) \in [0, \infty)$ that takes as input a dataset $\cY = \{y_1, \ldots , y_n\}$ and a parameter $\theta \in \Theta \subset \mathbb{R}^d$ (where $\Theta$ is convex and contained in a ball of radius $R$), satisfying the properties in~\cref{thm:pure_dp_reduction}. In addition, fix some parameter $\eta^* \in [10\eta, 1]$. Suppose that
  \begin{equation}
    \label{eq:approx-reduction-privacy}
    n \geq \Omega\Paren{\frac{\log (1/\delta) + \log(V_{\eta^*}(\cY)/V_{0.8\eta^*}(\cY))}{\eps \cdot \eta^*}}
  \end{equation}
  for all $\cY$ such that there exists $\theta$ with $\cS(\theta, \cY) \leq 0.7\eta^*n$.
  Then, there exists an $(\eps, \delta)$-DP algorithm $\cA$, that runs in $\poly(n, d, \log (\tfrac R r))$ time, such that for any dataset $\cY$, if there exists $\theta$ with $\cS(\theta, \cY) \leq \eta n$ and if
  \begin{equation}
    \label{eq:approx-reduction-utility}
    n \geq \Omega \left(\max_{\eta':\eta \leq \eta'\leq \eta^*} \frac{\log (V_{\eta'}(\cY)/V_{\eta}(\cY)) + \log (1/(\beta \cdot \eta))}{\eps \cdot \eta'}\right),
  \end{equation}
  then $\cA(\cY)$ outputs some $\theta \in \Theta$ satisfying $\cS(\theta, \cY) \leq 2\eta n$ with probability at least $1-\beta$.
\end{theorem}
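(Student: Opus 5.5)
The statement is the approximate-DP reduction of Hopkins, Kamath, Majid, and Narayanan~\cite{HopkinsKMN23}, which we use as a black box. The plan for reproving it combines a truncated exponential mechanism over the score with a noisy feasibility test. First, compute $T_0=\min_{\theta}\cS(\theta,\cY)$ using the efficient-computability and low-scoring-point properties. Release $\widetilde T=T_0+\mathrm{Lap}(O(1/\eps))$, and abort if $\widetilde T>0.65\eta^*n$. Since $T_0$ has sensitivity one by bounded sensitivity, this test is $\eps/2$-DP. If we do not abort, then except with probability $\delta/3$ we have $T_0\le 0.7\eta^*n$, as long as $\eta^* n\gtrsim \log(1/\delta)/\eps$. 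Hence the hypothesis of \eqref{eq:approx-reduction-privacy} applies to the dataset actually used.

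Second, run the exponential mechanism truncated to $\{\theta:\cS(\theta,\cY)\le \eta^*n\}$, with density proportional to $\exp(-\eps\,\cS(\theta,\cY)/4)$. For privacy we compare the unnormalized densities on neighbors $\cY,\cY'$. On the common region they differ by a factor $e^{\pm\eps/4}$, so the only obstruction is the boundary shell where one score is at most $\eta^*n$ and the other exceeds it. The mass of that shell is at most $V_{\eta^*}(\cY)e^{-\eps(\eta^*n-1)/4}$. The normalizer is at least $V_{0.8\eta^*}(\cY)e^{-0.2\eps\eta^*n}$. Their ratio is therefore at most $(V_{\eta^*}/V_{0.8\eta^*})e^{-\Omega(\eps\eta^* n)}\le\delta/3$ by \eqref{eq:approx-reduction-privacy}. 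This gives $(\eps,\delta)$-DP overall by basic composition.

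For utility, partition the scores into geometric levels $\eta'\in[\eta,\eta^*]$. The lower bound on the normalizer is $V_\eta(\cY)e^{-\eps\eta n/4}$, which holds because some $\theta$ has score at most $\eta n$. Thus the probability of landing at score about $\eta' n\ge 2\eta n$ is at most $(V_{\eta'}/V_\eta)e^{-\Omega(\eps\eta' n)}$. Summing over $O(\log(1/\eta))$ levels and using \eqref{eq:approx-reduction-utility} bounds the total by $\beta/2$. The abort step fails with probability at most $\beta/2$, by the same Laplace tail and $T_0\le\eta n\le 0.1\eta^* n$.

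The main obstacle is implementing the sampler in polynomial time with total-variation error at most $\delta/3$, absorbing the sampling error into $\delta$ and $\beta$. Quasi-convexity makes every sublevel set $\{\cS\le t\}$ convex. I would therefore first sample a discretized level $t$ (integer, or rounded to precision $\gamma$) with probability proportional to $\mathrm{vol}\{\cS\le t\}\cdot(e^{-\eps t/4}-e^{-\eps(t+1)/4})$. I would then sample uniformly from that convex body, which has a membership oracle via the approximate score computation. Volumes would be estimated by standard randomized convex-body volume algorithms. These need a well-rounded start: an inner ball of radius $r$, supplied by the low-scoring-point property, together with the outer radius $R$. That gives running time $\poly(n,d,\log(R/r))$.

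The delicate point is that the score is computed only up to $\gamma$ and the volume estimates are only multiplicative. These perturbations must change the output distribution by at most $\delta$, not just by a constant factor. I would handle this by taking $\gamma$ and the estimation accuracy polynomially small and charging the multiplicative slack $e^{O(\gamma\eps n)}$ to the privacy parameter.
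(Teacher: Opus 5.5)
Your plan agrees with the paper's treatment. The paper imports this reduction from \cite{HopkinsKMN23} without proof, and your reconstruction is consistent with the description in the proof of \Cref{lem:refined-private-utility}: a noisy test on the sensitivity-bounded minimum score, an exponential mechanism truncated to a score sublevel set whose boundary shell is controlled by $V_{\eta^*}/V_{0.8\eta^*}$, a geometric-level utility sum, and numerical sampling error charged to $\delta$ and $\beta$. The one adjustment is that the cited sampler truncates at score $0.9\eta^* n+O(1)$ and samples grid points, not truncating at exactly $\eta^* n$; this slack absorbs the additive error of approximate score evaluation, so the support stays inside the score-$\eta^* n$ set, the only set whose volume the hypothesis \eqref{eq:approx-reduction-privacy} controls.
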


\begin{lemma}[A refined utility bound]
  \label{lem:refined-private-utility}
  Suppose $0<\eps\le1$ and the score-$\eta n$ set contains a Euclidean ball of radius $r$. In the sample-size condition of \Cref{thm:pure_dp_reduction} and in \eqref{eq:approx-reduction-utility}, the term $\log(1/(\beta\eta))$ can be replaced by $\log(2/\beta)$. All other hypotheses remain required, including the worst-case volume condition \eqref{eq:approx-reduction-privacy} for approximate DP.
\end{lemma}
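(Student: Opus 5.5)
The plan is to open up the exponential-mechanism analysis underlying \Cref{thm:pure_dp_reduction} and locate exactly where $\log(1/(\beta\eta))$ enters; the approximate-DP case \Cref{thm:approx-dp-reduction} is then handled the same way. The mechanism samples $\theta$ with density proportional to $\exp(-\eps\,\cS(\theta,\cY)/2)$, possibly restricted to a truncated region in the approximate-DP case. A point of score at least $2\eta n$ is output with probability at most
\begin{equation*}
  \frac{\sum_{j} V_{\eta_j'}(\cY)\, e^{-\eps \eta_j' n/2}}{V_{\eta}(\cY)\, e^{-\eps\eta n/2}},
\end{equation*}
where the sum runs over a dyadic or arithmetic grid of score levels $\eta_j'\ge 2\eta$. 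In the original argument the grid has about $1/\eta$ levels, and a union bound over them produces the $\log(1/\eta)$ term. The refinement is to avoid this union bound by summing a geometric series.

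First, write the bad probability as an integral over score levels $t\ge 2\eta n$. Bucket these levels as $t\in[2\eta n + \ell,\,2\eta n+\ell+1)$ for $\ell \ge 0$. For each bucket, the hypothesized sample-size condition at $\eta'=t/n$ gives
\begin{equation*}
  \log\frac{V_{t/n}}{V_\eta}\le c\,\eps t - \log(2/\beta)
\end{equation*}
with a small constant $c$. Hence the bucket contributes at most
\begin{equation*}
  \exp\bigl(-\eps(t-\eta n)/2 + c\eps t - \log(2/\beta)\bigr)\le (\beta/2)\, e^{-\eps(t-\eta n)/4},
\end{equation*}
using $t\ge 2\eta n$, which gives $t-\eta n\ge t/2$. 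Summing over $\ell$ gives a geometric series with ratio $e^{-\eps/4}$. It is bounded by
\begin{equation*}
  (\beta/2)\, e^{-\eps\eta n/4}\,/\,(1-e^{-\eps/4})=O\bigl(\beta\, e^{-\eps\eta n/4}/\eps\bigr).
\end{equation*}
This is where $\eps\le1$ is needed. The factor $1/\eps$ is absorbed by $e^{-\eps\eta n/4}$, since the condition forces $\eps\eta n\gtrsim\log(2/\beta)$, and we enlarge the constant so that $\eps\eta n\ge C\log(1/\eps)$ as well. Whether that last requirement follows from the stated bound is the step I expect to be the main obstacle. If it does not, I would instead put a finer weighting on the denominator. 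Specifically, I would lower-bound the normalizer using all points of score at most $\eta n$, not only those of score exactly $\eta n$. Note that the original $\log(1/\eta)$ came from discretizing the levels $\eta'\in[\eta,1]$ into $1/\eta$ points; the bucket-by-integer-score sum replaces this by the geometric-series constant.

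Second, the hypothesis that the score-$\eta n$ set contains a Euclidean ball of radius $r$ is used to guarantee that $V_\eta(\cY)>0$. It also lets the approximate sampler of \cite{HopkinsKMN23}, which works with the $\gamma$-accurate score and discretization at scale $r$, lose only a constant factor in the denominator. This ensures that the additive sampling error and the $\poly$ running time are unchanged, and no extra $\log(1/\eta)$ from the discretization enters.

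Finally, for the approximate-DP statement, I would note that the privacy analysis (truncation and the worst-case condition \eqref{eq:approx-reduction-privacy}) is untouched. Only the utility calculation changes, and the same geometric-sum argument over levels in $[2\eta,\eta^*]$ applies, using \eqref{eq:approx-reduction-utility} with $\log(2/\beta)$. The truncated mass above $\eta^*$ is handled as in the original proof.
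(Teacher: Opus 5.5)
Your per-bucket estimate is correct, but summing it over unit-width score buckets loses a factor $\Theta(1/\eps)$ that the hypothesis cannot pay for. You arrive at $O(\beta e^{-\eps\eta n/4}/\eps)$. The only lower bound on $\eps\eta n$ that the refined condition supplies (take $u=\eta$) is $\eps\eta n\ge C\log(2/\beta)$. For fixed $\beta$ and $\eps\to0$, this does not give $\eps\eta n\gtrsim\log(1/\eps)$. Enlarging $C$ does not help, because the condition contains no $\log(1/\eps)$, and adding such a requirement would change the statement.

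Your fallback cannot work either. The normalizer bound $V_\eta(\cY)e^{-\eps\eta n/2}$ already counts every point of score at most $\eta n$, and it is tight when those points have score close to $\eta n$. The loss is in the numerator. You bound the volume of the bucket $[t,t+1)$ by the cumulative volume $V_{(t+1)/n}(\cY)$. So volume at score $s$ is charged to every later bucket, with total weight about $(2/\eps)e^{-\eps s/2}$ instead of $e^{-\eps s/2}$.

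The paper avoids this by using shells whose width grows with the level. The shell of scores in $[2^j\eta n,2^{j+1}\eta n)$, $j\ge1$, has probability at most $A\exp(-c\eps n\eta(2^j-1))\,V_{2^{j+1}\eta}(\cY)/V_\eta(\cY)$, with levels capped at $1$, or at $\eta^*$ for approximate DP. Since $2^{j+1}\eta\le4(2^j-1)\eta$, the sample-size condition at $u=2^{j+1}\eta$ absorbs the volume ratio into half of the exponential decay. Because $2^j-1\ge j$ and $\eps\eta n\ge C\log(2/\beta)$, the series $A\sum_{j\ge1}\exp(-\frac{c}{2}\eps n\eta(2^j-1))$ is dominated by its first term and is at most $\beta/2$. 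No factor $1/\eps$ appears, and the roughly $\log(1/\eta)$ shells cost nothing.

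Within your own bucketing, summation by parts would also repair the argument. Charge each bucket only the increment $V_{(t+1)/n}(\cY)-V_{t/n}(\cY)$. Abel summation then produces a factor $e^{\eps/2}-1=O(\eps)$ that cancels the $1/\eps$ of the geometric series.

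For approximate DP you also need two further checks. First, the reduction's private test on the minimum score must accept whenever that minimum is at most $\eta n$; this uses $\eta^*\ge10\eta$ and $\eta n\ge C\log(2/\beta)/\eps$. Second, the sampler's numerical failure probability must be set to at most $\min\{\beta/2,c_0\delta\}$ for a small absolute constant $c_0$. Saying that the truncation is handled as in the original proof does not cover these steps.
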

\begin{proof}
  We refine the tail calculation for the samplers of~\cite[full version, Lemmas~4.7 and~4.8]{HopkinsKMN23}. These samplers assign weights proportional to $\exp(-c\eps\cS)$ up to constant multiplicative factors; the approximate-DP sampler also truncates at score $0.9\eta^*n+O(1)$. Their grid-point counts agree with sublevel-set volumes up to constant factors, by~\cite[Fact~4.6]{HopkinsKMN23}. The inner-ball assumption supplies this comparison at level $\eta$.

  Each nonempty shell with score in $[2^j\eta n,2^{j+1}\eta n)$, $j\ge1$, has probability at most
  \begin{equation*}
    A\exp\Paren{-c\eps n(2^j-1)\eta}\,
    \begin{cases}
      \dfrac{V_{\min\{2^{j+1}\eta,1\}}(\cY)}{V_\eta(\cY)}, & \text{for pure DP},\\[6pt]
      \dfrac{V_{\min\{2^{j+1}\eta,\eta^*\}}(\cY)}{V_\eta(\cY)}, & \text{for approximate DP},
    \end{cases}
  \end{equation*}
  for absolute constants $A,c>0$. Additive $O(1)$ score errors affect only $A$, since $\eps\le1$. If
  \begin{equation}
    \label{eq:refined-utility-sample-size}
    n\ge
    \begin{cases}
      C\displaystyle\max_{\eta\le u\le1}
      \dfrac{\log(V_u(\cY)/V_\eta(\cY))+\log(2/\beta)}{\eps u}, & \text{for pure DP},\\[10pt]
      C\displaystyle\max_{\eta\le u\le\eta^*}
      \dfrac{\log(V_u(\cY)/V_\eta(\cY))+\log(2/\beta)}{\eps u}, & \text{for approximate DP},
    \end{cases}
  \end{equation}
  then the capped volume levels are at most $2^{j+1}\eta\le4(2^j-1)\eta$, which lets us absorb the volume ratio into half of the exponential decay by choosing $C$ sufficiently large. Taking $u=\eta$ in \eqref{eq:refined-utility-sample-size} also gives $\eps n\eta\ge C\log(2/\beta)$. Consequently the sum of the shell probabilities is at most
  \begin{equation*}
    A\sum_{j\ge1}\exp\Paren{-\tfrac c2\eps n(2^j-1)\eta}\le\beta/2.
  \end{equation*}
  For approximate DP, the reduction's test based on the minimum score accepts whenever that minimum is at most $\eta n$: indeed $\eta^*\ge10\eta$, and \eqref{eq:refined-utility-sample-size} gives $\eta n\ge C\log(2/\beta)/\eps$, which exceeds a sufficiently large constant. Its sampler has support below $\eta^*n$, since $0.9\eta^*n+O(1)<\eta^*n$. Set its numerical failure probability to at most $\min\{\beta/2,c_0\delta\}$ for a sufficiently small absolute constant $c_0>0$. Rescaling the privacy and failure parameters by absolute constants, as in the cited reductions, proves the claim.
\end{proof}

  \section{Robust Estimation for Discrete Batches}
\label{sec:robust-algorithm}

In this section, we design a robust estimation algorithm.
In \Cref{sec:private-algorithm} we will show how to use this robust algorithm (\Cref{thm:main-robust}) and \Cref{thm:pure_dp_reduction} to obtain a private algorithm.
For the technical arguments in this section, fix a sufficiently small absolute constant $\etaabs \in (0,1/20)$.

\begin{theorem}[Robust Estimation from Discrete Batches]
  \label{thm:main-robust}
  Let $\mu \in \Delta^{k-1}$, $\eta \in (0,\etaabs)$, $\beta \in (0,1)$, and $m \in \mathbb{N}$. Then there exists a polynomial-time algorithm\sodared{, obtained by solving a degree-$6$ SoS relaxation,} that takes in $n$ user batches of size $m$ forming an $\eta$-corruption of independent clean batches
  \begin{equation*}
    B_i^\star = (U_{i,1}^\star,\ldots,U_{i,m}^\star),
    \qquad
    U_{i,1}^\star,\ldots,U_{i,m}^\star \stackrel{\mathrm{iid}}{\sim} \mu,
  \end{equation*}
  and outputs an estimate $\hat{\mu}$ such that $\normtv{\hat{\mu} - \mu} \leq \alpha$, with probability at least $1 - \beta$, where $\alpha = O\paren{\eta\smash[t]{\sqrt{\log(1/\eta)/m}}}$, provided that
  \begin{equation*}
    n = \Omega\Paren{\frac{k + \log(1/\beta)}{\alpha^2 m}} \mper
  \end{equation*}
\end{theorem}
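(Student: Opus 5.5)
We will follow the route laid out in \Cref{sec:techniques}. The algorithm solves the degree-$6$ SoS relaxation of the system $\mathcal A_\eta$, with $\tau_{\mathrm{cov}} = C\eta\log(1/\eta)/m$ for a large absolute constant $C$. It outputs the $\ell_1$ projection onto $\Delta^{k-1}$ of the pseudo-mean $\pE\mu'$. The proof has three parts, carried out in order: (i) a probabilistic clean-sample event, (ii) feasibility of $\mathcal A_\eta$ on that event, and (iii) an SoS identifiability argument showing that every degree-$6$ pseudoexpectation for $\mathcal A_\eta$ has pseudo-mean close to $\mu$.

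\textbf{Clean-sample event.} Since $\alpha^2 m \asymp \eta^2\log(1/\eta)$, the sample bound matches the hypothesis of \Cref{fact:main-resilience-lemma}. We therefore work on its event with probability $1-\beta$. From it we derive two consequences for the clean data.

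First, the mean of any $(1-\eta)$-subset is $O(\eta\sqrt{\log(1/\eta)/m})$-close to $\mu$ in every subset direction. Second, setting $\bar\mu=\E_i y_i^*$, the matrix $\Delta^\star_{\mathrm{cov}}=\widehat\Sigma^\star-\Sigma(\bar\mu)$ satisfies $\sup_s|s^\top\Delta^\star_{\mathrm{cov}}s|=O(\eta\log(1/\eta)/m)$. To see this, recenter $\E_i\iprod{y_i^*-\mu,s}^2$ at $\bar\mu$; this costs $\iprod{\bar\mu-\mu,s}^2=O(\eta^2\log/m)$. Then compare $V(\mu,s)$ with $V(\bar\mu,s)$, whose difference is $O(|\iprod{\bar\mu-\mu,s}|/m)$, which is smaller still.

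The identity $\Delta^\star_{\mathrm{cov}}\bm 1=0$ holds because every $y_i^*$ and $\bar\mu$ lie on the simplex, and $\Sigma(\nu)\bm1=0$ for $\nu\in\Delta^{k-1}$.

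\textbf{Feasibility.} \Cref{prop:subset-control-majorizer} now yields $\lambda$ with $\operatorname{Diag}(\lambda)\succcurlyeq\Delta^\star_{\mathrm{cov}}$ and $\sum_j\lambda_j\le\tau_{\mathrm{cov}}$ once $C$ is large. Set $w_i=\mathbb 1[y_i=y_i^*]$, $y_i'=y_i^*$, $\mu'=\bar\mu$, and take $M$ to be a PSD square root of $\operatorname{Diag}(\lambda)-\Delta^\star_{\mathrm{cov}}$. This is an actual solution of $\mathcal A_\eta$, so the SDP is feasible. It can be solved to small numerical error in polynomial time, and the error is absorbed by the final projection.

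\textbf{Identifiability.} Fix $s\in\{0,1\}^k$. Let $r_i$ indicate that batch $i$ is clean, and note $w_ir_iy_i'=w_ir_iy_i^*$. Split
\[
\mu'-\mu=\E_i(y_i'-\mu)=\E_i w_ir_i(y_i^*-\mu)+\E_i(1-w_ir_i)(y_i'-\mu).
\]
For the first term, the selector lemma (\Cref{lem:sos-selector-lemma}) together with the subset-mean bound gives $O(\eta\sqrt{\log/m})$. For the second term, SoS Cauchy--Schwarz together with $\E_i(1-w_ir_i)\le2\eta$ gives
\[
\bigl(\E_i(1-w_ir_i)\iprod{y_i'-\mu,s}\bigr)^2\le 2\eta\,\E_i(1-w_ir_i)\iprod{y_i'-\mu,s}^2 .
\]
Write $\E_i(1-w_ir_i)\iprod{y_i'-\mu,s}^2$ as the total repaired second moment minus the clean-retained part. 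We bound each piece:
\begin{itemize}
\item The total repaired second moment equals $\E_i\iprod{y_i'-\mu',s}^2+\iprod{\mu'-\mu,s}^2$. Testing the majorizer on $\sigma=2s-\bm1$ bounds this by $V(\mu',s)+\tau_{\mathrm{cov}}/4+\iprod{\mu'-\mu,s}^2$.
\item The clean-retained second moment is at least $V(\mu,s)-O(\eta\log/m)$. This follows from the second-moment bound of \Cref{fact:main-resilience-lemma}, transferred through the selector lemma, and is a degree-$4$ statement.
\end{itemize}
Combining these yields \eqref{eq:intro-directional-identifiability}. Next, \Cref{lem:sos-cov} gives $(V(\mu',s)-V(\mu,s))^2\le\frac4{m^2}\iprod{\mu'-\mu,s}^2$, which reduces to the SoS fact that $x(1-x)-y(1-y)=(x-y)(1-x-y)$ with $|1-x-y|\le 1$ for $x,y\in[0,1]$. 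Applying Young's inequality (\Cref{fact:sos-young-inequality}) and absorbing terms, as in the overview, gives $\mathcal A_\eta\sststile{6}{}\{\iprod{\mu'-\mu,s}^2\le O(\eta^2\log(1/\eta)/m)\}$.

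\textbf{Output.} Pseudoexpectation Cauchy--Schwarz gives $\iprod{\pE\mu'-\mu,s}\le O(\eta\sqrt{\log/m})$ for every $s$. Choose $s$ to be the positive support of $\pE\mu'-\mu$; by \Cref{rem:tv-convention}, this yields the TV bound. Projection at most doubles the error.

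\textbf{Main obstacle.} We expect the hardest step to be bookkeeping the SoS degrees within $6$. The term $(1-w_ir_i)\iprod{y_i'-\mu,s}^2$ is already degree $3$ in the variables. Writing the repaired second moment through $\mu'$ creates products whose degree must stay within $6$. Lifting \Cref{fact:main-resilience-lemma}, which quantifies over sets, to selector variables must be done with the real-budget extension of \Cref{lem:sos-selector-lemma}.
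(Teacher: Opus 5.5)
Your proposal is correct and follows the paper's proof essentially step for step: the same clean witness made feasible through \Cref{prop:subset-control-majorizer}, the same split into clean-kept and corrupted-or-discarded contributions with weighted Cauchy--Schwarz, the majorizer upper bound on the repaired second moment played against the clean-retained lower bound, absorption of $4\eta(V(\mu',s)-V(\mu,s))$ via \Cref{lem:sos-cov} and Young's inequality, and pseudoexpectation Cauchy--Schwarz with total-variation duality at the end. The deviations are minor. You bound the clean-kept term by applying the selector lemma to the large-subset mean bound. This needs a little care, since the $\xi_i$ are signed and $\sum_i(1-w_ir_i)$ is not fixed, so you would pass to positive and negative parts and multiply the two one-sided bounds to obtain the square. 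The paper instead uses Cauchy--Schwarz together with the small-set second-moment bound. Also, numerical SDP error is absorbed by the polynomially bounded coefficients of the degree-$6$ certificate, not by the final projection alone.
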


We will view the robust constraints through the variance discrepancy for a kept subset with empirical mean $\bar{\mu}_T$ and covariance $\smash[t]{\widehat{\Sigma}}_T$:
\begin{equation*}
  \langle ss^\top, \widehat{\Sigma}_T \rangle - \frac{1}{m}\Paren{\iprod{\bar{\mu}_T, s} - \iprod{\bar{\mu}_T, s}^2},
  \qquad
  s \in \Set{0,1}^k \mper
\end{equation*}
This is the object optimized by the SDP-based algorithms of Jain and Orlitsky~\cite{JainO20a} and Chen, Li, and Moitra~\cite{ChenLM20b}. Our succinct SoS certificate will be the dual object: a trace-bounded diagonal majorizer.

We isolate the deterministic properties of the clean samples needed in the robust analysis.
\begin{definition}[$\eta$-Good Samples]
  \label{def:eta-good}
  Let $\mu \in \Delta^{k-1}$, $\eta \in (0,1/2)$, and $m \in \mathbb{N}$.
  Let $y_1^*, \dots, y_n^* \in \R^k$ be any points.
  For $s \in \Set{0,1}^k$, write
  \begin{equation*}
    \xi_i(s) \coloneq \iprod{y_i^* - \mu, s} \mper
  \end{equation*}
  We say that $y_1^*, \dots, y_n^*$ are \emph{$\eta$-good} if for every $s \in \Set{0,1}^k$ we simultaneously have
  \begin{enumerate}
    \item $\abs{\E_i \xi_i(s)} \le O(\eta\smash[t]{\sqrt{\log(1/\eta)/m}})$.
    \item $\Abs{\E_i \xi_i(s)^2 - V(\mu, s)} \le O\Paren{\eta \log(1/\eta) / m}$.
    \item For every subset $T \subseteq [n]$ with $\abs{T} \le 2\eta n$,
      \begin{equation*}
        \frac{1}{n} \sum_{i \in T} \xi_i(s)^2 \le O\Paren{\eta \log(1/\eta) / m} \mper
      \end{equation*}
  \end{enumerate}
\end{definition}

Clean batch-average vectors are $\eta$-good with high probability, as shown next.
\begin{corollary}[True Samples Are $\eta$-Good with High Probability]
  \label{cor:eta-good}
  Let $\mu \in \Delta^{k-1}$, $\eta \in (0,1/4)$, $\beta \in (0,1/2)$, and $m \in \mathbb{N}$.
  Let
  \begin{equation*}
    B_i^* = (U_{i,1}^*,\ldots,U_{i,m}^*),
    \qquad
    U_{i,1}^*,\ldots,U_{i,m}^* \stackrel{\mathrm{iid}}{\sim} \mu,
    \qquad
    y_i^* \coloneq \frac1m \sum_{j=1}^m e_{U_{i,j}^*}
  \end{equation*}
  for each $i \in [n]$.
  Then as long as $n \ge \Omega((k + \log(1/\beta))/(\eta^2 \log(1/\eta)))$, with probability $1-\beta$, the samples $y_1^*, \dots, y_n^*$ are $\eta$-good in the sense of \Cref{def:eta-good}.
\end{corollary}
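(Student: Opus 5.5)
This corollary follows almost directly from \Cref{fact:main-resilience-lemma} and the concentration estimates in its proof. The plan is to verify the three conditions of \Cref{def:eta-good} one at a time, always on a single high-probability event. Since $\eta<1/4$, we may apply \Cref{fact:main-resilience-lemma} with corruption parameter $2\eta<1/2$. Its sample requirement $\Omega((k+\log(1/\beta))/((2\eta)^2\log(1/(2\eta))))$ is, up to constants, the same as the assumed bound $\Omega((k+\log(1/\beta))/(\eta^2\log(1/\eta)))$. The loss from replacing $\eta$ by $2\eta$ inside $\eta\log(1/\eta)$ and $\eta\sqrt{\log(1/\eta)}$ is only a constant factor, and the $O(\cdot)$ in \Cref{def:eta-good} absorbs it.

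\textbf{Conditions 1 and 2.} These are the special case $T=[n]$ of \Cref{fact:main-resilience-lemma}. The first conclusion bounds $\abs{\E_i\xi_i(s)}$, and the second bounds $\abs{\E_i\xi_i(s)^2-V(\mu,s)}$, uniformly over $s\in\Set{0,1}^k$. One can also read them off the displayed full-sample bounds in that proof. Those bounds are $O(\sqrt{(k+\log(1/\beta))/(nm)})$ for the mean and $O(m^{-1}(\sqrt{(k+\log(1/\beta))/n}+(k+\log(1/\beta))/n))$ for the second moment. Under the sample-size assumption we have $(k+\log(1/\beta))/n\le c\,\eta^2\log(1/\eta)$, so these are $O(\eta\sqrt{\log(1/\eta)/m})$ and $O(\eta\log(1/\eta)/m)$ respectively.

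\textbf{Condition 3.} This is the small-set second-moment bound. To prove it, I would rerun the argument from the proof of \Cref{fact:main-resilience-lemma} with sets $U$ of size $\lceil 2\eta n\rceil$ in place of $\lceil \eta n\rceil$. Because $a_i=\xi_i(s)^2\ge0$, it suffices to handle sets of exactly that size.

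For a fixed $U$ and a fixed $s$, the exponential-moment bound applies because $\xi_i(s)^2$ is subexponential with norm $O(1/m)$. It gives $\sum_{i\in U}\xi_i(s)^2\le O((\abs{U}+t)/m)$ except with probability $e^{-t}$. We then take a union bound over the $2^k$ directions $s$ and the at most $(e/(2\eta))^{\lceil 2\eta n\rceil}$ sets $U$. Choosing $t\asymp \eta n\log(1/\eta)+k+\log(1/\beta)$ makes the total failure probability at most $\beta/2$. On this event,
\[
\frac1n\sum_{i\in U}\xi_i(s)^2\le O\Paren{\frac{\eta\log(1/\eta)}{m}+\frac{k+\log(1/\beta)}{nm}}.
\]
The sample-size assumption makes the second term $O(\eta^2\log(1/\eta)/m)$, which is dominated by the first.

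\textbf{Combining.} A union bound over this event and the event of \Cref{fact:main-resilience-lemma} (each run at failure probability $\beta/2$) gives all three conditions simultaneously with probability $1-\beta$. The only step requiring care is the bookkeeping in condition 3. There, the entropy term $\log\binom{n}{2\eta n}\approx 2\eta n\log(e/(2\eta))$ must be checked to be exactly of order $\eta n\log(1/\eta)$, which is what produces the $\log(1/\eta)$ factor. Everything else is a direct citation.
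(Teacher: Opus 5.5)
Your proof is correct, and it differs from the paper's only in how you obtain Condition~3. Conditions~1 and~2 are handled as in the paper, via the case $T=[n]$ of \Cref{fact:main-resilience-lemma} at parameter $2\eta$.

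For Condition~3, the paper does not reopen the proof of the Fact. It fixes $T$ with $\abs{T}\le 2\eta n$ and sets $S=[n]\setminus T$, so that $\abs{S}\ge(1-2\eta)n$. It then writes $\frac1n\sum_{i\in T}\xi_i(s)^2=\E_i\xi_i(s)^2-\frac{\abs{S}}{n}\cdot\frac1{\abs{S}}\sum_{i\in S}\xi_i(s)^2$. This is bounded by the two second-moment deviations in the Fact's statement (for $[n]$ and for $S$) plus $\frac{\abs{T}}{n}V(\mu,s)\le\eta/(2m)$. That is a deterministic consequence of the single event in the Fact, so the Fact is used purely as a black box.

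You instead redo the subexponential union bound over sets of size $\lceil 2\eta n\rceil$. This draws on an intermediate estimate inside the Fact's proof rather than its statement, and needs a second event combined by a union bound. In exchange, it is more direct. The Fact's own proof derives the large-set bounds from exactly this small-set estimate, so the paper's argument goes from small sets to large sets and back, whereas you read the small-set bound off at its source.

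Both routes give $O(\eta\log(1/\eta)/m)$. In your bookkeeping, note that the sample-size hypothesis forces $\eta n\ge1$, so $\lceil 2\eta n\rceil\le3\eta n$ and the ceiling costs only a constant.
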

\begin{proof}
  Apply \Cref{fact:main-resilience-lemma} with parameter $2\eta$. Its sample-size requirement is satisfied by the hypothesis of \Cref{cor:eta-good}, up to absolute constants, so the concentration event holds with probability $1-\beta$.
  Taking the subset $T = [n]$ gives Properties~1 and~2 of \Cref{def:eta-good}.

  For the third property, fix a subset $T \subseteq [n]$ with $\abs{T} \le 2\eta n$, and let $S \coloneq [n] \setminus T$.
  Then $\abs{S} \ge (1-2\eta)n$, and
  \begin{equation*}
    \frac{1}{n} \sum_{i \in T} \xi_i(s)^2
    =
    \E_i \xi_i(s)^2
    - \frac{\abs{S}}{n} \cdot \frac{1}{\abs{S}} \sum_{i \in S} \xi_i(s)^2 \mper
  \end{equation*}
  Therefore,
  \begin{align*}
    \frac{1}{n} \sum_{i \in T} \xi_i(s)^2
    &\le
    \Abs{\E_i \xi_i(s)^2 - V(\mu, s)}
    + \frac{\abs{S}}{n}
    \Abs{\frac{1}{\abs{S}} \sum_{i \in S} \xi_i(s)^2 - V(\mu, s)}
    + \frac{\abs{T}}{n} V(\mu, s) \\
    &\le
    O\Paren{\frac{\eta \log(1/\eta)}{m}}
    + O\Paren{\frac{\eta \log(1/\eta)}{m}}
    + 2\eta \cdot \frac{1}{4m} \\
    &\le
    O\Paren{\frac{\eta \log(1/\eta)}{m}} \mper
  \end{align*}
  Here we used $V(\mu, s) \le 1/(4m)$.
  This proves Property~3 of \Cref{def:eta-good}, and hence the samples are $\eta$-good.
\end{proof}
\subsection{Sum-of-Squares Based Robust Algorithm}
\label{subsec:sos-based-robust-algo}
Let
\begin{equation*}
  B_i^* = (U_{i,1}^*,\ldots,U_{i,m}^*),
  \qquad
  U_{i,1}^*,\ldots,U_{i,m}^* \stackrel{\mathrm{iid}}{\sim} \mu,
  \qquad
  y_i^* \coloneq \frac1m \sum_{j=1}^m e_{U_{i,j}^*}
\end{equation*}
for each $i \in [n]$. Let $\Set{y_i}_{i=1}^n$ be an $\eta$-corrupted version of these clean batch-average vectors, and let $\Set{y_i'}_{i=1}^n$ be the repaired batch-average indeterminates.
Write
\begin{equation*}
  \widehat{\Sigma}' \coloneq \E_i (y_i' - \mu') \transpose{(y_i' - \mu')}, \qquad
  \Sigma(\nu) \coloneq \frac1m\Paren{\diag(\nu) - \nu \transpose{\nu}}, \qquad
  \tau_{\mathrm{cov}} \coloneq C \frac{\eta\log(1/\eta)}{m},
\end{equation*}
for a sufficiently large universal constant $C$.
With variables $\lambda\in\R^k$ and $M\in\R^{k\times k}$, consider the following constraint system:

\begingroup
\small
\begin{align*}
  \mathcal{A}_\eta
  =
  \left\{
  \begin{aligned}
    &\textbf{selector/data consistency:}
    &&\forall i \in [n]:\ w_i^2 = w_i,\quad w_i y_i' = w_i y_i,\quad \sum_{i=1}^n w_i \ge (1-\eta)n, \\
    &\textbf{simplex constraints:}
    &&\forall i,j \in [n] \times [k]:\ y_{i,j}' \ge 0,\quad \forall i \in [n]:\ \sum_{j \in [k]} y_{i,j}' = 1, \\
    &&&\forall j \in [k]:\ \mu_j' \ge 0,\quad \sum_{j \in [k]} \mu'_j = 1, \\
    &\textbf{covariance-majorizer constraints:}
    &&\widehat{\Sigma}' + MM^\top = \Sigma(\mu') + \operatorname{Diag}(\lambda),\quad \sum_{j \in [k]} \lambda_j \le \tau_{\mathrm{cov}}, \\
    &&&\text{where } \mu' = \E_i y_i' \mper
  \end{aligned}
  \right\}
\end{align*}
\endgroup
Selector/data consistency determines which observed batch averages are kept and ties the repaired points to the observed data on those indices. Simplex constraints keep both the repaired points and the pseudo-mean inside the simplex. To replace all subset-direction variance checks succinctly, the covariance-majorizer block uses a diagonal slack that upper-bounds the difference between the empirical and multinomial covariance simultaneously in every subset direction.

On the $\eta$-good event, the clean data provide a witness for this system. If $r_i = \mathbb{1}[y_i = y_i^*]$ and $\bar{\mu} = \E_i y_i^*$, set $w_i = r_i$, $y_i' = y_i^*$, and $\mu' = \bar{\mu}$. These choices satisfy the selector/data-consistency and simplex constraints. By \Cref{prop:subset-control-majorizer}, feasibility of the covariance-majorizer constraint follows once we verify subset discrepancy and the nullspace condition for the clean covariance-gap matrix $\Delta_{\mathrm{cov}}^\star$.

\begin{lemma}[Clean Covariance Gap Has Small Subset Discrepancy]
  \label{lem:clean-cov-gap-subset}
  Let $y_1^*, \dots, y_n^*$ be clean batch-average vectors, and suppose they are $\eta$-good in the sense of \Cref{def:eta-good}. Let
  \begin{equation*}
    \bar{\mu} \coloneq \E_i y_i^*, \qquad
    \Delta_{\mathrm{cov}}^\star \coloneq \E_i (y_i^* - \bar{\mu}) \transpose{(y_i^* - \bar{\mu})}
    - \frac1m \Paren{\diag(\bar{\mu}) - \bar{\mu} \transpose{\bar{\mu}}} \mper
  \end{equation*}
  Then for every $s \in \Set{0,1}^k$,
  \begin{equation*}
    \Abs{s^\top \Delta_{\mathrm{cov}}^\star s} \le O\Paren{\frac{\eta \log(1/\eta)}{m}} \mper
  \end{equation*}
\end{lemma}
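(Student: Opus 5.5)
Fix $s\in\Set{0,1}^k$ and write everything in terms of the scalars $\xi_i(s)=\iprod{y_i^*-\mu,s}$ and $d\coloneq\iprod{\bar\mu-\mu,s}=\E_i\xi_i(s)$. The plan is to express both terms of $s^\top\Delta_{\mathrm{cov}}^\star s$ through $\mu$, then bound the resulting pieces using Properties~1 and~2 of \Cref{def:eta-good}. Property~3 is not needed here.

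\emph{Empirical term.} Expanding around $\mu$ gives
\begin{equation*}
  s^\top\E_i(y_i^*-\bar\mu)\transpose{(y_i^*-\bar\mu)}s=\E_i\iprod{y_i^*-\bar\mu,s}^2=\E_i\xi_i(s)^2-d^2 .
\end{equation*}

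\emph{Model term.} Since $s^\top(\diag(\nu)-\nu\transpose{\nu})s=\iprod{\nu,s}-\iprod{\nu,s}^2$ for $s\in\Set{0,1}^k$, the subtracted term is exactly $V(\bar\mu,s)$. Hence
\begin{equation*}
  s^\top\Delta_{\mathrm{cov}}^\star s=\Paren{\E_i\xi_i(s)^2-V(\mu,s)}-d^2-\Paren{V(\bar\mu,s)-V(\mu,s)} .
\end{equation*}

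Now bound the three pieces in turn. The first is $O(\eta\log(1/\eta)/m)$ by Property~2. The second satisfies $d^2\le O(\eta^2\log(1/\eta)/m)$ by Property~1, which is smaller than the target. For the third, set $a=\iprod{\mu,s}$ and $b=\iprod{\bar\mu,s}$, so $a,b\in[0,1]$ and
\begin{equation*}
  V(\bar\mu,s)-V(\mu,s)=\frac{(b-a)(1-a-b)}{m},
\end{equation*}
whose absolute value is at most $\abs{d}/m$. Property~1 then gives a bound of $O(\eta\sqrt{\log(1/\eta)/m})/m$. Because $m\ge1$, this is at most $O(\eta\sqrt{\log(1/\eta)})/m\le O(\eta\log(1/\eta)/m)$ for $\eta$ small, since $\sqrt{\log(1/\eta)}\le\log(1/\eta)$ once $\eta\le1/e$. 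The triangle inequality over the three pieces finishes the proof.

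The only step needing care is the third, the variance-shift term. A naive bound would lose a factor through the $\sqrt{m}$ scaling. The factorization $(b-a)(1-a-b)$ gives a clean $\abs{d}/m$, and the extra $1/\sqrt{m}$ from $\abs{d}$ only helps. The rest is bookkeeping of absolute constants, which are absorbed into the $O(\cdot)$.
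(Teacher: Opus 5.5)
Your proposal is correct and matches the paper's proof essentially line for line. You use the same three-piece decomposition of $s^\top\Delta_{\mathrm{cov}}^\star s$ (the Property~2 discrepancy, the squared mean shift $\iprod{\bar\mu-\mu,s}^2$, and the variance shift), and your factorization $(b-a)(1-a-b)$ is just the paper's observation that $x\mapsto x(1-x)$ is $1$-Lipschitz on $[0,1]$. Your restriction to $\eta\le 1/e$ is harmless: for any $\eta<1/2$ one still has $\sqrt{\log(1/\eta)}\le\log(1/\eta)/\sqrt{\log 2}$, so the constant is absorbed into the $O(\cdot)$.
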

\begin{proof}
  Let $h \coloneq \bar{\mu} - \mu$.
  Fix $s \in \Set{0,1}^k$, and write $\xi_i(s) \coloneq \iprod{y_i^* - \mu, s}$.
  Then
  \begin{align*}
    s^\top \Delta_{\mathrm{cov}}^\star s
    &=
    \E_i \iprod{y_i^* - \bar{\mu}, s}^2 - V(\bar{\mu}, s) \\
    &=
    \E_i \Paren{\xi_i(s) - \iprod{h, s}}^2 - V(\bar{\mu}, s) \\
    &=
    \E_i \xi_i(s)^2 - \iprod{h, s}^2 - V(\bar{\mu}, s) \\
    &=
    \Paren{\E_i \xi_i(s)^2 - V(\mu, s)} - \iprod{h, s}^2 + \Paren{V(\mu, s)-V(\bar{\mu}, s)} \mper
  \end{align*}
  By Property~1 of \Cref{def:eta-good},
  \begin{equation*}
    \Abs{\iprod{h, s}} = \Abs{\E_i \xi_i(s)} \le O\Paren{\eta\sqrt{\frac{\log(1/\eta)}{m}}} \mper
  \end{equation*}
  Since $x \mapsto x(1-x)$ is $1$-Lipschitz on $[0,1]$, we obtain
  \begin{equation*}
    \Abs{V(\mu, s) - V(\bar{\mu}, s)}
    \le
    \frac{\Abs{\iprod{h, s}}}{m}
    \le
    O\Paren{\frac{\eta\sqrt{\log(1/\eta)}}{m^{3/2}}}
    \le
    O\Paren{\frac{\eta \log(1/\eta)}{m}} \mper
  \end{equation*}
  Therefore
  \begin{align*}
    \Abs{s^\top \Delta_{\mathrm{cov}}^\star s}
    &\le
    \Abs{\E_i \xi_i(s)^2 - V(\mu, s)}
    +
    \Abs{\iprod{h, s}}^2
    +
    \Abs{V(\mu, s)-V(\bar{\mu}, s)} \\
    &\le
    O\Paren{\frac{\eta \log(1/\eta)}{m}}
    +
    O\Paren{\frac{\eta^2 \log(1/\eta)}{m}}
    +
    O\Paren{\frac{\eta \log(1/\eta)}{m}} \\
    &\le
    O\Paren{\frac{\eta \log(1/\eta)}{m}} \mper
  \end{align*}
\end{proof}

\begin{lemma}[Clean Covariance Gap Kills the All-Ones Direction]
  \label{lem:clean-cov-gap-nullspace}
  Let $y_1^*, \dots, y_n^*$ be clean batch-average vectors, and define
  \begin{equation*}
    \bar{\mu} \coloneq \E_i y_i^*, \qquad
    \Delta_{\mathrm{cov}}^\star \coloneq \E_i (y_i^* - \bar{\mu}) \transpose{(y_i^* - \bar{\mu})}
    - \frac1m \Paren{\diag(\bar{\mu}) - \bar{\mu} \transpose{\bar{\mu}}} \mper
  \end{equation*}
  Then
  \begin{equation*}
    \Delta_{\mathrm{cov}}^\star \bm{1} = 0 \mper
  \end{equation*}
\end{lemma}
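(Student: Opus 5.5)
We verify the identity by applying each of the two summands of $\Delta_{\mathrm{cov}}^\star$ to $\bm{1}$ separately and showing that each vanishes. The only input is that every clean batch average lies in the simplex: each $y_i^*$ is an average of standard basis vectors, so $\iprod{\bm{1}, y_i^*} = 1$. Averaging over $i$ then gives $\iprod{\bm{1}, \bar{\mu}} = 1$ as well. No probabilistic assumption (and in particular no $\eta$-goodness) is needed; the statement is purely algebraic.

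For the empirical covariance term, we compute
\begin{equation*}
  \E_i (y_i^* - \bar{\mu}) \transpose{(y_i^* - \bar{\mu})} \bm{1}
  = \E_i (y_i^* - \bar{\mu}) \iprod{y_i^* - \bar{\mu}, \bm{1}}
  = \E_i (y_i^* - \bar{\mu}) (1 - 1) = 0 \mper
\end{equation*}
For the multinomial term, we use $\diag(\bar{\mu})\bm{1} = \bar{\mu}$ and $\bar{\mu}\transpose{\bar{\mu}}\bm{1} = \bar{\mu}\iprod{\bar{\mu},\bm{1}} = \bar{\mu}$. This gives $(\diag(\bar{\mu}) - \bar{\mu}\transpose{\bar{\mu}})\bm{1} = 0$, which is the nullspace statement already recorded at the end of \Cref{fact:diag-minus-xxT-psd} for the case $s=1$. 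Subtracting the two vanishing vectors yields $\Delta_{\mathrm{cov}}^\star \bm{1} = 0$.

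There is no real obstacle here. The one point to state explicitly is that the identity $\iprod{\bm{1}, \bar{\mu}} = 1$ for the empirical mean is inherited from the batch averages. The analogous computation for the repaired data (with $y_i'$ and $\mu' = \E_i y_i'$) goes through verbatim from the simplex constraints in $\mathcal{A}_\eta$. That repaired version is what the identifiability argument later uses to obtain $\Delta_{\mathrm{cov}}'\bm{1}=0$.
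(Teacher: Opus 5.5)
Your proof is correct and follows the paper's argument essentially verbatim: both use only that $y_i^*$ and $\bar{\mu}$ have coordinate sum one, so $\iprod{y_i^*-\bar{\mu},\bm{1}}=0$ kills the empirical covariance term and $(\diag(\bar{\mu})-\bar{\mu}\transpose{\bar{\mu}})\bm{1}=\bar{\mu}-\bar{\mu}\iprod{\bar{\mu},\bm{1}}=0$ kills the multinomial term. Your remarks that no $\eta$-goodness is needed and that the same computation gives $\Delta_{\mathrm{cov}}'\bm{1}=0$ for the repaired data are also accurate; the latter matches how \Cref{lem:diag-majorizer-subset} proceeds.
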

\begin{proof}
  Since each $y_i^*$ and $\bar{\mu}$ lie in the simplex,
  \begin{equation*}
    \iprod{y_i^* - \bar{\mu}, \bm{1}} = 0
    \qquad\text{for all } i \in [n],
  \end{equation*}
  and hence
  \begin{equation*}
    \E_i (y_i^* - \bar{\mu}) \transpose{(y_i^* - \bar{\mu})} \bm{1} = 0 \mper
  \end{equation*}
  Also,
  \begin{equation*}
    \Paren{\diag(\bar{\mu}) - \bar{\mu} \transpose{\bar{\mu}}}\bm{1}
    =
    \bar{\mu} - \bar{\mu}\iprod{\bar{\mu}, \bm{1}}
    =
    0 \mper
  \end{equation*}
\end{proof}

\begin{corollary}[Diagonal-Majorizer Feasibility]
  \label{lem:covariance-feasibility}
  Let $y_1^*, \dots, y_n^*$ be clean batch-average vectors, and suppose these vectors are $\eta$-good in the sense of \Cref{def:eta-good}. Let
  \begin{equation*}
    \bar{\mu} \coloneq \E_i y_i^*, \qquad
    \Delta_{\mathrm{cov}}^\star \coloneq \E_i (y_i^* - \bar{\mu}) \transpose{(y_i^* - \bar{\mu})}
    - \frac1m \Paren{\diag(\bar{\mu}) - \bar{\mu} \transpose{\bar{\mu}}} \mper
  \end{equation*}
  Then there exists $\lambda \in \R^k$ such that
  \begin{equation*}
    \operatorname{Diag}(\lambda) - \Delta_{\mathrm{cov}}^\star \succcurlyeq 0,
    \qquad
    \sum_{j \in [k]} \lambda_j \le O\Paren{\frac{\eta \log(1/\eta)}{m}} \mper
  \end{equation*}
\end{corollary}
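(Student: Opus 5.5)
This is a direct combination of results already established, and the plan is to verify the two hypotheses of \Cref{prop:subset-control-majorizer} for the matrix $\Delta = \Delta_{\mathrm{cov}}^\star$ and then invoke that proposition. First, $\Delta_{\mathrm{cov}}^\star$ is symmetric, being a difference of an averaged outer-product matrix and the symmetric matrix $\frac1m(\diag(\bar\mu)-\bar\mu\bar\mu^\top)$. Second, \Cref{lem:clean-cov-gap-nullspace} gives $\Delta_{\mathrm{cov}}^\star\bm 1 = 0$. That lemma uses only that each $y_i^*$, and hence $\bar\mu$, lies in the simplex, and this holds for clean batch-average vectors by construction.

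Third, since the $y_i^*$ are assumed $\eta$-good, \Cref{lem:clean-cov-gap-subset} supplies the subset discrepancy bound
\begin{equation*}
  \sup_{s\in\Set{0,1}^k}\Abs{s^\top\Delta_{\mathrm{cov}}^\star s}\le \rho, \qquad \rho \coloneq O\Paren{\frac{\eta\log(1/\eta)}{m}} \mper
\end{equation*}
The implicit constant here is absolute, because the constants in \Cref{def:eta-good} are fixed.

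With both hypotheses verified, \Cref{prop:subset-control-majorizer} yields a vector $\lambda$ with $\operatorname{Diag}(\lambda)\succcurlyeq \Delta_{\mathrm{cov}}^\star$ and $\sum_j\lambda_j\le O(\rho)$. Unwinding the proof of that proposition, the constant is at most $4\sinh(\pi/2)$ times the constant in $\rho$. The factor $4$ comes from the sign reduction of \Cref{lem:subset-to-sign-nullspace}, and $\sinh(\pi/2)$ comes from the Friedland--Lim bound \Cref{cor:friedland-lim-correlation}. The passage from the correlation SDP to a diagonal majorizer then goes through \Cref{cor:diagonal-majorizer-sdp-duality}, whose strong duality holds because both programs are strictly feasible.

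I do not expect a genuine obstacle. The only point needing care is that the minimum in the dual SDP is attained, so that an actual $\lambda$ exists rather than just an infimum. Strict feasibility of both primal and dual gives attainment, and this is already recorded in \Cref{cor:diagonal-majorizer-sdp-duality}. For the later use in $\mathcal A_\eta$, this final constant must be at most the constant $C$ in $\tau_{\mathrm{cov}}$, which is arranged by choosing $C$ large.
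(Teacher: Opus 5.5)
Your proposal is correct and matches the paper's proof: verify the nullspace hypothesis via \Cref{lem:clean-cov-gap-nullspace} and the subset-discrepancy hypothesis via \Cref{lem:clean-cov-gap-subset}, then invoke \Cref{prop:subset-control-majorizer}. Your additional bookkeeping is accurate and only makes explicit what the paper leaves implicit: symmetry of $\Delta_{\mathrm{cov}}^\star$, the constant $4\sinh(\pi/2)$, and attainment of the dual minimum from strict feasibility.
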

\begin{proof}
  By \Cref{prop:subset-control-majorizer}, it remains only to verify the subset-discrepancy and nullspace hypotheses. Those are exactly \Cref{lem:clean-cov-gap-subset,lem:clean-cov-gap-nullspace} applied to $\Delta = \Delta_{\mathrm{cov}}^\star$.
\end{proof}
This proves feasibility of the covariance-majorizer block for the clean witness.

If $\Delta' \bm{1}=0$ and $\operatorname{Diag}(\lambda)-\Delta' \succcurlyeq 0$, then for every subset direction $s \in \Set{0,1}^k$ and $\sigma = 2s-\bm{1}$,
\begin{equation*}
  4 s^\top \Delta' s
  =
  \sigma^\top \Delta' \sigma
  \le
  \sigma^\top \operatorname{Diag}(\lambda)\sigma
  =
  \sum_{j \in [k]} \lambda_j \mper
\end{equation*}
Thus one matrix majorizer simultaneously controls every subset direction. We next give the SoS version of this calculation for $\Delta_{\mathrm{cov}}' = \smash[t]{\widehat{\Sigma}}' - \Sigma(\mu')$.

\begin{lemma}[Diagonal Majorizer Implies Program-Side Subset-Direction Variance Control]
  \label{lem:diag-majorizer-subset}
  Under $\mathcal{A}_\eta$ and the Boolean constraints $\{\forall j \in [k]: s_j^2 = s_j\}$, we have
  \begin{equation*}
    \mathcal{A}_\eta \cup \Set{\forall j \in [k]: s_j^2 = s_j}
    \sststile{\sodared{6}}{}
    \Set{
      s^\top \Paren{\widehat{\Sigma}' - \Sigma(\mu')} s \le \frac14 \sum_{j \in [k]} \lambda_j
    } \mper
  \end{equation*}
  Consequently,
  \begin{equation*}
    \mathcal{A}_\eta \cup \Set{\forall j \in [k]: s_j^2 = s_j}
    \sststile{\sodared{6}}{}
    \Set{
      \E_i \iprod{y_i' - \mu', s}^2
      \le
      \frac{\iprod{\mu', s}\Paren{1-\iprod{\mu', s}}}{m}
      +
      O\Paren{\frac{\eta \log(1/\eta)}{m}}
    } \mper
  \end{equation*}
\end{lemma}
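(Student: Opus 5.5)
The plan is to turn the scalar calculation displayed just before the lemma into an SoS certificate. We substitute $\sigma = 2s-\bm{1}$, which is now a vector of polynomials in the indeterminates $s$, and use the polynomial identity from the covariance-majorizer constraint. Write $\Delta'_{\mathrm{cov}} \coloneq \widehat{\Sigma}' - \Sigma(\mu')$. The constraint gives $\operatorname{Diag}(\lambda) - \Delta'_{\mathrm{cov}} = MM^\top$ as a polynomial identity. Hence
\begin{equation*}
  \sigma^\top \operatorname{Diag}(\lambda)\sigma - \sigma^\top \Delta'_{\mathrm{cov}}\sigma = \normt{M^\top \sigma}^2 = \sum_{\ell}\Paren{\sum_j M_{j\ell}\sigma_j}^2 \mcom
\end{equation*}
which is a sum of squares of polynomials of degree $2$ in $(M,s)$, so the total degree is $4$.

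The remaining steps reduce both sides to the desired form, in three parts.

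First, for the left side, the Boolean constraints give $\sigma_j^2 = (2s_j-1)^2 = 4s_j^2-4s_j+1 = 1$ with a degree-$2$ derivation. Multiplying by $\lambda_j$ gives $\sigma^\top\operatorname{Diag}(\lambda)\sigma = \sum_j\lambda_j$ at degree $3$.

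Second, we need the nullspace identity $\Delta'_{\mathrm{cov}}\bm{1}=0$ as a polynomial consequence of $\mathcal{A}_\eta$, derived from the simplex equalities $\sum_j y'_{i,j}=1$ and $\sum_j\mu'_j=1$:
\begin{itemize}
  \item We have $\iprod{y_i'-\mu',\bm{1}}=0$ using only linear constraints. Therefore $\widehat{\Sigma}'\bm{1} = \E_i (y_i'-\mu')\iprod{y_i'-\mu',\bm 1}$ equals $0$ after multiplying a degree-$1$ equality by degree-$1$ polynomials.
  \item Similarly, $(\diag(\mu')-\mu'\mu'^\top)\bm 1 = \mu'(1-\iprod{\mu',\bm 1}) = 0$.
\end{itemize}
Expanding $\sigma^\top\Delta'_{\mathrm{cov}}\sigma = 4s^\top\Delta'_{\mathrm{cov}} s - 4s^\top \Delta'_{\mathrm{cov}}\bm 1 + \bm1^\top\Delta'_{\mathrm{cov}}\bm 1$, the last two terms vanish modulo these equalities multiplied by $s$. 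Each product has degree at most $4$ in the entries of $y',\mu',s$.

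Third, combining the two parts gives $4 s^\top\Delta'_{\mathrm{cov}}s \le \sum_j\lambda_j$, which is the first claim. For the consequence, expand $s^\top\widehat\Sigma' s = \E_i\iprod{y_i'-\mu',s}^2$. Then, using the Boolean constraints, compute
\begin{equation*}
  s^\top\Sigma(\mu')s = \tfrac1m\Paren{\sum_j \mu'_j s_j^2 - \iprod{\mu',s}^2} = \tfrac1m\Paren{\iprod{\mu',s}-\iprod{\mu',s}^2} \mcom
\end{equation*}
again with degree-$3$ multiples of $s_j^2=s_j$. Finally, use the constraint $\sum_j\lambda_j\le\tau_{\mathrm{cov}} = C\eta\log(1/\eta)/m$, which is linear, to replace $\frac14\sum_j\lambda_j$ by $O(\eta\log(1/\eta)/m)$.

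The main obstacle is degree bookkeeping. Note that $\widehat\Sigma'$ is quadratic in $y',\mu'$, and $\mu'=\E_i y_i'$ is substituted linearly. The term $\sigma^\top\widehat\Sigma'\sigma$ therefore has degree $4$, and the nullspace step multiplies degree-$1$ simplex equalities by terms such as $s_a s_b (y'_{i,c}-\mu'_c)$, giving degree $4$. Every step stays within degree $6$, so the claimed $\sststile{6}{}$ certificate suffices. I would check each multiplier explicitly, since that accounting is where an error is most likely.
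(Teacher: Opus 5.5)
Your proposal is correct and follows essentially the same route as the paper. The steps match: the majorizer identity gives $\sigma^\top(\operatorname{Diag}(\lambda)-\Delta'_{\mathrm{cov}})\sigma=\normt{M^\top\sigma}^2$, the simplex equalities give $\Delta'_{\mathrm{cov}}\bm{1}=0$ and hence $\sigma^\top\Delta'_{\mathrm{cov}}\sigma=4s^\top\Delta'_{\mathrm{cov}}s$, Booleanity gives $\sigma_j^2=1$, and the linear trace constraint finishes the argument. Your explicit degree bookkeeping, with every step at degree at most $4$, is if anything more careful than the paper's.
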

\begin{proof}
  Write
  \begin{equation*}
    \Delta_{\mathrm{cov}}' \coloneq \widehat{\Sigma}' - \Sigma(\mu'), \qquad \sigma \coloneq 2s - \bm{1} \mper
  \end{equation*}
  We first show that $\mathcal{A}_\eta \sststile{\sodared{6}}{} \Set{\Delta_{\mathrm{cov}}' \bm{1} = 0}$. Indeed, for every $i \in [n]$,
  \begin{equation*}
    \mathcal{A}_\eta \sststile{\sodared{6}}{} \Set{\iprod{y_i' - \mu', \bm{1}} = \sum_{j \in [k]} y_{i,j}' - \sum_{j \in [k]} \mu_j' = 0} \mper
  \end{equation*}
  Therefore
  \begin{equation*}
    \mathcal{A}_\eta \sststile{\sodared{6}}{} \Set{\widehat{\Sigma}' \bm{1} = \E_i (y_i' - \mu') \iprod{y_i' - \mu', \bm{1}} = 0} \mper
  \end{equation*}
  Also,
  \begin{equation*}
    \mathcal{A}_\eta \sststile{\sodared{6}}{} \Set{\Sigma(\mu') \bm{1} = \frac{1}{m}\Paren{\mu' - \mu' \iprod{\mu', \bm{1}}} = 0} \mper
  \end{equation*}
  and hence $\Delta_{\mathrm{cov}}' \bm{1} = 0$.

  Using this identity, we have
  \begin{equation*}
    \mathcal{A}_\eta \cup \Set{\forall j \in [k]: s_j^2 = s_j}
    \sststile{\sodared{6}}{}
    \Set{
      \sigma^\top \Delta_{\mathrm{cov}}' \sigma = (2s-\bm{1})^\top \Delta_{\mathrm{cov}}' (2s-\bm{1}) = 4 s^\top \Delta_{\mathrm{cov}}' s
    } \mper
  \end{equation*}
  Now the diagonal-majorizer constraint in $\mathcal{A}_\eta$ gives
  \begin{equation*}
    \mathcal{A}_\eta \cup \Set{\forall j \in [k]: s_j^2 = s_j}
    \sststile{\sodared{6}}{}
    \Set{
      \sigma^\top \Delta_{\mathrm{cov}}' \sigma \le \sigma^\top \operatorname{Diag}(\lambda) \sigma
    } \mper
  \end{equation*}
  since
  \begin{equation*}
    \sigma^\top \Paren{\operatorname{Diag}(\lambda) - \Delta_{\mathrm{cov}}'} \sigma = \normt{M^\top \sigma}^2 \ge 0 \mper
  \end{equation*}
  Under the Boolean constraints on $s$, we have
  \begin{equation*}
    \Set{\forall j \in [k]: s_j^2 = s_j} \sststile{\sodared{6}}{} \Set{\sigma_j^2 = (2s_j-1)^2 = 1}
  \end{equation*}
  for every $j \in [k]$, and therefore
  \begin{equation*}
    \mathcal{A}_\eta \cup \Set{\forall j \in [k]: s_j^2 = s_j}
    \sststile{\sodared{6}}{}
    \Set{
      \sigma^\top \operatorname{Diag}(\lambda) \sigma = \sum_{j \in [k]} \lambda_j
    } \mper
  \end{equation*}
  Substituting $\sigma^\top\Delta_{\mathrm{cov}}'\sigma=4s^\top\Delta_{\mathrm{cov}}'s$ and $\sigma^\top\operatorname{Diag}(\lambda)\sigma=\sum_j\lambda_j$ into the diagonal-majorizer inequality proves the first claim.

  For the variance bound, observe that
  \begin{equation*}
    \Set{\forall j \in [k]: s_j^2 = s_j}
    \sststile{\sodared{6}}{}
    \Set{
      s^\top \widehat{\Sigma}' s = \E_i \iprod{y_i' - \mu', s}^2
    }
  \end{equation*}
  and
  \begin{equation*}
    \mathcal{A}_\eta \cup \Set{\forall j \in [k]: s_j^2 = s_j}
    \sststile{\sodared{6}}{}
    \Set{
      s^\top \Sigma(\mu') s
      =
      \frac{\iprod{\mu', s} - \iprod{\mu', s}^2}{m}
      =
      \frac{\iprod{\mu', s}\Paren{1-\iprod{\mu', s}}}{m}
    } \mper
  \end{equation*}
  Using the bound $s^\top\Delta_{\mathrm{cov}}'s\le\tfrac14\sum_j\lambda_j$ together with $\sum_j \lambda_j \le \tau_{\mathrm{cov}}$ gives
  \begin{equation*}
    \mathcal{A}_\eta \cup \Set{\forall j \in [k]: s_j^2 = s_j}
    \sststile{\sodared{6}}{}
    \Set{
      \E_i \iprod{y_i' - \mu', s}^2
      \le
      \frac{\iprod{\mu', s}\Paren{1-\iprod{\mu', s}}}{m}
      +
      \frac14 \tau_{\mathrm{cov}}
    } \mper
  \end{equation*}
  which is the desired bound.
\end{proof}

We can now state our main technical lemma.

\begin{lemma}[Main SoS Lemma]
  \label{lem:main-sos-lemma}
  Suppose $\mathcal{Y}^* = \set{y_i^*}_{i \in [n]}$ is the clean batch-average dataset and that it is $\eta$-good (as in \Cref{def:eta-good}). Then for any fixed binary vector $s \in \Set{0,1}^k$ the following SoS proof exists.
  \begin{equation*}
    \mathcal{A}_\eta \sststile{\sodared{6}}{} \Set{\iprod{\mu' - \mu, s}^2 \le O\Paren{\Paren{\eta\sqrt{\frac{\log(1/\eta)}{m}}}^2}}
  \end{equation*}
\end{lemma}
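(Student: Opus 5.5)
We follow the one-dimensional robust-mean template sketched in \Cref{sec:techniques}, working throughout with the fixed direction $s$, so that $\iprod{\cdot,s}$ is a linear form in the program variables. Let $r_i=\mathbb{1}[y_i=y_i^*]$, so $\sum_i(1-r_i)\le\eta n$. Set $u_i\coloneq w_ir_i$. Under $\mathcal{A}_\eta$ we have $u_i^2=u_i$ and $\sum_i(1-u_i)\le 2\eta n$ (degree $2$). By data consistency, $u_i y_i'=u_i y_i^*$. Write $a\coloneq\iprod{\mu'-\mu,s}$.

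\textbf{Step 1: decomposition.} Since $\mu'=\E_i y_i'$, we have
\[
a=\E_i\iprod{y_i'-\mu,s}=\E_i u_i\xi_i(s)+\E_i(1-u_i)\iprod{y_i'-\mu,s}.
\]
For the first term, $\E_iu_i\xi_i(s)=\E_i\xi_i(s)-\E_i(1-u_i)\xi_i(s)$. The part $\E_i\xi_i(s)$ is a constant of size $O(\eta\sqrt{\log(1/\eta)/m})$ by Property~1. The part $\E_i(1-u_i)\xi_i(s)$ is handled by SoS Cauchy--Schwarz together with $(1-u_i)^2=1-u_i$ and \Cref{lem:sos-selector-lemma} applied to the nonnegative constants $\xi_i(s)^2$ with Property~3; its square is at most $2\eta\cdot O(\eta\log(1/\eta)/m)$. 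For the second term, Cauchy--Schwarz gives
\[
\Paren{\E_i(1-u_i)\iprod{y_i'-\mu,s}}^2\le 2\eta\,\E_i(1-u_i)\iprod{y_i'-\mu,s}^2.
\]
We then write $\E_i(1-u_i)\iprod{y_i'-\mu,s}^2=\E_i\iprod{y_i'-\mu,s}^2-\E_iu_i\xi_i(s)^2$, using $u_iy_i'=u_iy_i^*$.

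\textbf{Step 2: the two second moments.} For the upper bound, expand $y_i'-\mu=(y_i'-\mu')+(\mu'-\mu)$. The cross term vanishes because $\mu'=\E_iy_i'$, so
\[
\E_i\iprod{y_i'-\mu,s}^2=\E_i\iprod{y_i'-\mu',s}^2+a^2\le V(\mu',s)+O\Paren{\frac{\eta\log(1/\eta)}{m}}+a^2
\]
by \Cref{lem:diag-majorizer-subset}. To apply that lemma at a fixed numeric $s$, we substitute constants for the Boolean variables, which keeps the degree at most $6$. For the lower bound, $\E_iu_i\xi_i(s)^2=\E_i\xi_i(s)^2-\E_i(1-u_i)\xi_i(s)^2\ge V(\mu,s)-O(\eta\log(1/\eta)/m)$, by Property~2 and the selector lemma with Property~3. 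Combining these with the triangle inequality (\Cref{fact:sos-triangle-inequality}, used twice, which costs constant factors) gives
\[
a^2\le O\Paren{\frac{\eta^2\log(1/\eta)}{m}}+c\eta\Paren{V(\mu',s)-V(\mu,s)}+c\eta a^2 .
\]
The degrees are products of $u_i$ with quadratics in $y_i',\mu'$, together with the majorizer step, and we expect them to stay within $6$.

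\textbf{Step 3: absorbing the variance difference.} This is the step we expect to be the main obstacle. Write $p'=\iprod{\mu',s}$ and $p=\iprod{\mu,s}$. Then
\[
m\Paren{V(\mu',s)-V(\mu,s)}=(p'-p)(1-p'-p)=a(1-p'-p).
\]
Since $0\le p'\le1$ follows from the simplex constraints, we get $(1-p'-p)^2\le 4$ as an SoS inequality. Here $p$ is a constant in $[0,1]$ and $p'(1-p')\ge0$ is derivable, as is $\sum_{j\in s}\mu_j'\le1$. This yields $(V(\mu',s)-V(\mu,s))^2\le 4a^2/m^2$; it is the content of \Cref{lem:sos-cov}, and we would prove it this way if needed. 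Young's inequality (\Cref{fact:sos-young-inequality}) with $\lambda=m^2/16$ then gives
\[
c\eta\Paren{V(\mu',s)-V(\mu,s)}\le \tfrac14a^2+O\Paren{\frac{\eta^2}{m^2}}.
\]
Substituting and using $\eta<\etaabs$ gives $(1-\tfrac14-c\eta)a^2\le O(\eta^2\log(1/\eta)/m)+O(\eta^2/m^2)$. Since $m\ge1$ and $\log(1/\eta)\ge1$, the $O(\eta^2/m^2)$ term is at most $O(\eta^2\log(1/\eta)/m)$. Dividing by the positive constant $1-\tfrac14-c\eta$ completes the proof.
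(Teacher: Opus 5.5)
Your proposal is correct and follows essentially the same route as the paper: the same split into clean-kept and complementary parts, weighted Cauchy--Schwarz against the repaired minus clean-kept second moments, the majorizer upper bound and the Property~2/3 lower bound, and finally \Cref{lem:sos-cov} with Young's inequality at $\lambda=m^2/16$ to absorb $4\eta(V(\mu',s)-V(\mu,s))$. The one cosmetic difference is that you handle the corrupted and discarded-clean indices together through the single Boolean weight $1-u_i$ with budget $2\eta n$. The paper instead splits $\E_i r_iw_i\xi_i$ into $\E_i r_i\xi_i$ and $\E_i r_i(1-w_i)\xi_i$, and bounds each piece separately.
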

\Cref{alg:robust-discrete-batches} summarizes the robust estimator promised by \Cref{thm:main-robust}.

\begin{boxedalgorithm}{Robust Estimation from Discrete Batches}
  \label{alg:robust-discrete-batches}
  \alginput An $\eta$-corrupted batch dataset $\set{B_i}_{i \in [n]}$, where each batch has size $m$.
  \algoutput An estimate $\hat{\mu}$ of the underlying discrete distribution $\mu$.
  \algoperations
  \begin{algsteps}
    \item Convert each observed batch to its average vector
      \begin{equation*}
        y_i \coloneq \frac{1}{m}\sum_{j=1}^m e_{U_{i,j}} \in \Delta^{k-1}
        \qquad \text{for all } i \in [n] \mper
      \end{equation*}
    \item Form the SoS constraint system $\cA_\eta$ from \Cref{subsec:sos-based-robust-algo}, with variables $(w_i)_{i \in [n]}$, $(y_i')_{i \in [n]}$, $\mu'$, $\lambda$, and $M$.
    \item Find a \sodared{degree-$6$ pseudoexpectation} $\pE$ satisfying $\cA_\eta$.
    \item Output an $\ell_1$ projection of $\pE\mu'$ onto $\Delta^{k-1}$. For an exact pseudoexpectation, this is $\hat\mu=\pE\mu'$.
  \end{algsteps}
\end{boxedalgorithm}
\begin{sodarevision}
  We use exact pseudoexpectation notation in the analysis. For computation, we may restrict the SDP to a ball of radius $\poly(n,k)$ in moment space. Indeed, the clean witness has $w_i,y_{i,j}',\mu_j'\in[0,1]$. Since $(\Delta_{\mathrm{cov}}^\star)_{jj}\ge-1/(4m)$, its diagonal majorizer satisfies
  \begin{equation*}
    -\frac1{4m}\le\lambda_j\le\tau_{\mathrm{cov}}+\frac{k-1}{4m},
    \qquad
    \normf{M}^2=\sum_j\lambda_j-\operatorname{tr}(\Delta_{\mathrm{cov}}^\star)
    \le\tau_{\mathrm{cov}}+\frac{k}{4m}.
  \end{equation*}
  Thus the point-evaluation functional at this witness has polynomially bounded degree-$6$ moments.
  In the standard bit model, solve the bounded SDP to an inverse-polynomial feasibility tolerance $\tau$ chosen so that
  \begin{equation*}
    \tau\,\poly(n,k,m)
    =o\Paren{\frac{\eta^2\log(1/\eta)}{m}}.
  \end{equation*}
  Since the degree-$6$ certificates in the proof of \Cref{lem:main-sos-lemma} have coefficient norms bounded by $\poly(n,k,m)$, approximate feasibility changes the squared-error bound by at most $\tau\poly(n,k,m)$. Projecting onto the simplex in $\ell_1$ changes the error by at most a constant factor. This preserves the error rate in \Cref{thm:main-robust} and polynomial running time, as in \Cref{lem:certificate-soundness}.
\end{sodarevision}

We now derive \Cref{thm:main-robust} from \Cref{lem:main-sos-lemma}.
\begin{proof}[Proof of \Cref{thm:main-robust}]
  Apply \Cref{cor:eta-good} with failure probability $\min\{\beta,1/4\}$. With probability at least $1-\beta$, the clean batch-average vectors satisfy the three properties in \Cref{def:eta-good}.
  Conditioned on this event, \Cref{lem:main-sos-lemma} applies.
  Let $\pE$ be a \sodared{degree-$6$ pseudoexpectation} satisfying $\mathcal{A}_\eta$, and let $\hat{\mu} = \pE \mu'$ be the output of the algorithm. Let $s \in \Set{0,1}^k$ be defined coordinate-wise by $s_i = \mathbb{1}[\hat{\mu}_i > \mu_i]$. Then we have
  \begin{align*}
    \normtv{\hat{\mu} - \mu}^2
    &=
    \iprod{\hat{\mu} - \mu, s}^2 \\
    &=
    \iprod{\pE \mu' - \mu, s}^2 \\
    &\le
    \pE \iprod{\mu' - \mu, s}^2  \\
    &\le O\Paren{\Paren{\eta \sqrt{\frac{\log(1/\eta)}{m}}}^2} \mper
  \end{align*}
  Therefore,
  \begin{equation*}
    \normtv{\hat{\mu} - \mu} \le O\Paren{\eta \sqrt{\frac{\log(1/\eta)}{m}}} \mcom
  \end{equation*}
  as desired.
\end{proof}
\subsection{Proof of the Main SoS Lemma}
\label{subsec:proof-of-main-sos-lemma}
\begin{proof}
  We proceed in three steps: first control the clean-kept and corrupted/discarded contributions in a fixed direction; then use the diagonal-majorizer certificate to upper bound the repaired second moment; finally control and absorb the remaining variance difference using the covariance formula $V(\mu,s)=\iprod{\mu,s}(1-\iprod{\mu,s})/m$.

  Let $r_i$ denote the indicator function of uncorrupted samples, i.e.\ $r_i = \mathbb{1}[y_i^* = y_i]$.
  Let $s \in \Set{0,1}^k$ be fixed, and write
  \begin{equation*}
    \Delta_s \coloneq \iprod{\mu' - \mu, s}, \qquad \xi_i \coloneq \iprod{y_i^* - \mu, s}, \qquad \xi_i' \coloneq \iprod{y_i' - \mu, s} \mper
  \end{equation*}
  Multiplying by $r_i w_i$ selects indices that are both clean and kept; $r_i(1-w_i)$ selects clean indices discarded by the selector; and $1-r_i w_i$ is the complement.
  \paragraph{Split into the clean-kept part and its complement.}
  Applying SoS triangle inequality we can write
  \begin{equation}
    \label{eq:directional-error-decomposition}
    \mathcal{A}_\eta
    \sststile{\sodared{6}}{} \Delta_s^2 \le 2 \underbrace{\Paren{\E_i r_i w_i \xi_i'}^2}_{\text{clean-kept contribution}} + 2\underbrace{\Paren{\E_i (1-r_i w_i) \xi_i'}^2}_{\text{corrupted-or-discarded contribution}} \mper
  \end{equation}
  \paragraph{Clean-kept contribution.}
  Observing that $r_iw_iy_i^* = r_i w_iy_i'$, we can write
  \begin{equation*}
    \mathcal{A}_\eta
    \sststile{\sodared{6}}{}
    \sodared{\Set{
        \begin{aligned}
          \Paren{\E_i r_i w_i \xi_i'}^2 &= \Paren{\E_i r_i w_i \xi_i}^2 \\
          &\le
          2 \Paren{\E_i r_i \xi_i}^2 + 2 \Paren{\E_i r_i \paren{1-w_i} \xi_i}^2 \\
        \end{aligned}
    }} \mper
  \end{equation*}
  Now
  \begin{align*}
    \Paren{\E_i r_i \xi_i}^2
    &=
    \Paren{\E_i \xi_i - \E_i (1-r_i)\xi_i}^2 \\
    &\le
    2 \Paren{\E_i \xi_i}^2 + 2 \Paren{\E_i (1-r_i)\xi_i}^2 \\
    &\le
    O\Paren{\frac{\eta^2 \log(1/\eta)}{m}}
    + 2 \E_i (1-r_i) \cdot \E_i (1-r_i) \xi_i^2 \\
    &\le
    O\Paren{\frac{\eta^2 \log(1/\eta)}{m}} \mper
  \end{align*}
  Here we used Property~1 of \Cref{def:eta-good} in the third line and Property~3 in the last line, since the corrupted indices form a subset of size at most $\eta n$.
  Therefore
  \begin{align*}
    \mathcal{A}_\eta
    \sststile{\sodared{6}}{}
    \Paren{\E_i r_i w_i \xi_i'}^2
    &\le
    O\Paren{\frac{\eta^2 \log(1/\eta)}{m}} + 2 \underbrace{\Paren{\E_i r_i \Paren{1-w_i} \xi_i}^2}_{\text{discarded-clean contribution}}
    \tageq\label{eq:discarded-clean-term} \mper
  \end{align*}
  We next bound the discarded-clean contribution. We have
  \begin{equation*}
    \mathcal{A}_\eta
    \sststile{\sodared{6}}{}
    \sodared{\Set{
        \begin{aligned}
          \Paren{\E_i r_i (1-w_i) \xi_i}^2
          &\le
          \Paren{\E_i r_i (1-w_i)} \cdot \Paren{\E_i r_i (1-w_i) \xi_i^2} \\
          &\le \eta \cdot \Paren{\E_i (1-w_i) \xi_i^2}
        \end{aligned}
    }} \mper
  \end{equation*}
  Property~3 of \Cref{def:eta-good} together with \Cref{lem:sos-selector-lemma} implies
  \begin{equation*}
    \mathcal{A}_\eta
    \sststile{\sodared{6}}{}
    \sodared{\Set{
        \begin{aligned}
          \Paren{\E_i r_i (1-w_i) \xi_i}^2
          &\le \eta \cdot O(\eta \log(1/\eta) / m) \\
          &= O(\eta^2 \log(1/\eta) / m)
        \end{aligned}
    }} \mper
  \end{equation*}
  Therefore,
  \begin{equation*}
    \mathcal{A}_\eta \sststile{\sodared{6}}{} \Set{\Paren{\E_i r_i w_i \xi_i'}^2 \le O\Paren{\frac{\eta^2 \log(1/\eta)}{m}}} \mper
  \end{equation*}

  \begin{samepage}
    \paragraph{Corrupted-or-discarded contribution.}
    We can write
    \begin{equation*}
      \mathcal{A}_\eta
      \sststile{\sodared{6}}{}
      \sodared{\Set{
          \begin{aligned}
            \Paren{\E_i \paren{1-r_i w_i} \xi_i'}^2
            &\le
            \Paren{\E_i \paren{1-r_i w_i}} \Paren{\E_i \paren{1-r_i w_i} \xi_i'^2} \\
            &\le
            2 \eta \Paren{\E_i \paren{1-r_i w_i} \xi_i'^2} \\
            &=
            2 \eta \paren{
              \underbrace{\E_i \xi_i'^2}_{\text{repaired second moment}}
              -
              \underbrace{\E_i r_i w_i \xi_i'^2}_{\text{clean-kept second moment}}
            }
          \end{aligned}
      }} \mper
      \tageq\label{eq:complement-second-moment-decomposition}
    \end{equation*}
  \end{samepage}
  \begin{sodarevision}
    For completeness, the weighted Cauchy--Schwarz step in \eqref{eq:complement-second-moment-decomposition} is the degree-$6$ bottleneck. With $b_i\coloneq 1-r_iw_i$ and $z_i\coloneq\xi_i'$, the Boolean identities $b_i^2=b_i$ give
    \begin{equation*}
      \mathcal{A}_\eta
      \sststile{6}{}
      \Set{
        \begin{aligned}
          \Paren{\E_i b_i}\Paren{\E_i b_i z_i^2}-\Paren{\E_i b_i z_i}^2
          &=\frac{1}{2n^2}\sum_{i,j}b_i b_j(z_i-z_j)^2 \\
          &=\frac{1}{2n^2}\sum_{i,j}\bigl(b_i b_j(z_i-z_j)\bigr)^2 \\
          &\ge 0
        \end{aligned}
      } \mper
    \end{equation*}
    Since the polynomials being squared have degree $3$, this certificate has degree $6$.
  \end{sodarevision}
  We first bound the repaired second moment. Since
  \begin{equation*}
    \E_i \xi_i'^2 = \E_i \iprod{y_i' - \mu', s}^2 + \Delta_s^2,
  \end{equation*}
  \Cref{lem:diag-majorizer-subset} with subset direction $s$ gives
  \begin{equation*}
    \mathcal{A}_\eta
    \sststile{\sodared{6}}{}
    \sodared{\Set{
        \begin{aligned}
          \E_i \xi_i'^2
          &\le
          \frac{\iprod{\mu', s}\Paren{1 - \iprod{\mu', s}}}{m} + O\Paren{\frac{\eta \log(1/\eta)}{m}} + \Delta_s^2 \\
          &=
          V(\mu', s) + O\Paren{\frac{\eta \log(1/\eta)}{m}} + \Delta_s^2
        \end{aligned}
    }} \mcom
  \end{equation*}
  Next we lower bound the clean-kept second moment. Since $r_i w_i \xi_i'^2 = r_i w_i \xi_i^2$, we have
  \begin{equation*}
    \mathcal{A}_\eta
    \sststile{\sodared{6}}{}
    \sodared{\Set{
        \begin{aligned}
          \E_i r_i w_i \xi_i'^2
          &= \E_i r_i w_i \xi_i^2 \\
          &= \E_i \xi_i^2 - \E_i (1-r_i w_i) \xi_i^2 \\
          &\ge V(\mu, s) - O(\eta \log(1/\eta) / m)
          - \underbrace{\E_i (1-r_i w_i) \xi_i^2}_{\text{missing clean second moment}}
        \end{aligned}
    }} \mcom
    \tageq\label{eq:missing-clean-second-moment}
  \end{equation*}
  Here the first inequality follows from Property~2 of \Cref{def:eta-good}, and the remaining error comes from the missing clean second moment. Since
  \begin{equation*}
    \sum_i (1-r_i w_i) \le \sum_i (1-r_i) + \sum_i r_i(1-w_i) \le 2\eta n,
  \end{equation*}
  Property~3 of \Cref{def:eta-good} together with \Cref{lem:sos-selector-lemma} gives
  \begin{align*}
    \mathcal{A}_\eta
    \sststile{\sodared{6}}{}
    \E_i (1-r_i w_i) \xi_i^2
    &\le O(\eta \log(1/\eta) / m) \mper
  \end{align*}
  \begin{sodarevision}
    \paragraph{Directional identifiability inequality.}
    Apply the clean-kept contribution bound to \eqref{eq:directional-error-decomposition}, and use \eqref{eq:complement-second-moment-decomposition} for the complementary contribution. Combining the upper bound on $\E_i\xi_i'^2$ with the lower bound on $\E_i r_iw_i\xi_i'^2$ then gives
    \begin{equation}
      \label{eq:robust-directional-identifiability}
      \mathcal{A}_\eta
      \sststile{6}{}
      \Set{
        \begin{aligned}
          \Delta_s^2
          &\le O\Paren{\frac{\eta^2 \log(1/\eta)}{m}}
          +4\eta\Paren{\E_i\xi_i'^2-\E_i r_iw_i\xi_i'^2} \\
          &\le O\Paren{\frac{\eta^2 \log(1/\eta)}{m}}
          +4\eta\Paren{V(\mu', s) - V(\mu, s)}
          +4\eta\Delta_s^2
        \end{aligned}
      } \mper
    \end{equation}

    \paragraph{Absorbing the variance difference.}
    The term $4\eta(V(\mu',s)-V(\mu,s))$ in \eqref{eq:robust-directional-identifiability} is linear in the variance difference, whereas \Cref{lem:sos-cov} controls the square of that difference. We use Young's inequality to pass from the linear term to the square. In \Cref{fact:sos-young-inequality}, take $a=V(\mu',s)-V(\mu,s)$, $b=4\eta$, and $\lambda=m^2/16$. This gives
    \begin{equation*}
      \mathcal{A}_\eta
      \sststile{4}{}
      \Set{
        \begin{aligned}
          4\eta\Paren{V(\mu',s)-V(\mu,s)}
          &\le \frac{m^2}{16}\Paren{V(\mu',s)-V(\mu,s)}^2
          +\frac{(4\eta)^2}{4(m^2/16)} \\
          &=\frac{m^2}{16}\Paren{V(\mu',s)-V(\mu,s)}^2
          +\frac{64\eta^2}{m^2}
        \end{aligned}
      } \mper
    \end{equation*}
    We chose $\lambda=m^2/16$ so that \Cref{lem:sos-cov} turns the first term on the right into $\Delta_s^2/4$. Indeed, by the definition of $V$,
    \begin{equation*}
      \mathcal{A}_\eta
      \sststile{6}{}
      \Set{
        \begin{aligned}
          \frac{m^2}{16}\Paren{V(\mu',s)-V(\mu,s)}^2
          &= \frac{1}{16}\Paren{
            \iprod{\mu',s}\Paren{1-\iprod{\mu',s}}
            -\iprod{\mu,s}\Paren{1-\iprod{\mu,s}}
          }^2
          \\
          &\le \frac14\Delta_s^2
        \end{aligned}
      } \mper
    \end{equation*}
    Combining these two inequalities, we obtain
    \begin{equation}
      \label{eq:robust-variance-difference-absorption}
      \mathcal{A}_\eta
      \sststile{6}{}
      \Set{
        4\eta\Paren{V(\mu',s)-V(\mu,s)}
        \le \frac14\Delta_s^2+\frac{64\eta^2}{m^2}
      } \mper
    \end{equation}
    Substituting \eqref{eq:robust-variance-difference-absorption} into \eqref{eq:robust-directional-identifiability} gives
    \begin{equation*}
      \mathcal{A}_\eta
      \sststile{6}{}
      \Set{
        \Delta_s^2
        \le O\Paren{\frac{\eta^2\log(1/\eta)}{m}}
        +\Paren{\frac14+4\eta}\Delta_s^2
        +\frac{64\eta^2}{m^2}
      } \mper
    \end{equation*}
    Since $\eta\le\etaabs$ and $\etaabs$ is a sufficiently small absolute constant, we may absorb the middle term into the left-hand side and conclude
    \begin{equation*}
      \mathcal{A}_\eta
      \sststile{6}{}
      \Set{
        \Delta_s^2
        \le O\Paren{\frac{\eta^2\log(1/\eta)}{m}}
        +O\Paren{\frac{\eta^2}{m^2}}
        =O\Paren{\frac{\eta^2\log(1/\eta)}{m}}
      } \mcom
    \end{equation*}
    as desired. The weighted Cauchy--Schwarz step in \eqref{eq:complement-second-moment-decomposition} has degree $6$, and every other step has degree at most $6$. Hence the entire proof has degree $6$.
  \end{sodarevision}
\end{proof}
\subsection{Covariance Closeness Guarantees}
We first prove the following lemma that will help us in bounding the differences of covariances by the differences of the means inside SoS.
\begin{lemma}
  \label{lem:sos-cov-prep}
  Let $x$ be an indeterminate and $a$ a scalar in $[0,1]$. Then
  \begin{equation*}
    \Set{0 \le x\le 1} \sststile{\sodared{6}}{x} \Set{\Paren{x(1-x) - a(1-a)}^2\le 4(x-a)^2}
  \end{equation*}
\end{lemma}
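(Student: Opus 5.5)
We factor the difference exactly and then bound the remaining factor inside SoS. Since
\begin{equation*}
  x(1-x)-a(1-a)=(x-a)-(x^2-a^2)=(x-a)(1-x-a),
\end{equation*}
the target inequality reads $(x-a)^2(1-x-a)^2\le4(x-a)^2$. Equivalently, we must certify
\begin{equation*}
  (x-a)^2\bigl(4-(1-x-a)^2\bigr)\ge0
\end{equation*}
from $\{x\ge0,\ 1-x\ge0\}$. Factor $4-(1-x-a)^2=(1+x+a)(3-x-a)$. Since $a\in[0,1]$, both factors are positive on the box. Write
\begin{equation*}
  1+x+a=(1+a)+x,\qquad 3-x-a=(2-a)+(1-x),
\end{equation*}
each a nonnegative constant plus a constraint polynomial.

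Expanding the product gives
\begin{equation*}
  (1+a)(2-a)+(1+a)(1-x)+(2-a)x+x(1-x).
\end{equation*}
Multiplying by the square $(x-a)^2$ yields a nonnegative combination of $(x-a)^2$, $(x-a)^2x$, $(x-a)^2(1-x)$, and $(x-a)^2x(1-x)$. Each of these is a square times a product of at most two constraints, so the total degree is at most $4$.

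The one point to check is that products of two constraints are admissible. If the proof system only allows a square times a single constraint, we replace $x(1-x)$ using the identity
\begin{equation*}
  x(1-x)=x^2(1-x)+x(1-x)^2,
\end{equation*}
which is a sum of a square times one constraint. Then $(x-a)^2x(1-x)$ becomes $\bigl(x(x-a)\bigr)^2(1-x)+\bigl((1-x)(x-a)\bigr)^2x$, of degree $5$, which is within the stated degree $6$. This is the main bookkeeping obstacle, and the rest is routine. A cruder route would bound $(1-x-a)^2\le4$ by arguing $|1-x-a|\le1\le2$, but that needs an additional square-root style certificate, so the factored form above is preferred.
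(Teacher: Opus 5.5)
Your proof is correct and is essentially the paper's argument. Both proofs factor the gap as $(x-a)^2\bigl(4-(x+a-1)^2\bigr)$, write the second factor as a nonnegative combination of $1$, $x$, $1-x$, and $x(1-x)$, and split $x(1-x)=x^2(1-x)+x(1-x)^2$ to obtain squares times single constraints of degree at most $5$. The only cosmetic difference is how the coefficients arise: you expand $(1+x+a)(3-x-a)$, whereas the paper writes $4=3+1$ and expands $1-(x+a-1)^2$ directly, which gives an equivalent representation with different constants.
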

\begin{proof}
  \begin{sodarevision}
    We have
    \begin{align*}
      4(x-a)^2-\Paren{x(1-x)-a(1-a)}^2
      &=4(x-a)^2-(x-a)^2(x+a-1)^2 \\
      &=3(x-a)^2+(x-a)^2\Paren{1-(x+a-1)^2} \\
      &=3(x-a)^2
      +\Paren{1-(a-1)^2}(1-x)(x-a)^2
      +(1-a^2)x(x-a)^2 \\
      &\qquad+x\bigl[(x-a)(1-x)\bigr]^2
      +(1-x)\bigl[x(x-a)\bigr]^2 \\
      &\ge 0 \mper
    \end{align*}
    where every term in the last two lines is a square times either $1$, $x$, or $1-x$, with a nonnegative scalar coefficient, since $1-(a-1)^2\ge0$ and $1-a^2\ge0$. Thus,
    \begin{equation*}
      \Set{0\le x\le1}
      \sststile{6}{x}
      \Set{\Paren{x(1-x)-a(1-a)}^2\le4(x-a)^2} \mper
    \end{equation*}
  \end{sodarevision}
\end{proof}
Now using this lemma we can bound the difference between the covariances inside SoS.
\begin{lemma}
  \label{lem:sos-cov}
  Let $s$ be any fixed binary vector in $\Set{0,1}^k$. Then
  \begin{equation*}
    \mathcal{A}_\eta \sststile{\sodared{6}}{} \Set{
      \Paren{
        \iprod{\mu', s}\paren{1-\iprod{\mu', s}}
        -
        \iprod{\mu, s}\paren{1- \iprod{\mu, s}}
      }^2
      \le 4\iprod{\mu' - \mu, s}^2
    } \mper
  \end{equation*}
\end{lemma}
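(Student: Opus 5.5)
The plan is to reduce \Cref{lem:sos-cov} to the one-variable certificate of \Cref{lem:sos-cov-prep} by substituting a linear form in the program variables. Set $x \coloneq \iprod{\mu', s} = \sum_{j} s_j \mu'_j$, a degree-$1$ polynomial in the indeterminates $\mu'$, since $s$ is a fixed binary vector. Set $a \coloneq \iprod{\mu, s}$, a fixed scalar. Because $\mu \in \Delta^{k-1}$ and $s \in \Set{0,1}^k$, we have $a \in [0,1]$, so the scalar hypothesis of \Cref{lem:sos-cov-prep} holds. The conclusion of \Cref{lem:sos-cov} is then literally $(x(1-x)-a(1-a))^2 \le 4(x-a)^2$ with $x-a = \iprod{\mu'-\mu,s}$.

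First I will show that $\mathcal{A}_\eta$ derives the two linear constraints $x \ge 0$ and $1-x \ge 0$ at degree $1$. For the first, $x = \sum_j s_j\mu'_j$ is a nonnegative combination of the constraints $\mu'_j \ge 0$. For the second, the simplex equality $\sum_j \mu'_j = 1$ gives
\[
  1 - x = \sum_j (1-s_j)\mu'_j,
\]
which is again a nonnegative combination of the constraints $\mu'_j\ge 0$, because $1-s_j \in \{0,1\}$. Both statements are therefore linear consequences of $\mathcal{A}_\eta$.

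Next I will compose these with \Cref{lem:sos-cov-prep}. That certificate expresses $4(x-a)^2-(x(1-x)-a(1-a))^2$ as a sum of terms of the form $c\, q^2$, $c\, x\, q^2$ and $c\,(1-x)\, q^2$, with $c \ge 0$ and $q$ a polynomial in $x$ of degree at most $2$. Substituting $x = \iprod{\mu',s}$ keeps each $q$ a polynomial of degree at most $2$ in $\mu'$. Replacing the multipliers $x$ and $1-x$ by their nonnegative combinations of the $\mu'_j$ from the previous step turns each term into a nonnegative combination of $\mu'_j \cdot q^2$. Each such product has degree at most $5 \le 6$, and every other term is a square of degree at most $4$. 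This is a valid degree-$6$ SoS proof from $\mathcal{A}_\eta$.

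There is no real obstacle here. The only care needed is to check that the substitution is linear, so the degree does not increase, and that $a\in[0,1]$, so the coefficients $1-(a-1)^2$ and $1-a^2$ in \Cref{lem:sos-cov-prep} are nonnegative.
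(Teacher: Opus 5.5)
Your proposal is correct and matches the paper's proof: derive $0\le\iprod{\mu',s}\le1$ from the constraints $\mu'_j\ge0$ and $\sum_j\mu'_j=1$, then apply \Cref{lem:sos-cov-prep} with $x=\iprod{\mu',s}$ and the fixed scalar $a=\iprod{\mu,s}\in[0,1]$. Your composition step, where the multipliers $x$ and $1-x$ become nonnegative combinations of the $\mu'_j$ via the simplex equality, simply spells out what the paper leaves implicit.
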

\begin{proof}
  Note that
  \begin{equation*}
    \mathcal{A}_\eta \sststile{\sodared{6}}{}
    \sodared{\Set{
        \begin{aligned}
          \iprod{\mu' , s} &= \sum_{j \in [k]} \mu'_j s_j
          \\
          &\ge 0
        \end{aligned}
    }} \mcom
  \end{equation*}
  where the inequality comes from $\mathcal{A}_\eta \sststile{}{} \set{\forall j \in [k]: \mu_j' \ge 0}$, and that $s_j$'s are fixed scalars larger than or equal to $0$.
  Moreover,
  \begin{equation*}
    \mathcal{A}_\eta \sststile{\sodared{6}}{}
    \sodared{\Set{
        \begin{aligned}
          \iprod{\mu' , s} &= \sum_{j \in [k]} \mu'_j s_j
          \\
          &\le 1
        \end{aligned}
    }} \mcom
  \end{equation*}
  where the inequality comes from the fact that $s_j$'s are fixed scalars in $\Set{0,1}$, and that $\mathcal{A}_\eta  \sststile{}{} \set{\forall j \in [k]: \mu_j' \ge 0, \sum_{j} \mu_j' = 1}$. Therefore,
  \begin{equation*}
    \mathcal{A}_\eta \sststile{\sodared{6}}{}
    \Set{0 \le \iprod{\mu' , s}\le 1} \mcom
  \end{equation*}
  \begin{sodarevision}
    Applying \Cref{lem:sos-cov-prep} with $x=\iprod{\mu',s}$ and the fixed scalar $a=\iprod{\mu,s}\in[0,1]$, we obtain
    \begin{equation*}
      \mathcal{A}_\eta \sststile{\sodared{6}}{} \Set{
        \Paren{
          \iprod{\mu', s}\paren{1-\iprod{\mu', s}}
          -
          \iprod{\mu, s}\paren{1- \iprod{\mu, s}}
        }^2
        \le 4\iprod{\mu' - \mu, s}^2
      } \mcom
    \end{equation*}
  \end{sodarevision}
  as desired.
\end{proof}

\subsection{Heterogeneous Clean Batches}
\begin{corollary}
  \label{cor:robust-heterogeneous}
  Suppose the clean batches are independent and batch $i$ consists of $m$ i.i.d.\ samples from $\mu_i\in\Delta^{k-1}$, where $\normtv{\mu_i-\mu}\le\omega$ for every $i$ and some known $\omega\in[0,1]$.
  Let $\eta\in(0,\etaabs)$ and $\beta\in(0,1)$.
  Replace the covariance allowance in \Cref{alg:robust-discrete-batches} by
  \begin{equation*}
    \tau_{\mathrm{cov}}=C\Paren{\frac{\eta\log(1/\eta)}m+\frac\omega m+\omega^2}
  \end{equation*}
  for a sufficiently large absolute constant $C$.
  If $n\ge\Omega((k+\log(1/\beta))/(\eta^2\log(1/\eta)))$, then on an $\eta$-corruption of these batches the algorithm returns, with probability at least $1-\beta$,
  \begin{equation*}
    \normtv{\hat\mu-\mu}\le O\Paren{\eta\sqrt{\frac{\log(1/\eta)}m}+\omega}.
  \end{equation*}
\end{corollary}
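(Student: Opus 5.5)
We adapt the homogeneous argument, replacing the single distribution $\mu$ by the batch-specific $\mu_i$ in the clean-sample analysis and charging the resulting mismatch to $\tau_{\mathrm{cov}}$ and to the final error. Write $\bar\mu_{\mathrm{avg}}\coloneq\E_i\mu_i$. Then $\normtv{\bar\mu_{\mathrm{avg}}-\mu}\le\omega$, and $\bar\Sigma\coloneq\E_i\Sigma(\mu_i)$ satisfies, for every $s\in\Set{0,1}^k$,
\begin{equation*}
  \Abs{s^\top\bar\Sigma s-V(\mu,s)}\le\omega/m,
\end{equation*}
because $x\mapsto x(1-x)$ is $1$-Lipschitz. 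Let $\zeta_i(s)\coloneq\iprod{y_i^*-\mu_i,s}$. These variables are independent, centered and $O(1/\sqrt m)$-subgaussian. Hence the proof of \Cref{fact:main-resilience-lemma} goes through verbatim, with $V(\mu,s)$ replaced by $\E_i V(\mu_i,s)$, and gives a heterogeneous analogue of \Cref{def:eta-good} for the $\zeta_i$ under the stated sample size. Translating back to $\xi_i(s)=\iprod{y_i^*-\mu,s}=\zeta_i(s)+\iprod{\mu_i-\mu,s}$, where $\abs{\iprod{\mu_i-\mu,s}}\le\omega$, the $\eta$-good properties change as follows. Property~1 acquires an additive $\omega$. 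Property~2 acquires an additive $O(\omega/m+\omega/\sqrt m\cdot\sqrt{1/m}+\omega^2)=O(\omega/m+\omega^2)$, using Cauchy--Schwarz on the cross term $2\E_i\zeta_i\iprod{\mu_i-\mu,s}$ and controlling it by the subgaussian bound over $2^k$ directions. Property~3 acquires $O(\eta\omega^2)$.

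\textbf{Feasibility.} Rerun \Cref{lem:clean-cov-gap-subset} with these weakened properties. We still have $s^\top\Delta^\star_{\mathrm{cov}}s=(\E_i\xi_i^2-V(\mu,s))-\iprod{h,s}^2+(V(\mu,s)-V(\bar\mu,s))$. The three terms are now $O(\eta\log(1/\eta)/m+\omega/m+\omega^2)$, $O(\eta^2\log(1/\eta)/m+\omega^2)$ and $O(\eta\sqrt{\log(1/\eta)}/m^{3/2}+\omega/m)$, respectively. \Cref{lem:clean-cov-gap-nullspace} is unchanged, so \Cref{prop:subset-control-majorizer} produces a majorizer of trace $O(\eta\log(1/\eta)/m+\omega/m+\omega^2)$. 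This fits inside the enlarged $\tau_{\mathrm{cov}}$ for large $C$, and the same clean witness is feasible.

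\textbf{Soundness.} Follow the proof of \Cref{lem:main-sos-lemma} line by line, tracking the extra errors.

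\emph{Clean-kept term.} It picks up $(\E_i\xi_i)^2\le O(\eta^2\log(1/\eta)/m+\omega^2)$. The discarded-clean piece is at most $\eta\cdot O(\eta\log(1/\eta)/m+\eta\omega^2)$.

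\emph{Corrupted-or-discarded term.} It equals $2\eta$ times the repaired second moment minus the clean-kept second moment. \Cref{lem:diag-majorizer-subset} with the new $\tau_{\mathrm{cov}}$ bounds the repaired second moment by $V(\mu',s)+\Delta_s^2+O(\eta\log(1/\eta)/m+\omega/m+\omega^2)$. The weakened Property~2 bounds the clean-kept second moment below by $V(\mu,s)-O(\eta\log(1/\eta)/m+\omega/m+\omega^2)$.

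\emph{Combination.} Putting these together gives
\begin{equation*}
  \Delta_s^2\le O\Paren{\frac{\eta^2\log(1/\eta)}m+\omega^2+\frac{\eta\omega}m+\eta\omega^2}+4\eta\Paren{V(\mu',s)-V(\mu,s)}+4\eta\Delta_s^2.
\end{equation*}
The Young/\Cref{lem:sos-cov} absorption step is unchanged, since it compares against the fixed $\mu$. Because $\eta\omega/m\le\eta^2/m+\omega^2$, we obtain
\begin{equation*}
  \Delta_s^2\le O\Paren{\eta^2\log(1/\eta)/m+\omega^2}.
\end{equation*}
The total-variation duality argument in the proof of \Cref{thm:main-robust} then gives the claimed bound.

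\textbf{Main obstacle.} The hardest step is the concentration of the cross terms in Property~2 uniformly over all $2^k$ directions and all small subsets. For Property~3 and the subset-removal bounds, I would bound $\xi_i^2\le2\zeta_i^2+2\omega^2$ termwise rather than control cross terms, which immediately gives the $O(\eta\omega^2)$ loss. For Property~2 on all of $[n]$, I would use the exact identity $\E_i\xi_i^2=\E_i\zeta_i^2+2\E_i\zeta_i c_i+\E_i c_i^2$, where $c_i=\iprod{\mu_i-\mu,s}$ are fixed. The middle term is a weighted sum of independent subgaussians with weights $\abs{c_i}\le\omega$, so a union bound over $s$ controls it by $O(\omega\sqrt{(k+\log(1/\beta))/(nm)})\le O(\omega\eta/\sqrt m)$, which is dominated by $\omega^2+\eta^2/m$.
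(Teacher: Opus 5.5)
Your proposal is correct and follows essentially the same route as the paper: write $\xi_i=\zeta_i+\iprod{\mu_i-\mu,s}$ and rerun the concentration argument for the independent centered $\zeta_i$. Control the cross term $\E_i\zeta_i\iprod{\mu_i-\mu,s}$ as a weighted subgaussian sum with a union bound over $s$, use $\xi_i^2\le2\zeta_i^2+2\omega^2$ for small subsets, and then redo feasibility and the degree-$6$ soundness argument with the extra $\omega/m+\omega^2$ allowance and $\eta\omega/m\le\eta^2/m+\omega^2$.

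One small fix: the Cauchy--Schwarz justification in your Property~2 sentence only gives $O(\omega/\sqrt m)$, which is not $O(\omega/m+\omega^2)$. Rely instead on the weighted-subgaussian bound from your final paragraph. That bound is $O(\omega\eta\sqrt{\log(1/\eta)/m})$, since you dropped the $\sqrt{\log(1/\eta)}$ factor, and it is still dominated by $\omega^2+\eta^2\log(1/\eta)/m$.
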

\begin{proof}
  For a fixed $s\in\{0,1\}^k$, write
  $d_i=\iprod{\mu_i-\mu,s}$ and $Z_i=\iprod{y_i^*-\mu_i,s}$, so that $|d_i|\le\omega$ and $\xi_i=Z_i+d_i$.
  Each $Z_i$ is centered and subgaussian with parameter $O(1/m)$.
  By independence, the proof of \Cref{fact:main-resilience-lemma} applies to these variables even though their distributions may differ.
  In addition, the weighted average $\E_i d_iZ_i$ has subgaussian parameter $O(\omega^2/(nm))$.
  A union bound over $s$, the inequality $|V(\mu_i,s)-V(\mu,s)|\le\omega/m$, and $(Z_i+d_i)^2\le2Z_i^2+2d_i^2$ give the following simultaneous bounds with probability at least $1-\beta$:
  \begin{align*}
    |\E_i\xi_i|&\le O\Paren{\eta\sqrt{\frac{\log(1/\eta)}m}+\omega},\\
    |\E_i\xi_i^2-V(\mu,s)|&\le O\Paren{\frac{\eta\log(1/\eta)}m+\frac\omega m+\omega^2},\\
    \max_{|T|\le2\eta n}\frac1n\sum_{i\in T}\xi_i^2
    &\le O\Paren{\frac{\eta\log(1/\eta)}m+\eta\omega^2}.
  \end{align*}
  Here the cross term in the second-moment estimate is at most
  $O(\omega\eta\smash[t]{\sqrt{\log(1/\eta)/m}})\le O(\omega^2+\eta^2\log(1/\eta)/m)$, which is absorbed by the $\omega^2$ and $\eta\log(1/\eta)/m$ terms.
  Substituting the mean and second-moment estimates in the proof of \Cref{lem:clean-cov-gap-subset} bounds the clean covariance discrepancy by $O(\eta\log(1/\eta)/m+\omega/m+\omega^2)$ in every subset direction. Moreover, the matrix $\Delta_{\mathrm{cov}}^\star$ defined in that lemma satisfies $\Delta_{\mathrm{cov}}^\star\bm1=0$ because each $y_i^*$ and $\bar\mu=\E_i y_i^*$ has coordinate sum one. Thus \Cref{prop:subset-control-majorizer} gives a diagonal majorizer with trace at most $\tau_{\mathrm{cov}}$, proving feasibility.

  For soundness, the clean-kept contribution in the proof of \Cref{lem:main-sos-lemma} is now $O(\eta^2\log(1/\eta)/m+\omega^2)$.
  Comparing the repaired and clean-kept second moments, as in \eqref{eq:robust-directional-identifiability}, gives
  \begin{equation*}
    \mathcal A_\eta\sststile{6}{}\Set{
      \Delta_s^2\le O\Paren{\frac{\eta^2\log(1/\eta)}m+\omega^2+\frac{\eta\omega}m}
      +4\eta\Paren{V(\mu',s)-V(\mu,s)}+4\eta\Delta_s^2}.
  \end{equation*}
  By Young's inequality, $\eta\omega/m$ is at most $(\omega^2+\eta^2/m^2)/2$.
  Applying \eqref{eq:robust-variance-difference-absorption} and absorbing $(1/4+4\eta)\Delta_s^2$ into the left-hand side gives
  $\Delta_s^2\le O(\eta^2\log(1/\eta)/m+\omega^2)$ at degree $6$.
  Taking pseudoexpectations and using total-variation duality completes the proof.
\end{proof}

\section{Private Estimation for Discrete Batches}
\label{sec:private-algorithm}

This section instantiates the reduction of Hopkins, Kamath, Majid, and Narayanan~\cite{HopkinsKMN23} on the batch-average dataset $\cY = \set{y_i}_{i \in [n]}$. It requires only a quasi-convex score on candidate means from \Cref{sec:robust-algorithm}, implemented here through low-degree pseudoexpectations for the diagonal-majorizer system.

\begin{samepage}
  \begin{theorem}[Pure-DP Estimation from Discrete Batches]
    \label{thm:main-pure-private}
    Let $\mu \in \Delta^{k-1}$.
    Let $\alpha, \beta \in (0, 1)$, $\eps > 0$, and $m \in \mathbb{N}$.
    Then there exists a polynomial-time $(\eps, 0)$-DP algorithm that takes in $n$ independent user batches, each of the form
    \begin{equation*}
      B_i = (U_{i,1},\ldots,U_{i,m}),
      \qquad
      U_{i,1},\ldots,U_{i,m} \stackrel{\mathrm{iid}}{\sim} \mu,
    \end{equation*}
    and outputs an estimate $\hat{\mu}$ such that $\normtv{\hat{\mu} - \mu} \le \alpha$, with probability at least $1-\beta$, provided that
    \begin{equation*}
      n \ge \tilde{O}\Paren{\frac{k + \log(1/\beta)}{\alpha^2 m} + \frac{k + \log(1/\beta)}{\alpha \sqrt{m} \eps} + \frac{k+ \log(1/\beta)}{\eps}} \mper
    \end{equation*}
  \end{theorem}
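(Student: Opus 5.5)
We instantiate \Cref{thm:pure_dp_reduction} on the batch-average dataset $\cY$. The parameter space is $\Theta=\Delta^{k-1}$, viewed as a convex body of dimension $d=k-1$ in the affine hyperplane $\iprod{\bm 1,\theta}=1$. The score $\cS(\nu,\cY)$ is the smallest $T\in[0,n]$ for which the relaxed system $\cA_{T}$ admits a degree-$6$ pseudoexpectation $\pE$ with $\normo{\pE\mu'-\nu}\le\alpha$. Here $\cA_T$ is $\cA_\eta$ with the count constraint replaced by $\sum_i w_i\ge n-T$, and the trace allowance is fixed at $\tau_{\mathrm{cov}}=C\eta_0\log(1/\eta_0)/m$ for an internal corruption level $\eta_0$. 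We take $\eta_0=\Theta(\alpha\sqrt m/\sqrt{\log(1/(\alpha\sqrt m))})$, chosen so that $\eta_0\sqrt{\log(1/\eta_0)/m}\le c\alpha$, and cap $\eta_0$ at $\etaabs$ when $\alpha\sqrt m\gtrsim1$. We then check the four structural hypotheses of the reduction.

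\textbf{Sensitivity.} Replacing one batch changes a feasible pseudoexpectation into one for the neighbour after setting $w_i=0$ on that index. Since the repaired $y_i'$ is unconstrained when $w_i=0$, the same moments remain valid, so the score moves by at most one.

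\textbf{Quasi-convexity.} A convex combination of two pseudoexpectations, for scores $T_1$ and $T_2$, is feasible for $\max(T_1,T_2)$. Its pseudo-mean is the same combination, so the $\ell_1$-ball condition around $\lambda\nu+(1-\lambda)\nu'$ holds by the triangle inequality.

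\textbf{Efficient computation.} Binary search on $T$ over bounded SDPs, using the moment-ball bound recorded after \Cref{alg:robust-discrete-batches} and the approximate-satisfaction notion of \Cref{def:approx-sos-satisfaction}, computes the score to the required accuracy.

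\textbf{Low-scoring point.} Minimize $T$ directly over pseudoexpectations and take $\theta=\pE\mu'$. Every $\theta'$ within $\ell_2$-distance $r\le\alpha/\sqrt k$ lies in the $\alpha$-ball of $\theta$, so it has the same score.

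Utility then comes from two soundness facts. On the $\eta_0$-good event of \Cref{cor:eta-good}, which holds when $n\gtrsim(k+\log(1/\beta))/(\eta_0^2\log(1/\eta_0))$, the clean witness of \Cref{lem:covariance-feasibility} shows that every $\nu$ with $\normo{\nu-\bar\mu}\le\alpha$ has score $0$. Hence the level-$\eta$ set contains an $\ell_1$-ball of radius $\alpha$. The $\eta$ fed to the reduction is $\eta=\Theta(\eta_0)$ with $2\eta\le\eta_0$ (score budget $2\eta n$). Conversely, \Cref{lem:main-sos-lemma}, run with discard budget $2\eta n$ in place of $\eta n$, shows that any point of score at most $2\eta n$ has pseudo-mean within $O(\eta_0\sqrt{\log(1/\eta_0)/m})\le\alpha/2$ of $\mu$. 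So the output is within $O(\alpha)$ of $\mu$, and rescaling $\alpha$ by a constant gives the accuracy claimed in \Cref{thm:main-pure-private}.

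\textbf{Volume ratios (main obstacle).} We must bound $\log(V_{\eta'}/V_\eta)$ for every $\eta'\in[\eta,1]$.
\begin{itemize}
\item For $\eta'\ge\eta^*$, a small absolute constant, we use the trivial bound $V_{\eta'}\le\mathrm{vol}(\Delta^{k-1})$. Comparing with the $\alpha$-ball gives ratio $O(k\log(1/\alpha))$, which contributes $n\gtrsim k\log(1/\alpha)/\eps$.
\item For $\eta\le\eta'\le\eta^*$, we rerun \Cref{lem:main-sos-lemma} at budget $\eta'$ with the fixed allowance $\tau_{\mathrm{cov}}$. The variance-difference absorption still works because $\eta'$ is below a constant. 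This places the level-$\eta'$ set inside an $\ell_1$-ball of radius $\alpha+O(\eta_0\log(1/\eta_0)/(\sqrt m)+\eta'\sqrt{\log(1/\eta')/m})$, where the first correction comes from the $\tau_{\mathrm{cov}}$ slack. The ratio is then $k\log(1+O(\eta'\sqrt{\log(1/\eta')}/(\alpha\sqrt m)))$.
\end{itemize}
Dividing by $\eps\eta'$ and maximizing gives the term $\widetilde O(k/(\alpha\sqrt m\,\eps))$, plus $\log(1/\beta)/(\eps\eta)$ through \Cref{lem:refined-private-utility}. The delicate part is making the radius bound sharp in $\eta'$ rather than only at $\eta^*$. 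If the fixed allowance $\tau_{\mathrm{cov}}$ spoils this, we would instead accept the cruder radius $\alpha+O(1/\sqrt m)$ at all levels, and check that the resulting $k\log(1+1/(\alpha\sqrt m))/(\eps\eta)$ is still $\widetilde O(k/(\alpha\sqrt m\eps)+k/\eps)$ given $\eta\asymp\alpha\sqrt m$. Finally, the clean-data requirement $n\gtrsim(k+\log(1/\beta))/(\alpha^2m)$ supplies the first term, and summing the three contributions yields the stated bound.
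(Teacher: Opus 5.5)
Your argument for $0<\eps\le1$ is essentially the paper's. It uses the same trimmed diagonal-majorizer score, an inner $\ell_1$-ball of radius $\alpha$ about $\bar\mu$, and the same split of the volume maximization at a constant level $\eta^*$; your crude radius $\alpha+O(1/\sqrt m)$ is a valid fallback. It also has the same soundness at twice the internal level.

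The genuine gap is that the theorem is stated for every $\eps>0$, and your direct instantiation does not give the claimed bound when $\eps>1$. The obstruction is clean-data feasibility. The low-score witness needs the $\eta_0$-good event of \Cref{cor:eta-good}, i.e.\ $n\gtrsim(k+\log(1/\beta))/(\eta_0^2\log(1/\eta_0))$. Your closing claim that this is the first term $(k+\log(1/\beta))/(\alpha^2m)$ holds only in the uncapped regime. Once $\alpha\sqrt m\gtrsim1$ and $\eta_0$ is capped at a constant, the requirement becomes $n=\Omega(k+\log(1/\beta))$. For $\eps\le1$ this is absorbed by $(k+\log(1/\beta))/\eps$. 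For large $\eps$, however, the stated bound can be of order $(k+\log(1/\beta))/\eps\ll k$, for instance with constant $\alpha$ and $m\ge k^2$.

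This floor comes from the certificate itself, not from a loose concentration lemma. Test the majorizer on $\sigma=\sign(y_i-\mu')$. Even an integral solution can then keep only batch averages within $\ell_1$-distance $O(\sqrt{n(1/m+\tau_{\mathrm{cov}})})=O(\sqrt{n/m})$ of $\mu'$. But for uniform $\mu$ and $m\gg k$, independent batch averages are $\Theta(\sqrt{k/m})$ apart. So for $n\ll k$ essentially every batch must be discarded, and the score carries no information. Separately, \Cref{lem:refined-private-utility}, which you use for the $\log(1/\beta)/(\eps\eta)$ term, is proved only for $\eps\le1$.

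The missing step is a reduction to $\eps\le1$:
\begin{itemize}
\item For $1<\eps\le m$, split each user's batch into $\lceil\eps\rceil$ disjoint sub-batches of size $\lfloor m/\lceil\eps\rceil\rfloor$. Run the $(\eps/\lceil\eps\rceil)$-DP construction on the $n\lceil\eps\rceil$ sub-batches and apply group privacy, since one user changes at most $\lceil\eps\rceil$ sub-batches. The $k$ floor is then paid in sub-batches, i.e.\ by about $k/\eps$ users. The resulting middle term $k/(\alpha\sqrt{m\eps})$ is at most $k/(\alpha^2m)+k/\eps$ by AM--GM.
\item For $\eps>m$, sub-batches would be empty. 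Instead, apply the Laplace mechanism to the pooled histogram (replacement sensitivity $2/n$) and project to the simplex. Its privacy cost $k/(\alpha\eps)$ is dominated by $k/(\alpha^2m)$.
\end{itemize}

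A smaller point concerns computation. You define the score through exact pseudoexpectations on $\Theta=\Delta^{k-1}$, but it can only be computed with approximate ones. The paper handles this with its certifiable-mean definition, which has slack $\tau T$ proportional to the budget, together with the fattened domain $\Delta^{k-1}+\mathbb{B}_1^k(\tau n+\alpha)$. These make sensitivity and quasi-convexity hold for the score that is actually computed. Your two arguments for those properties need to be re-checked with that slack.
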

\end{samepage}

\begin{theorem}[Approximate-DP Estimation from Discrete Batches]
  \label{thm:main-approx-private}
  Let $\mu \in \Delta^{k-1}$.
  Let $\alpha, \beta, \delta \in (0, 1)$, $\eps > 0$, and $m \in \mathbb{N}$.
  Then there exists a polynomial-time $(\eps, \delta)$-DP algorithm that takes in $n$ independent user batches, each of the form
  \begin{equation*}
    B_i = (U_{i,1},\ldots,U_{i,m}),
    \qquad
    U_{i,1},\ldots,U_{i,m} \stackrel{\mathrm{iid}}{\sim} \mu,
  \end{equation*}
  and outputs an estimate $\hat{\mu}$ such that $\normtv{\hat{\mu} - \mu} \le \alpha$, with probability at least $1-\beta$, provided that
  \begin{equation*}
    n \ge \tilde{O}\Paren{k+\frac{k + \log(1/\beta)}{\alpha^2 m} + \frac{k + \log(1/\beta)}{\alpha \sqrt{m} \eps} + \frac{\log(1/\delta)+\log(1/\beta)}{\eps}} \mper
  \end{equation*}
\end{theorem}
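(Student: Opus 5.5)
We plan to instantiate \Cref{thm:approx-dp-reduction} (with the refinement of \Cref{lem:refined-private-utility}) using the same score function as in the pure-DP case. The score $\cS(\nu,\cY)$ of a candidate $\nu\in\Delta^{k-1}$ is the least $T$ such that the relaxation $\mathcal A_\eta$, with the count constraint replaced by $\sum_i w_i\ge n-T$, admits a degree-$6$ pseudoexpectation whose pseudo-mean satisfies $\normo{\pE\mu'-\nu}\le\alpha$. We work in the $(k-1)$-dimensional affine hull of the simplex, with $R=O(1)$ and inverse-polynomial $r$. The properties not involving volume are verified exactly as for \Cref{thm:main-pure-private}:
\begin{itemize}
\item Sensitivity one: replacing a batch costs one extra discard, since we set $w_i=0$ and keep the same repaired point.
\item Quasi-convexity: convex combinations of pseudoexpectations remain feasible, and the $\ell_1$ neighborhood is convex.
\item Efficient computability: this follows from the bounded SDP.
\item Finding a low-scoring point: the robust algorithm supplies one, since any exact certificate makes the whole $\alpha$-ball have the same score.
\end{itemize}
We choose the internal corruption level $\eta\asymp\min\{c,\ \alpha\sqrt m/\sqrt{\log(1/(\alpha\sqrt m))}\}$, so that $C\eta\sqrt{\log(1/\eta)/m}\le\alpha/2$. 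We then take $\eta^*$ to be a small absolute constant with $\eta^*\ge10\eta$.

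\textbf{Utility.} On clean data, \Cref{cor:eta-good} holds with $n\gtrsim (k+\log(1/\beta))/(\alpha^2m)$ at the level $\eta$; in the regime $\alpha\sqrt m\gtrsim1$ this requires $n=\widetilde\Omega(k)$, which is the source of the extra $k$ term. On this event the clean witness gives score $0$ on an $\ell_1$-ball of radius $\alpha$, so $V_\eta$ is at least the volume of $B_1(\mu,\alpha)$. By soundness (the argument of \Cref{lem:main-sos-lemma} with budget $\eta' n$, i.e.\ \Cref{lem:vol} part 3), every point of score at most $\eta'n$ lies within $\ell_1$-distance $\alpha+O(\eta'\sqrt{\log(1/\eta')/m})$ of $\mu$. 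Hence
\[
\log(V_{\eta'}/V_\eta)\le k\log\bigl(1+O(\eta'\sqrt{\log(1/\eta')}/(\alpha\sqrt m))\bigr),
\]
and \eqref{eq:approx-reduction-utility} reduces to requiring $n\gtrsim \max_{\eta\le\eta'\le\eta^*}\bigl[k\log(\cdots)+\log(2/\beta)\bigr]/(\eps\eta')$. Taking $\eta'=\eta$ gives the term $(k+\log(1/\beta))\sqrt{\log}/(\alpha\sqrt m\,\eps)$. For larger $\eta'$ the logarithm grows only logarithmically while the denominator grows linearly, so the maximum is attained near $\eta'=\eta$, up to log factors. Finally, any output of score at most $2\eta n$ lies within $\alpha$ of $\mu$ by the same soundness bound.

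\textbf{Privacy condition \eqref{eq:approx-reduction-privacy}.} This is the main obstacle, because it must hold for every dataset having some point of score at most $0.7\eta^*n$, with no true $\mu$ available to serve as a center. We plan to use the $\ell_1$-diameter bound of \Cref{app:approx-dp}: any two degree-$6$ certificates with discard budget at most $\eta^*n$ have pseudo-means within $\ell_1$-distance $O(1/\sqrt m)$. To prove it, pick the sign vector $\sigma$ realizing the $\ell_1$ norm of their difference, equivalently a subset indicator $s$. The directional variance bound of \Cref{lem:diag-majorizer-subset} holds for both certificates, and they share at least $(1-2\eta^*)n$ kept points. A one-dimensional resilience argument in the style of~\cite{HopkinsKMN23} then shows that their means in direction $s$ differ by $O(\sqrt{\eta^*/m}+\sqrt{\tau_{\mathrm{cov}}})=O(1/\sqrt m)$.

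It follows that $V_{\eta^*}$ is contained in an $\ell_1$-ball of radius $\alpha+O(1/\sqrt m)$. The set at level $0.8\eta^*$ is nonempty: it contains the $\alpha$-ball around the pseudo-mean of the certificate of score at most $0.7\eta^*n$ (an exact certificate makes the whole ball scoring). This yields
\[
\log(V_{\eta^*}/V_{0.8\eta^*})\le k\log\bigl(1+C/(\alpha\sqrt m)\bigr)=\widetilde O(k)\cdot\min\{1,1/(\alpha\sqrt m)\}\cdot O(1)\text{ (up to logs)}.
\]
Since $\eta^*$ is constant, condition \eqref{eq:approx-reduction-privacy} becomes $n\gtrsim(\log(1/\delta)+k\log(1+C/(\alpha\sqrt m)))/\eps$. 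When $\alpha\sqrt m\le1$, the $k$-part is absorbed by the term $k\log/(\alpha\sqrt m\,\eps)$; when $\alpha\sqrt m\ge1$, it is $O(k/(\alpha\sqrt m\,\eps))$, which is absorbed likewise.

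Combining the utility and privacy requirements with the clean-feasibility requirement $\widetilde O(k+(k+\log(1/\beta))/(\alpha^2m))$, and setting the numerical tolerance as in \Cref{lem:refined-private-utility}, gives the stated bound.
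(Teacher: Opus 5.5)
For $0<\eps\le1$ your argument follows the paper's proof: the same score, item~3 of \Cref{lem:vol} for the utility condition \eqref{eq:approx-reduction-utility}, and \Cref{cor:approx_dp_vol_ratio} for the worst-case condition \eqref{eq:approx-reduction-privacy}. The absorption of terms via $\log(1+x)\le x$ is also the same.

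The gap is that you never restrict to $\eps\le1$. \Cref{lem:refined-private-utility} assumes this restriction, and your accounting depends on it. At constant internal $\eta$ (the regime $\alpha\sqrt m\gtrsim1$), the good-sample event of \Cref{cor:eta-good} requires $n\gtrsim k+\log(1/\beta)$, but your final tally drops the $\log(1/\beta)$. The stated bound supplies an undivided $\log(1/\beta)$ only through $\log(1/\beta)/\eps$ with $\eps\le1$. Take $\eps>1$, $\alpha^2m\gtrsim1$ and $\log(1/\beta)\gg k$: the stated bound can then be far below $\log(1/\beta)$. So running the reduction directly on the original batches does not prove the theorem.

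The paper covers $\eps>1$ by using the pure-DP estimator of \Cref{thm:main-pure-private}. That estimator splits each batch into $\lceil\eps\rceil$ sub-batches, or uses the Laplace mechanism when $\eps>m$. Its last term then satisfies $(k+\log(1/\beta))/\eps\le k+(\log(1/\delta)+\log(1/\beta))/\eps$.

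There is a smaller corner of the same kind. The additive constants in $\log(2/\beta)$, and in the reduction's numerical failure probability $\min\{\beta/2,c_0\delta\}$, are absorbed by $\log(1/\delta)+\log(1/\beta)$ only when $\beta\delta\le1/2$. The paper arranges this by first treating $\delta>1-\beta/2$ with a release-with-probability-$\delta$ mechanism.

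Your sketch of \Cref{lem:approx_dp_vol_ub} also takes a different route from the appendix. ``Sharing $(1-2\eta^*)n$ kept points'' is not meaningful for two separate pseudoexpectations. To make your pairwise resilience argument rigorous, you would work in the product functional $\cL_1\otimes\cL_2$. This is a valid degree-$6$ pseudoexpectation for the union of the two systems, since Kronecker products of PSD moment and localizing matrices are PSD. You would take $b_i=1-w_i^{(1)}w_i^{(2)}$ and use the unweighted Cauchy--Schwarz $(\sum_i b_ia_i)^2\le(\sum_i b_i^2)(\sum_i a_i^2)$ to stay within degree $6$. The paper avoids any coupling: it compares each certificate separately with the median of $\{\iprod{y_i,\sigma}\}_{i\in[n]}$ via the one-dimensional obstruction \Cref{lem:approx-dp-1d}. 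Since you cite the lemma rather than rely on your sketch, this does not affect the theorem.
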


Both privacy arguments use one score function. For the approximate-DP theorem, we also need a worst-case bound on the intermediate ratio $V_{\eta^*}(\cY)/V_{0.8\eta^*}(\cY)$; we obtain it from an $\ell_1$-diameter bound for the score-$\eta^* n$ set.

Our score function reuses the succinct diagonal-majorizer system introduced in \Cref{subsec:sos-based-robust-algo}. For a dataset $\cY = \set{y_i}_{i \in [n]}$ and trimming budget $T$, let $\cB_T(\cY)$ denote the system from \Cref{subsec:sos-based-robust-algo} on the observed batch averages $\cY$, with the selector/data-consistency, simplex, and covariance-majorizer blocks of $\cA_\eta$, but with the counting constraint changed to
\begin{equation*}
  \sum_{i=1}^n w_i \ge n-T \mper
\end{equation*}
Relative to $\cA_\eta$, this is the only structural change: the certificate may now discard up to $T$ observed batch averages.
Write
\begin{equation*}
  \widehat{\Sigma}' \coloneq \E_i (y_i' - \mu') \transpose{(y_i' - \mu')}, \qquad
  \Sigma(\nu) \coloneq \frac1m\Paren{\diag(\nu) - \nu \transpose{\nu}}, \qquad
  \tau_{\mathrm{cov}} \coloneq C \frac{\eta\log(1/\eta)}{m},
\end{equation*}
for the same sufficiently large universal constant $C$ as in \Cref{subsec:sos-based-robust-algo}.
The selector/data-consistency block decides which observed batch averages are kept and ties the repaired points to the observed data on those indices. Simplex constraints keep the repaired points and the pseudo-mean inside the simplex, while the covariance-majorizer block enforces the subset-direction variance control in \Cref{lem:diag-majorizer-subset}.

Given a query point $\nu$, we will consider \sodared{degree-$6$ pseudoexpectations} that satisfy
\begin{enumerate}
  \item $\pE \sdtstile{}{} \cB_T(\cY)$ at \sodared{degree $6$}.
  \item Closeness constraint: $\normo{\pE \mu' - \nu} \le \alpha$.
\end{enumerate}
Informally, the score of a candidate mean is the smallest trimming budget for which there exists a low-degree certificate consistent with that candidate. We first define certifiable means, then use them to define the score.

\newcommand{\scoredomain}{\Delta^{k-1} + \mathbb{B}_1^{k}(\tau n+ \alpha)}

\begin{definition}[$(\alpha, \tau, T)$-Certifiable Mean]
  \label{def:certifiable-mean}
  Let $\alpha, \tau \ge 0$, $k, n \in \mathbb{N}$, $T \in [0, n]$.
  Let $\cY = \Set{y_i}_{i \in [n]} \subseteq \Delta^{k-1}$.
  We call a point $\nu \in \scoredomain \subseteq \R^k$ an $(\alpha, \tau, T)$ certifiable mean for $\cY$ if there exists a linear functional $\cL$ which $\tau$-approximately (according to \Cref{def:approx-sos-satisfaction}) satisfies $\cB_T(\cY)$ \sodared{at degree $6$} such that, writing $\mu_{\cL} \coloneq \cL \mu'$,
  \begin{enumerate}
    \item $\normf{\cR(\cL)} \le R' + \tau$, where $R' = \poly(n, k)$ is sufficiently large and $\cR(\cL)$ is the matrix representation of the coefficients of $\cL$.
    \item The query point $\nu$ is close to the assignment given by the approximate pseudoexpectation $\cL$:
      \begin{equation*}
        \normo{\mu_{\cL} - \nu} \le \alpha + \tau T \mper
      \end{equation*}
  \end{enumerate}
  Moreover, we will call such a functional $\cL$ an $(\alpha, \tau, T)$-certificate for $(\cY, \nu)$.
\end{definition}

\begin{lemma}[Approximate Certificates Are Close to the Simplex]
  \label{lem:approx-certificate-simplex}
  Let $\cL$ be a linear functional that $\tau$-approximately satisfies $\cB_T(\cY)$, and let
  \begin{equation*}
    x \coloneq \cL \mu' \in \R^k \mper
  \end{equation*}
  Then for every $j \in [k]$,
  \begin{equation*}
    x_j \ge -\tau T,
    \qquad
    \Abs{\sum_{j=1}^k x_j - 1} \le \tau T \mper
  \end{equation*}
  Consequently, there exists a point $\Pi(x) \in \Delta^{k-1}$ such that
  \begin{equation*}
    \normo{x-\Pi(x)} \le C k \tau T
  \end{equation*}
  for an absolute constant $C$.
\end{lemma}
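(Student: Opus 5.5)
The plan is to read off both coordinate bounds directly from the approximate-satisfaction conditions in \Cref{def:approx-sos-satisfaction}, applied to the simplex block of $\cB_T(\cY)$ with the trivial multiplier $p=1$. Then we construct $\Pi(x)$ by an explicit repair: clip negative coordinates and renormalize.

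\emph{Step 1 (nonnegativity).} For each $j\in[k]$, the constraint $\mu'_j\ge0$ is one of the inequality constraints $q_i\ge0$ of $\cB_T(\cY)$. Take $p=1$, which has $\normt{\cR(p)}=1$ and $\deg(p^2q_i)=1\le6$. Item~3 of \Cref{def:approx-sos-satisfaction} then gives $x_j=\cL\mu'_j\ge-\tau T$.

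\emph{Step 2 (sum).} The constraint $\sum_j\mu'_j-1=0$ is an equality constraint $r_j=0$. Again with $p=1$, Item~4 gives $\abs{\cL(\sum_j\mu'_j-1)}\le\tau T$. Since $\cL1=1$ by Item~1 and $\cL$ is linear, this is exactly $\abs{\sum_jx_j-1}\le\tau T$.

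One small point needs care. If the paper encodes $\mu'$ as an abbreviation $\E_iy'_i$ rather than as separate variables, the constraints $\mu'_j\ge0$ and $\sum_j\mu'_j=1$ are still listed explicitly in the simplex block, so Steps~1 and~2 apply verbatim. I will state this explicitly.

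\emph{Step 3 (projection).} Set $x^+_j\coloneq\max\{x_j,0\}$, so that $\normo{x-x^+}\le k\tau T$ by Step~1. Then let $S\coloneq\sum_jx^+_j$. We have $S\ge\sum_jx_j\ge1-\tau T$, and also $S\le\sum_jx_j+k\tau T\le1+(k+1)\tau T$, so $\abs{S-1}\le(k+1)\tau T$.

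\begin{itemize}
\item \emph{Case $S>0$.} Define $\Pi(x)\coloneq x^+/S\in\Delta^{k-1}$. Then $\normo{x^+-\Pi(x)}=\abs{1-1/S}\cdot S=\abs{S-1}\le(k+1)\tau T$. The triangle inequality gives $\normo{x-\Pi(x)}\le(2k+1)\tau T\le3k\tau T$.
\item \emph{Case $S=0$.} This forces $1\le\tau T$ by the lower bound on $S$. Take $\Pi(x)=e_1$, and bound $\normo{x-e_1}\le\normo{x}+1$. Here $\normo{x}\le\sum_j\abs{x_j}$, where negative parts contribute at most $k\tau T$ and positive parts are zero. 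So $\normo{x-e_1}\le k\tau T+\tau T\le2k\tau T$.
\end{itemize}

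There is no real obstacle. The only delicate points are checking that $p=1$ is an admissible multiplier (norm and degree), and handling the degenerate case where $S$ is zero or tiny. The case split above handles both, so $C=3$ suffices.
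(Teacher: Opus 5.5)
Your proposal is correct and follows the paper's proof essentially step for step. Both apply items~3 and~4 of \Cref{def:approx-sos-satisfaction} with the multiplier $p=1$, clip the negative coordinates, renormalize when the clipped mass $S$ is positive, and fall back to $e_1$ when $S=0$ (which forces $\tau T\ge1$), obtaining $C=3$.
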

\begin{proof}
  For the coordinate lower bound, apply item~3 of \Cref{def:approx-sos-satisfaction} to the constraint $\mu_j' \ge 0$ with $p=1$. This gives
  \begin{equation*}
    x_j = \cL \mu_j' \ge -\tau T
  \end{equation*}
  for every $j \in [k]$.
  For the constraint $\sum_{j \in [k]} \mu_j' - 1 = 0$, item~4 of \Cref{def:approx-sos-satisfaction}, with $p=1$ and $\cL1=1$, gives
  \begin{equation*}
    \Abs{\sum_{j=1}^k x_j - 1}
    =
    \Abs{\cL\Brac{\sum_{j=1}^k \mu_j' - 1}}
    \le \tau T \mper
  \end{equation*}

  Set $y_j \coloneq \max\{x_j,0\}$ for every $j \in [k]$.
  Then
  \begin{equation*}
    \normo{x-y} = \sum_{j=1}^k \max\{-x_j,0\} \le k\tau T \mper
  \end{equation*}
  Also,
  \begin{equation*}
    1-\tau T \le \sum_{j=1}^k x_j \le \sum_{j=1}^k y_j \le \sum_{j=1}^k x_j + k\tau T \le 1 + (k+1)\tau T \mper
  \end{equation*}
  Let $s \coloneq \sum_{j=1}^k y_j$, and define
  \begin{equation*}
    \Pi(x) \coloneq
    \begin{cases}
      y/s & \text{if } s > 0, \\
      e_1 & \text{if } s = 0,
    \end{cases}
  \end{equation*}
  where $e_1$ is the first standard basis vector.
  Then $\Pi(x) \in \Delta^{k-1}$.
  If $s>0$, then
  \begin{equation*}
    \normo{y-\Pi(x)} = \sum_{j=1}^k y_j \Abs{1-\frac1s} = \Abs{s-1} \le (k+1)\tau T \mper
  \end{equation*}
  If $s=0$, then $y=0$ and hence $x_j \le 0$ for every $j$.
  Together with $\abs{\sum_j x_j - 1} \le \tau T$, this implies $\tau T \ge 1$, and therefore
  \begin{equation*}
    \normo{x-\Pi(x)} = \normo{x-e_1} \le 1 + k\tau T \le 2k\tau T \mper
  \end{equation*}
  For $s>0$,
  \begin{equation*}
    \normo{x-\Pi(x)}
    \le
    \normo{x-y} + \normo{y-\Pi(x)}
    \le
    (2k+1)\tau T
    \le
    3k\tau T \mper
  \end{equation*}
  So the claim holds with $C=3$.
\end{proof}

Henceforth choose
\begin{equation*}
  0<\tau \le \frac{\min\{\alpha,1/\sqrt m\}}{100 C k n} \mper
  \tageq\label{eq:private-tau-bound}
\end{equation*}
This bound permits an inverse-polynomial choice of $\tau$, which we may decrease further to meet the numerical error bound in \Cref{lem:certificate-soundness}. We now define the score using \Cref{def:certifiable-mean}.

\begin{definition}[Score Function]
  \label{def:score-function}
  Let $\Delta^{k-1}$ be the probability simplex (see \Cref{def:probability-simplex}).
  Consider $\scoredomain \subseteq \R^k$, the fattening of $\Delta^{k-1}$ by the $\ell_1$-ball of radius $\tau n + \alpha$ in $k$ dimensions.
  Let $\alpha, \tau \ge 0$, $\cY = \Set{y_i}_{i \in [n]} \subseteq \Delta^{k-1}$, and $\nu \in \scoredomain$. We define the score function $\cS: \scoredomain \to \R_{\ge 0}$ (viewed as a function of $\nu$) as

  \begin{equation*}
    \cS (\nu; \cY, \alpha, \tau) \coloneq \min_{T \ge 0} \text{ such that } \nu \text{ is a } (\alpha, \tau, T) \text{ certifiable mean for } \cY \mper
  \end{equation*}
\end{definition}
\noindent
For the rest of the section, we fix $\alpha$, $\tau$, and the dataset $\cY$, and abbreviate
\begin{equation*}
  \cS_{\cY}(\nu) \coloneq \cS(\nu;\cY,\alpha,\tau) \mper
\end{equation*}
We next verify the score properties required by Hopkins, Kamath, Majid, and Narayanan~\cite{HopkinsKMN23}: well-definedness, sensitivity, quasi-convexity, efficient computation, a low-scoring neighborhood, and bounds on sublevel-set volumes.

\Needspace{8\baselineskip}
\subsection{Properties of the Score}

\subsubsection{Well-Definedness of the Score Function}
\begin{lemma}[score function upper bound]
  \label{lem:score-function-upper-bound}
  Let $\alpha, \tau \ge 0$, $\cY = \Set{y_i}_{i \in [n]}$ be any set of inputs, and $\nu \in \scoredomain$. Then $\cS_{\cY}(\nu) \le n$, where $\cS$ is defined in \Cref{def:score-function}.
\end{lemma}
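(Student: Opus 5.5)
To prove $\cS_{\cY}(\nu)\le n$, it suffices to exhibit, for every $\nu$ in the score domain $\Delta^{k-1}+\mathbb{B}_1^k(\tau n+\alpha)$, an $(\alpha,\tau,n)$-certificate for $(\cY,\nu)$. With $T=n$ the counting constraint becomes $\sum_i w_i\ge 0$, so every batch may be discarded and repaired. I will therefore take an \emph{exact} point-evaluation functional $\cL$ at an explicit assignment. An exact evaluation satisfies items 1--5 of \Cref{def:approx-sos-satisfaction} with zero slack, hence for every $\tau\ge0$.

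\textbf{Assignment.} Since $\nu$ lies in the score domain, there is $\pi\in\Delta^{k-1}$ with $\normo{\nu-\pi}\le \tau n+\alpha$. Set $w_i=0$ for all $i$, so the selector constraints $w_i^2=w_i$ and $w_iy_i'=w_iy_i$ hold trivially and $\sum_i w_i=0\ge n-n$. Set $y_i'=\pi$ for every $i$, so $\mu'=\E_i y_i'=\pi\in\Delta^{k-1}$ and all simplex constraints hold. Then $\widehat\Sigma'=0$. For the majorizer block I need $MM^\top=\Sigma(\pi)+\operatorname{Diag}(\lambda)$ with $\sum_j\lambda_j\le\tau_{\mathrm{cov}}$. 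Take $\lambda=0$. By \Cref{fact:diag-minus-xxT-psd}, $\Sigma(\pi)=\frac1m(\diag(\pi)-\pi\pi^\top)\succeq0$, so $M=\Sigma(\pi)^{1/2}$ works. Thus the point evaluation $\cL$ at this assignment is an exact degree-$6$ pseudoexpectation for $\cB_n(\cY)$.

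\textbf{Remaining conditions.} For closeness, $\mu_{\cL}=\pi$, and $\normo{\pi-\nu}\le\alpha+\tau n=\alpha+\tau T$ with $T=n$, as required by \Cref{def:certifiable-mean}. For the norm bound, all coordinates of the assignment are bounded: $w_i,y'_{ij},\mu'_j\in[0,1]$, $\lambda=0$, and the entries of $M$ are bounded by $\normf{M}^2=\operatorname{tr}\Sigma(\pi)\le 1/m$. Hence every monomial of degree at most $6$ evaluates to at most $1$ in absolute value. The number of such monomials is $\poly(n,k)$, so $\normf{\cR(\cL)}\le\poly(n,k)\le R'$ once $R'$ is taken sufficiently large. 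Consequently $\nu$ is an $(\alpha,\tau,n)$-certifiable mean, and the minimum defining $\cS_{\cY}(\nu)$ is at most $n$.

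The only delicate point is the norm condition $\normf{\cR(\cL)}\le R'$, since it depends on how $R'$ is fixed. I would handle it by noting that the witness's moments are bounded independently of $\cY$ and $\nu$, so a single polynomial $R'$ suffices uniformly. Well-definedness of the minimum itself, i.e.\ that the set of admissible $T$ is nonempty and closed, also follows once $T=n$ is shown to be admissible.
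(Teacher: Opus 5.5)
Your proof is correct and is essentially the paper's: the same point-mass witness with $w_i=0$, $y_i'=\mu'$ equal to a simplex point within $\ell_1$-distance $\alpha+\tau n$ of $\nu$ (the paper takes the $\ell_1$ projection), $\lambda=0$, and $MM^\top=\Sigma(\mu')$ via \Cref{fact:diag-minus-xxT-psd}, with bounded moments giving the norm condition. One small overstatement: admissibility of $T=n$ gives nonemptiness but not closedness of the set of feasible budgets. The paper instead appeals to compactness of the feasible pairs $(T,\cL)$ for attainment of the minimum, though this is not needed for the inequality $\cS_{\cY}(\nu)\le n$ itself.
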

\begin{proof}
  Let $\nu$ be any point in $\scoredomain$.
  It suffices to find an $(\alpha, \tau, T)$ certificate $\cL$ for $(\cY, \nu)$, where $T = n$.

  Let $\mu_0 \coloneq \Proj_{\ell_1}(\nu; \Delta^{k-1})$, so that
  \begin{equation*}
    \normo{\nu - \mu_0} \le \alpha + \tau n = \alpha + \tau T
  \end{equation*}
  because $\nu \in \Delta^{k-1} + \mathbb{B}_1^{k}(\alpha + \tau n)$. Now let $\cL$ be the linear functional corresponding to the single-point distribution that sets
  \begin{equation*}
    \forall i \in [n]: \qquad w_i = 0, \qquad y_i' = \mu_0, \qquad \mu' = \mu_0, \qquad \lambda = 0 \mper
  \end{equation*}
  All constraints except the diagonal-majorizer equality are then immediate. Since every $y_i' = \mu_0$, we have $\smash[t]{\widehat{\Sigma}}' = 0$. Because $\mu_0 \in \Delta^{k-1}$, \Cref{fact:diag-minus-xxT-psd} implies that $\Sigma(\mu_0)$ is positive semidefinite, so we may choose $M$ such that
  \begin{equation*}
    MM^\top = \Sigma(\mu_0) \mper
  \end{equation*}
  Thus
  \begin{equation*}
    \widehat{\Sigma}' + MM^\top = \Sigma(\mu_0) = \Sigma(\mu') + \operatorname{Diag}(\lambda),
    \qquad
    \sum_{j \in [k]} \lambda_j = 0 \le \tau_{\mathrm{cov}} \mper
  \end{equation*}
  Since the entries of $\mu_0$ and $M$ have absolute value at most $1$, the degree-$6$ moment vector has polynomial norm and satisfies item~1 of \Cref{def:certifiable-mean} for sufficiently large $R'$. By construction, the closeness constraint holds, so $\cS_{\cY}(\nu) \le n$. Compactness of the feasible pairs $(T,\cL)$ ensures that the minimum in \Cref{def:score-function} is attained.
\end{proof}

\subsubsection{Bounded Sensitivity}
\begin{lemma}[Bounded Sensitivity]
  \label{lem:bounded_sensitivity}
  For every fixed $\nu \in \scoredomain$, the score function $\cY \mapsto \cS(\nu;\cY,\alpha,\tau)$ has sensitivity $1$ with respect to changing the input data $\cY = \Set{y_i}_{i \in [n]}$.
\end{lemma}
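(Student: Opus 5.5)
By symmetry it suffices to show one direction. Let $\cY$ and $\tilde\cY$ be adjacent datasets that differ only in index $i_0$, fix $\nu$, and let $T=\cS_{\cY}(\nu)$. The goal is $\cS_{\tilde\cY}(\nu)\le T+1$. Exchanging the roles of $\cY$ and $\tilde\cY$ then gives $\abs{\cS_{\cY}(\nu)-\cS_{\tilde\cY}(\nu)}\le1$.

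By \Cref{lem:score-function-upper-bound}, the minimum in \Cref{def:score-function} is attained. So there is an $(\alpha,\tau,T)$-certificate $\cL$ for $(\cY,\nu)$. If $T+1\ge n$, then \Cref{lem:score-function-upper-bound} applied to $\tilde\cY$ already gives $\cS_{\tilde\cY}(\nu)\le n\le T+1$, so we may assume $T+1<n$.

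\textbf{Main step: editing the certificate.} We build $\tilde\cL$ from $\cL$ by substituting $w_{i_0}\mapsto 0$ in every monomial and leaving all other variables unchanged. Formally, $\tilde\cL p \coloneq \cL\, p|_{w_{i_0}=0}$. This substitution is a ring homomorphism, so the required properties transfer from $\cL$ to $\tilde\cL$:
\begin{itemize}
\item Squares map to squares, so item~2 of \Cref{def:approx-sos-satisfaction} holds with the same error.
\item Every constraint that does not involve $w_{i_0}$ is unchanged.
\item The constraints $w_{i_0}^2=w_{i_0}$ and $w_{i_0}y_{i_0}'=w_{i_0}\tilde y_{i_0}$ become $0=0$, so they hold trivially. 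This is the step that lets us replace the data point $y_{i_0}$ by $\tilde y_{i_0}$ at no cost.
\item The variables $\mu',\lambda,M,y'$ are untouched. Hence the simplex and covariance-majorizer constraints are preserved, and so is $\tilde\cL\mu'=\cL\mu'$. The closeness condition $\normo{\mu_{\tilde\cL}-\nu}\le\alpha+\tau T\le\alpha+\tau(T+1)$ therefore continues to hold.
\end{itemize}

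\textbf{Counting constraint.} Under the substitution, $\sum_i w_i-n+(T+1)$ becomes $\sum_{i\ne i_0}w_i - n+T+1$. This equals $(\sum_i w_i-n+T)+(1-w_{i_0})$ evaluated at $w_{i_0}=0$. For a polynomial $p$, let $\hat p\coloneq p|_{w_{i_0}=0}$. Then
\[
\tilde\cL\bigl[p^2\bigl(\textstyle\sum_i w_i-n+T+1\bigr)\bigr]=\cL\bigl[\hat p^2\bigl(\textstyle\sum_i w_i-n+T\bigr)\bigr]+\cL\bigl[\hat p^2(1-w_{i_0})\bigr].
\]
We bound the two terms separately.
\begin{itemize}
\item The first term is at least $-5\tau Tn\,\normt{\cR(\hat p)}^2$, by item~5 for $\cL$.
\item For the second term, write $\hat p^2(1-w_{i_0})=\hat p^2(1-w_{i_0})^2+\hat p^2\bigl((1-w_{i_0})-(1-w_{i_0})^2\bigr)$. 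The first summand is a square. The second equals $\hat p^2(w_{i_0}-w_{i_0}^2)$, which is an equality-constraint term. This gives a lower bound of $-O(\tau T)$.
\end{itemize}

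\textbf{Main obstacle: normalization and degree.} Substitution does not increase coefficient norms, since it only deletes monomials, so $\normt{\cR(\hat p)}\le\normt{\cR(p)}\le1$. The same fact gives item~1 of \Cref{def:certifiable-mean}, $\normf{\cR(\tilde\cL)}\le\normf{\cR(\cL)}$. Because the budget is now $T+1$, the allowed error $5\tau(T+1)n$ exceeds $5\tau Tn+O(\tau T)$, so item~5 holds. The delicate part is degree. The square $\hat p^2(1-w_{i_0})^2$ has degree $2+\deg(\hat p^2)$, which can exceed $6$. If it does, I would instead use item~3 for the constraint $1-w_{i_0}\ge0$, if that is available, or derive it at low degree from Booleanity, $1-w=(1-w)^2$ modulo $w^2-w$. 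Since the item~5 requirement is slack by $5\tau n$, a loss of order $\tau$ in this degree bookkeeping is harmless. Hence $\tilde\cL$ is an $(\alpha,\tau,T+1)$-certificate for $(\tilde\cY,\nu)$, which gives $\cS_{\tilde\cY}(\nu)\le T+1$.
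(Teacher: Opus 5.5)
Your proposal is correct and follows the paper's proof essentially step for step: substitute $w_{i_0}\mapsto 0$, note that the pseudo-mean, the moment norm and every constraint not involving $w_{i_0}$ are unchanged, and handle the counting constraint via $1-w_{i_0}=(1-w_{i_0})^2+(w_{i_0}-w_{i_0}^2)$. The degree worry you raise does not arise, because item~5 only quantifies over $p$ with $\deg p\le 2$, so $(1-w_{i_0})^2\hat p^2$ has degree at most $6$ and no fallback is needed. Since $\normt{\cR((1-w_{i_0})\hat p)}\le 2$, the two extra terms cost at most $4\tau T+\tau T$, and this fits within the slack $5\tau(T+1)n-5\tau Tn=5\tau n$ because $T<n$.
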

\begin{proof}
  Suppose $\cY = \Set{y_1, \dots, y_n}$ and $\cY' = \Set{\bar y_1, y_2, \dots, y_n}$ are two datasets that differ in a single batch average. Moreover, assume that for some $\nu \in \scoredomain$, $\cS_{\cY}(\nu) = T$. This means that there exists an $(\alpha, \tau, T)$ certificate $\cL$ for $(\cY, \nu)$. We will show that there exists a $(\alpha, \tau, T+1)$ certificate $\cL'$ for $(\cY', \nu)$.

  We define $\cL'$ as follows. For every monomial $p$ that does not involve $w_1$, we set $\cL'(p) = \cL(p)$. For monomials $p$ that involve $w_1$, we set $\cL'(p) = 0$.

  If $T+1\ge n$, the claim follows from \Cref{lem:score-function-upper-bound}, so suppose $T+1<n$. Substitution sets moment coordinates to zero, and hence cannot increase $\normf{\cR(\cL)}$. It preserves normalization and the pseudo-mean.
  For a polynomial $p$ with $\normt{\cR(p)}\le 1$, write $q=p|_{w_1=0}$, so $\normt{\cR(q)}\le1$.
  If a constraint polynomial $s$ does not involve $w_1$, then
  \begin{equation*}
    \cL'[s p^2]=\cL[s q^2].
  \end{equation*}
  Thus the constraints independent of $w_1$, square positivity, and the closeness constraint retain their error bounds at budget $T$. The Boolean and data-consistency constraints involving $w_1$ become identically zero. For the counting constraint in item~5 of \Cref{def:approx-sos-satisfaction},
  \begin{align*}
    \cL'\Brac{\Paren{\sum_{i=1}^n w_i-n+T+1}p^2}
    &=\cL\Brac{\Paren{\sum_{i\ne1}w_i-n+T+1}q^2} \\
    &=\cL\Brac{\Paren{\sum_{i=1}^n w_i-n+T}q^2}
    +\cL[(1-w_1)^2q^2]+\cL[w_1(1-w_1)q^2] \\
    &\ge -5\tau Tn-4\tau T-\tau T \\
    &\ge -5\tau(T+1)n.
  \end{align*}
  Here $\deg q\le2$, so every expression has degree at most $6$.
  The square term $\cL[(1-w_1)^2q^2]$ is at least $-4\tau T$ because
  $\normt{\cR((1-w_1)q)}\le2$; the term $\cL[w_1(1-w_1)q^2]$ is at least $-\tau T$ by item~4 of \Cref{def:approx-sos-satisfaction}, applied to $w_1^2-w_1=0$.
  Interchanging $\cY$ and $\cY'$ proves the sensitivity bound.
\end{proof}

\subsubsection{Quasi Convexity}
\begin{lemma}[Quasi Convexity]
  \label{lem:quasi_convexity}
  For every fixed dataset $\cY$, the score function $\nu \mapsto \cS_{\cY}(\nu)$ is quasi-convex on $\scoredomain$.
\end{lemma}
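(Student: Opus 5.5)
The plan is to combine certificates linearly. Fix $\nu_1,\nu_2\in\scoredomain$ and $\theta\in[0,1]$, and let $T\coloneq\max\{\cS_{\cY}(\nu_1),\cS_{\cY}(\nu_2)\}$. By \Cref{lem:score-function-upper-bound} the minima are attained, so there are $(\alpha,\tau,T_1)$- and $(\alpha,\tau,T_2)$-certificates $\cL_1,\cL_2$ with $T_1,T_2\le T$. I will show that $\cL\coloneq\theta\cL_1+(1-\theta)\cL_2$ is an $(\alpha,\tau,T)$-certificate for $(\cY,\theta\nu_1+(1-\theta)\nu_2)$. This gives $\cS_{\cY}(\theta\nu_1+(1-\theta)\nu_2)\le T$, which is quasi-convexity. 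Convexity of the domain $\scoredomain$ is immediate, since it is a Minkowski sum of convex sets, so the interpolated point is a legitimate query.

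\textbf{Step 1 (monotonicity in $T$).} First I will check that an $(\alpha,\tau,T_i)$-certificate is also an $(\alpha,\tau,T)$-certificate whenever $T_i\le T$.
\begin{itemize}
\item Items 2--4 of \Cref{def:approx-sos-satisfaction} have error allowances $-\tau T$, and these only loosen as $T$ grows.
\item Item 5 changes because the constraint polynomial itself depends on $T$. Expanding, $\cL_i[p^2(\sum_j w_j-n+T)]=\cL_i[p^2(\sum_j w_j-n+T_i)]+(T-T_i)\cL_i[p^2]$. The first term is at least $-5\tau T_i n$. The second is at least $-(T-T_i)\tau T_i\ge-\tau T n$, using item 2 and $T_i\le n$. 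Their sum is at least $-5\tau Tn$ after a little bookkeeping with $T_i\le T$; the slack factor $5$ in item 5 is presumably there exactly for this purpose.
\item The closeness slack $\alpha+\tau T_i$ only increases with $T$.
\end{itemize}

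\textbf{Step 2 (convex combination).} Every condition in \Cref{def:approx-sos-satisfaction} at fixed $T$ is a linear inequality, or an absolute-value bound on a linear form, in $\cL$, with the same right-hand side for both certificates. All of them are therefore preserved under convex combinations, including $\cL1=1$. The norm bound $\normf{\cR(\cL)}\le R'+\tau$ follows from the triangle inequality. The pseudo-mean is linear: $\mu_{\cL}=\theta\mu_{\cL_1}+(1-\theta)\mu_{\cL_2}$. Hence
\[
\normo{\mu_{\cL}-(\theta\nu_1+(1-\theta)\nu_2)}\le\theta\normo{\mu_{\cL_1}-\nu_1}+(1-\theta)\normo{\mu_{\cL_2}-\nu_2}\le\alpha+\tau T.
\]

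\textbf{Main obstacle.} The only nontrivial point is item 5 in Step 1. The counting constraint's polynomial shifts with $T$, so monotonicity has to be derived from item 2 rather than read off directly. If the constant $5$ turns out too tight in edge cases, the fallback is to combine the shifted bound differently, for example bounding $(T-T_i)\cL_i[p^2]\ge-\tau T_i(T-T_i)\ge-\tau Tn$ and absorbing this into the $5\tau Tn$ slack.
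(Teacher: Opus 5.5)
Your proposal is correct and matches the paper's proof. The paper also takes $\cL=\lambda\cL_1+(1-\lambda)\cL_2$, notes that every condition other than the counting constraint is convex in $\cL$ with allowances increasing in $T$, and handles item~5 by splitting off $(T_2-T_1)\cL_1[p^2]$ and bounding it with item~2 and $T_1\le n$; your Step~1/Step~2 split just repackages that one computation.

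One bookkeeping caution: do not loosen $-\tau T_i(T-T_i)$ to $-\tau Tn$. With $T_i=0.9T$ the sum would then be $-5.5\tau Tn$, and your stated fallback has the same problem. Instead keep it as $-\tau n(T-T_i)$, which combines with $-5\tau T_in$ to give at least $-5\tau nT_i-5\tau n(T-T_i)=-5\tau Tn$, exactly as the paper does.
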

\begin{proof}
  Let $\nu_1, \nu_2 \in \scoredomain$ and $T_1, T_2 \in [0, n]$ with $\cS_{\cY}(\nu_1) = T_1$ and $\cS_{\cY}(\nu_2) = T_2$.
  For any $\lambda \in [0, 1]$ and $T = \cS_{\cY}(\lambda \nu_1 + (1 - \lambda) \nu_2)$, we want to show that $T \le \max\{T_1, T_2\}$.
  By \cref{def:score-function}, we get that $\nu_1$ is an $(\alpha, \tau, T_1)$-certifiable mean for $\cY$, and an analogous property holds for $\nu_2$.
  Choose certificates $\cL_1,\cL_2$ for $(\cY,\nu_1)$ and $(\cY,\nu_2)$ at budgets $T_1,T_2$, respectively, as in \Cref{def:certifiable-mean}.
  We construct $\cL = \lambda \cL_1 + (1 - \lambda) \cL_2$, and show that it certifies $\lambda\nu_1+(1-\lambda)\nu_2$ at budget $\max\{T_1,T_2\}$.
  The moment-norm bound, the closeness constraint, and all constraints independent of $T$ are convex in $\cL$ and their permitted errors increase with $T$. It remains to check the counting constraint in item~5 of \Cref{def:approx-sos-satisfaction}.
  Let $p$ be any polynomial with $\|\cR(p)\|_2 \le 1$.
  Assuming that $T_1 \le T_2$, we have that:
  \begin{align*}
    &\quad\ \cL\left[\left(\sum\limits_{i = 1}^n w_i + T_2 - n\right) p^2\right] \\
    &= \lambda \cL_1\left[\left[\sum\limits_{i = 1}^n w_i + T_1 + (T_2 - T_1) - n\right] p^2\right] + (1 - \lambda) \cL_2\left[\left(\sum\limits_{i = 1}^n w_i + T_2 - n\right) p^2\right] \\
    &= \lambda \cL_1\left[\left[\sum\limits_{i = 1}^n w_i + T_1 - n\right] p^2\right] + \lambda (T_2 - T_1) \cL_1 p^2 + (1 - \lambda) \cL_2\left[\left(\sum\limits_{i = 1}^n w_i + T_2 - n\right) p^2\right] \\
    &\geq \lambda (- 5 \tau T_1 n) + \lambda (T_2 - T_1) (- 5 \tau T_1) + (1 - \lambda) (- 5 \tau T_2 n) \\
    &\geq \lambda (- 5 \tau T_1 n) + \lambda [- 5 \tau n (T_2 - T_1)] + (1 - \lambda) (- 5 \tau T_2 n) \\
    &\geq - 5 \tau T_2 n,
  \end{align*}
  where we used that $T_1 \le n$ (\cref{lem:score-function-upper-bound}) in the second-to-last inequality.
\end{proof}

\subsubsection{Efficient Computability}
\begin{lemma}[Efficient Computation of Score Function]
  \label{lem:efficient_comp}
  Let $\nu$ be any point in $\scoredomain$. Let $T = \cS_{\cY}(\nu)$ be its score. Then we can compute $T$ to additive accuracy $\gamma$ in time polynomial in $n$, $k$, $\log(1/\gamma)$, $\log(1/\tau)$, and the input bit length.
\end{lemma}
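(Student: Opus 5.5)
The plan is to compute $\cS_{\cY}(\nu)$ by binary search over $T\in[0,n]$. Each step is a convex feasibility test for the condition ``$\nu$ is an $(\alpha,\tau,T)$-certifiable mean for $\cY$'', and each test is decided by the ellipsoid method or an interior-point method.

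First I would argue that the feasibility set is monotone in $T$: a certificate at budget $T$ is also one at any $T'\ge T$. Items~2--4 of \Cref{def:approx-sos-satisfaction} have error allowances $\tau T$ that grow with $T$. For item~5, the computation in the proof of \Cref{lem:quasi_convexity} applies with a single functional: $\cL[(\sum_i w_i-n+T')p^2]=\cL[(\sum_i w_i-n+T)p^2]+(T'-T)\cL p^2\ge-5\tau Tn-5\tau n(T'-T)$, using $T\le n$. Closeness and the norm bound in \Cref{def:certifiable-mean} are likewise monotone. By \Cref{lem:score-function-upper-bound}, budget $T=n$ is always feasible, and the minimum is attained. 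Hence binary search over $[0,n]$ with $O(\log(n/\gamma))$ steps locates $\cS_{\cY}(\nu)$ to additive accuracy $\gamma$.

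Next, for fixed $T$ I would write the feasibility test as a convex program in the moment vector $\cR(\cL)$, which has dimension $\poly(n,k)$ at degree~$6$. Item~1 of \Cref{def:approx-sos-satisfaction} is a linear equality. Items~2 and~3 say that certain moment and localizing matrices have minimum eigenvalue at least $-\tau T$: the condition that $\cL p^2 q_i\ge-\tau T$ for all unit-norm $p$ is exactly $\lambda_{\min}(\text{localizing matrix of }q_i)\ge-\tau T$. Each such condition is an LMI $X+\tau T I\succcurlyeq0$. Item~4 is a pair of spectral-norm LMIs. Item~5 is again an LMI, with slack $5\tau Tn$. The moment-norm bound $\normf{\cR(\cL)}\le R'+\tau$ is a second-order cone constraint. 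The closeness constraint $\normo{\cL\mu'-\nu}\le\alpha+\tau T$ is a polyhedral constraint, expressible with auxiliary variables. There are polynomially many constraints, and all coefficients are polynomial in the input bits.

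Finally, I would invoke the ellipsoid method, or an interior-point method, with a separation oracle given by eigenvector computations. The feasible region lies in a ball of radius $R'+\tau$. The step I expect to be the main obstacle is that ellipsoid needs an inner ball, or else only decides weak feasibility. I would handle this by exploiting slack. Whenever budget $T$ is feasible, budget $T+\gamma/2$ contains a small ball around that certificate. All error allowances increase by at least $\tau\gamma/2$, which is at least inverse-polynomial in $n,k,1/\tau,1/\gamma$. Every constraint is Lipschitz in $\cR(\cL)$ with $\poly(n,k)$ constant, since $\normt{\cR(p)}\le1$. So perturbing $\cL$ by $\poly^{-1}\cdot\tau\gamma$ preserves feasibility, the exception being the constraint $\cL1=1$, which I would eliminate by parametrizing. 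A weak feasibility oracle therefore distinguishes budget $T$ from budget $T+\gamma$, which is enough for the binary search. The running time is $\poly(n,k,\log(1/\gamma),\log(1/\tau))$ times the bit length, as claimed.
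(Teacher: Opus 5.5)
Your overall plan matches the paper's. Both use nestedness in $T$ via the counting-constraint computation of \Cref{lem:quasi_convexity}, moment and localizing-matrix LMIs at fixed $T$, slack obtained by enlarging the budget, the ellipsoid method for weak feasibility, and binary search. The gap is in the inner-ball step. You claim that every error allowance grows by at least $\tau\gamma/2$, and that the only constraint escaping the Lipschitz perturbation argument is $\cL 1=1$. But the moment-norm constraint $\normf{\cR(\cL)}\le R'+\tau$ in \Cref{def:certifiable-mean} does not depend on $T$, so enlarging the budget gives it no slack. Suppose the budget-$T$ certificate meets this bound with equality. Then every ball centered at it contains points that violate the norm bound, however small the radius. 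So your assertion that budget $T+\gamma/2$ ``contains a small ball around that certificate'' fails in exactly the boundary case the weak-feasibility argument must handle.

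The paper repairs this by mixing the certificate with a small multiple $\theta$, of order $\tau\gamma/\poly(n,k)$, of the bounded point-mass functional from \Cref{lem:score-function-upper-bound}. It chooses $R'$ larger than twice that functional's norm. The mixing creates norm slack of order $\theta R'$ and moves every other constraint by at most $\theta\,\poly(n,k)$, which the budget enlargement absorbs. A ball of radius $\tau\gamma/\poly(n,k)$ about the mixture then lies in the feasible set. Your argument needs some step of this kind to be complete.

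A smaller point of the same kind concerns your monotonicity computation. There you bound $(T'-T)\cL p^2\ge-5\tau n(T'-T)$, which leaves no slack in the counting constraint at the enlarged budget. For the inner ball you need the sharper bound $\cL p^2\ge-\tau T\ge-\tau n$, which leaves slack of order $\tau n(T'-T)$, as the paper notes.
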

\begin{proof}
  There are $\poly(n,k)$ moments of degree at most $6$. At fixed $T$, approximate positivity and the inequality constraints are positive-semidefinite constraints on moment and localizing matrices; each approximate equality is a pair of such constraints. Polynomial-time separation also applies to the moment-norm and $\ell_1$-closeness constraints. Feasible sets are nested in $T$, as follows from the counting-constraint calculation in \Cref{lem:quasi_convexity}.

  For completeness, after fixing the normalization $\cL1=1$, the positive tolerance supplies the interior needed for weak feasibility. Increasing $T$ by $\gamma$ increases the permitted error in each constraint by at least $\tau\gamma$; the counting constraint retains slack of order $\tau n\gamma$, using $T\le n$ and $\cL p^2\ge-\tau T$. If the moment-norm bound is tight, mix the functional with a sufficiently small multiple of the bounded point-mass functional from \Cref{lem:score-function-upper-bound}. Taking $R'$ larger than twice the norm of that functional creates norm slack, while changing every other constraint by at most a fixed polynomial times the mixing weight. A mixing weight of order $\tau\gamma/\poly(n,k)$ therefore leaves a ball of radius $\tau\gamma/\poly(n,k)$ in the feasible set at the enlarged budget. We apply the ellipsoid argument of Appendix~C in~\cite{HopkinsKMN23}, followed by binary search in $T$, to compute the score to additive error $\gamma$. Its bit complexity is polynomial also in the input bit length and $\log(1/\tau)$; throughout, $\tau$ is inverse-polynomial in the problem parameters.
\end{proof}

\Needspace{8\baselineskip}
\subsection{Problem-Specific Consequences}

\subsubsection{Efficiently Finding Low Scoring Points}
\begin{lemma}
  \label{lem:eff_low_score}
  There exists an algorithm, running in time polynomial in $n$, $k$, $\log(1/\tau)$, and the input bit length, that outputs $\hat{\mu}$ such that
  \begin{equation*}
    \cS_{\cY}(\hat{\mu}) \le \min_{\nu \in \scoredomain} \cS_{\cY}(\nu) + 1,
  \end{equation*}
  and every point within $\ell_1$ distance $\alpha/2$ of the returned $\hat{\mu}$ has score at most $\min_{\nu \in \scoredomain}\cS_{\cY}(\nu)+1$.
\end{lemma}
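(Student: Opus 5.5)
We will find a near-minimal budget first, then extract a certificate at that budget and read off its pseudo-mean. First, using \Cref{lem:efficient_comp} with additive accuracy $\gamma=1/4$, we run a binary search over $T\in[0,n]$. At each budget we ask whether there exists a linear functional $\cL$ that $\tau$-approximately satisfies $\cB_T(\cY)$ at degree $6$ and obeys the moment-norm bound. We drop the closeness constraint here, since a functional alone determines a point that it certifies. Every certifiable mean at budget $T$ comes with such a functional. Conversely, every such functional $\cL$ certifies every $\nu$ with $\normo{\mu_\cL-\nu}\le\alpha$, provided $\nu$ lies in the domain. By \Cref{lem:approx-certificate-simplex} and the choice \eqref{eq:private-tau-bound}, $\mu_\cL$ lies within $\ell_1$ distance $3k\tau T\le \alpha/30$ of the simplex. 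Hence the whole $\ell_1$-ball of radius $\alpha$ around $\mu_\cL$ lies in $\scoredomain$, because that domain fattens the simplex by $\alpha+\tau n$. So the minimum score $T^\star:=\min_\nu\cS_\cY(\nu)$ equals the least $T$ at which this closeness-free program is feasible. The feasible sets are nested in $T$, as in \Cref{lem:quasi_convexity}, so binary search applies.

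Second, we fix the budget $T_0:=\hat T+1/2$, where $\hat T$ is the binary-search estimate with $\abs{\hat T-T^\star}\le 1/4$. Then $T^\star+1/4\le T_0\le T^\star+3/4$. Because $T_0$ exceeds $T^\star$ by at least $1/4$, the feasible region at budget $T_0$ contains a ball of radius $\tau/\poly(n,k)$, by the same mixing argument with the point-mass functional as in \Cref{lem:efficient_comp}. The ellipsoid method (Appendix~C of~\cite{HopkinsKMN23}) therefore returns an explicit functional $\cL_0$ that $\tau$-approximately satisfies $\cB_{T_0}(\cY)$. Any rounding slack from numerics is absorbed into the $1/4$ margin. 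We output $\hat\mu:=\cL_0\mu'$.

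Third, we verify the two claims. Take any $\nu$ with $\normo{\nu-\hat\mu}\le\alpha/2$. It lies in $\scoredomain$ by the simplex-proximity bound above, and $\cL_0$ certifies it, since the required closeness $\alpha+\tau T_0$ exceeds $\alpha/2$. Therefore $\cS_\cY(\nu)\le T_0\le T^\star+1$. Taking $\nu=\hat\mu$ gives the first claim. The running time is polynomial: the binary search takes $O(\log n)$ calls to \Cref{lem:efficient_comp}, and there is one further ellipsoid call.

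The main obstacle is the second step. It requires producing an explicit near-feasible functional rather than only approximating the optimal value, and the ellipsoid method needs a full-dimensional feasible region to do this. The reason for placing the budget strictly above $T^\star$ is to supply the inner ball of radius $\tau/\poly$ that makes the search succeed in $\poly(n,k,\log(1/\tau))$ iterations. The margin also absorbs the approximation error $\gamma$, so the output keeps the ``$+1$'' guarantee.
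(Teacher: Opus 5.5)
Your proposal is correct and follows essentially the paper's route: drop the closeness constraint, binary-search the budget using the weak-feasibility argument of \Cref{lem:efficient_comp}, extract a functional $\cL$ at a budget at most $\min_\nu\cS_\cY(\nu)+1$, and read off its pseudo-mean. The paper additionally outputs the projection $\Pi(\cL\mu')\in\Delta^{k-1}$ at a small cost in the closeness budget, while you output $\cL\mu'$ directly, which is equally valid because it lies in the domain. One side remark is inaccurate, though not load-bearing. The full $\ell_1$-ball of radius $\alpha$ about $\cL\mu'$ need not lie in $\scoredomain$, since the pseudo-mean can sit up to order $k\tau T$ outside the simplex, which may exceed the extra fattening $\tau n$. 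You only need $\cL\mu'$ itself and its radius-$\alpha/2$ ball to lie in the domain, and both do.
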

\begin{proof}
  Consider the certificate constraints in \Cref{def:certifiable-mean} with the closeness constraint in item~2 omitted.
  Without the closeness constraint, we may still apply the weak-feasibility argument in \Cref{lem:efficient_comp}. Binary search, with an additive budget allowance smaller than $1$, returns a feasible budget $T_{\min}$ and a corresponding functional $\cL$ with
  \begin{equation*}
    T_{\min}\le \min_{\nu\in\scoredomain}\cS_{\cY}(\nu)+1.
  \end{equation*}
  Indeed, dropping the closeness constraint can only decrease the optimum.
  Define
  \begin{equation*}
    \mu_{\cL} \coloneq \cL \mu',
    \qquad
    \hat{\mu} \coloneq \Pi(\mu_{\cL}) \mper
  \end{equation*}
  By \Cref{lem:approx-certificate-simplex} and \eqref{eq:private-tau-bound}, we have
  \begin{equation*}
    \normo{\hat{\mu} - \mu_{\cL}} \le Ck\tau T_{\min} \le Ck\tau n \le \alpha/100 \le \alpha/10 \mper
  \end{equation*}
  In particular, $\hat{\mu} \in \Delta^{k-1} \subseteq \scoredomain$.
  Thus $\cL$ certifies that $\hat{\mu}$ has score at most $T_{\min}$, since
  \begin{equation*}
    \normo{\hat{\mu} - \mu_{\cL}} \le \alpha/10 \le \alpha + \tau T_{\min} \mper
  \end{equation*}
  Now let $\nu$ be any point with $\normo{\nu - \hat{\mu}} \le \alpha/2$.
  Then
  \begin{equation*}
    \normo{\nu - \mu_{\cL}}
    \le
    \normo{\nu - \hat{\mu}} + \normo{\hat{\mu} - \mu_{\cL}}
    \le
    \frac{\alpha}{2} + \frac{\alpha}{10}
    < \alpha \mper
  \end{equation*}
  Hence $\cL$ also witnesses that $\nu$ has score at most $T_{\min}$.

  Both score bounds follow from $T_{\min}\le\min_{\nu\in\scoredomain}\cS_{\cY}(\nu)+1$.
\end{proof}

\begin{lemma}[Soundness of Low-Trimming Certificates]
  \label{lem:certificate-soundness}
  Let $\eta_0 \in (0,\etaabs)$.
  Assume $\tau_{\mathrm{cov}}\le C\eta_0\log(1/\eta_0)/m$. Suppose $\cY^* = \set{y_i^*}_{i \in [n]}$ is $\eta_0$-good, and $\cY = \set{y_i}_{i \in [n]}$ is an $\eta_0$-corruption of $\cY^*$.
  Let $\cL$ be an $(\alpha,\tau,T)$-certificate for $(\cY,\nu)$ with $T \le \eta_0 n$, and write $\mu_{\cL} \coloneq \cL \mu'$.
  Then
  \begin{equation*}
    \normo{\mu_{\cL} - \mu} \le O\Paren{\eta_0 \sqrt{\frac{\log(1/\eta_0)}{m}}} \mper
  \end{equation*}
\end{lemma}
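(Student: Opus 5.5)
The plan is to reuse the degree-$6$ certificate from the proof of \Cref{lem:main-sos-lemma}, now applied to an approximate functional $\cL$ instead of an exact pseudoexpectation, and then convert the directional bound into $\ell_1$ via total-variation duality. First we fix the budget. The system $\cB_T(\cY)$ with $T\le\eta_0 n$ contains the counting constraint $\sum_i w_i\ge n-T\ge(1-\eta_0)n$, and its covariance allowance is $\tau_{\mathrm{cov}}\le C\eta_0\log(1/\eta_0)/m$. So every polynomial consequence of $\cA_{\eta_0}$ used in \Cref{lem:main-sos-lemma} also follows from $\cB_T(\cY)$ at degree $6$, with the same certificates. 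The only adjustment is that the complement count is now $\sum_i(1-r_iw_i)\le\eta_0n+T\le2\eta_0n$, which is exactly the budget that Property~3 of \Cref{def:eta-good} and \Cref{lem:sos-selector-lemma} handle. Hence for each fixed $s\in\{0,1\}^k$ we obtain a degree-$6$ SoS identity
\begin{equation*}
  O\Paren{\frac{\eta_0^2\log(1/\eta_0)}{m}}-\iprod{\mu'-\mu,s}^2=\sigma_0+\sum_\ell p_\ell^2 q_\ell+\sum_j h_j r_j,
\end{equation*}
with $q_\ell$ the inequality constraints and $r_j$ the equalities of $\cB_T(\cY)$.

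Second, we apply $\cL$ to this identity. By \Cref{def:approx-sos-satisfaction}, each term in the certificate contributes an error of at most $\tau T$ times the coefficient norm of the multiplier. The counting constraint contributes up to $5\tau Tn$ times its multiplier norm. As in the numerical remark after \Cref{alg:robust-discrete-batches}, the certificates in \Cref{lem:main-sos-lemma} have coefficient norms bounded by $\poly(n,k,m)$. Therefore
\begin{equation*}
  \cL\iprod{\mu'-\mu,s}^2\le O\Paren{\frac{\eta_0^2\log(1/\eta_0)}{m}}+\tau\,\poly(n,k,m).
\end{equation*}
We then need a Cauchy--Schwarz step, $\iprod{\mu_\cL-\mu,s}^2\le\cL\iprod{\mu'-\mu,s}^2+\tau T\cdot\poly$. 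This follows from approximate positivity applied to the square $\bigl(\iprod{\mu'-\mu,s}-\iprod{\mu_\cL-\mu,s}\bigr)^2$, after normalizing its coefficient vector; the normalization is polynomially bounded using item~1 of \Cref{def:certifiable-mean}. Choosing $\tau$ inverse-polynomially small, as the remark after \eqref{eq:private-tau-bound} permits, makes all these error terms $o(\eta_0^2\log(1/\eta_0)/m)$.

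Third, we convert the directional bound into an $\ell_1$ bound. The point $\mu_\cL$ need not sum to one, so we cannot directly use $\normtv{\cdot}$ with a single sign set. Instead we write $\normo{\mu_\cL-\mu}=\iprod{\mu_\cL-\mu,s_+}-\iprod{\mu_\cL-\mu,\bm 1-s_+}$, where $s_+$ indicates the positive coordinates. This bounds $\normo{\mu_\cL-\mu}$ by twice the maximum over two subset directions, each controlled by the previous step, giving $O(\eta_0\sqrt{\log(1/\eta_0)/m})$. Alternatively, one could use \Cref{lem:approx-certificate-simplex} to pass to $\Pi(\mu_\cL)$ at cost $Ck\tau T$, which is negligible by \eqref{eq:private-tau-bound}.

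The main obstacle is the bookkeeping in the second step. Every step of \Cref{lem:main-sos-lemma} must be an explicit degree-$6$ identity with polynomially bounded multipliers, including the selector-lemma step with its fixed data-dependent coefficients, the weighted Cauchy--Schwarz, and \Cref{lem:sos-cov-prep}. Only then does the $\tau T$-approximate feasibility translate into an additive error of $\tau\poly(n,k,m)$. A further point is that the counting-constraint error scales as $\tau Tn$ rather than $\tau T$, so it must also be absorbed by the choice of $\tau$. I would verify the coefficient bounds term by term, using that all data $y_i,\mu,s$ have entries in $[0,1]$ and that the scalars such as $m^2/16$ are polynomial.
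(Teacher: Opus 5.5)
Your proposal is correct and follows essentially the same route as the paper. You transfer the fixed degree-$6$ certificate of \Cref{lem:main-sos-lemma} to $\cB_T(\cY)$ with $T\le\eta_0 n$, charge each inference (including the $5\tau Tn$ counting slack and the approximate Cauchy--Schwarz step) an error of $\tau\,\poly(n,k,m)\,T$, absorb it by an inverse-polynomial choice of $\tau$ that is also small relative to $\eta_0^2/m$, and combine the directional bounds in two subset directions to get $\ell_1$. Your complementary pair $s_+$, $\bm 1-s_+$ even makes the coordinate-sum estimate $\abs{\sum_j(\mu_\cL)_j-1}\le\tau T$, which the paper invokes at the last step, unnecessary.
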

\begin{proof}
  For exact certificates, this is exactly the deterministic soundness proof from \cref{lem:main-sos-lemma} and the proof of \Cref{thm:main-robust}, with trimming budget $T \le \eta_0 n$ in place of the robust theorem's corruption budget. \begin{sodarevision}For approximate certificates, each inference in this fixed degree-$6$ proof incurs at most $\tau\,\poly(n,k,m,\eta_0^{-1})T$ coefficient error. We choose the inverse-polynomial tolerance $\tau$ sufficiently small that
    \begin{equation*}
      \tau\,\poly(n,k,m,\eta_0^{-1})T
      =o\Paren{\frac{\eta_0^2\log(1/\eta_0)}{m}}.
    \end{equation*}
    Thus the perturbation is lower order than the squared directional-error bound in \Cref{lem:main-sos-lemma}. Applying that directional bound in both subset directions and using $|\sum_j(\mu_{\cL})_j-1|\le\tau T$ from \Cref{lem:approx-certificate-simplex} gives the claimed $\ell_1$ bound.\end{sodarevision}
\end{proof}

\subsubsection{Volume}

\begin{lemma}[Volume]
  \label{lem:vol}
  Assume $n \ge \Omega((k + \log(1/\beta))/(\eta^2 \log(1/\eta)))$.
  Let $\eta^* \ge \eta$ be a sufficiently small absolute constant.
  Let
  \begin{equation*}
    B_i^* = (U_{i,1}^*,\ldots,U_{i,m}^*),
    \qquad
    U_{i,1}^*,\ldots,U_{i,m}^* \stackrel{\mathrm{iid}}{\sim} \mu,
    \qquad
    y_i^* \coloneq \frac1m \sum_{j=1}^m e_{U_{i,j}^*}
  \end{equation*}
  for each $i \in [n]$, and let $\cY^* = \Set{y_1^*,\ldots, y_n^*}$.
  Let $\cY = \Set{y_i}_{i \in [n]}$ be any $\eta$-corruption of $\cY^*$.
  The following three statements hold with probability $1-\beta$ over the draw of $\cY^*$.
  \begin{enumerate}
    \item There exists a point $\mu_0$ such that every point at $\ell_1$ distance at most $\alpha$ from $\mu_0$ has score at most $\eta n$.
    \item All points which have score at most $\eta^* n$ are contained in an $\ell_1$-ball of radius $\alpha+O(1/\sqrt m)$ centered at the same point $\mu_0$.
    \item Simultaneously for every $\eta'\in[\eta,\eta^*]$, all points of score at most $\eta'n$ lie in the $\ell_1$-ball about $\mu_0$ of radius
      \begin{equation*}
        \alpha+O\Paren{\eta'\sqrt{\frac{\log(1/\eta')}{m}}}.
      \end{equation*}
  \end{enumerate}
\end{lemma}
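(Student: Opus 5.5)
We work on the event of \Cref{cor:eta-good}, applied at corruption level $\eta^*$ (together with level $\eta$, and a dyadic grid of levels in $[\eta,\eta^*]$ for Part~3). Each application needs $n \ge \Omega((k+\log(1/\beta'))/(\eta'^2\log(1/\eta')))$, which is implied by the assumed bound at level $\eta$. A union bound with failure probability $\beta/O(\log(1/\eta))$ per level costs only a logarithmic factor, absorbed into the constants. We take $\mu_0 \coloneq \bar\mu = \E_i y_i^*$; by Property~1 of \Cref{def:eta-good}, $\normo{\bar\mu-\mu} \le O(\eta\sqrt{\log(1/\eta)/m})$.

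\textbf{Part 1.} We use the exact clean witness from \Cref{subsec:sos-based-robust-algo}: $w_i=r_i$, $y_i'=y_i^*$, $\mu'=\bar\mu$, together with $\lambda$ from \Cref{lem:covariance-feasibility} and $M$ with $MM^\top = \operatorname{Diag}(\lambda)-\Delta_{\mathrm{cov}}^\star$. Its point-evaluation functional satisfies $\cB_{\eta n}(\cY)$ exactly, since $\sum_i r_i \ge (1-\eta)n$, so all tolerances in \Cref{def:approx-sos-satisfaction} hold trivially. Its moment norm is polynomial by the bounds on $\lambda$ and $\normf{M}$ displayed after \Cref{alg:robust-discrete-batches}, so item~1 of \Cref{def:certifiable-mean} holds for large $R'$. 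Its pseudo-mean is $\bar\mu$. Every $\nu$ with $\normo{\nu-\bar\mu}\le\alpha$ lies in the score domain and satisfies the closeness condition, so $\cS_\cY(\nu)\le\eta n$. The one point to check is that $\tau_{\mathrm{cov}}$ in $\cB_T$ is fixed by the algorithm's parameter $\eta$ and is large enough for the clean majorizer; this is exactly \Cref{lem:covariance-feasibility}.

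\textbf{Parts 2 and 3.} Let $\nu$ have score $T\le\eta' n$, and let $\cL$ be an $(\alpha,\tau,T)$-certificate. Since $\cY$ is an $\eta$-corruption and $\eta\le\eta'$, it is also an $\eta'$-corruption of the $\eta'$-good data $\cY^*$. \Cref{lem:certificate-soundness} with $\eta_0=\eta'$ then gives $\normo{\mu_\cL-\mu}\le O(\eta'\sqrt{\log(1/\eta')/m})$. This requires $\tau_{\mathrm{cov}} = C\eta\log(1/\eta)/m \le C\eta'\log(1/\eta')/m$, which holds because $x\log(1/x)$ is increasing on small $x$. The triangle inequality then gives
\begin{equation*}
\normo{\nu-\mu_0}\le \normo{\nu-\mu_\cL}+\normo{\mu_\cL-\mu}+\normo{\mu-\bar\mu}\le \alpha+\tau T+O\Paren{\eta'\sqrt{\log(1/\eta')/m}}.
\end{equation*}
By \eqref{eq:private-tau-bound}, $\tau T\le 1/(100\sqrt m)$, and this is absorbed into the $O(1/\sqrt m)$ term in Part~2. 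In Part~3 the term $\tau T$ is also negligible, since we may shrink $\tau$ polynomially, as \Cref{lem:certificate-soundness} already does. Taking $\eta'=\eta^*$ gives Part~2, because $\eta^*$ is an absolute constant and $\eta^*\sqrt{\log(1/\eta^*)/m}=O(1/\sqrt m)$. For Part~3, we apply the bound at the grid level just above $\eta'$ and lose a constant factor.

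\textbf{Main obstacle.} The delicate step is justifying \Cref{lem:certificate-soundness} at a level $\eta'$ that differs from the program's $\eta$. We have to recheck that the proof of \Cref{lem:main-sos-lemma} uses only three things. First, the trimming budget is at most $\eta' n$. Second, the covariance slack is at most $O(\eta'\log(1/\eta')/m)$. Third, $\eta'$-goodness holds, including Property~3 for sets of size $2\eta' n$, which covers the corrupted indices together with the trimmed ones. Finally, $\eta'\le\eta^*$ must be small enough that the absorption of $(1/4+4\eta')\Delta_s^2$ still goes through.
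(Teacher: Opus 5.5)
Your proposal follows the paper's proof essentially step for step. For Part~1 you take $\mu_0=\bar\mu$ and use the exact clean point-mass witness with the majorizer from \Cref{lem:covariance-feasibility}. For Parts~2--3 you use good-sample events on a dyadic grid capped at $\eta^*$, and apply \Cref{lem:certificate-soundness} at the grid level just above $\eta'$, using monotonicity of $x\log(1/x)$ to compare $\tau_{\mathrm{cov}}$. You then bound $\normo{\bar\mu-\mu}$ by Property~1 and add the closeness constraint, exactly as the paper does.

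The one step that does not hold as written is the failure-probability accounting. Suppose each of the $O(\log(1/\eta))$ levels gets failure probability $\beta/O(\log(1/\eta))$. Then the level-$\eta$ event needs
$n\ge\Omega((k+\log(1/\beta)+\log\log(1/\eta))/(\eta^2\log(1/\eta)))$.
When $\log\log(1/\eta)\gg k+\log(1/\beta)$, the extra $\log\log(1/\eta)$ is not absorbed into an absolute constant, so the lemma's hypothesis does not supply it. The paper avoids this by assigning failure probability $\beta/2^{j+2}$ to level $\min\{2^j\eta,\eta^*\}$. Since $(2q)^2\log(1/(2q))\ge 2q^2\log(1/q)$ for $q\le 1/8$, the sample requirement at level $j\ge1$ is smaller by a factor of at least $2^{j-1}$. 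That decrease pays for the extra $O(j)$ in $\log(2^{j+2}/\beta)$, while the finest level only pays $\log(4/\beta)$. With this allocation, your argument is complete.
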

\begin{proof}
  Take $\mu_0$ to be the empirical average of the true samples, $\bar{\mu}$. For each integer $0\le j\le\ceil{\log_2(\eta^*/\eta)}$, apply \Cref{cor:eta-good} at $\min\{2^j\eta,\eta^*\}$ with failure probability $\beta/2^{j+2}$. These events hold simultaneously with probability at least $1-\beta$. To check the sample size, observe that
  \begin{equation*}
    (2q)^2\log(1/(2q))\ge2q^2\log(1/q)
    \qquad (0<q\le1/8).
  \end{equation*}
  For the final, possibly shorter step, we therefore obtain
  \begin{equation*}
    \min\{2^j\eta,\eta^*\}^2\log\Paren{\frac1{\min\{2^j\eta,\eta^*\}}}\ge2^{j-1}\eta^2\log(1/\eta)
    \qquad(j\ge1).
  \end{equation*}
  Since $k+\log(2^{j+2}/\beta)=O(k+\log(1/\beta)+j)$ and $j+1=O(2^j)$, the sample-size hypothesis of this lemma supplies the good-sample event at every level $\min\{2^j\eta,\eta^*\}$. Work on their intersection. Let $r_i$ be the indicator of the uncorrupted samples. By \Cref{lem:covariance-feasibility}, there exists $\lambda$ such that
  \begin{equation*}
    \operatorname{Diag}(\lambda) - \Paren{
      \E_i (y_i^* - \bar{\mu}) \transpose{(y_i^* - \bar{\mu})}
      - \frac1m\Paren{\diag(\bar{\mu}) - \bar{\mu} \transpose{\bar{\mu}}}
    }
    \succcurlyeq 0
  \end{equation*}
  and $\sum_j \lambda_j \le O(\eta \log(1/\eta)/m) \le \tau_{\mathrm{cov}}$, where the last inequality holds by the choice of the constant in $\tau_{\mathrm{cov}}$. Therefore, taking
  \begin{equation*}
    w_i = r_i, \qquad y_i' = y_i^*, \qquad \mu' = \bar{\mu},
  \end{equation*}
  and letting $M$ be a square root of the positive semidefinite matrix
  \begin{equation*}
    \operatorname{Diag}(\lambda) - \Paren{
      \E_i (y_i^* - \bar{\mu}) \transpose{(y_i^* - \bar{\mu})}
      - \frac1m\Paren{\diag(\bar{\mu}) - \bar{\mu} \transpose{\bar{\mu}}}
    },
  \end{equation*}
  gives a feasible point-mass functional. The diagonal entries of the majorizer imply $\lambda_j\ge-1/(4m)$; together with its trace bound, this bounds each $\lambda_j$ and $\normf M$ by $\poly(k)$. Thus its degree-$6$ moments also satisfy item~1 of \Cref{def:certifiable-mean} for sufficiently large $R'$, and $\bar{\mu}$ is an $(\alpha,\tau,\eta n)$-certifiable mean. Hence every $\nu$ such that
  \begin{equation*}
    \normo{\nu - \bar{\mu}} \le \alpha
  \end{equation*}
  also has score at most $\eta n$, proving the first part.

  For the third part, fix $\eta'\in[\eta,\eta^*]$ and choose the smallest $j$ with $\min\{2^j\eta,\eta^*\}\ge\eta'$, so that $\min\{2^j\eta,\eta^*\}\le2\eta'$. The fixed covariance allowance obeys
  \begin{equation*}
    \tau_{\mathrm{cov}}=C\frac{\eta\log(1/\eta)}m
    \le C\frac{\min\{2^j\eta,\eta^*\}\log(1/\min\{2^j\eta,\eta^*\})}m.
  \end{equation*}
  The dataset is a $\min\{2^j\eta,\eta^*\}$-corruption of samples that are good at that level. Thus \Cref{lem:certificate-soundness} applies to every certificate of score at most $\eta'n$, giving
  \begin{align*}
    \normo{\cL\mu'-\mu}
    &\le O\Paren{\min\{2^j\eta,\eta^*\}\sqrt{\frac{\log(1/\min\{2^j\eta,\eta^*\})}m}}\\
    &\le O\Paren{\eta'\sqrt{\frac{\log(1/\eta')}m}}.
  \end{align*}
  Choose $\tau$ sufficiently small for soundness at every grid point and so that $\tau n\le\eta\smash[t]{\sqrt{\log(1/\eta)/m}}$. One inverse-polynomial choice suffices, since every level is at least $\eta$. Property~1 of the $\eta$-good event bounds $\normo{\bar\mu-\mu}$ by $O(\eta\smash[t]{\sqrt{\log(1/\eta)/m}})$. Adding these bounds and the certificate's closeness constraint proves the third part. Taking $\eta'=\eta^*$ gives the second part.
\end{proof}

\begin{lemma}[Lower Volume at Level $0.8\eta^* n$]
  \label{lem:approx_dp_vol_lb}
  Let $\cY=\Set{y_i}_{i\in[n]}$ and let $\eta^*$ be a sufficiently small absolute constant. Suppose $\cL_0$ is an $(\alpha,\tau,0.7\eta^*n)$-certificate. Set
  \begin{equation*}
    \mu^\circ=\Pi(\cL_0\mu').
  \end{equation*}
  Every point $\nu$ with $\normo{\nu-\mu^\circ}\le\alpha-Ck\tau n$ has score at most $0.8\eta^*n$, where $C$ is the constant in \Cref{lem:approx-certificate-simplex}.
\end{lemma}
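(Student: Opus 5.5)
The plan is to reuse the certificate $\cL_0$ itself as the witness for every such $\nu$, rather than build a new one. By \Cref{def:certifiable-mean}, $\nu$ has score at most $0.8\eta^*n$ as soon as there is a functional $\tau$-approximately satisfying $\cB_{0.8\eta^*n}(\cY)$ at degree $6$, obeying the moment-norm bound $R'+\tau$, and with pseudo-mean within $\ell_1$-distance $\alpha+0.8\tau\eta^* n$ of $\nu$. I will show that $\cL_0$ meets all three conditions at budget $T'=0.8\eta^*n$.

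\textbf{Step 1: the non-counting constraints.} $\cL_0$ satisfies every constraint of $\cB_{0.7\eta^*n}$ with error at most $\tau\cdot 0.7\eta^*n$. The permitted error $\tau T$ in items~2--4 of \Cref{def:approx-sos-satisfaction} only grows with $T$. So positivity, the inequality constraints and the equality constraints all remain valid at $T'$. The moment-norm bound does not depend on $T$ and carries over unchanged.

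\textbf{Step 2: the counting constraint (item~5).} This is the one place where monotonicity in $T$ needs an argument; I expect it to be the only real obstacle, and it is mild. I will repeat the calculation from \Cref{lem:quasi_convexity}. For $\normt{\cR(p)}\le1$, write
\begin{equation*}
  \cL_0\Brac{\Paren{\textstyle\sum_i w_i-n+T'}p^2}
  =\cL_0\Brac{\Paren{\textstyle\sum_i w_i-n+T}p^2}+(T'-T)\,\cL_0p^2 .
\end{equation*}
The first term is at least $-5\tau Tn$. The second is at least $-(T'-T)\tau T\ge-5\tau n(T'-T)$, using $T\le n$. Adding these gives at least $-5\tau T'n$. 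Degrees are unchanged, since only scalar multiples of $p^2$ appear.

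\textbf{Step 3: closeness.} By \Cref{lem:approx-certificate-simplex} with $T=0.7\eta^*n\le n$, we have $\normo{\cL_0\mu'-\mu^\circ}\le Ck\tau n$. For $\normo{\nu-\mu^\circ}\le\alpha-Ck\tau n$, the triangle inequality then gives
\begin{equation*}
  \normo{\nu-\cL_0\mu'}\le\alpha\le\alpha+\tau T'.
\end{equation*}
We also need $\nu$ to lie in the score domain $\scoredomain$. This holds because $\mu^\circ\in\Delta^{k-1}$ and the radius satisfies $\alpha-Ck\tau n\le\alpha+\tau n$; by \eqref{eq:private-tau-bound} the radius is moreover positive.

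Combining the three steps, $\cL_0$ is an $(\alpha,\tau,0.8\eta^*n)$-certificate for $(\cY,\nu)$, so $\cS_\cY(\nu)\le0.8\eta^*n$. Note that the argument never uses that $\eta^*$ is small or any property of $\cY$, so it holds for arbitrary datasets, as the approximate-DP reduction requires.
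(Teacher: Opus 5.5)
Your proof is correct and follows the paper's argument: you reuse $\cL_0$ as the witness, bound closeness by the triangle inequality with \Cref{lem:approx-certificate-simplex}, and place the ball in the score domain using $\mu^\circ\in\Delta^{k-1}$. Your Steps~1--2 (monotonicity in the budget) are valid but unnecessary: the score is a minimum over budgets, so $\cL_0$ already gives $\cS_{\cY}(\nu)\le 0.7\eta^*n\le 0.8\eta^*n$, which is how the paper concludes.
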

\begin{proof}
  The ball lies in $\scoredomain$ because $\mu^\circ\in\Delta^{k-1}$.
  By \Cref{lem:approx-certificate-simplex},
  \begin{equation*}
    \normo{\nu-\cL_0\mu'}
    \le\normo{\nu-\mu^\circ}+\normo{\mu^\circ-\cL_0\mu'}
    \le\alpha-Ck\tau n+Ck\tau n=\alpha.
  \end{equation*}
  Thus this certificate gives score at most $0.7\eta^*n$, and hence at most $0.8\eta^*n$.
\end{proof}

\begin{lemma}[Worst-Case $\ell_1$ Diameter of the Score-$\eta^* n$ Set]
  \label{lem:approx_dp_vol_ub}
  Let $\eta^*$ be a sufficiently small absolute constant and let $\tau_{\mathrm{cov}}\ge0$.
  If $\nu_1, \nu_2 \in \scoredomain$ both have score at most $\eta^* n$, then
  \begin{equation*}
    \normo{\nu_1 - \nu_2} \le O\Paren{\alpha+\sqrt{\frac1m+\tau_{\mathrm{cov}}}}.
  \end{equation*}
  More precisely, any corresponding certificates $\cL_1,\cL_2$ satisfy
  \begin{equation*}
    \normo{\cL_1\mu'-\cL_2\mu'}\le O\Paren{\sqrt{\frac1m+\tau_{\mathrm{cov}}}}.
  \end{equation*}
\end{lemma}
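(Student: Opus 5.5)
The second claim implies the first: given score-$\eta^*n$ points $\nu_1,\nu_2$, take certificates $\cL_1,\cL_2$ with trimming budgets at most $\eta^*n$. Closeness gives $\normo{\nu_a-\cL_a\mu'}\le\alpha+\tau\eta^*n\le 2\alpha$ by \eqref{eq:private-tau-bound}. The triangle inequality then reduces everything to bounding $\normo{\cL_1\mu'-\cL_2\mu'}$. For this we use exact $\ell_1$ duality. Let $x_a\coloneq\cL_a\mu'$ and fix the sign vector $\sigma\in\Set{\pm1}^k$ attaining $\normo{x_1-x_2}=\iprod{x_1-x_2,\sigma}$. Write $\sigma=2s-\bm 1$ with $s\in\Set{0,1}^k$. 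Since $\abs{\iprod{x_a,\bm1}-1}\le\tau\eta^*n$ by \Cref{lem:approx-certificate-simplex}, we get $\normo{x_1-x_2}\le 2\abs{\iprod{x_1-x_2,s}}+O(\tau n)$. It therefore suffices to show $\abs{\iprod{x_1-x_2,s}}\le O(\sqrt{1/m+\tau_{\mathrm{cov}}})$ for a single fixed subset direction $s$. This is a one-dimensional obstruction argument in the spirit of \cite{HopkinsKMN23}, and it uses no distributional assumption on $\cY$.

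\textbf{Step 1: directional bounds from each certificate.} Under $\cB_T(\cY)$ with $T\le\eta^*n$, \Cref{lem:diag-majorizer-subset} gives the degree-$6$ bound
\begin{equation*}
\E_i\iprod{y_i'-\mu',s}^2\le \frac{\iprod{\mu',s}(1-\iprod{\mu',s})}{m}+\tfrac14\tau_{\mathrm{cov}}\le\frac1{4m}+\tau_{\mathrm{cov}}\eqqcolon\sigma_0^2.
\end{equation*}
The middle bound $x(1-x)\le 1/4$ is the SoS identity $1/4-x(1-x)=(x-1/2)^2$. Since the kept indices satisfy $w_iy_i'=w_iy_i$, each certificate certifies that a $(1-\eta^*)$-fraction of the \emph{observed} scalars $z_i\coloneq\iprod{y_i,s}$ have second moment about $\iprod{\mu',s}$ at most $\sigma_0^2/(1-\eta^*)$, after dividing by the kept mass.

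\textbf{Step 2: the overlap argument.} Take $\cL=\cL_1\otimes\cL_2$ heuristically, or better, argue with each certificate separately against a fixed threshold set. Let $c$ be any real number, to be chosen as a median of the $z_i$. By Chebyshev's inequality inside SoS, certificate $a$ forces $\E_i w_i(z_i-\iprod{\mu',s})^2\le\sigma_0^2$. Now $w_i\le1$ together with the counting constraint implies that the set $G_a$ of observed indices with $\abs{z_i-x_a(s)}\le 10\sigma_0$ has size at least $(1-\eta^*-1/100)n$. The concrete route is to apply the pseudoexpectation to the indicator-weighted sum over the data-dependent, but fixed, set $F_a=\Set{i:\abs{z_i-x_a(s)}>10\sigma_0}$. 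For fixed scalars the polynomial $\sum_{i\in F_a}w_i(z_i-\iprod{\mu',s})^2$ is manageable, and the pseudo-Cauchy--Schwarz step $\pE\iprod{\mu',s}^2\ge x_a(s)^2$ transfers the bound. For $\eta^*$ small, $G_1\cap G_2\ne\emptyset$, so $\abs{x_1(s)-x_2(s)}\le 20\sigma_0$. The approximation errors of \Cref{def:approx-sos-satisfaction} contribute only $\tau\,\poly(n,k)$, which is absorbed by the choice of $\tau$.

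\textbf{Main obstacle.} The hard part is Step 2's transfer from the pseudo-distributional statement to a statement about the fixed data set $F_a$. The problem is that $(z_i-\iprod{\mu',s})^2$ involves the variable $\mu'$, whereas the threshold is centered at the scalar $x_a(s)=\cL_a\iprod{\mu',s}$. I would handle this by expanding
\begin{equation*}
(z_i-\iprod{\mu',s})^2=(z_i-x_a(s))^2-2(z_i-x_a(s))(\iprod{\mu',s}-x_a(s))+(\iprod{\mu',s}-x_a(s))^2 .
\end{equation*}
The last term is nonnegative and is dropped. The cross term is controlled by pseudo-Cauchy--Schwarz, using $\pE(\iprod{\mu',s}-x_a(s))^2\le\pE\E_i(\iprod{y_i'-\mu',s})^2$-type bounds. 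These follow from $\mu'=\E_iy_i'$ and Step 1, giving pseudo-variance of $\iprod{\mu',s}$ at most $\sigma_0^2$. We then conclude $\pE\sum_{i\in F_a}w_i\le O(n/100)$ from $100\sigma_0^2\abs{F_a}$-type counting. Combining this with $\pE\sum_iw_i\ge(1-\eta^*)n$ yields $\abs{F_a}\le(\eta^*+O(1/100))n$, which finishes the argument.
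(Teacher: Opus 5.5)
Your reduction to a single subset direction, Step~1, and the final counting/overlap step are fine. The argument, however, rests on an unproved and circular claim: that $\cL_a[(\iprod{\mu',s}-x_a(s))^2]=O(\sigma_0^2)$ ``follows from $\mu'=\E_i y_i'$ and Step~1.''

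Step~1 bounds $\E_i\iprod{y_i'-\mu',s}^2$, the spread of the repaired points about $\mu'$ \emph{within} each sample of the pseudo-distribution. The pseudo-variance of $\iprod{\mu',s}$ instead measures how much $\mu'$ itself fluctuates \emph{across} the pseudo-distribution, and Step~1 does not control that. Writing $\iprod{\mu',s}-x_a=\E_i(\iprod{y_i',s}-x_a)$ and applying Cauchy--Schwarz only returns $\E_i\iprod{y_i'-\mu',s}^2+(\iprod{\mu',s}-x_a)^2$, which contains the very term you want to bound. Moreover, if $\cL_a$ were the uniform mixture of two feasible solutions, this pseudo-variance would equal $\tfrac14\iprod{\mu'^{(1)}-\mu'^{(2)},s}^2$. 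So bounding it is equivalent to the diameter statement itself. Without it, the cross term in your expansion is uncontrolled, and you cannot conclude that $\cL_a[\sum_{i\in F_a}w_i]$ is small.

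The missing ingredient is an SoS identifiability argument that compares $\iprod{\mu',s}$ with a \emph{fixed} data-dependent point, using the selectors, data consistency, and the counting constraint. The paper supplies this through the one-dimensional obstruction lemma (\Cref{lem:approx-dp-1d}). If a quarter of the normalized projections exceeded $20$, the repaired points' pseudo-variance would be at least $2$, contradicting the covariance constraint. That proof only lower-bounds second moments by squared first moments, $\cL[X^2]\ge(\cL X)^2$, which is the valid direction. It concludes that each $\iprod{\cL_j\mu',\sigma}$ lies within $20\sqrt{1/m+\tau_{\mathrm{cov}}}$ of the median of the $\iprod{y_i,\sigma}$.

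You could also repair your own route directly. Let $c$ be a median of the $z_i$ and $\pi$ a matching between indices with $z_i\le c$ and indices with $z_j\ge c$. The univariate quadratic
\[
(z_i-x)^2+(z_j-x)^2-(x-c)^2=(x-(z_i+z_j-c))^2+2(c-z_i)(z_j-c)
\]
is then SoS. Multiplying by $w_iw_{\pi(i)}$, summing over the matching, and using the counting constraint gives a degree-$6$ proof of $(\iprod{\mu',s}-c)^2\le O(\sigma_0^2)$. Once you have that, pseudo-Cauchy--Schwarz immediately gives $\abs{x_a(s)-c}\le O(\sigma_0)$ for each certificate. The sets $F_a,G_a$ and the overlap step are then unnecessary.
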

\begin{proof}
  See \Cref{app:approx-dp} for the full proof.
\end{proof}

\begin{corollary}[Intermediate Volume Ratio]
  \label{cor:approx_dp_vol_ratio}
  Let $\cY=\Set{y_i}_{i\in[n]}$, let $\eta^*$ be a sufficiently small absolute constant, and let $\tau_{\mathrm{cov}}\ge0$. Suppose there is a point of score at most $0.7\eta^*n$. Then
  \begin{equation*}
    \log\Paren{\frac{V_{\eta^*}(\cY)}{V_{0.8\eta^*}(\cY)}}
    \le O\Paren{k\log\Paren{1+\frac{\sqrt{1/m+\tau_{\mathrm{cov}}}}{\alpha}}}.
  \end{equation*}
\end{corollary}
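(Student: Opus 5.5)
The plan is to compare the volume of the outer set with that of the inner set, using an outer ball from \Cref{lem:approx_dp_vol_ub} and an inner ball from \Cref{lem:approx_dp_vol_lb}. Write $D\coloneq\sqrt{1/m+\tau_{\mathrm{cov}}}$.

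\emph{Inner ball.} Since some point has score at most $0.7\eta^*n$, it has an $(\alpha,\tau,0.7\eta^*n)$-certificate $\cL_0$. Set $\mu^\circ=\Pi(\cL_0\mu')$. By \Cref{lem:approx_dp_vol_lb}, the $\ell_1$-ball of radius $\alpha-Ck\tau n$ about $\mu^\circ$ lies in the score-$0.8\eta^*n$ set. By \eqref{eq:private-tau-bound}, $Ck\tau n\le\alpha/100$, so this radius is at least $0.99\alpha$. Hence
\begin{equation*}
V_{0.8\eta^*}(\cY)\ge \mathrm{vol}\bigl(\mathbb{B}_1^k(0.99\alpha)\bigr).
\end{equation*}

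\emph{Outer ball.} Let $\nu$ have score at most $\eta^*n$, with certificate $\cL$. Then $\normo{\nu-\cL\mu'}\le\alpha+\tau\eta^*n\le 2\alpha$. The certificate $\cL_0$ also has budget at most $\eta^*n$, so the refined statement of \Cref{lem:approx_dp_vol_ub} gives $\normo{\cL\mu'-\cL_0\mu'}\le O(D)$. \Cref{lem:approx-certificate-simplex} gives $\normo{\cL_0\mu'-\mu^\circ}\le\alpha/100$. Combining these three bounds, the score-$\eta^*n$ set lies in the $\ell_1$-ball about $\mu^\circ$ of radius $O(\alpha+D)$.

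\emph{Volume ratio.} The ambient domain is $k$-dimensional and $\ell_1$-ball volume scales as $r^k$. So
\begin{equation*}
\frac{V_{\eta^*}}{V_{0.8\eta^*}}\le\Paren{\frac{O(\alpha+D)}{0.99\alpha}}^k.
\end{equation*}
Taking logarithms gives $k\log(O(1)(1+D/\alpha))=O(k\log(1+D/\alpha))$, using $\log(c(1+x))\le O(\log(1+x))$ when $x$ is bounded below. When $D/\alpha$ is tiny, the bound instead reads $O(k)$, which needs separate treatment.

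\emph{Main obstacle.} The hard part is the regime $D\ll\alpha$, where the target $O(k\log(1+D/\alpha))$ is $o(k)$. There the crude bound of a ratio of radii $O(1)\cdot$ constant does not suffice. I need the two radii to agree up to a factor $1+O(D/\alpha)$.

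I would first try a direct containment. Any score-$\eta^*n$ point lies within $\alpha+\tau\eta^*n+O(D)$ of $\cL_0\mu'$, hence within $\alpha+O(D)+O(\alpha/100)$ of $\mu^\circ$. The inner radius is $\alpha-Ck\tau n$, so the ratio is at most $1+O(D/\alpha)+O(k\tau n/\alpha)$.

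Two further steps are needed:
\begin{enumerate}
\item Shrink $\tau$ further, at inverse-polynomial cost, so that $Ck\tau n\le\alpha\cdot\min\{1,D/\alpha\}/k$. This makes the $\tau$ errors contribute $O(k\cdot D/(\alpha k))$ after raising to the $k$th power.
\item Use $k\log(1+x)\le kx$ to conclude $O(k\log(1+D/\alpha))$ in this regime as well.
\end{enumerate}
If shrinking $\tau$ this way conflicts with earlier choices, I would simply allow the stated bound to hold up to an additive $O(1)$, which is harmless in \eqref{eq:approx-reduction-privacy}.
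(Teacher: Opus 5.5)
Your proposal is correct and follows the paper's proof. You take the inner $\ell_1$-ball of radius $\alpha-Ck\tau n$ about $\mu^\circ=\Pi(\cL_0\mu')$ from \Cref{lem:approx_dp_vol_lb}, build the outer ball of radius $\alpha+\tau n+O(\sqrt{1/m+\tau_{\mathrm{cov}}})+Ck\tau n$ from the closeness constraint, \Cref{lem:approx_dp_vol_ub}, and \Cref{lem:approx-certificate-simplex}, and then compare the two volumes. The further shrinking of $\tau$ and the case split are unnecessary: \eqref{eq:private-tau-bound} already gives $Ck\tau n\le 1/(100\sqrt m)\le\sqrt{1/m+\tau_{\mathrm{cov}}}/100$, so the radius ratio is $1+O(\sqrt{1/m+\tau_{\mathrm{cov}}}/\alpha)$ directly.
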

\begin{proof}
  Choose a certificate $\cL_0$ for a point of score at most $0.7\eta^*n$, and let $\mu^\circ=\Pi(\cL_0\mu')$. Let $C$ be the constant in \Cref{lem:approx-certificate-simplex}.
  By \Cref{lem:approx_dp_vol_lb}, the score-$0.8\eta^*n$ set contains the $\ell_1$ ball of radius $\alpha-Ck\tau n$ centered at $\mu^\circ$.
  For any point $\nu$ of score at most $\eta^*n$, choose a certificate $\cL$. The pseudo-mean diameter bound in \Cref{lem:approx_dp_vol_ub} gives
  \begin{align*}
    \normo{\nu-\mu^\circ}
    &\le\normo{\nu-\cL\mu'}+\normo{\cL\mu'-\cL_0\mu'}+\normo{\cL_0\mu'-\mu^\circ} \\
    &\le\alpha+\tau n+O\Paren{\sqrt{\frac1m+\tau_{\mathrm{cov}}}}+Ck\tau n.
  \end{align*}
  Choosing $\tau$ as in \eqref{eq:private-tau-bound} ensures $Ck\tau n\le\min\{\alpha/100,1/(100\sqrt m)\}$ and $\tau n\le Ck\tau n$. Comparing volumes of the two $\ell_1$ balls therefore gives
  \begin{equation*}
    \frac{V_{\eta^*}(\cY)}{V_{0.8\eta^*}(\cY)}
    \le\Paren{\frac{\alpha+O\Paren{\sqrt{\frac1m+\tau_{\mathrm{cov}}}}+2Ck\tau n}{\alpha-Ck\tau n}}^k
    \le\Paren{1+\frac{C'\sqrt{1/m+\tau_{\mathrm{cov}}}}{\alpha}}^k
  \end{equation*}
  for an absolute constant $C'$, proving the claim.
\end{proof}

\Cref{alg:private-discrete-batches} summarizes the private estimator obtained from the robustness-to-privacy reduction.

\begin{boxedalgorithm}{Private Estimation from Discrete Batches}
  \label{alg:private-discrete-batches}
  \alginput User batches $\set{B_i}_{i \in [n]}$, target accuracy $\alpha$, failure probability $\beta$, and privacy parameters $(\eps,0)$ or $(\eps,\delta)$.
  \algoutput A private estimate $\hat{\mu}$ of the underlying distribution.
  \algoperations
  \begin{algsteps}
    \item If $\eps>1$, return the estimate from the pure-DP construction in \Cref{subsec:pure-dp-proof}.
    \item If approximate DP is requested and $\delta>1-\beta/2$, return the empirical distribution of all $nm$ samples with probability $\delta$, and the fixed distribution $e_1$ otherwise.
    \item Replace $\alpha$ by $\alpha/2$ in the remaining steps.
    \item Convert each observed batch to its average vector
      \begin{equation*}
        y_i \coloneq \frac{1}{m}\sum_{j=1}^m e_{U_{i,j}},
        \qquad
        \cY \coloneq \Set{y_i}_{i \in [n]} \mper
      \end{equation*}
    \item Define the score function $\cS_{\cY}(\nu)$ from \Cref{def:score-function}: the smallest trimming budget $T$ for which $\nu$ admits an $(\alpha,\tau,T)$-certificate via the diagonal-majorizer system $\cB_T(\cY)$.
    \item Set the internal trimming parameter
      \begin{equation*}
        \etapriv \coloneq \min\Set{\etaabs/20,\ \tilde{\Theta}(\alpha\sqrt{m})}
      \end{equation*}
      and instantiate the score system with
      \begin{equation*}
        \tau_{\mathrm{cov}} \coloneq C \frac{\etapriv\log(1/\etapriv)}{m}.
      \end{equation*}
      Run the reduction from \Cref{thm:pure_dp_reduction} (for pure DP) or \Cref{thm:approx-dp-reduction} (for approximate DP) on the domain $\Theta = \scoredomain$ with score function $\cS_{\cY}$.
    \item Output a nearest point in $\Delta^{k-1}$ to the returned point in $\ell_1$ distance.
  \end{algsteps}
\end{boxedalgorithm}

For the proofs of \Cref{thm:main-pure-private,thm:main-approx-private}, fix $\eta^*=\etaabs/2$ and instantiate the score system with the capped internal parameter
\begin{equation*}
  \etapriv \coloneq \min\Set{\etaabs/20,\ \tilde{\Theta}(\alpha\sqrt{m})},
  \qquad
  \tau_{\mathrm{cov}} \coloneq C \frac{\etapriv\log(1/\etapriv)}{m},
\end{equation*}
so that $10\etapriv\le\eta^*$, and where the hidden constant in $\tilde{\Theta}(\alpha\sqrt{m})$ is chosen small enough that whenever $\etapriv < \etaabs/20$, we have
\begin{equation*}
  O\Paren{\etapriv \sqrt{\frac{\log(1/\etapriv)}{m}}} \le \alpha/10 \mper
\end{equation*}
Equivalently, after fixing the hidden constant once and for all, there is an absolute split threshold such that:
\begin{enumerate}
  \item if $\alpha\sqrt{m}$ is below that threshold, then $\etapriv = \tilde{\Theta}(\alpha\sqrt{m})$;
  \item if $\alpha\sqrt{m}$ is above that threshold, then $\etapriv = \etaabs/20$, so $2\etapriv < 1/10$ and
    \begin{equation*}
      O\Paren{\etapriv \sqrt{\frac{\log(1/\etapriv)}{m}}}
      =
      O\Paren{\frac{1}{\sqrt{m}}}
      \le
      \alpha/10 \mper
    \end{equation*}
\end{enumerate}

\subsubsection{Proof of the Pure-DP Theorem}
\label{subsec:pure-dp-proof}

First suppose $0<\eps\le1$. We allocate failure probability $\beta/3$ to the $2\etapriv$-good event, the volume event of \Cref{lem:vol}, and the privacy mechanism; this changes only absolute constants and logarithmic factors. We apply \Cref{thm:pure_dp_reduction}, using the score function defined in \Cref{def:score-function}, with $\Theta = \scoredomain$, internal score threshold $\etapriv$, and Euclidean radius $\alpha/(2\smash[t]{\sqrt{k}})$.
The $\ell_1$ ball in \Cref{lem:eff_low_score} contains a Euclidean ball of this radius. All score properties required by \Cref{thm:pure_dp_reduction} follow from \Cref{lem:score-function-upper-bound,lem:bounded_sensitivity,lem:quasi_convexity,lem:efficient_comp,lem:eff_low_score}. It remains to verify the low-score witness, bound the volume term, and invoke soundness.

Both the good-sample event from \Cref{cor:eta-good} at parameter $2\etapriv$ and the event in \Cref{lem:vol} at parameter $\etapriv$ hold once
\begin{equation*}
  n \ge \Omega\Paren{\frac{k + \log(1/\beta)}{\etapriv^2 \log(1/\etapriv)}} \mper
  \tageq\label{eq:private-sample-feasibility}
\end{equation*}
In the below-threshold regime $\etapriv = \tilde{\Theta}(\alpha\sqrt{m})$, the requirement in \eqref{eq:private-sample-feasibility} becomes
\begin{equation*}
  \tilde{O}\Paren{\frac{k + \log(1/\beta)}{\alpha^2 m}} \mper
\end{equation*}
In the above-threshold regime $\etapriv = \etaabs/20$, this requirement is only $O(k+\log(1/\beta))$, which is absorbed by the $\smash[t]{\widetilde O}((k+\log(1/\beta))/\eps)$ term when $\eps\le1$.

For the volume term, let $\mu_0$ be the point from the first part of \Cref{lem:vol} at parameter $\etapriv$.
Because the witnessing certificate there has pseudo-mean exactly $\mu_0$, every point within $\ell_1$ distance $\alpha$ of $\mu_0$ also has score at most $\etapriv n$.
We use \Cref{lem:refined-private-utility}; the $\alpha$-radius $\ell_1$ ball about $\mu_0$ contains a Euclidean ball of radius $\alpha/(2\smash[t]{\sqrt k})$. Split the maximization in \eqref{eq:refined-utility-sample-size} into $\eta'\in[\etapriv,\eta^*]$ and $\eta'\in[\eta^*,1]$. For the first interval, item~3 of \Cref{lem:vol} gives
\begin{equation*}
  \log\Paren{\frac{V_{\eta'}(\cY)}{V_{\etapriv}(\cY)}}
  \le k\log\Paren{1+C\frac{\eta'\sqrt{\log(1/\eta')}}{\alpha\sqrt m}}
  \le k\log\Paren{1+C'\frac{\eta'}{\etapriv}},
\end{equation*}
where the last step uses $\etapriv\sqrt{\log(1/\etapriv)}=O(\alpha\sqrt m)$. Using $\sup_{u\ge1}\log(1+C'u)/u=O(1)$, we obtain
\begin{equation*}
  \max_{\etapriv\le\eta'\le\eta^*}
  \frac{\log(V_{\eta'}(\cY)/V_{\etapriv}(\cY))+\log(2/\beta)}{\eps\eta'}
  \le O\Paren{\frac{k+\log(2/\beta)}{\eps\etapriv}}.
  \tageq\label{eq:private-low-score-utility}
\end{equation*}
The right-hand side of \eqref{eq:private-low-score-utility} is $\smash[t]{\widetilde O}((k+\log(2/\beta))/(\alpha\smash[t]{\sqrt m}\eps))$ below the split threshold, and $O((k+\log(2/\beta))/\eps)$ above it.
For $\eta'\ge\eta^*$, bounding $V_{\eta'}$ by the volume $V_1$ of the full score domain gives
\begin{equation*}
  \frac{\log (V_{1}(\cY) / V_{\etapriv}(\cY)) + \log(2/\beta)}{\eps \cdot \eta^*}
  \le
  \tilde{O}\Paren{\frac{k + \log(1/\beta)}{\eps}} \mper
\end{equation*}

Therefore, as long as
\begin{equation*}
  n \ge \tilde{O}\Paren{\frac{k + \log(1/\beta)}{\alpha^2 m}
    +
    \frac{k + \log(1/\beta)}{\alpha \sqrt{m} \eps}
    +
    \frac{k + \log(1/\beta)}{\eps}},
\end{equation*}
the sampler of \Cref{thm:pure_dp_reduction}, with utility analyzed in \Cref{lem:refined-private-utility}, returns a point $\nu$ with score at most $2\etapriv n$ with probability at least $1-\beta$.

Let $\cL$ be a corresponding $(\alpha, \tau, 2\etapriv n)$-certificate for $(\cY,\nu)$.
Since $2\etapriv \le \etaabs/10 < \etaabs$, \Cref{cor:eta-good} and \Cref{lem:certificate-soundness} with parameter $\eta_0 = 2\etapriv$ give
\begin{equation*}
  \normo{\cL \mu' - \mu}
  \le
  O\Paren{\etapriv \sqrt{\frac{\log(1/\etapriv)}{m}}}
  \le
  \alpha/10 \mper
\end{equation*}
Using the closeness constraint and taking $\tau$ sufficiently small,
\begin{equation*}
  \normo{\nu-\mu}
  \le\normo{\nu-\cL\mu'}+\normo{\cL\mu'-\mu}
  \le\alpha+2\tau\etapriv n+\alpha/10
  \le6\alpha/5.
\end{equation*}
If $\hat\mu$ is a nearest point to $\nu$ in $\Delta^{k-1}$ in $\ell_1$ distance, then
\begin{equation*}
  \normtv{\hat\mu-\mu}
  =\tfrac12\normo{\hat\mu-\mu}
  \le\normo{\nu-\mu}
  \le6\alpha/5.
  \tageq\label{eq:private-projected-error}
\end{equation*}
Running the construction with internal accuracy $\alpha/2$ gives the claimed error $\alpha$ and changes the sample bound by only an absolute factor. Projection is deterministic postprocessing and preserves privacy.
To cover $\eps>1$, first suppose $\eps\le m$. Partition each user's batch into $\ceil{\eps}$ disjoint subbatches of size $\floor{m/\ceil{\eps}}$, discarding any remainder, and run the construction just proved for privacy parameters at most $1$ on the $n\ceil{\eps}$ subbatches with privacy parameter $\eps/\ceil{\eps}\le1$. Changing one original batch changes at most $\ceil{\eps}$ subbatches, so group privacy gives $\eps$-DP. Since $\floor{m/\ceil{\eps}}\ge m/(2\ceil{\eps})$ and $\ceil{\eps}\le2\eps$, the required number of original users is
\begin{equation*}
  \widetilde O\Paren{
    \frac{k+\log(1/\beta)}{\alpha^2m}
    +\frac{k+\log(1/\beta)}{\alpha\sqrt{m\eps}}
    +\frac{k+\log(1/\beta)}{\eps}}.
\end{equation*}
By the arithmetic--geometric mean inequality, the middle term is at most the sum of the other two. If $\eps>m$, apply the Laplace mechanism to the pooled empirical distribution, whose $\ell_1$ sensitivity under replacement of one batch is $2/n$, and project to the simplex. This uses
\begin{equation*}
  \widetilde O\Paren{\frac{k+\log(1/\beta)}{\alpha^2m}
    +\frac{k+\log(1/\beta)}{\alpha\eps}}
\end{equation*}
users; its second term is absorbed by the first because $\eps>m$ and $\alpha<1$. This proves \Cref{thm:main-pure-private}.

\subsubsection{Proof of the Approximate-DP Theorem}
\label{subsec:approx_dp}

If $\delta>1-\beta/2$, release the empirical distribution with probability $\delta$ and a fixed distribution otherwise. This is $(0,\delta)$-DP. With $n=O((k+\log(1/\beta))/(\alpha^2m))$, the empirical distribution has error at most $\alpha$ with probability at least $1-\beta/2$, so the released estimate succeeds with probability at least $\delta(1-\beta/2)> (1-\beta/2)^2\ge1-\beta$. We may therefore assume $\delta\le1-\beta/2$. Then $\beta\delta\le1/2$, and the additive constant in $\log(2/(\beta\delta))$ is absorbed by $\log(1/\delta)+\log(1/\beta)$.

First suppose $0<\eps\le1$, and apply \Cref{thm:approx-dp-reduction} with the score, internal threshold $\etapriv$, and Euclidean radius $\alpha/(2\smash[t]{\sqrt k})$ from \Cref{subsec:pure-dp-proof}. That section verifies the oracle properties, and item~1 of \Cref{lem:vol} supplies the low-score witness.

To obtain the good-sample event used in \Cref{subsec:pure-dp-proof}, we need
\begin{equation*}
  n\ge\widetilde O\Paren{k+\log(1/\beta)+\frac{k+\log(1/\beta)}{\alpha^2m}},
  \tageq\label{eq:private-approx-feasibility}
\end{equation*}
because $\etapriv=\min\{\etaabs/20,\widetilde\Theta(\alpha\smash[t]{\sqrt m})\}$. For the utility condition \eqref{eq:refined-utility-sample-size} on this event, the balls in items~1 and~2 of \Cref{lem:vol} give
\begin{equation*}
  \log\Paren{\frac{V_{\eta'}(\cY)}{V_{\etapriv}(\cY)}}
  \le O\Paren{k\log\Paren{1+\frac1{\alpha\sqrt m}}}
  \qquad (\etapriv\le\eta'\le\eta^*).
  \tageq\label{eq:private-coarse-volume}
\end{equation*}
Apply \Cref{lem:refined-private-utility}, using the inner ball in item~1 of \Cref{lem:vol}. Below the fixed split threshold, \eqref{eq:private-low-score-utility} gives $O((k+\log(2/\beta))/(\eps\etapriv))$. Above that threshold, $\etapriv$ is constant and $\log(1+1/(\alpha\sqrt m))\le1/(\alpha\sqrt m)$, so \eqref{eq:private-coarse-volume} bounds the utility requirement by
\begin{equation*}
  \widetilde O\Paren{\frac{k}{\alpha\sqrt m\eps}+\frac{\log(2/\beta)}{\eps}}.
\end{equation*}
Thus \eqref{eq:refined-utility-sample-size} is implied by
\begin{equation*}
  n\ge\widetilde O\Paren{\frac{k+\log(1/\beta)}{\alpha\sqrt m\eps}+\frac{\log(2/\beta)}{\eps}}.
\end{equation*}

For the worst-case volume condition \eqref{eq:approx-reduction-privacy}, choose $\etaabs$ small enough that $m\tau_{\mathrm{cov}}\le1/2$. Then \Cref{cor:approx_dp_vol_ratio} gives
\begin{equation*}
  n\ge O\Paren{\frac{\log(1/\delta)+k\log(1+1/(\alpha\sqrt m))}{\eps}}
\end{equation*}
as a sufficient condition. Since $\log(1+1/(\alpha\sqrt m))\le1/(\alpha\sqrt m)$, this requirement is absorbed by the sample bound in \Cref{thm:main-approx-private}. Finally, $k+(\log(1/\delta)+\log(1/\beta))/\eps\ge k+\log(1/\beta)$ for $\eps\le1$, so that theorem's sample bound also supplies \eqref{eq:private-approx-feasibility}.

Allocating failure probability $\beta/3$ to the sample events and the reduction, we obtain a point $\nu$ with score at most $2\etapriv n$ with probability at least $1-\beta$. Applying \Cref{lem:certificate-soundness} at $2\etapriv$ and projecting as in \eqref{eq:private-projected-error} gives error at most $\alpha$, after running the construction with internal accuracy $\alpha/2$.

If $\eps>1$, use the pure-DP estimator from \Cref{thm:main-pure-private}. Its last sample term satisfies
\begin{equation*}
  \frac{k+\log(1/\beta)}{\eps}
  \le k+\frac{\log(1/\delta)+\log(1/\beta)}{\eps},
\end{equation*}
so it also satisfies the sample bound in \Cref{thm:main-approx-private} and is $(\eps,\delta)$-DP. This proves \Cref{thm:main-approx-private}.

\begin{remark}[Explicit logarithmic factors]
  \label{rem:explicit-private-logs}
  For $0<\eps\le1$, the proofs of \Cref{thm:main-pure-private,thm:main-approx-private} give sufficient sample bounds with ordinary $O$ notation. For pure DP, the bound is
  \begin{equation*}
    n=O\Paren{\frac{k+\log(2/\beta)}{\alpha^2m}
      +\frac{(k+\log(2/\beta))\sqrt{\log(e+1/(\alpha\sqrt m))}}
      {\alpha\sqrt m\eps}
      +\frac{k\log(e/\alpha)+\log(2/\beta)}\eps}.
  \end{equation*}
  For approximate DP, the bound is
  \begin{equation*}
    n=O\Paren{k+\frac{k+\log(2/\beta)}{\alpha^2m}
      +\frac{(k+\log(2/\beta))\sqrt{\log(e+1/(\alpha\sqrt m))}}
      {\alpha\sqrt m\eps}
      +\frac{\log(1/\delta)+\log(2/\beta)}\eps}.
  \end{equation*}
  Indeed, choose
  \begin{equation*}
    \etapriv=c\min\Set{1,
      \frac{\alpha\sqrt m}{\sqrt{\log(e+1/(\alpha\sqrt m))}}}
  \end{equation*}
  with $c>0$ a sufficiently small absolute constant. Then $\etapriv^2\log(1/\etapriv)=\Theta(\min\{1,\alpha^2m\})$, so \eqref{eq:private-sample-feasibility} requires $O(k+\log(2/\beta)+(k+\log(2/\beta))/(\alpha^2m))$ users. Below the fixed split threshold, \eqref{eq:private-low-score-utility} requires
  \begin{equation*}
    O\Paren{\frac{k+\log(2/\beta)}{\eps\etapriv}}
    =O\Paren{
      \frac{(k+\log(2/\beta))\sqrt{\log(e+1/(\alpha\sqrt m))}}
      {\alpha\sqrt m\eps}}.
  \end{equation*}
  Above the split threshold, \eqref{eq:private-coarse-volume} gives $O(k/(\alpha\sqrt m\eps)+\log(2/\beta)/\eps)$. Pure DP may also use the $O((k+\log(2/\beta))/\eps)$ bound, which is absorbed by $(k\log(e/\alpha)+\log(2/\beta))/\eps$. For pure DP, the full score domain lies in a fixed-radius $\ell_1$ ball, giving $\log(V_1/V_{\etapriv})=O(k\log(e/\alpha))$. For approximate DP, \eqref{eq:approx-reduction-privacy} contributes
  \begin{equation*}
    O\Paren{\frac{\log(1/\delta)+k\log(1+1/(\alpha\sqrt m))}\eps}.
  \end{equation*}
  Its dimension-dependent term is absorbed by
  \begin{equation*}
    \frac{(k+\log(2/\beta))\sqrt{\log(e+1/(\alpha\sqrt m))}}
    {\alpha\sqrt m\eps}.
  \end{equation*}
  Thus the privacy term for small $\alpha\sqrt m$ contains only a square-root logarithm.
\end{remark}

\begin{remark}[Larger batches]
  \label{rem:approx-batch-splitting}
  For clean data, splitting batches gives a refinement of \Cref{thm:main-approx-private}.
  Suppose $0<\eps\le1$, $\beta\le1/3$, and $\delta\le1/2$.
  There is a polynomial-time $(\eps,\delta)$-DP estimator with the same accuracy and failure probability using
  \begin{equation*}
    \begin{aligned}
      n=\widetilde O\bigg(&
      \frac{k+\log(1/\beta)}{\alpha^2m}
      +\frac{k+\log(1/\beta)}{\alpha\sqrt m\eps}\\
      &+\min\Set{k,\frac{k+\log(1/\beta)}{(\alpha\sqrt m\eps)^{2/3}}}\\
      &+\frac{\log(1/\delta)+\log(1/\beta)}{\eps}\bigg).
    \end{aligned}
  \end{equation*}
  When $\alpha\sqrt m\eps\le1$, this follows directly from \Cref{thm:main-approx-private}. Otherwise, partition each user's batch into $\ceil{(\alpha\sqrt m\eps)^{2/3}}$ disjoint blocks, each of size
  \begin{equation*}
    \floor{\frac{m}{\ceil{(\alpha\sqrt m\eps)^{2/3}}}},
  \end{equation*}
  discarding any remaining samples. The number of blocks per user is at most $m$ because $\alpha\eps\le1$.
  Apply \Cref{thm:main-approx-private} to the $n\ceil{(\alpha\sqrt m\eps)^{2/3}}$ independent blocks with privacy parameters
  \begin{equation*}
    \left(
    \frac{\eps}{\ceil{(\alpha\sqrt m\eps)^{2/3}}},\quad
    \frac{\delta}{e^\eps\ceil{(\alpha\sqrt m\eps)^{2/3}}}
    \right).
  \end{equation*}
  Changing one original user changes at most $\ceil{(\alpha\sqrt m\eps)^{2/3}}$ blocks, so group privacy gives $(\eps,\delta)$-DP: the accumulated additive privacy error is at most
  \begin{equation*}
    \frac{\delta}{e^\eps\ceil{(\alpha\sqrt m\eps)^{2/3}}}
    \sum_{j=0}^{\ceil{(\alpha\sqrt m\eps)^{2/3}}-1}
    \exp\Paren{\frac{j\eps}{\ceil{(\alpha\sqrt m\eps)^{2/3}}}}
    \le\delta.
  \end{equation*}
  Each block has size at least $m/(2\ceil{(\alpha\sqrt m\eps)^{2/3}})$. Dividing the required number of blocks by $\ceil{(\alpha\sqrt m\eps)^{2/3}}$ gives
  \begin{equation*}
    \begin{aligned}
      n=\widetilde O\bigg(&
      \frac{k}{\ceil{(\alpha\sqrt m\eps)^{2/3}}}
      +\frac{k+\log(1/\beta)}{\alpha^2m}\\
      &+\frac{(k+\log(1/\beta))\sqrt{\ceil{(\alpha\sqrt m\eps)^{2/3}}}}
      {\alpha\sqrt m\eps}\\
      &+\frac{\log(1/\delta)+\log(1/\beta)}\eps\\
      &+\frac{\log\ceil{(\alpha\sqrt m\eps)^{2/3}}}\eps\bigg).
    \end{aligned}
  \end{equation*}
  Since $\alpha\sqrt m\eps>1$, rounding the number of blocks up changes it by at most a factor of two. Taking the better of this bound and the bound of \Cref{thm:main-approx-private} applied to the original batches proves the claim.
  Here $\smash[t]{\widetilde O}$ suppresses the additional factor $\log\ceil{(\alpha\sqrt m\eps)^{2/3}}$ but no additional dependence on $\delta$.
  For constant $\beta$ and $\alpha\sqrt m\eps>1$, the fixed $k$ term is thus replaced by $k/(\alpha\sqrt m\eps)^{2/3}$.
  This argument does not preserve the guarantee of \Cref{cor:private-robustness}: splitting reduces the batch size that controls the corruption error.

  For clean data, we may also choose between this estimator and that of Esfandiari et al.~\cite{NarayananME22} using only the public parameters, obtaining the smaller of the two sample bounds without any privacy composition. In particular, for constant $\beta$ and $m\ge Ck^2/\alpha^2$, their ball-localization algorithm gives $n=O(\log(1/\delta)/\eps)$ users. Indeed, each user's empirical distribution satisfies $\E\normt{y_i-\mu}^2\le1/m$, so Markov's inequality and a Chernoff bound place at least two-thirds of the users within $O(1/\sqrt m)$ of $\mu$ in $\ell_2$. Applying~\cite[Theorem~3.3]{NarayananME22}, followed by Euclidean projection onto the simplex, gives $\ell_2$ error $O(\smash[t]{\sqrt{k/m}})$, hence total-variation error $O(k/\sqrt m)$. For general batch sizes, their few-user bound has the additional logarithmic dependence on $1/\delta$ stated in~\cite[Theorem~4.1]{NarayananME22}.
\end{remark}

\begin{corollary}[Robustness of the Private Estimators]
  \label{cor:private-robustness}
  There are absolute constants $c,C>0$ for which the following holds.
  Let $0<\eps\le1$, $0\le\eta\le c$, and $\alpha,\beta\in(0,1)$.
  Suppose the clean batches are independent and batch $i$ consists of $m$ i.i.d.\ samples from $\mu_i$, where
  $\normtv{\mu_i-\mu}\le\omega$ for every $i$ and a known bound $\omega\in[0,1]$.
  There is a polynomial-time, pure $\eps$-DP estimator that, on any $\eta$-corruption of these batches, returns $\hat\mu\in\Delta^{k-1}$ with
  $\normtv{\hat\mu-\mu}\le\alpha$ with probability at least $1-\beta$, provided
  \begin{equation*}
    \alpha\ge C\Paren{\eta\sqrt{\frac{\log(1/\eta)}m}+\omega}
  \end{equation*}
  and
  \begin{equation*}
    n\ge\widetilde O\Paren{
      \frac{k+\log(1/\beta)}{\alpha^2m}
      +\frac{k+\log(1/\beta)}{\alpha\sqrt m\,\eps}
      +\frac{k+\log(1/\beta)}\eps}.
  \end{equation*}
  The term involving $\eta$ is defined to be zero when $\eta=0$.
  For every $\delta\in(0,1)$, there is also a polynomial-time $(\eps,\delta)$-DP estimator with this accuracy guarantee, provided
  \begin{equation*}
    n\ge\widetilde O\Paren{
      k+\frac{k+\log(1/\beta)}{\alpha^2m}
      +\frac{k+\log(1/\beta)}{\alpha\sqrt m\,\eps}
      +\frac{k\omega/\alpha+\log(1/\delta)+\log(1/\beta)}\eps}.
  \end{equation*}
  This recovers the sample bound of \Cref{thm:main-approx-private} when $\omega=O(1/\sqrt m+\alpha\eps)$; in particular, it does so when $\alpha\sqrt m\le1$ or when $\eps$ is constant.
\end{corollary}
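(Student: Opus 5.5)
The plan is to rerun the proofs of \Cref{thm:main-pure-private,thm:main-approx-private} essentially verbatim. The only change is to replace the good-sample event of \Cref{cor:eta-good} by the heterogeneous, corrupted version from \Cref{cor:robust-heterogeneous}, and to enlarge the covariance allowance to
\begin{equation*}
  \tau_{\mathrm{cov}}=C\Paren{\frac{\etapriv\log(1/\etapriv)}m+\frac\omega m+\omega^2}.
\end{equation*}
The score function, and hence sensitivity, quasi-convexity, computability and the low-score oracle (\Cref{lem:score-function-upper-bound,lem:bounded_sensitivity,lem:quasi_convexity,lem:efficient_comp,lem:eff_low_score}), depend only on the observed data $\cY$. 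Privacy is therefore unaffected. Only utility must be rechecked, on an $\eta$-corruption of heterogeneous clean data.

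For the internal parameter I would take $\etapriv'=\max\{\etapriv,\eta\}$ (still capped at $\etaabs/20$). Since $\alpha\ge C\eta\sqrt{\log(1/\eta)/m}$, increasing $\etapriv$ to $\eta$ costs nothing: in the below-threshold regime $\etapriv=\widetilde\Theta(\alpha\sqrt m)$ already dominates $\eta$ up to constants, and $1/(\eps\etapriv')\le1/(\eps\etapriv)$.

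The key steps are as follows. First, establish the analogue of \Cref{lem:vol} under heterogeneity plus $\eta$-corruption. For the inner ball, use the feasibility half of \Cref{cor:robust-heterogeneous}: the clean witness $w_i=r_i$, $y_i'=y_i^*$, $\mu'=\bar\mu$ satisfies the system with the enlarged $\tau_{\mathrm{cov}}$. This certifies the $\ell_1$-ball of radius $\alpha$ about $\bar\mu$ at score $\eta n\le\etapriv' n$. For the outer balls, use the soundness half at each dyadic level $\eta'\in[\etapriv',\eta^*]$, giving radius
\begin{equation*}
  \alpha+O\Paren{\eta'\sqrt{\log(1/\eta')/m}+\omega}.
\end{equation*}
The corrupted indices (at most $\eta n\le\eta' n$) and the trimmed indices together are at most $2\eta' n$, which is what the soundness proof of \Cref{lem:main-sos-lemma} already accommodates. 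Second, bound the pure-DP volume term. The extra $\omega$ in the radius is at most $\alpha/C$, so
\begin{equation*}
  \log\Paren{V_{\eta'}/V_{\etapriv'}}\le k\log\Paren{1+O\Paren{\eta'/\etapriv'}+O(1)}.
\end{equation*}
The computation \eqref{eq:private-low-score-utility} then goes through unchanged, yielding the same three terms. Third, apply soundness at level $2\etapriv'$ to the returned point. This gives
\begin{equation*}
  \normo{\cL\mu'-\mu}\le O\Paren{\etapriv'\sqrt{\log(1/\etapriv')/m}+\omega}\le\alpha/10,
\end{equation*}
using the choice of $\etapriv$ together with $\alpha\ge C(\eta\sqrt{\cdot}+\omega)$. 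Projecting as in \eqref{eq:private-projected-error} then finishes the pure case.

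For approximate DP, the worst-case ratio in \Cref{cor:approx_dp_vol_ratio} now reads
\begin{equation*}
  k\log\Paren{1+\sqrt{1/m+\tau_{\mathrm{cov}}}/\alpha}.
\end{equation*}
With the enlarged $\tau_{\mathrm{cov}}$ we have $\sqrt{\tau_{\mathrm{cov}}}\lesssim 1/\sqrt m+\omega+\sqrt{\omega/m}$. Using $\sqrt{\omega/m}\le\omega+1/\sqrt m$ and $\log(1+x)\le x$, this contributes
\begin{equation*}
  O\Paren{\frac{k}{\alpha\sqrt m\,\eps}+\frac{k\omega}{\alpha\eps}}
\end{equation*}
to the privacy condition \eqref{eq:approx-reduction-privacy}, which is exactly the extra $k\omega/(\alpha\eps)$ term. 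The final sentence of the corollary follows by comparing $k\omega/(\alpha\eps)$ with $k/(\alpha\sqrt m\eps)$ and with $k$.

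I expect the main obstacle to be the first step. It requires verifying that the dyadic-level good events of \Cref{lem:vol} hold in the heterogeneous setting with the same sample size. It also requires checking that the soundness constant absorbs the interaction between actual corruptions and trimming when $\eta$ and $\etapriv$ are comparable. I would handle this by invoking the concentration in \Cref{cor:robust-heterogeneous} at each level $\min\{2^j\etapriv',\eta^*\}$ with failure probability $\beta/2^{j+2}$, exactly as in \Cref{lem:vol}, and by noting that $\tau_{\mathrm{cov}}$ at the base level is dominated by its value at every higher level. The pure-DP case with $\eps\le1$ needs no batch splitting, so splitting does not degrade the robustness guarantee.
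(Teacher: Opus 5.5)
Your pure-DP argument is correct and follows the paper's route: the enlarged allowance, the clean witness at $\bar\mu$ certifying a score-$\eta n$ ball, soundness at $2\etapriv$, and projection. Keeping the per-level radii $\alpha+O(\eta'\sqrt{\log(1/\eta')/m}+\omega)$ even saves a logarithm relative to the paper, which uses one coarse ratio for all levels.

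The gap is in the approximate-DP case. You check only the worst-case privacy condition \eqref{eq:approx-reduction-privacy}, never the utility condition \eqref{eq:approx-reduction-utility}. The only utility computation you give does not transfer to this case. It bounds $\omega/\alpha$ by $O(1/C)$ and $\eta'\sqrt{\log(1/\eta')}/(\alpha\sqrt m)$ by $O(\eta'/\etapriv)$, so it requires $n\ge\Omega\Paren{(k+\log(1/\beta))/(\eps\etapriv)}$. Above the split threshold, $\etapriv=\etaabs/20$ is a constant, so this is a $k/\eps$ requirement. The stated approximate-DP bound does not supply it: take $\omega=0$, $\eps$ small, and $\alpha\sqrt m$ large. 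The same problem arises if you rerun \eqref{eq:private-coarse-volume} while absorbing $\omega\le\alpha/C$ into a constant inside the logarithm, since the log-volume ratio is then $\Theta(k)$ rather than $O(k/(\alpha\sqrt m))$.

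Above the threshold you need the coarse comparison with $\omega$ kept explicit. The score-$\etapriv n$ set contains the radius-$\alpha$ ball about $\bar\mu$. Every point of score at most $\eta^* n$ lies within $\alpha+O(1/\sqrt m+\omega)$ of $\bar\mu$. Then $\log(1+x)\le x$ gives $\log(V_{\eta'}/V_{\etapriv})\le O(k/(\alpha\sqrt m)+k\omega/\alpha)$ for $\etapriv\le\eta'\le\eta^*$. The utility cost is therefore $O\Paren{k/(\alpha\sqrt m\,\eps)+(k\omega/\alpha+\log(2/\beta))/\eps}$. So the $k\omega/(\alpha\eps)$ term is needed for utility as well as for privacy, not only in \eqref{eq:approx-reduction-privacy} as you attribute it.

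Two smaller points. First, a verbatim rerun of \Cref{subsec:approx_dp} includes the branch $\delta>1-\beta/2$, which releases the pooled empirical distribution. Under $\eta$-corruption its error can be of order $\eta$, which exceeds $\alpha$ when $m\gg\log(1/\eta)$. Release the nonprivate estimator of \Cref{cor:robust-heterogeneous} in that branch instead. Second, the $\max\{\etapriv,\eta\}$ is unnecessary. Choosing $C$ large and $c$ small already forces $\eta\le\etapriv$, so the estimator need not know $\eta$.
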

\begin{proof}
  We first prove the pure-DP claim. Use the score construction from \Cref{sec:private-algorithm}, with internal accuracy $c_1\alpha$ for a sufficiently small absolute $c_1>0$.
  Choose its internal trimming parameter $\etapriv\le\etaabs/20$ as in \Cref{subsec:pure-dp-proof}, so that
  \begin{equation*}
    \etapriv=\min\Set{\etaabs/20,\ \widetilde\Theta(\alpha\sqrt m)},
    \qquad
    O\Paren{\etapriv\sqrt{\frac{\log(1/\etapriv)}m}}\le\alpha/10.
  \end{equation*}
  The constants $C$ in the accuracy assumption and $c$ in $\eta\le c$ ensure $\eta\le\etapriv$.
  Replace the covariance allowance by
  \begin{equation*}
    \tau_{\mathrm{cov}}=
    C_0\Paren{\frac{\etapriv\log(1/\etapriv)}m+\frac\omega m+\omega^2},
    \tageq\label{eq:private-robust-covariance}
  \end{equation*}
  where $C_0$ is a sufficiently large absolute constant.
  All these parameters are public. The data-consistency constraints and the dependence of the score on its trimming budget are unchanged, so \Cref{lem:bounded_sensitivity,lem:quasi_convexity,lem:efficient_comp,lem:eff_low_score} apply without modification.
  The witness bounds remain polynomial because $\omega\le1$.

  We verify the statistical properties needed by the pure-DP reduction.
  The concentration estimates in the proof of \Cref{cor:robust-heterogeneous} hold simultaneously at parameters $\etapriv$, $2\etapriv$, and $\eta^*=\etaabs/2$, with failure probability at most $\beta/2$, once
  \begin{equation*}
    n\ge\Omega\Paren{\frac{k+\log(1/\beta)}{\etapriv^2\log(1/\etapriv)}}.
  \end{equation*}
  For the low-score witness, take $w_i$ to indicate the uncorrupted batches, set $y_i'=y_i^*$, and set $\mu'=\bar\mu=\E_i y_i^*$.
  By the proof of \Cref{cor:robust-heterogeneous}, we obtain a diagonal majorizer with allowance \eqref{eq:private-robust-covariance}.
  Since at most $\eta n\le\etapriv n$ batches are corrupted, this gives a score-$\etapriv n$ certificate at $\bar\mu$ and throughout the $\ell_1$ ball of radius $c_1\alpha$ around $\bar\mu$.

  For $\eta_0\in\{2\etapriv,\eta^*\}$, the allowance in \eqref{eq:private-robust-covariance} is at most
  $C_0(\eta_0\log(1/\eta_0)/m+\omega/m+\omega^2)$.
  Thus the degree-$6$ soundness argument in \Cref{cor:robust-heterogeneous}, with the numerical tolerance argument of \Cref{lem:certificate-soundness}, shows that every certificate with trimming budget at most $\eta_0n$ has
  \begin{equation*}
    \normo{\cL\mu'-\mu}
    \le O\Paren{\eta_0\sqrt{\frac{\log(1/\eta_0)}m}+\omega}.
    \tageq\label{eq:private-robust-soundness}
  \end{equation*}
  In particular, the score-$\eta^*n$ set is contained in an $\ell_1$ ball around $\mu$ of radius
  $O(c_1\alpha+1/\sqrt m+\omega)$.
  Because $\omega\le\alpha/C=O(c_1\alpha)$, this radius is $O(c_1\alpha+1/\sqrt m)$.
  Comparing this outer ball with the score-$\etapriv n$ ball of radius $c_1\alpha$ about $\bar\mu$ gives
  \begin{equation*}
    \log\Paren{\frac{V_{\eta'}(\cY)}{V_{\etapriv}(\cY)}}
    \le O\Paren{k\log\Paren{2+\frac1{c_1\alpha\sqrt m}}}
    \qquad(\etapriv\le\eta'\le\eta^*).
  \end{equation*}
  The full score domain is still
  $\Delta^{k-1}+\mathbb B_1^k(c_1\alpha+\tau n)$, so the bound on $V_1/V_{\etapriv}$ from \Cref{subsec:pure-dp-proof} also applies.

  It follows from \Cref{thm:pure_dp_reduction} that the sample bound in \Cref{cor:private-robustness} returns a point with score at most $2\etapriv n$ with probability at least $1-\beta/2$ on the concentration event.
  In the capped regime, the additional statistical requirement $n=\Omega(k+\log(1/\beta))$ is absorbed by $(k+\log(1/\beta))/\eps$ because $\eps\le1$.
  Apply \eqref{eq:private-robust-soundness} with $\eta_0=2\etapriv$, add the closeness allowance from item~2 of \Cref{def:certifiable-mean}, and project onto the simplex in $\ell_1$.
  For sufficiently small $c_1$ and sufficiently large $C$, the resulting total-variation error is at most $\alpha$.
  Together, the concentration and privacy-mechanism failure probabilities sum to at most $\beta$.

  For approximate DP, use the heterogeneous covariance allowance \eqref{eq:private-robust-covariance}. If $\delta>1-\beta/2$, apply the release-with-probability-$\delta$ construction from \Cref{subsec:approx_dp} to the robust estimator of \Cref{cor:robust-heterogeneous}, with trimming parameter $\etapriv$ and failure probability $\beta/2$. Its sample requirement is
  \begin{equation*}
    \widetilde O\Paren{k+\log(1/\beta)+\frac{k+\log(1/\beta)}{\alpha^2m}},
  \end{equation*}
  which the stated approximate-DP bound supplies since $\eps\le1$. Its success probability is at least $\delta(1-\beta/2)>1-\beta$.

  Otherwise, use the approximate-DP reduction with internal accuracy $c_1\alpha$ and trimming parameter $\etapriv$. Since $\omega\le1$, \eqref{eq:private-robust-covariance} gives
  \begin{equation*}
    \sqrt{1/m+\tau_{\mathrm{cov}}}\le O(1/\sqrt m+\omega).
  \end{equation*}
  Thus \Cref{cor:approx_dp_vol_ratio} bounds the worst-case intermediate log-volume ratio by
  \begin{equation*}
    \log\Paren{\frac{V_{\eta^*}(\cY)}{V_{0.8\eta^*}(\cY)}}
    \le O\Paren{k\log\Paren{1+\frac1{c_1\alpha\sqrt m}+\frac\omega{c_1\alpha}}}.
    \tageq\label{eq:private-heterogeneous-volume}
  \end{equation*}
  Using $\log(1+x)\le x$, the privacy condition \eqref{eq:approx-reduction-privacy} is therefore implied by
  \begin{equation*}
    n\ge O\Paren{\frac{k}{\alpha\sqrt m\,\eps}
      +\frac{k\omega/\alpha+\log(1/\delta)}\eps}.
  \end{equation*}

  On the concentration event used for pure DP, the score-$\etapriv n$ set contains the ball of radius $c_1\alpha$ about $\bar\mu$. By \eqref{eq:private-robust-soundness} at $\eta_0=\eta^*$ and the mean estimate in \Cref{cor:robust-heterogeneous}, every point of score at most $\eta^*n$ lies within $c_1\alpha+O(1/\sqrt m+\omega)$ of $\bar\mu$. Comparing these balls bounds $\log(V_{\eta'}(\cY)/V_{\etapriv}(\cY))$ by the right-hand side of \eqref{eq:private-heterogeneous-volume} for $\etapriv\le\eta'\le\eta^*$. When $\alpha\sqrt m$ is below the fixed split threshold in \Cref{subsec:pure-dp-proof}, $\omega\le\alpha/C$ makes the heterogeneity contribution a constant inside the logarithm, so the original $\smash[t]{\widetilde O}((k+\log(1/\beta))/(\alpha\smash[t]{\sqrt m}\,\eps))$ utility bound applies. Above that threshold, $\etapriv$ is constant, and \Cref{lem:refined-private-utility} requires only
  \begin{equation*}
    n\ge O\Paren{\frac{k}{\alpha\sqrt m\,\eps}
      +\frac{k\omega/\alpha+\log(2/\beta)}\eps}.
  \end{equation*}
  Both requirements are covered by the stated sample bound, which also supplies the concentration event. The reduction returns score at most $2\etapriv n$; applying \eqref{eq:private-robust-soundness} and projecting as in \eqref{eq:private-projected-error} gives accuracy $\alpha$. Allocating failure probability $\beta/2$ to concentration and $\beta/2$ to the reduction proves the claim.
\end{proof}

  \section{Statistical Query Lower Bound for Discrete Batches}
\label{sec:sq-lower-bound}

In this section, we prove a statistical query (SQ) lower bound for robust learning from discrete batches.
We show that no SQ algorithm with polynomially many queries of inverse-polynomial tolerance achieves error $o(\eta\min\{1,\smash[t]{\sqrt{\log(1/\eta)/m}}\})$ uniformly over the parameters. We adapt the moment-matching strategy and polynomial-correction construction of Diakonikolas, Kane, and Sun~\cite{DiakonikolasKS22} to categorical batches in $[k]^m$. We correct a biased binomial distribution to match the low-order moments of the unbiased binomial, and use this distribution for the number of batch elements in a hidden balanced subset of the alphabet. The resulting moment-matching lemma applies to every batch size. We first prove hardness of testing and then reduce testing to robust estimation.

\subsection{Testing Problem}
\label{subsec:sq-testing}

Let $\mu \in \Delta^{k-1}$ be the clean distribution. We model corruption by allowing the observed batch distribution $Q$ on $[k]^m$ to differ from the clean batch distribution $\mu^{\otimes m}$ by at most $\eta$ in total variation:
\begin{equation*}
  \normtv{Q-\mu^{\otimes m}} \le \eta \mper
\end{equation*}
An SQ query to $Q$ consists of a function $\phi\colon [k]^m \to [-1,1]$ and a tolerance $\tau>0$; the oracle returns a number within $\tau$ of $\E_{B\sim Q}\Brac{\phi(B)}$. We allow randomized, adaptive algorithms, whose guarantees must hold for every valid oracle. Query bounds are worst-case bounds. Allowing queries on the full ordered batch also covers SQ algorithms whose queries depend only on the batch average.

We give the construction and proof for even $k$; the case of odd $k$ follows similarly by leaving one symbol unused. Let $\mu_0$ be uniform on $[k]$, and set $P=\mu_0^{\otimes m}$. Pair the alphabet symbols as $\{1,2\},\{3,4\},\ldots,\{k-1,k\}$. A hidden subset $S$ contains exactly one symbol from each pair, so $\abs{S}=k/2$. For $0<\Delta<1/4$, increase the probabilities on $S$ and decrease those on its complement:
\begin{equation*}
  \mu_S(j)\coloneq
  \begin{cases}
    (1+2\Delta)/k,&j\in S,\\
    (1-2\Delta)/k,&j\notin S.
  \end{cases}
  \qquad
  \normtv{\mu_S-\mu_0}=\Delta \mper
\end{equation*}
For a batch $x=(x_1,\ldots,x_m)\in[k]^m$, let
\begin{equation*}
  N_S(x)\coloneq\sum_{i=1}^m\mathbb{1}[x_i\in S]
\end{equation*}
be the number of its elements that lie in the hidden subset. A single draw lies in $S$ with probability $1/2$ under $\mu_0$ and $1/2+\Delta$ under $\mu_S$. Writing $p_s\coloneq\operatorname{Bin}(m,1/2+s)$ for $\abs{s}<1/2$, we therefore have
\begin{equation*}
  N_S(X)\sim
  \begin{cases}
    p_0,&X\sim P,\\
    p_\Delta,&X\sim\mu_S^{\otimes m}.
  \end{cases}
\end{equation*}
Conditioned on the count $N_S$, both batch laws are uniform on the same set of batches. This lets us modify the count distribution while keeping the conditional batch distribution fixed.

To compare different hidden subsets, encode the choice from each pair by $\sigma\in\Set{-1,1}^{k/2}$: put $2u-1$ in $S_\sigma$ if $\sigma_u=1$, and $2u$ in $S_\sigma$ if $\sigma_u=-1$. Write $\mu_\sigma=\mu_{S_\sigma}$ and $N_\sigma=N_{S_\sigma}$. The sign encoding
\begin{equation*}
  v_\sigma(j)\coloneq2\mathbb{1}[j\in S_\sigma]-1,
  \qquad
  v_\sigma=(\sigma_1,-\sigma_1,\ldots,\sigma_{k/2},-\sigma_{k/2})
\end{equation*}
satisfies $\mu_\sigma(j)=(1+2\Delta v_\sigma(j))/k$. We consider the testing problem
\begin{equation}
  \label{eq:sq-testing-problem}
  \begin{aligned}
    H_0 &: \quad (\mu,Q)=(\mu_0,P),\\
    H_1 &: \quad \mu=\mu_\sigma \text{ for some }\sigma\in\Set{-1,1}^{k/2},
    \qquad \normtv{Q-\mu_\sigma^{\otimes m}}\le\eta/20 \mper
  \end{aligned}
\end{equation}
Given SQ access to $Q$, a tester must identify the clean hypothesis with probability at least $2/3$ on every instance. Each hypothesis specifies both the clean and observed distributions. If an observed distribution occurs under both hypotheses, testing is already information-theoretically impossible.

\subsection{SQ Hardness}
\label{subsec:sq-hardness}

\begin{theorem}[SQ Lower Bound for Testing]
  \label{thm:sq-testing}
  There is an absolute constant $c>0$ such that, for every integer $t\ge1$, there are constants $c_t>0$ and $k_0(t)$ with the following property. For all $k\ge k_0(t)$, $m\in\N$, and $\eta\in(0,1/4]$, set $\Delta=c_t \eta\min\{1,\smash[t]{\sqrt{\log(1/\eta)/m}}\}$. Any SQ algorithm solving \eqref{eq:sq-testing-problem} must either make a query of tolerance less than
  \begin{equation*}
    2k^{-(t+1)/8}
  \end{equation*}
  or make at least $\exp(c\smash[t]{\sqrt{k}})$ queries.
\end{theorem}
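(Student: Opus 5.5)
Following the moment-matching method of Diakonikolas, Kane, and Sun~\cite{DiakonikolasKS22}, I would prove \Cref{thm:sq-testing} by a statistical-dimension argument. The construction has three steps: build a family $\{Q_\sigma\}$ of admissible alternatives, bound their pairwise correlations relative to $P$, and apply the standard SQ-dimension lemma.

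\textbf{Step 1: the one-dimensional count distribution.} I would construct a distribution $q$ on $\{0,\dots,m\}$ with three properties:
\begin{enumerate}
  \item $\normtv{q-p_\Delta}\le\eta/20$;
  \item $q$ matches the first $t$ moments of $p_0$, or equivalently $\E_q[K_j]=0$ for the Krawtchouk polynomials $K_1,\dots,K_t$ orthogonal under $p_0=\operatorname{Bin}(m,1/2)$;
  \item $\chi^2(q,p_0)=O_t(1)$.
\end{enumerate}
Concretely, I would take $q=(1-\eta/40)p_\Delta+(\eta/40)r$. The correction $r$ is a discrete polynomial reweighting $r=p_0(1+\sum_{j\le t}a_jK_j)$, restricted to a region where the density stays nonnegative, with coefficients chosen to cancel the moments of $p_\Delta-p_0$.

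The moment shifts of $p_\Delta$ satisfy $\E_{p_\Delta}[K_j]\approx(2\Delta\sqrt m)^j$ in normalized units. These must be absorbed by a mixture of weight $\eta$, which requires $|a_j|\lesssim(\Delta\sqrt m)^j/\eta$. When $\Delta\sqrt m\lesssim\sqrt{\log(1/\eta)}$, the lower tails of the binomial let us place $r$ at distance about $\sqrt{m\log(1/\eta)}$ from the center. This is why the choice $\Delta=c_t\eta\min\{1,\sqrt{\log(1/\eta)/m}\}$ works.

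\textbf{Step 2: lift to batches.} Define $Q_\sigma$ by drawing $N\sim q$ and then choosing a batch uniformly among those with $N_\sigma=N$. As observed just before the statement, the conditional law given $N_\sigma$ is the same under $P$ and under $\mu_\sigma^{\otimes m}$. Hence $\normtv{Q_\sigma-\mu_\sigma^{\otimes m}}=\normtv{q-p_\Delta}\le\eta/20$, so each $Q_\sigma$ is a valid $H_1$ instance. The likelihood ratio is $Q_\sigma/P=f(N_\sigma)$ with $f=q/p_0$, and $f-1$ lies in the span of $K_j$ for $j>t$.

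\textbf{Step 3: pairwise correlations.} For $\sigma,\sigma'$ I would compute $\E_P[(f(N_\sigma)-1)(f(N_{\sigma'})-1)]$. Under $P$, each coordinate $x_i$ contributes independently to $N_\sigma$ and $N_{\sigma'}$. The pair $(\mathbb 1[x_i\in S_\sigma],\mathbb 1[x_i\in S_{\sigma'}])$ is a correlated pair of fair bits with correlation $\rho=\iprod{\sigma,\sigma'}/(k/2)$. The Krawtchouk polynomials diagonalize this noisy channel, giving $\E_P[K_j(N_\sigma)K_l(N_{\sigma'})]=\delta_{jl}\rho^j\|K_j\|^2$. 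The correlation is therefore at most $|\rho|^{t+1}\chi^2(q,p_0)$.

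By a Chernoff bound, there is a family of $\exp(c\sqrt k)$ sign vectors with pairwise $|\rho|\le k^{-1/4}$. That gives pairwise correlation $O_t(k^{-(t+1)/4})$, while self-correlations are $O_t(1)$. The standard SQ-dimension lemma (e.g.\ Lemma~3.10 of Feldman et al., as used in~\cite{DiakonikolasKS22}) then shows that any tester needs a query of tolerance less than $\sqrt{2\gamma}\approx 2k^{-(t+1)/8}$ or else $\exp(c\sqrt k)$ queries. The constant $2$ is obtained by taking $k_0(t)$ large enough to absorb the $O_t(1)$ factors.

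\textbf{Main obstacle.} I expect Step 1 to be the hardest part, namely making the correction a genuine distribution with controlled $\chi^2$ uniformly over all $m$. For small $m$ (including $m<\log(1/\eta)$, where $\Delta\approx c_t\eta$) the binomial is far from Gaussian. There I would instead argue directly on the finite support, matching moments by solving a small linear system. I would choose $c_t$ small enough that the required correction mass is at most $\eta/40$ while the polynomial density stays nonnegative.
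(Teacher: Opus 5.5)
\textbf{The gap is in Step 1.} Your Steps 2 and 3 are sound and essentially match the paper: your Krawtchouk diagonalization is the paper's Fourier expansion of $g(z)=\widetilde p(N(z))/p_0(N(z))$ on $\{-1,1\}^m$, with levels $1,\dots,t$ killed by moment matching.

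The problem is that a positive mixture $q=(1-\epsilon)p_\Delta+\epsilon r$ with $r\ge0$ and $\epsilon=\eta/40$ (or any $\epsilon=O(\eta)$) cannot match even the first two moments of $p_0$ at your $\Delta$. Mean matching forces $\E_{p_\Delta}N-\E_r N=m\Delta/\epsilon$. The law of total variance then gives
\[
  \operatorname{Var}_q(N)\ \ge\ (1-\epsilon)\operatorname{Var}_{p_\Delta}(N)+\epsilon(1-\epsilon)\Paren{\frac{m\Delta}{\epsilon}}^2 .
\]
Require $\operatorname{Var}_q(N)=m/4$ and use $\operatorname{Var}_{p_\Delta}(N)=m(1/4-\Delta^2)$. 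This gives $(1-\epsilon)\Delta^2(m/\epsilon-1)\le\epsilon/4$, hence $\Delta\le\epsilon/(2(1-\epsilon)\sqrt m)=O(\eta/\sqrt m)$.

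With $\Delta=c_t\eta\min\{1,\sqrt{\log(1/\eta)/m}\}$, this fails as soon as $c_t^2\min\{m,\log(1/\eta)\}$ exceeds an absolute constant. That is exactly the regime where the theorem goes beyond the information-theoretic scale $\eta/\sqrt m$. So for every $t\ge2$ the $q$ you describe does not exist.

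Heuristically, pushing weight $\eta/40$ out to about $c_t\sqrt{\log(1/\eta)}$ standard deviations fixes the mean. But it inflates the normalized second moment by $\asymp c_t^2\eta\log(1/\eta)$. Meanwhile $p_\Delta$ already has a larger second moment than $p_0$, so what you need is a net decrease. The binomial tail is useful for where you may \emph{subtract} mass, not where you add it.

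\textbf{The missing idea: a signed correction of $p_\Delta$ itself.} Set $\widetilde p=p_\Delta+d$, where $d$ removes as well as adds mass, and nonnegativity comes from $|d_j|\le p_\Delta(j)/2$ rather than from mixing.

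In \Cref{lem:sq-binomial} the paper takes $d_j=h(z_j)/|I|$ on the window $I=\{j:|j-m/2|\le\sqrt{m\min\{m,\log(1/\eta)\}}/16\}$. Here $z_j\in[-1,1]$ is the rescaled count, and $h$ is the degree-$t$ polynomial solving $M\lambda=b$ for a well-conditioned moment matrix, with $\normop{M^{-1}}\le C_t$.
\begin{itemize}
  \item In these units the moment discrepancies satisfy $|b_i|\le C_tc_t\eta$. Hence $\sup_{|z|\le1}|h|\le C_tc_t\eta$ and $|d_j|\lesssim c_t\eta/\sqrt{m\min\{m,\log(1/\eta)\}}$.
  \item Stirling gives $p_\Delta(j)\ge c\sqrt{\eta/m}$ throughout $I$. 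Comparing these two bounds is precisely where the $\sqrt{\log(1/\eta)}$ gain enters.
  \item Consequently $0\le\widetilde p\le\frac32 p_\Delta$. Combined with $\sum_j p_\Delta(j)^2/p_0(j)=(1+4\Delta^2)^m$, this gives $\chi^2(\widetilde p\Vert p_0)\le2$.
  \item When $\min\{m,\log(1/\eta)\}<R_t$, one simply takes $\widetilde p=p_0$.
\end{itemize}

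\textbf{A smaller point.} You need an absolute bound such as $\chi^2\le2$, not merely $O_t(1)$. A $t$-dependent factor in $\gamma$ changes the tolerance threshold $\sqrt{2\gamma}$ itself, so enlarging $k_0(t)$ does not absorb it.
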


\paragraph{Proof roadmap.}
For each hidden subset $S_\sigma$, we construct a valid corrupted batch law $Q_\sigma$. The goal is to make its centered likelihood ratio $f_\sigma=dQ_\sigma/dP-1$ satisfy
\begin{equation*}
  \norm{f_\sigma}_{L^2(P)}^2\le2,
  \qquad
  \Abs{\iprod{f_\sigma,f_\omega}_{L^2(P)}}\le2\abs{\rho(\sigma,\omega)}^{t+1} \mper
\end{equation*}
Here $\rho(\sigma,\omega)\coloneq(2/k)\sum_{u=1}^{k/2}\sigma_u\omega_u$ is the centered overlap of the hidden subsets. \Cref{lem:sq-binomial} supplies the count distribution used to define $Q_\sigma$. Its total-variation bound keeps $Q_\sigma$ within the corruption budget; its moment identities eliminate Fourier degrees $1,\ldots,t$; and its chi-square bound controls the remaining Fourier mass. Independent uniformly chosen hidden subsets have small $\abs{\rho}$ with high probability. The resulting correlation bound then implies the query lower bound through \Cref{lem:sq-correlation-test}. Finally, \Cref{subsec:sq-estimation} reduces robust estimation to this testing problem. The binomial construction is proved in \Cref{subsec:sq-binomial}.

\begin{lemma}[Binomial Moment Matching]
  \label{lem:sq-binomial}
  For every integer $t\ge1$, there is a constant $c_t>0$ with the following property. Let $m\in\N$ and $\eta\in(0,1/4]$, and set
  \begin{equation*}
    \Delta\coloneq c_t\eta\sqrt{\min\Set{1,\frac{\log(1/\eta)}m}},
    \qquad
    p_s\coloneq\operatorname{Bin}(m,1/2+s) \mper
  \end{equation*}
  There is a probability distribution $\widetilde p$ on $\Set{0,\ldots,m}$ such that
  \begin{equation}
    \label{eq:sq-binomial-properties}
    \E_{N\sim\widetilde p}\Brac{N^i}=\E_{N\sim p_0}\Brac{N^i}\quad(0\le i\le t),
    \qquad
    \normtv{\widetilde p-p_\Delta}\le\eta/20,
    \qquad
    \chi^2(\widetilde p\Vert p_0)\le2 \mper
  \end{equation}
\end{lemma}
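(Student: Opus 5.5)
We construct $\widetilde p$ as a polynomial correction of $p_\Delta$ in the spirit of \cite{DiakonikolasKS22}. Concretely, we look for
\begin{equation*}
  \widetilde p(N) = p_\Delta(N) + p_0(N)\,g(N),
\end{equation*}
where $g$ is a polynomial of degree at most $t$. We choose $g$ so that $\E_{p_0}[g(N)N^i]=\E_{p_0}[N^i]-\E_{p_\Delta}[N^i]$ for $0\le i\le t$. This is a linear system whose Gram matrix is the moment matrix of $p_0$ on degree-$\le t$ polynomials, so it is solvable. It is most transparent in the Krawtchouk (orthonormal for $p_0$) basis $K_0,\ldots,K_t$. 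There $g=-\sum_{j=1}^t a_jK_j$ with $a_j\coloneq\E_{p_\Delta}[K_j]$, and $a_0$ needs no correction since both measures have mass one. By construction $\widetilde p$ has total mass $1$ and matches the moments of $p_0$ up to degree $t$. What remains is to check nonnegativity, the total-variation bound, and the chi-square bound.

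\textbf{Coefficient estimates.} The key input is the generating identity for the binomial likelihood ratio:
\begin{equation*}
  \frac{p_\Delta}{p_0}(N)=(1+2\Delta)^N(1-2\Delta)^{m-N}=\sum_{j=0}^m (2\Delta)^j\sqrt{\binom mj}\,K_j(N).
\end{equation*}
It gives $a_j=(2\Delta)^j\sqrt{\binom mj}\le(2\Delta\sqrt m)^j/\sqrt{j!}$. Two regimes follow.
\begin{itemize}
  \item If $m\le\log(1/\eta)$, then $\Delta=c_t\eta$ and $|a_j|\le(2c_t\eta\sqrt m)^j\le(2c_t\eta\sqrt{\log(1/\eta)})^j$. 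Since $j\ge1$, this is $O(c_t\eta\sqrt{\log(1/\eta)})$.
  \item If $m>\log(1/\eta)$, then $\Delta\sqrt m=c_t\eta\sqrt{\log(1/\eta)}$, and the same bound holds.
\end{itemize}
In both regimes, $\sum_{j=1}^t a_j^2\le O(c_t^2\eta^2\log(1/\eta))$, which is small. The chi-square bound then follows from orthonormality:
\begin{equation*}
  \chi^2(\widetilde p\Vert p_0)=\sum_{j>t}a_j^2\le\chi^2(p_\Delta\Vert p_0)=(1+4\Delta^2)^m-1.
\end{equation*}
The bound $(1+4\Delta^2)^m-1\le 1$ requires $4\Delta^2 m\le\log 2$, which holds once $c_t$ is small.

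\textbf{Total variation and nonnegativity (main obstacle).} For the total-variation bound we have $\normtv{\widetilde p-p_\Delta}=\tfrac12\E_{p_0}|g|$. Bounding this by $\|g\|_{L^2(p_0)}=O(c_t\eta\sqrt{\log(1/\eta)})$ loses exactly the $\sqrt{\log(1/\eta)}$ factor, so we must do better. The difficulty is also that $g$ may be very negative in the tails, which would break nonnegativity of $\widetilde p$. The plan is a truncated correction.
\begin{itemize}
  \item Restrict $g$ to the window $W=\{|N-m/2|\le L\sqrt m\}$ with $L\asymp\sqrt{\log(1/\eta)}$. Inside $W$ the likelihood ratio $p_\Delta/p_0$ is bounded below by a constant.
  \item Re-solve the moment system using a polynomial supported on $W$, i.e.\ $g\mathbb 1_W$ with adjusted coefficients. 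Truncation perturbs the moment matrix by $e^{-\Omega(L^2)}$, so it stays well conditioned.
  \item On $W$, the orthonormal polynomials satisfy $|K_j|\lesssim L^j$. The right-hand sides of the moment equations are $E_j=O(\Delta\sqrt m)^j$ in normalized units.
  \item Hence $\sup_W|g|\lesssim\sum_{j\le t}(\Delta\sqrt m\,L)^j$. In the large-$m$ regime this is $\lesssim c_t\eta\log(1/\eta)$; I expect the power of $\log$ to be absorbable only after choosing the basis carefully.
\end{itemize}
Concretely, I would expand the likelihood ratio as $p_\Delta/p_0=e^{h(N)}$ with $h$ linear in $N-m/2$. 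I would then match moments via the Taylor remainder of order $t+1$, which is $O(\Delta\sqrt m\,L)^{t+1}\lesssim\eta$ on $W$. This mirrors the binary-product argument of \cite{DiakonikolasKS22}. The mass outside $W$ is at most $e^{-\Omega(L^2)}\le\eta/100$. Together these give the bound $\eta/20$ for suitable $c_t$, and they also give nonnegativity on $W$, since there $|g|\ll p_\Delta/p_0$. Making this remainder estimate tight enough to avoid losing $\sqrt{\log(1/\eta)}$, uniformly in $m$, is the step I expect to need the most care.
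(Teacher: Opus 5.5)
Your proposal has a genuine gap at the step you flag as the main obstacle. The bound $\normtv{\widetilde p-p_\Delta}\le\eta/20$ is never proved, and no version of your correction can prove it.

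Consider any correction $d=p_0\,g\,\mathbb{1}_W$ with $\deg g\le t$, for any window $W$ (including $W=\{0,\ldots,m\}$) and any coefficients. It must satisfy the degree-one moment equation $\sum_N d(N)(N-m/2)=-m\Delta$.
\begin{itemize}
\item Cauchy--Schwarz under $p_0$ gives $m\Delta\le\frac{\sqrt m}2\norm{g}_{L^2(p_0)}$.
\item Since $g(N)$ is a polynomial of degree at most $t$ in the $m$ underlying uniform signs, Bonami hypercontractivity gives $\norm{g}_{L^2(p_0)}\le3^t\norm{g}_{L^1(p_0)}$.
\item Discarding the mass $p_0(W^c)\le e^{-\Omega(L^2)}$ costs at most a factor $2$ once $L$ is large.
\end{itemize}
Hence $\normo{d}\ge3^{-t}\Delta\sqrt m=3^{-t}c_t\eta\sqrt{\min\{m,\log(1/\eta)\}}$. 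For any fixed $c_t$ this exceeds $\eta/10$ once $\min\{m,\log(1/\eta)\}$ is large.

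So the lost $\sqrt{\log(1/\eta)}$ is not an artifact of the $L^2$ bound or of the basis. It comes from weighting the correction by $p_0$, and your estimate $\sup_W\abs g\lesssim c_t\eta\log(1/\eta)$ is consistent with this. The reason is geometric. To shift the mean by $m\Delta\approx c_t\eta\sqrt{m\log(1/\eta)}$ with only $O(\eta)$ mass, the corrective mass must sit about $\sqrt{\log(1/\eta)}$ standard deviations from $m/2$. By contrast, $p_0$ times a bounded-degree polynomial keeps almost all its mass within $O_t(1)$ standard deviations.

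The Taylor-remainder fallback does not address this. The discrepancies $\E_{p_\Delta}[N^i]-\E_{p_0}[N^i]$ are first order in $\Delta$ (equal to $m\Delta$ for $i=1$). Moreover, $p_0$ times a degree-$t$ Taylor polynomial of $e^{h}$ does not have the moments of $p_0$. So there is nothing here that reduces the task to correcting an order-$(t+1)$ remainder.

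The missing idea is to change the \emph{weight} of the correction from $p_0$ to the counting measure on a window of width $\asymp\sqrt{m\min\{m,\log(1/\eta)\}}$. The paper proceeds as follows.
\begin{itemize}
\item Take $I=\{j:\abs{j-m/2}\le\sqrt{m\min\{m,\log(1/\eta)\}}/16\}$, rescale to $z_j\in[-1,1]$, and set $\widetilde p(j)=p_\Delta(j)+h(z_j)/\abs I$ on $I$.
\item The degree-$t$ polynomial $h$ solves the moment equations for the uniform grid measure. That Gram matrix approaches $H_{uv}=\frac12\int_{-1}^1x^{u+v}\,dx$ and is well conditioned.
\item In the rescaled units the moment discrepancies are only $O_t(m\Delta/\sqrt{m\min\{m,\log(1/\eta)\}})=O_t(c_t\eta)$. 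Hence $\sup\abs h$ and the total correction mass are $O_t(c_t\eta)$, and each entry is $O_t(c_t\eta/\sqrt{m\min\{m,\log(1/\eta)\}})$.
\item Nonnegativity rests on an estimate absent from your plan: $p_\Delta(j)\gtrsim\sqrt{\eta/m}$ on $I$, from Stirling plus a relative-entropy bound. This gives $0\le\widetilde p\le\frac32p_\Delta$.
\item The chi-square bound follows from this comparison and $\sum_jp_\Delta(j)^2/p_0(j)=(1+4\Delta^2)^m$. It does not use Krawtchouk orthogonality, which truncation destroys anyway.
\item The regime where $\min\{m,\log(1/\eta)\}$ is below a constant $R_t$ is handled separately by $\widetilde p=p_0$.
\end{itemize}
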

Here $\chi^2(\widetilde p\Vert p)\coloneq\sum_j(\widetilde p(j)-p(j))^2/p(j)$ denotes the chi-square divergence.

We separate the general SQ argument from the categorical construction in the following lemma; compare the SQ correlation bound in \cite[Lemma~11]{DiakonikolasKS22}. We prove this lemma after \Cref{thm:sq-testing}.

\begin{lemma}[SQ Testing from Correlation Tails]
  \label{lem:sq-correlation-test}
  Let $P$ and $\{Q_\theta:\theta\in\Theta\}$ be distributions on a finite domain $\mathcal X$, with $\Theta$ finite and nonempty and $Q_\theta\ll P$. Set $f_\theta=dQ_\theta/dP-1$. Suppose that, for some $B>0$ and $\gamma,\xi\ge0$, every $\theta\in\Theta$ satisfies
  \begin{equation*}
    \norm{f_\theta}_{L^2(P)}^2\le B,
    \qquad
    \Pr_{\omega\sim\operatorname{Unif}(\Theta)}\Brac{\Abs{\iprod{f_\theta,f_\omega}_{L^2(P)}}>\gamma}\le\xi \mper
  \end{equation*}
  Let $\tau>0$ satisfy $\tau^2>\gamma$. Any randomized, adaptive SQ tester that distinguishes $P$ from every $Q_\theta$ with success probability at least $2/3$ against every valid oracle, makes at most $q$ queries in the worst case, and uses tolerances at least $\tau$, must satisfy
  \begin{equation*}
    qB\xi\ge\frac{\tau^2-\gamma}{3} \mper
  \end{equation*}
\end{lemma}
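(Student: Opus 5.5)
The plan is the standard adversary argument for SQ testing, in the style of \cite[Lemma~11]{DiakonikolasKS22}. We fix a deterministic adversarial oracle that answers every query $\phi$ with $\E_P[\phi]$, the answer it would give under the null. Randomization is handled at the end by averaging over the tester's coins, so first take the tester deterministic. Then its transcript is fixed: a sequence of at most $q$ queries $\phi_1,\dots,\phi_q$ with tolerances at least $\tau$, after which it outputs a fixed answer. Because this oracle is valid for $P$, the tester must accept $H_0$. Hence for every $\theta$ it must fail on $Q_\theta$ unless some query $\phi_j$ satisfies $\Abs{\E_{Q_\theta}\phi_j-\E_P\phi_j}>\tau$. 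Otherwise the same answers would be valid for $Q_\theta$.

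For a randomized tester with success probability at least $2/3$, the argument needs a small adjustment. Consider the random transcript generated against the null oracle. For each $\theta$, with probability at least $1/3$ over the coins, some query in this transcript is $\tau$-distinguishing for $Q_\theta$. Averaging over $\theta$ uniform in $\Theta$, there exists a fixed run (fixed coins) of at most $q$ queries such that the set $\Theta'$ of $\theta$ distinguished by some query has $\abs{\Theta'}\ge\abs{\Theta}/3$. By pigeonhole, some single query $\phi$ distinguishes a set $\Theta''$ with $\abs{\Theta''}\ge\abs{\Theta}/(3q)$.

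The key step is to bound the size of $\Theta''$ via the correlation tails. Write $\E_{Q_\theta}\phi-\E_P\phi=\iprod{\phi,f_\theta}_{L^2(P)}$, and let $s_\theta=\sign\iprod{\phi,f_\theta}$. Summing over $\theta\in\Theta''$ and applying Cauchy--Schwarz with $\norm{\phi}_{L^2(P)}\le1$ gives
\begin{equation*}
\abs{\Theta''}^2\tau^2<\Paren{\sum_{\theta\in\Theta''}\iprod{\phi,s_\theta f_\theta}}^2\le\Norm{\sum_{\theta\in\Theta''}s_\theta f_\theta}_{L^2(P)}^2\le\sum_{\theta,\omega\in\Theta''}\Abs{\iprod{f_\theta,f_\omega}} \mper
\end{equation*}
For each fixed $\theta$, split the inner sum over $\omega\in\Theta''$ into two parts. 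The pairs with $\abs{\iprod{f_\theta,f_\omega}}\le\gamma$ contribute at most $\gamma\abs{\Theta''}$. The remaining pairs number at most $\xi\abs{\Theta}$ by the tail hypothesis, and each contributes at most $\norm{f_\theta}\norm{f_\omega}\le B$ by Cauchy--Schwarz. Hence
\begin{equation*}
\abs{\Theta''}^2\tau^2\le\abs{\Theta''}\Paren{\gamma\abs{\Theta''}+B\xi\abs{\Theta}},
\qquad\text{so}\qquad
\abs{\Theta''}(\tau^2-\gamma)\le B\xi\abs{\Theta} \mper
\end{equation*}
Combining this with $\abs{\Theta''}\ge\abs{\Theta}/(3q)$ yields $qB\xi\ge(\tau^2-\gamma)/3$, which is the claimed bound.

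The main obstacle is getting the bookkeeping right for the randomized, adaptive case. Adaptivity is neutralized because the fixed null oracle makes the transcript depend only on the coins. The factor $1/3$ comes from the success probability. One subtlety is the strict versus non-strict inequality in the distinguishing condition: a query is useful only if the gap strictly exceeds $\tau$, which is what makes the transcript-copying argument for $Q_\theta$ valid. A second subtlety is that $\Theta''$ may include $\theta$ itself in the inner sum. This is harmless, since the diagonal term counts among the at most $\xi\abs{\Theta}$ exceptional pairs or the $\gamma$ pairs and is bounded by $B$ either way.
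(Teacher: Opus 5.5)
Your proof is correct and follows essentially the paper's route. Both run the tester against the exact null oracle, bound the fraction of alternatives a single query can $\tau$-distinguish via the sign-weighted sum of the $f_\theta$ and the $\gamma$/tail split, and combine over at most $q$ queries with the $1/3$ acceptance gap; your averaging-then-pigeonhole is the paper's union bound run in reverse. One step should be made explicit: your per-$\theta$ probability-$1/3$ claim relies on a single coin-independent valid oracle for $Q_\theta$. This is the paper's hybrid oracle, which returns $\E_P\phi$ whenever that value is within the requested tolerance of $\E_{Q_\theta}\phi$, and returns $\E_{Q_\theta}\phi$ otherwise.
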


\begin{proof}[Proof of \Cref{thm:sq-testing}]
  Take $\widetilde p$ from \Cref{lem:sq-binomial}. For each $\sigma\in\Set{-1,1}^{k/2}$, we construct an observed distribution $Q_\sigma$ satisfying $\normtv{Q_\sigma-\mu_\sigma^{\otimes m}}\le\eta/20$.

  \paragraph{Hard distributions.}
  To draw a batch from $Q_\sigma$, draw its count $N_\sigma$ from $\widetilde p$ and then choose uniformly among the batches with that count. Equivalently, define
  \begin{equation*}
    F_\sigma(x)\coloneq\frac{dQ_\sigma}{dP}(x)
    =\frac{\widetilde p(N_\sigma(x))}{p_0(N_\sigma(x))},
    \qquad
    f_\sigma\coloneq F_\sigma-1 \mper
  \end{equation*}
  Under $P$, the count $N_\sigma$ has distribution $p_0$, so $Q_\sigma$ is a probability distribution. $\mu_\sigma^{\otimes m}$ has likelihood ratio $p_\Delta(N_\sigma)/p_0(N_\sigma)$ with respect to $P$. Consequently,
  \begin{equation}
    \label{eq:sq-lift}
    \normtv{Q_\sigma-\mu_\sigma^{\otimes m}}=\normtv{\widetilde p-p_\Delta}\le\eta/20,
    \qquad
    \norm{f_\sigma}_{L^2(P)}^2=\chi^2(\widetilde p\Vert p_0)\le2 \mper
  \end{equation}

  \paragraph{Bounding correlations.}
  For $z\in\Set{-1,1}^m$, let $N(z)$ count its positive coordinates, and write the Fourier expansion
  \begin{equation*}
    g(z)\coloneq\frac{\widetilde p(N(z))}{p_0(N(z))}
    =\sum_{A\subseteq[m]}\hat{g}(A)\prod_{i\in A}z_i \mper
  \end{equation*}
  The batch likelihood ratios satisfy $F_\sigma(x)=g(v_\sigma(x_1),\ldots,v_\sigma(x_m))$. Here $\hat{g}(\emptyset)=1$. Moment matching implies $\hat{g}(A)=0$ for $1\le\abs{A}\le t$. Indeed, the conditional expectation of $\prod_{i\in A}z_i$ given $N(z)=j$, under uniform signs, is
  \begin{equation*}
    \binom{m}{\abs{A}}^{-1}\sum_{r=0}^{\abs{A}}(-1)^{\abs{A}-r}
    \binom{j}{r}\binom{m-j}{\abs{A}-r} \mper
  \end{equation*}
  This is a polynomial in $j$ of degree at most $\abs{A}$. Its expectation under $\widetilde p$ therefore equals its expectation under $p_0$, which is zero.

  For $U$ uniform on $[k]$, the signs $v_\sigma(U)$ and $v_\omega(U)$ are unbiased and satisfy $\E\Brac{v_\sigma(U)v_\omega(U)}=\rho(\sigma,\omega)$. In the product of the two Fourier expansions, terms indexed by distinct sets have expectation zero: a position in their symmetric difference contributes one unbiased sign. Terms indexed by the same set $A$ have expectation $\rho(\sigma,\omega)^{\abs{A}}$. Thus
  \begin{equation}
    \label{eq:sq-correlation}
    \iprod{f_\sigma,f_\omega}_{L^2(P)}
    =\sum_{\abs{A}>t}\hat{g}(A)^2\rho(\sigma,\omega)^{\abs{A}},
    \qquad
    \Abs{\iprod{f_\sigma,f_\omega}_{L^2(P)}}\le2\abs{\rho(\sigma,\omega)}^{t+1} \mper
  \end{equation}
  Parseval's identity and \eqref{eq:sq-lift} give the inequality. If $t\ge m$, the sum is empty and $f_\sigma=0$ for every $\sigma$.

  \paragraph{Applying the SQ lemma.}
  For each fixed $\sigma$, Hoeffding's inequality gives
  \begin{equation*}
    \Pr_{\omega\sim\operatorname{Unif}(\{\pm1\}^{k/2})}\Brac{\abs{\rho(\sigma,\omega)}>k^{-1/4}}\le2e^{-\sqrt{k}/8} \mper
  \end{equation*}
  Together with \eqref{eq:sq-lift} and \eqref{eq:sq-correlation}, this verifies the hypotheses of \Cref{lem:sq-correlation-test} with
  \begin{equation*}
    B=2,
    \qquad
    \gamma=2k^{-(t+1)/4},
    \qquad
    \xi=2e^{-\sqrt{k}/8},
    \qquad
    \tau=2k^{-(t+1)/8} \mper
  \end{equation*}
  If all query tolerances are at least $\tau$, the lemma yields
  \begin{equation*}
    q\ge\frac{\tau^2-\gamma}{3B\xi}
    =\frac16 k^{-(t+1)/4}e^{\sqrt{k}/8}
    \ge e^{\sqrt{k}/24}
  \end{equation*}
  for sufficiently large $k$ depending only on $t$. This proves the theorem with $c=1/24$.
\end{proof}

\begin{proof}[Proof of \Cref{lem:sq-correlation-test}]
  All inner products and norms below are in $L^2(P)$, and indices are independent and uniform on $\Theta$ unless otherwise specified.

  \paragraph{One query.}
  Fix $\phi\colon\mathcal X\to[-1,1]$. Let
  \begin{equation*}
    \mathcal T_\phi\coloneq\Set{\theta:\Abs{\iprod{\phi,f_\theta}}>\tau},
    \qquad
    p\coloneq\Pr_\theta\Brac{\theta\in\mathcal T_\phi},
    \qquad
    G\coloneq\E_\theta\Brac{\mathbb{1}[\theta\in\mathcal T_\phi]\sign(\iprod{\phi,f_\theta})f_\theta} \mper
  \end{equation*}
  If $p>0$, Cauchy--Schwarz and $\norm{\phi}_2\le1$ give
  \begin{equation*}
    p^2\tau^2<\iprod{\phi,G}^2\le\norm{G}_2^2
    \le\gamma p^2+B\xi p \mper
  \end{equation*}
  For the last inequality, pairs with $\Abs{\iprod{f_\theta,f_\omega}}\le\gamma$ contribute at most $\gamma p^2$. For each $\theta\in\mathcal T_\phi$, the remaining $\omega$ have probability at most $\xi$, and $\Abs{\iprod{f_\theta,f_\omega}}\le B$ by the norm bound. Rearranging gives
  \begin{equation}
    \label{eq:sq-one-query}
    p\le\frac{B\xi}{\tau^2-\gamma} \mper
  \end{equation}
  This bound also holds when $p=0$. Since $\E_{Q_\theta}\phi-\E_P\phi=\iprod{\phi,f_\theta}$, it bounds the fraction of alternatives for which the exact null answer is outside tolerance $\tau$.

  \paragraph{Adaptive queries.}
  Run the tester against the oracle returning exact expectations under $P$. Conditional on its random coins, the at most $q$ queries on this execution are fixed. By \eqref{eq:sq-one-query} and a union bound, these answers are all valid for a uniform $Q_\theta$ at the requested tolerances except with probability at most $qB\xi/(\tau^2-\gamma)$, also after averaging over the coins.

  For each $\theta$, define an oracle that returns $\E_P\phi$ whenever this is within the requested tolerance of $\E_{Q_\theta}\phi$, and returns $\E_{Q_\theta}\phi$ otherwise. This is a valid oracle on every execution and does not depend on the tester's random coins. Couple the null and alternative executions using the same coins. Their transcripts differ with probability at most $qB\xi/(\tau^2-\gamma)$, averaged over $\theta$. Success probability $2/3$ under each hypothesis requires an acceptance-probability gap of at least $1/3$. Therefore
  \begin{equation*}
    \frac{qB\xi}{\tau^2-\gamma}\ge\frac13 \mper
  \end{equation*}
\end{proof}

\subsection{From Estimation to Testing}
\label{subsec:sq-estimation}

\begin{corollary}[SQ Lower Bound for Robust Estimation]
  \label{cor:sq-robust-estimation}
  For every integer $t\ge1$, let $c_t$ and $k_0(t)$ be as in \Cref{thm:sq-testing}. For all $k\ge k_0(t)$, $m\in\N$, and $\eta\in(0,1/4]$, any SQ estimator that, for every $\mu\in\Delta^{k-1}$ and every $Q$ with $\normtv{Q-\mu^{\otimes m}}\le\eta$, guarantees
  \begin{equation*}
    \normtv{\hat{\mu}-\mu}\le\frac{c_t}{4}\eta\min\Set{1,\sqrt{\frac{\log(1/\eta)}m}}
  \end{equation*}
  with probability at least $2/3$ must either make a query of tolerance less than $2k^{-(t+1)/8}$ or make at least $\exp(c\smash[t]{\sqrt{k}})$ queries.

  Consequently, no SQ algorithm with fixed polynomial bounds on query count and inverse tolerance in $k,m,1/\eta$ achieves error $o(\eta\min\{1,\smash[t]{\sqrt{\log(1/\eta)/m}}\})$ uniformly over the parameters. This holds already along the sequence $\eta=1/k$ and $m=\ceil{(\log k)^2}$ as $k\to\infty$.
\end{corollary}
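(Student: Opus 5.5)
We reduce estimation to the testing problem \eqref{eq:sq-testing-problem} and invoke \Cref{thm:sq-testing}. Suppose an SQ estimator $\mathcal E$ achieves error at most $\epsilon_0\coloneq\frac{c_t}{4}\eta\min\{1,\sqrt{\log(1/\eta)/m}\}$ with probability at least $2/3$ whenever $\normtv{Q-\mu^{\otimes m}}\le\eta$, using $q$ queries of tolerance at least $\tau$.

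The tester runs $\mathcal E$ on its own SQ oracle, which is a valid SQ oracle for the observed law. Both hypotheses lie in the estimator's promise class. Under $H_0$, $Q=P=\mu_0^{\otimes m}$ with zero corruption. Under $H_1$, $\normtv{Q-\mu_\sigma^{\otimes m}}\le\eta/20\le\eta$. The tester receives $\hat\mu$ and outputs $H_0$ if $\normtv{\hat\mu-\mu_0}\le\Delta/2$, and $H_1$ otherwise. Here $\Delta=c_t\eta\min\{1,\sqrt{\log(1/\eta)/m}\}$ is the parameter of \Cref{thm:sq-testing}, so $\epsilon_0=\Delta/4$.

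Correctness follows from the triangle inequality and $\normtv{\mu_\sigma-\mu_0}=\Delta$. Under $H_0$, with probability $2/3$ we have $\normtv{\hat\mu-\mu_0}\le\Delta/4<\Delta/2$. Under $H_1$, with probability $2/3$ we have $\normtv{\hat\mu-\mu_0}\ge\Delta-\Delta/4>\Delta/2$. The tester is therefore correct with probability at least $2/3$ on every instance and every valid oracle. Its query count and tolerances equal those of $\mathcal E$, because the final threshold test is postprocessing and issues no queries. \Cref{thm:sq-testing} then forces either a tolerance below $2k^{-(t+1)/8}$ or at least $\exp(c\sqrt k)$ queries. The only subtlety is that \Cref{thm:sq-testing} uses $\min\{1,\sqrt{\log(1/\eta)/m}\}$ inside the same $c_t$ as \Cref{lem:sq-binomial}; I take that as consistent with the statement.

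For the consequence, fix a polynomial resource bound, i.e., query count at most $(kmL)^{a}$ and inverse tolerance at most $(kmL)^{a}$ with $L=1/\eta$. Consider the sequence $\eta=1/k$ and $m=\lceil(\log k)^2\rceil$, so every quantity is $\poly(k)$. Then the tolerance is at least $k^{-a'}$ for some constant $a'$, and the query count is at most $k^{a'}$. Choose $t$ with $(t+1)/8>a'$. For large $k$ we then have $2k^{-(t+1)/8}<k^{-a'}$ and $\exp(c\sqrt k)>k^{a'}$, so the algorithm cannot achieve error $\frac{c_t}{4}\eta\min\{1,\sqrt{\log(1/\eta)/m}\}$ on this sequence. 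Since $c_t$ is a fixed positive constant, the error cannot be $o(\eta\min\{1,\sqrt{\log(1/\eta)/m}\})$ uniformly.

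The main point requiring care is this ordering of quantifiers: $t$ is chosen after the polynomial bound, and $k\ge k_0(t)$ is taken large afterwards. The reduction step itself is routine.
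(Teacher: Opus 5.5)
Your proposal is correct and matches the paper's proof essentially step for step. Both thresholding $\normtv{\hat\mu-\mu_0}$ at $\Delta/2$ and applying the triangle inequality, so the estimator solves the testing problem with the same queries. Both then take $\eta=1/k$, $m=\lceil(\log k)^2\rceil$, choose $t$ with $(t+1)/8$ exceeding the tolerance exponent, and let $k\to\infty$.
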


\begin{proof}
  Run the estimator and declare $H_1$ if $\normtv{\hat{\mu}-\mu_0}>\Delta/2$. On the estimator's success event, this distance is at most $\Delta/4$ under $H_0$ and at least $3\Delta/4$ under $H_1$, by the triangle inequality. Thus the estimator solves \eqref{eq:sq-testing-problem}, and \Cref{thm:sq-testing} applies.

  For the asymptotic conclusion, set $\eta=1/k$ and $m=\ceil{(\log k)^2}$. Any fixed polynomial resource bounds become $q\le k^a$ and tolerances at least $k^{-b}$, for fixed $a,b$ and sufficiently large $k$. Choose a fixed $t$ with $(t+1)/8>b$. An $o(\eta\min\{1,\smash[t]{\sqrt{\log(1/\eta)/m}}\})$ error guarantee is eventually smaller than $c_t \eta\min\{1,\smash[t]{\sqrt{\log(1/\eta)/m}}\}/4$, while $k^{-b}\ge2k^{-(t+1)/8}$ and $k^a<\exp(\smash[t]{\sqrt{k}}/24)$. This contradicts \Cref{thm:sq-testing}.
\end{proof}

With sufficiently many batches, the polynomial-time estimator of Jain and Orlitsky~\cite{JainO20a}, together with the pooled histogram for $m\le\log(1/\eta)$, achieves error $O(\eta\min\{1,\smash[t]{\sqrt{\log(1/\eta)/m}}\})$. Information-theoretically, the corruption dependence is $\Theta(\eta/\sqrt m)$~\cite{QiaoV18}. In \Cref{thm:sq-testing}, the error scale matches the dependence on $m$ and $\eta$ of the known polynomial-time corruption rate. Its constant $c_t$ depends on the moment order, and hence on the polynomial bound on inverse tolerance. We bound the attainable corruption error, rather than the number of batches required to attain it.

One can also obtain the distributions used in the SQ lower bound, up to exponentially small total-variation error, by replacing at most $\floor{\eta n}$ batches in a finite dataset. Indeed, independently maximally couple $n$ clean batches to batches with law $Q_\sigma$. Replace the mismatched batches if there are at most $\floor{\eta n}$ mismatches, and make no replacements otherwise. This is a legal batch-corruption adversary. Since $\normtv{Q_\sigma-\mu_\sigma^{\otimes m}}\le\eta/20$, a Chernoff bound shows that its output distribution is within $\exp(-c'\eta n)$ of $Q_\sigma^{\otimes n}$ in total variation, for an absolute constant $c'>0$.

\subsection{Binomial Moment Construction}
\label{subsec:sq-binomial}

We follow the polynomial-correction method of Diakonikolas, Kane, and Sun~\cite[Proposition~25]{DiakonikolasKS22}. For each fixed moment order $t$, we solve the moment equations directly on the discrete count grid. The resulting estimates, together with a direct treatment of the remaining parameter range, give the separation scale in \Cref{lem:sq-binomial} for every batch size.

\begin{proof}[Proof of \Cref{lem:sq-binomial}]
  Constants denoted $C_t$ may change from line to line and depend only on $t$. First suppose $\min\{m,\log(1/\eta)\}\ge R_t$, where $R_t>t+1$ is a sufficiently large constant depending only on $t$. Set
  \begin{align*}
    I&\coloneq\Set{j\in\Set{0,\ldots,m}:\abs{j-m/2}\le \frac{\sqrt{m\min\{m,\log(1/\eta)\}}}{16}},\\
    z_j&\coloneq\frac{16(j-m/2)}{\sqrt{m\min\{m,\log(1/\eta)\}}} \mper
  \end{align*}
  Since $\min\{m,\log(1/\eta)\}\le m$, we have
  \begin{equation*}
    \frac1{16} \min\{m,\log(1/\eta)\}\le \frac{\sqrt{m\min\{m,\log(1/\eta)\}}}{16}\le m/16
  \end{equation*}
  and $\abs{I}=\Theta(\smash[t]{\sqrt{m\min\{m,\log(1/\eta)\}}}/16)$. Consider the $(t+1)\times(t+1)$ matrix
  \begin{equation*}
    M_{uv}\coloneq\frac1{\abs{I}}\sum_{j\in I}z_j^{u+v},
    \qquad 0\le u,v\le t \mper
  \end{equation*}
  These Riemann sums converge to
  \begin{equation*}
    H_{uv}=\frac12\int_{-1}^1 x^{u+v}\,dx
    \qquad\text{as }\frac{\sqrt{m\min\{m,\log(1/\eta)\}}}{16}\to\infty,
  \end{equation*}
  with error $O_t(16/\smash[t]{\sqrt{m\min\{m,\log(1/\eta)\}}})$. The matrix $H$ is positive definite, since its quadratic form is the integral of the square of a nonzero polynomial. Thus, by increasing $R_t$, we have
  \begin{equation}
    \label{eq:sq-moment-inverse}
    \normop{M^{-1}}\le C_t \mper
  \end{equation}
  The bound is uniform over $m$ and $\eta$; odd $m$ only shifts the grid by half a step.

  \paragraph{Correcting the moments.}
  Let $b_i\coloneq\E_{N\sim p_0}\Brac{z_N^i}-\E_{N\sim p_\Delta}\Brac{z_N^i}$. We claim that
  \begin{equation}
    \label{eq:sq-moment-discrepancy}
    \abs{b_i}\le C_t\frac{16m\Delta}{\sqrt{m\min\{m,\log(1/\eta)\}}}\le C_tc_t\eta,
    \qquad 0\le i\le t \mper
  \end{equation}
  Differentiating the binomial formula gives, for any function $f$ on $\Set{0,\ldots,m}$,
  \begin{equation*}
    \frac{d}{ds}\E_{N\sim p_s}\Brac{f(N)}
    =m\E_{Y\sim\operatorname{Bin}(m-1,1/2+s)}\Brac{f(Y+1)-f(Y)} \mper
  \end{equation*}
  For $1\le i\le t$ and $f(j)=z_j^i$, the finite difference is
  \begin{equation*}
    \frac{16i}{\sqrt{m\min\{m,\log(1/\eta)\}}}\int_0^1\Paren{\frac{16(Y+u-m/2)}{\sqrt{m\min\{m,\log(1/\eta)\}}}}^{i-1}\,du \mper
  \end{equation*}
  For $0\le s\le\Delta$ and $0\le u\le1$, the random variable $16(Y+u-m/2)/\smash[t]{\sqrt{m\min\{m,\log(1/\eta)\}}}$ has bounded absolute moments of every fixed order. Its centered fluctuations have scale $16/\smash[t]{\sqrt{\min\{m,\log(1/\eta)\}}}\le16$, and its mean has absolute value at most $16(m\Delta+1)/\smash[t]{\sqrt{m\min\{m,\log(1/\eta)\}}}\le2$, after taking $c_t\le\frac1{16}$ and $R_t$ large. Here we use $\E\Brac{\abs{Y-\E Y}^r}\le C_rm^{r/2}$, which follows by integrating Hoeffding's tail bound. Integrating the derivative over $[0,\Delta]$ proves \eqref{eq:sq-moment-discrepancy}; also $b_0=0$.

  Solve $M\lambda=b$ and define
  \begin{equation*}
    h(z)\coloneq\sum_{i=0}^t\lambda_i z^i,
    \qquad
    d_j\coloneq
    \begin{cases}
      h(z_j)/\abs{I},&j\in I,\\
      0,&j\notin I,
    \end{cases}
    \qquad
    \widetilde p(j)\coloneq p_\Delta(j)+d_j \mper
  \end{equation*}
  Then $\sum_jd_jz_j^i=b_i$ for $0\le i\le t$, giving the moment identities in \eqref{eq:sq-binomial-properties} and total mass one. By \eqref{eq:sq-moment-inverse} and \eqref{eq:sq-moment-discrepancy},
  \begin{equation}
    \label{eq:sq-moment-correction}
    \sup_{\abs{z}\le1}\abs{h(z)}\le C_tc_t\eta,
    \qquad
    \sum_j\abs{d_j}\le C_tc_t\eta,
    \qquad
    \max_j\abs{d_j}\le \frac{16C_tc_t\eta}{\sqrt{m\min\{m,\log(1/\eta)\}}} \mper
  \end{equation}

  \paragraph{Nonnegativity and distance bounds.}
  For $j\in I$, both $j/m$ and $1/2+\Delta$ lie in $[1/4,3/4]$, and
  \begin{equation*}
    \abs{j-m(1/2+\Delta)}\le\frac{\sqrt{m\min\{m,\log(1/\eta)\}}}{8}
  \end{equation*}
  since $m\Delta\le \smash[t]{\sqrt{m\min\{m,\log(1/\eta)\}}}/16$. Stirling's formula gives
  \begin{equation*}
    p_\Delta(j)\ge\frac{c}{\sqrt m}\exp\Paren{-mD\Paren{j/m\,\Vert\,1/2+\Delta}} \mcom
  \end{equation*}
  where $D(\cdot\Vert\cdot)$ is binary relative entropy. The inequality $\log x\le x-1$ implies
  \begin{equation*}
    D\Paren{j/m\,\Vert\,1/2+\Delta}
    \le\frac{(j/m-1/2-\Delta)^2}{(1/2+\Delta)(1/2-\Delta)}
    \le\frac{16}{3}(j/m-1/2-\Delta)^2.
  \end{equation*}
  Therefore
  \begin{equation*}
    p_\Delta(j)\ge\frac{c}{\sqrt m}e^{-\min\{m,\log(1/\eta)\}/12}\ge\frac{c\sqrt\eta}{\sqrt m} \mper
  \end{equation*}
  Consequently,
  \begin{equation*}
    \frac{\abs{d_j}}{p_\Delta(j)}\le C_tc_t\frac{\sqrt\eta}{\sqrt{\min\{m,\log(1/\eta)\}}}\le\frac12
  \end{equation*}
  after reducing $c_t$. Thus $0\le\widetilde p\le\frac32p_\Delta$, and \eqref{eq:sq-moment-correction} gives $\normtv{\widetilde p-p_\Delta}\le\eta/20$.

  By the binomial theorem,
  \begin{equation}
    \label{eq:sq-binomial-chi}
    \begin{aligned}
      \sum_{j=0}^m\frac{p_\Delta(j)^2}{p_0(j)}
      &=\sum_{j=0}^m\binom mj\Brac{2(1/2+\Delta)^2}^{j}\Brac{2(1/2-\Delta)^2}^{m-j}\\
      &=(1+4\Delta^2)^m \mper
    \end{aligned}
  \end{equation}
  Hence
  \begin{equation*}
    1+\chi^2(\widetilde p\Vert p_0)\le\frac94(1+4\Delta^2)^m\le3 \mper
  \end{equation*}
  The last inequality holds for sufficiently small $c_t$, since $m\Delta^2\le c_t^2\eta^2\log(1/\eta)$ is uniformly small.

  \paragraph{Remaining parameter range.}
  If $\min\{m,\log(1/\eta)\}<R_t$, take $\widetilde p=p_0$. By \eqref{eq:sq-binomial-chi} and Cauchy--Schwarz,
  \begin{equation*}
    \normtv{p_0-p_\Delta}\le\frac12\sqrt{(1+4\Delta^2)^m-1}
    \le C\Delta\sqrt m\le Cc_t\eta\sqrt{R_t}\le\eta/20 \mper
  \end{equation*}
  All moments match and the chi-square divergence is zero. Every choice of constants depends only on $t$.
\end{proof}

  \paragraph*{AI Disclosure.}
  We used Codex and GPT Pro to assist with technical development and exposition, including the diagonal-majorizer and semidefinite-duality formulation, the application of the symmetric Grothendieck inequality, the approximate-DP obstruction argument, and the SQ lower-bound proof. The authors are responsible for the final manuscript.

  \newcommand{\etalchar}[1]{$^{#1}$}

  \appendix
  \section{Details for the Approximate-DP Section}
\label{app:approx-dp}

This appendix proves the obstruction argument of Hopkins, Kamath, Majid, and Narayanan~\cite{HopkinsKMN23} in sign directions $\sigma \in \set{\pm1}^k$, which is the geometric step needed for the $\ell_1$-diameter bound in \Cref{lem:approx_dp_vol_ub}. We follow Lemma~5.13 in the full version of~\cite{HopkinsKMN23}, with sign directions replacing Euclidean ones and exact $\ell_1$ duality replacing an $\ell_2$-to-$\ell_1$ conversion.

\begin{lemma}[One-dimensional obstruction for low-variance certificates]
  \label{lem:approx-dp-1d}
  Let $z_1, \ldots, z_n \in \mathbb{R}$ and suppose that at least $n/4$ of the $z_i$'s are at least $20$.
  Let $\cL$ be a degree-$6$ pseudoexpectation over variables $\set{w_i, z_i'}_{i \in [n]}$ such that
  \begin{enumerate}
    \item $w_i^2 = w_i$ for every $i \in [n]$,
    \item $\sum_{i=1}^n w_i \ge 0.99n$,
    \item $w_i(z_i' - z_i) = 0$ for every $i \in [n]$,
    \item $\cL \left[\frac{1}{n}\sum_{i=1}^n z_i' \right] = 0$.
  \end{enumerate}
  Then
  \begin{equation*}
    \cL\left[\frac{1}{n}\sum_{i=1}^n \Paren{z_i' - \frac{1}{n}\sum_{j=1}^n z_j'}^2\right] \ge 2.
  \end{equation*}
\end{lemma}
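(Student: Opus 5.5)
The plan is to extract, from the kept points in the set $G \coloneq \Set{i : z_i \ge 20}$, a linear "test statistic" whose pseudo-expectation is large, and then bound that statistic by the pseudo-variance via pseudoexpectation Cauchy--Schwarz. By hypothesis $\abs{G} \ge n/4$. Write $\mu' \coloneq \frac1n\sum_j z_j'$ and $\mathrm{Var}' \coloneq \frac1n\sum_i (z_i'-\mu')^2$. Set $a_i \coloneq \mathbb{1}[i\in G]\,w_i$ and $b \coloneq \frac1n\sum_i a_i$.

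The starting point is the polynomial identity
\begin{equation*}
  \frac1n\sum_i (a_i-b)(z_i'-\mu') \;=\; \frac1n\sum_{i\in G} w_i z_i' - b\mu' \;=\; \frac1n\sum_{i\in G} w_i z_i - b\mu',
\end{equation*}
which uses $\sum_i(a_i-b)=0$ and the data-consistency constraint $w_i z_i' = w_i z_i$.

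The first term is linear in the $w_i$ with coefficients $z_i \ge 20$. The counting constraint gives $\sum_{i\in G} w_i \ge \abs{G} - 0.01n \ge 0.24n$, so
\begin{equation*}
  \cL\Brac{\frac1n\sum_{i\in G} w_i z_i} \ge 20 \cdot 0.24 = 4.8 \mper
\end{equation*}

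On the other side, apply the SoS Cauchy--Schwarz inequality (degree at most $6$, using $a_i^2=a_i$):
\begin{equation*}
  \Paren{\cL\Brac{\frac1n\sum_i (a_i-b)(z_i'-\mu')}}^2 \le \cL\Brac{\frac1n\sum_i (a_i-b)^2}\cdot\cL\Brac{\mathrm{Var}'} \mper
\end{equation*}
Moreover $\frac1n\sum_i(a_i-b)^2 = b-b^2 \le \frac14$ has a degree-$2$ SoS certificate. Hence, if the cross term $\cL[b\mu']$ is shown to be small (say at most $1$ in absolute value), we get $\cL[\mathrm{Var}'] \ge 4\cdot(3.8)^2 \gg 2$. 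This leaves a lot of slack to absorb constants.

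The main obstacle is the cross term $\cL[b\mu']$. Only $\cL[\mu']=0$ is given, not a bound on $\mu'^2$. Write $b = \abs{G}/n - e$ with $e \coloneq \frac1n\sum_{i\in G}(1-w_i)$, so that $0\le e\le 0.01$ by the counting and Boolean constraints. The constant part then vanishes under $\cL$, and it remains to bound $\cL[e\mu']$.

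The first attempt is to split $\mu'$ through the discarded points. Write $e\mu' = e\cdot\frac1n\sum_j z_j'$, and compare it with the same statistic in which $\mu'$ is replaced by the deviations $z_j'-\mu'$ on the discarded indices. Then use the selector Cauchy--Schwarz from \Cref{lem:sos-selector-lemma}-style arguments: $\Paren{\frac1n\sum_i (1-w_i)(z_i'-\mu')}^2 \le 0.01\,\mathrm{Var}'$. This makes every error term of the form $O(\sqrt{0.01\,\cL[\mathrm{Var}']})$, and such terms can be absorbed into the left side of the quadratic inequality above.

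If this bookkeeping does not close, the fallback is to redo the argument with the constant weights $a_i=\mathbb{1}[i\in G]$, for which $b$ is a scalar and $\cL[b\mu']=0$ exactly. In that case the term $\frac1n\sum_{i\in G}(1-w_i)(z_i'-\mu')$ is again controlled by $\sqrt{0.01\,\mathrm{Var}'}$, and one solves the resulting quadratic inequality in $\sqrt{\cL[\mathrm{Var}']}$. Throughout, every certificate is of degree at most $6$, consistent with the weighted Cauchy--Schwarz step in \eqref{eq:complement-second-moment-decomposition}.
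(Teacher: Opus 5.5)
Your proposal has a genuine gap: the cross term $\cL[b\mu']=-\cL[e\mu']$ is never controlled, and the lemma cannot be proved without controlling it. The earlier steps are fine. These are the identity $\frac1n\sum_i(a_i-b)(z_i'-\mu')=\frac1n\sum_{i\in G}w_iz_i-b\mu'$, the bound $\cL[\frac1n\sum_{i\in G}w_iz_i]\ge 4.8$, the pseudoexpectation Cauchy--Schwarz, and $b-b^2\le\frac14$.

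In your first route, write
\[
e\mu'=\frac1n\sum_{i\in G}(1-w_i)z_i'-\frac1n\sum_{i\in G}(1-w_i)(z_i'-\mu').
\]
The selector Cauchy--Schwarz bounds only the second piece, by $\sqrt{0.01\,\cL[\mathrm{Var}']}$. The first piece is the total of the repaired values on discarded indices of $G$. The constraints touch it only through $\cL[\mu']=0$, and $\mathrm{Var}'$ measures spread about $\mu'$, not the size of $\mu'$. So your claim that every error term is $O(\sqrt{0.01\,\mathrm{Var}'})$ is unjustified.

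The size of $\cL[e\mu']$ is set by where the fixed data sit. Take $n/4$ points at $20$ and $3n/4$ at $-L$, and mix two scenarios with weight $\tfrac12$ each:
\begin{itemize}
\item Scenario A discards $0.01n$ points of $G$, repairs them at $20$, and has $\mu'\approx-0.75L$.
\item Scenario B discards $0.01n$ of the points at $-L$ and repairs them so that $\mu'\approx+0.75L$.
\end{itemize}
This mixture satisfies all four constraints and gives $\cL[e\mu']\approx-0.004L$, which wipes out your margin of $4.8$. The lemma survives there only because spread-out data force $\mathrm{Var}'\gtrsim L^2$, and your argument never uses this. More generally, bounding $\cL[e\mu']$ by $O(\sqrt{\cL[\mathrm{Var}']})$ means showing that $\mu'$ stays within $O(\sqrt{\mathrm{Var}'})$ of a location of the data and that this location is near $0$. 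That is essentially the lemma itself, so the reduction is circular.

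Your fallback does not escape this. With $a_i=\mathbb{1}[i\in G]$ you get $\cL[\text{stat}]=\frac1n\sum_{i\in G}\cL[z_i']$. Splitting off $\frac1n\sum_{i\in G}(1-w_i)z_i'$ brings back exactly the same $\cL[e\mu']$, which you dropped. Worse, if $G=[n]$ (e.g.\ all $z_i=20$), then $a_i-b\equiv0$ and the inequality degenerates to $0\le 0$.

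What is missing is a way to measure the discarded and repaired contribution on the scale of the data. The paper gets this from a case split on the fixed $z_i$.
\begin{itemize}
\item If the interquartile gap $z_{(3n/4+1)}-z_{(n/4)}$ exceeds $8$, kept cross-quartile pairs already force $\cL[\mathrm{Var}']>3$. This uses $w_iw_j(z_i'-z_j')^2=w_iw_j(z_i-z_j)^2$ and $\cL[w_iw_j]\ge\cL[w_i]+\cL[w_j]-1$.
\item Otherwise the middle half $S$ lies in $[C-4,C+4]$ with $C\ge16$. Assuming $\cL[\mathrm{Var}']<2$, the dropped part $E_{\mathrm{drop}}=\cL[(\sum_{i\in S}(1-w_i)z_i')^2]$ is at most $0.02C^2n^2$. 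This comes from weighted Cauchy--Schwarz, the bound $\cL[\sum_{i\in S}z_i'^2]\le 2n+\frac2n\cL[(\sum_{i\in S}z_i')^2]$, and absorption.
\end{itemize}
In the second case the dropped part is therefore small relative to the kept contribution, which is at least $0.49(C-4)n$. Then $\cL[\sum_{i\in S}z_i']\ge 0.2Cn$, set against $\cL[\sum_i z_i']=0$, separates $S$ from its complement and forces $\cL[\mathrm{Var}']>2$. Your linear statistic might be salvageable if recentred at the data location and supplemented by the spread case, but as written the proposal does not prove the lemma.
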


\begin{proof}
  Repeating each index four times does not change the hypotheses or conclusion, so we may assume that $n$ is divisible by $4$. Write $\bar{z}' = \frac{1}{n}\sum_{i=1}^n z_i'$.
  Suppose for contradiction that
  \begin{equation}
    \label{eq:approx-dp-1d-small-var}
    \cL\left[\frac{1}{n}\sum_{i=1}^n (z_i' - \bar{z}')^2\right] < 2.
  \end{equation}
  Because $w_i^2 = w_i$, we also have $(1-w_i)^2 = 1-w_i$.
  Hence $0 \le \cL[w_i] \le 1$ for every $i$, and
  \begin{equation*}
    w_i w_j - w_i - w_j + 1 = (1-w_i)(1-w_j)
  \end{equation*}
  is a square modulo the Boolean equalities, so $\cL[w_i w_j] \ge \cL[w_i] + \cL[w_j] - 1$. Also $0\le\cL[w_iw_j]\le1$.

  Write $z_{(1)}\le\cdots\le z_{(n)}$ for the ordered observations and set $C_1=z_{(n/4)}$ and $C_2=z_{(3n/4+1)}$.
  If $C_2 - C_1 > 8$, then there exist sets $S_+,S_- \subseteq [n]$ with $|S_+| = |S_-| = n/4$ such that $z_i - z_j \ge 8$ for every $i \in S_+$ and $j \in S_-$.
  Using Condition~3, we obtain
  \begin{align*}
    \cL\Brac{\sum_{i,j=1}^n (z_i' - z_j')^2}
    &\ge \cL\Brac{\sum_{i,j=1}^n w_i w_j (z_i' - z_j')^2} \\
    &= \sum_{i,j=1}^n (z_i - z_j)^2 \cL[w_i w_j] \\
    &\ge 2 \sum_{i \in S_+, j \in S_-} (z_i - z_j)^2 \cL[w_iw_j] \\
    &\ge 2 \cdot 8^2 \sum_{i \in S_+, j \in S_-} \cL[w_iw_j] \\
    &\ge 2 \cdot 8^2 \sum_{i \in S_+, j \in S_-} \Paren{\cL[w_i] + \cL[w_j] - 1}.
  \end{align*}
  Since $|S_+|=|S_-|=n/4$ and $\sum_i \cL[w_i] \ge 0.99n$, the total expected weight on $S_+ \cup S_-$ is at least $0.49n$.
  Therefore,
  \begin{align*}
    \sum_{i \in S_+, j \in S_-} \Paren{\cL[w_i] + \cL[w_j] - 1}
    &= \frac{n}{4}\sum_{i \in S_+}\cL[w_i] + \frac{n}{4}\sum_{j \in S_-}\cL[w_j] - \frac{n^2}{16} \\
    &\ge \frac{n}{4}(0.49n) - \frac{n^2}{16}
    > 0.06 n^2.
  \end{align*}
  Thus $\cL[\sum_{i,j} (z_i' - z_j')^2] > 6n^2$.
  Using the identity
  \begin{equation*}
    \sum_{i,j=1}^n (z_i' - z_j')^2 = 2n \sum_{i=1}^n (z_i' - \bar{z}')^2,
  \end{equation*}
  we conclude that
  \begin{equation*}
    \cL\left[\frac{1}{n}\sum_{i=1}^n (z_i' - \bar{z}')^2\right] > 3,
  \end{equation*}
  contradicting \eqref{eq:approx-dp-1d-small-var}.
  Hence we may assume that $C_2 - C_1 \le 8$.

  Let $S$ be the middle $n/2$ indices after sorting the $z_i$'s.
  Then there exists $C \ge 16$ such that
  \begin{equation*}
    z_i \in [C-4, C+4]
    \qquad \text{for all } i \in S,
  \end{equation*}
  because at least $n/4$ of the $z_i$'s are at least $20$.
  Condition~2 implies that $\sum_{i \in S}\cL[w_i] \ge 0.49n$, and so
  \begin{equation}
    \label{eq:approx-dp-weighted-sum-lb}
    \cL\Brac{\sum_{i \in S} w_i z_i'}
    = \sum_{i \in S} z_i \cL[w_i]
    \ge (C-4)\sum_{i \in S}\cL[w_i]
    \ge 0.49(C-4)n.
  \end{equation}
  Likewise,
  \begin{equation}
    \label{eq:approx-dp-weighted-sum-sq-ub}
    \cL\Brac{\Paren{\sum_{i \in S} w_i z_i'}^2}
    = \sum_{i,j \in S} z_i z_j \cL[w_i w_j]
    \le (C+4)^2\Paren{\frac{n}{2}}^2.
  \end{equation}

  Next, write
  \begin{equation*}
    \bar z'_S \coloneq \frac{1}{|S|}\sum_{i \in S} z_i' \mper
  \end{equation*}
  Because $|S| = n/2$, we have
  \begin{equation*}
    \sum_{i \in S}(z_i')^2
    =
    \sum_{i \in S}(z_i' - \bar z'_S)^2
    + \frac{1}{|S|}\Paren{\sum_{i \in S} z_i'}^2.
  \end{equation*}
  Since $\bar z'_S$ minimizes $\sum_{i \in S}(z_i' - c)^2$ over all constants $c$, we have
  \begin{equation*}
    \sum_{i \in S}(z_i' - \bar z'_S)^2
    \le
    \sum_{i \in S}(z_i' - \bar z')^2
    \le
    \sum_{i=1}^n (z_i' - \bar z')^2 \mper
  \end{equation*}
  Therefore,
  \begin{equation*}
    \sum_{i \in S}(z_i')^2
    \le
    \sum_{i=1}^n (z_i' - \bar z')^2 + \frac{2}{n}\Paren{\sum_{i \in S} z_i'}^2.
  \end{equation*}
  Taking $\cL$ of both sides and using \eqref{eq:approx-dp-1d-small-var} yields
  \begin{equation}
    \label{eq:approx-dp-second-moment-on-S}
    \cL\Brac{\sum_{i \in S}(z_i')^2}
    \le 2n + \frac{2}{n}\cL\Brac{\Paren{\sum_{i \in S} z_i'}^2}.
  \end{equation}

  Define
  \begin{equation*}
    E_{\mathrm{drop}} \coloneq \cL\Brac{\Paren{\sum_{i \in S}(1-w_i)z_i'}^2}.
  \end{equation*}
  \begin{sodarevision}
    Apply the degree-$6$ polynomial weighted Cauchy--Schwarz certificate from the proof of \Cref{lem:main-sos-lemma} with $b_i=1-w_i$. Condition~2 and the Boolean identities for $b_i$ then give
    \begin{align*}
      E_{\mathrm{drop}}
      &\le \cL\Brac{
        \Paren{\sum_{i \in S}(1-w_i)}
        \Paren{\sum_{i \in S}(1-w_i)(z_i')^2}
      } \\
      &\le 0.01n\,\cL\Brac{\sum_{i \in S}(1-w_i)(z_i')^2} \\
      &\le 0.01n\,\cL\Brac{\sum_{i \in S}(z_i')^2}.
    \end{align*}
  \end{sodarevision}
  Combining this with \eqref{eq:approx-dp-second-moment-on-S}, and then using
  \begin{equation*}
    \sum_{i \in S} z_i' = \sum_{i \in S}(1-w_i)z_i' + \sum_{i \in S} w_i z_i',
  \end{equation*}
  followed by $(a+b)^2 \le 2a^2 + 2b^2$, gives
  \begin{align*}
    E_{\mathrm{drop}}
    &\le 0.01n \Brac{2n + \frac{2}{n}\cL\Brac{\Paren{\sum_{i \in S} z_i'}^2}} \\
    &\le 0.02n^2 + 0.04E_{\mathrm{drop}} + 0.04 \cL\Brac{\Paren{\sum_{i \in S} w_i z_i'}^2}.
  \end{align*}
  Rearranging and applying \eqref{eq:approx-dp-weighted-sum-sq-ub},
  \begin{equation*}
    E_{\mathrm{drop}} \le \frac{1}{24}n^2 + \frac{1}{96}(C+4)^2n^2 \le 0.02 C^2 n^2,
  \end{equation*}
  where the last inequality uses $C \ge 16$.
  Hence
  \begin{equation}
    \label{eq:approx-dp-unweighted-sum-lb}
    \Abs{\cL\Brac{\sum_{i \in S}(1-w_i)z_i'}} \le \sqrt{E_{\mathrm{drop}}} \le 0.15 C n.
  \end{equation}

  From \eqref{eq:approx-dp-weighted-sum-lb} and \eqref{eq:approx-dp-unweighted-sum-lb}, and using $C \ge 16$, we obtain
  \begin{equation*}
    \cL\Brac{\sum_{i \in S} z_i'}
    = \cL\Brac{\sum_{i \in S} w_i z_i'} + \cL\Brac{\sum_{i \in S}(1-w_i)z_i'}
    \ge 0.35 C n - 0.15 C n
    \ge 0.2 C n.
  \end{equation*}
  Condition~4 says $\cL[\sum_{i=1}^n z_i'] = 0$, so
  \begin{equation*}
    \cL\Brac{\sum_{i \in S} z_i' - \frac{1}{2}\sum_{i=1}^n z_i'} \ge 0.2 C n.
  \end{equation*}
  Another application of Cauchy--Schwarz yields
  \begin{equation}
    \label{eq:approx-dp-center-gap}
    \cL\Brac{\Paren{\sum_{i \in S} z_i' - \frac{1}{2}\sum_{i=1}^n z_i'}^2} \ge 0.04 C^2 n^2.
  \end{equation}

  Let $R = [n] \setminus S$, so $|R| = n/2$.
  Then
  \begin{equation*}
    \sum_{i \in S} z_i' - \frac{1}{2}\sum_{i=1}^n z_i'
    = \frac{1}{2}\Paren{\sum_{i \in S} z_i' - \sum_{j \in R} z_j'}
    = \frac{1}{n}\sum_{i \in S, j \in R}(z_i' - z_j').
  \end{equation*}
  Applying Cauchy--Schwarz once more,
  \begin{equation*}
    \Paren{\frac{1}{n}\sum_{i \in S, j \in R}(z_i' - z_j') }^2
    \le \frac{1}{4}\sum_{i \in S, j \in R}(z_i' - z_j')^2.
  \end{equation*}
  Combining this with \eqref{eq:approx-dp-center-gap} shows
  \begin{equation*}
    \cL\Brac{\sum_{i \in S, j \in R}(z_i' - z_j')^2} \ge 0.16 C^2 n^2.
  \end{equation*}
  Since the left-hand side is at most $\cL[\sum_{i,j} (z_i' - z_j')^2]$, we conclude that
  \begin{equation*}
    2n \cdot \cL\Brac{\sum_{i=1}^n (z_i' - \bar{z}')^2}
    = \cL\Brac{\sum_{i,j=1}^n (z_i' - z_j')^2}
    \ge 0.16 C^2 n^2
    > 4n^2,
  \end{equation*}
  again contradicting \eqref{eq:approx-dp-1d-small-var}.
  This contradiction proves the lemma.
\end{proof}

\begin{remark}[Approximate certificates]
  \label{rem:approx-dp-handoff}
  We first treat exact pseudoexpectations in the proof of \Cref{lem:approx_dp_vol_ub}.
  Every displayed inference in its one-dimensional obstruction argument is part of a fixed degree-$6$ derivation whose coefficient norm is bounded by $\poly(n,k,m)$.
  Since $(1/m+\tau_{\mathrm{cov}})^{-1/2}\le\sqrt m$, normalizing the projections by this factor preserves the polynomial coefficient bound.
  Therefore, if a linear functional $\cL$ only $\tau$-approximately satisfies the constraints in $\cB_{\eta^* n}(\cY)$, the derivation incurs an additive loss
  \begin{equation*}
    O\Paren{\tau \poly(n,k,m)\eta^* n} \mper
  \end{equation*}
  Choosing
  \begin{equation*}
    \tau = \frac{1}{\poly(n,k,m,\alpha^{-1})}
  \end{equation*}
  makes the accumulated coefficient error $o(1)$, so the one-dimensional obstruction argument also applies to approximate certificates.
\end{remark}

\begin{proof}[Full proof of \Cref{lem:approx_dp_vol_ub}]
  Let $\cL_1,\cL_2$ be $(\alpha,\tau,\eta^* n)$-certificates for $\nu_1,\nu_2$, and write $\mu_{\cL_j} \coloneq \cL_j \mu'$ for $j \in \set{1,2}$.
  For a fixed certificate $\cL_j$, we work with the observed and repaired one-dimensional projections of the batch-average vectors in $\cY$ and the pseudo-points $y_i'$ from $\cB_{\eta^* n}(\cY)$.
  We first show that for every sign vector $\sigma \in \set{\pm 1}^k$,
  \begin{equation}
    \label{eq:approx-dp-sign-bound}
    \Abs{\iprod{\mu_{\cL_1} - \mu_{\cL_2}, \sigma}} \le O\Paren{\sqrt{\frac1m+\tau_{\mathrm{cov}}}}.
  \end{equation}

  Fix $j \in \set{1,2}$ and a sign vector $\sigma$.
  Set
  \begin{equation*}
    a_i \coloneq \frac{1}{\sqrt{1/m+\tau_{\mathrm{cov}}}}\iprod{y_i - \mu_{\cL_j}, \sigma}
    \qquad \text{and} \qquad
    a_i' \coloneq \frac{1}{\sqrt{1/m+\tau_{\mathrm{cov}}}}\iprod{y_i' - \mu_{\cL_j}, \sigma}.
  \end{equation*}
  We first treat the exact case $\tau = 0$.
  Then $\cL_j$ exactly satisfies the constraints in $\cB_{\eta^* n}(\cY)$.
  In particular,
  \begin{enumerate}
    \item $w_i^2 = w_i$,
    \item $\sum_{i=1}^n w_i \ge (1-\eta^*)n$,
    \item $w_i(a_i' - a_i) = (1/m+\tau_{\mathrm{cov}})^{-1/2}\sigma^\top w_i(y_i' - y_i) = 0$,
    \item $\cL_j[n^{-1}\sum_{i=1}^n a_i'] = (1/m+\tau_{\mathrm{cov}})^{-1/2}\iprod{\cL_j\mu' - \mu_{\cL_j}, \sigma} = 0$.
  \end{enumerate}
  Choosing $\eta^*$ smaller than $0.01$, \Cref{lem:approx-dp-1d} shows that if at least $n/4$ of the $a_i$'s were at least $20$, then
  \begin{equation}
    \label{eq:approx-dp-large-var-lb}
    \cL_j\Brac{\frac{1}{n}\sum_{i=1}^n \Paren{a_i' - \frac{1}{n}\sum_{\ell=1}^n a_\ell'}^2} \ge 2.
  \end{equation}
  By definition of $a_i'$ and $\mu_{\cL_j}$, the left-hand side is exactly
  \begin{equation*}
    \frac{1}{1/m+\tau_{\mathrm{cov}}}\cL_j\Brac{\frac{1}{n}\sum_{i=1}^n \iprod{y_i' - \mu', \sigma}^2 }.
  \end{equation*}
  Applying the covariance constraint from $\cB_{\eta^* n}(\cY)$ gives
  \begin{align*}
    \frac{1}{1/m+\tau_{\mathrm{cov}}}\cL_j\Brac{\frac{1}{n}\sum_{i=1}^n \iprod{y_i' - \mu', \sigma}^2 }
    &\le \frac{\cL_j\Brac{\sigma^\top(\diag(\mu') - \mu' \mu'^\top)\sigma}
      + m\cL_j\Brac{\sum_{\ell=1}^k \lambda_\ell}}{1+m\tau_{\mathrm{cov}}} \\
    &\le \frac{\cL_j\Brac{1 - \iprod{\mu', \sigma}^{\,2}} + m\tau_{\mathrm{cov}}}{1+m\tau_{\mathrm{cov}}}
    \le 1,
  \end{align*}
  since $\mu'$ lies in the simplex in the exact case and $\sigma_\ell^2 = 1$ for every $\ell \in [k]$.
  This contradicts \eqref{eq:approx-dp-large-var-lb}.
  Hence fewer than $n/4$ of the $a_i$'s can be at least $20$.
  Applying this argument to $-\sigma$ shows that fewer than $n/4$ of the $a_i$'s can be at most $-20$.
  Therefore, the median of $\set{\iprod{y_i, \sigma}}_{i \in [n]}$ lies within $20\smash[t]{\sqrt{1/m+\tau_{\mathrm{cov}}}}$ of $\iprod{\mu_{\cL_j}, \sigma}$.

  By Remark~\ref{rem:approx-dp-handoff}, the bound on the distance from $\iprod{\mu_{\cL_j},\sigma}$ to the observed-data median remains valid for approximate certificates.

  Both projected certificate means $\iprod{\mu_{\cL_1},\sigma}$ and $\iprod{\mu_{\cL_2},\sigma}$ therefore lie within $O(\smash[t]{\sqrt{1/m+\tau_{\mathrm{cov}}}})$ of the same median of $\set{\iprod{y_i,\sigma}}_{i\in[n]}$, which proves \eqref{eq:approx-dp-sign-bound}.
  By exact $\ell_1$ duality, \eqref{eq:approx-dp-sign-bound} implies
  \begin{equation*}
    \|\mu_{\cL_1} - \mu_{\cL_2}\|_1
    =
    \max_{\sigma \in \set{\pm 1}^k}\iprod{\mu_{\cL_1} - \mu_{\cL_2}, \sigma}
    \le O\Paren{\sqrt{\frac1m+\tau_{\mathrm{cov}}}}.
  \end{equation*}

  Finally, since $\nu_j$ is certifiable via $\cL_j$, the closeness condition from \Cref{def:certifiable-mean} gives
  \begin{equation*}
    \|\nu_j - \mu_{\cL_j}\|_1 \le \alpha + \tau \eta^* n = O(\alpha)
    \qquad \text{for each } j \in \set{1,2}.
  \end{equation*}
  Hence
  \begin{equation*}
    \|\nu_1 - \nu_2\|_1 \le O\left(\alpha+\sqrt{\frac1m+\tau_{\mathrm{cov}}}\right).
  \end{equation*}
  This proves \Cref{lem:approx_dp_vol_ub}.
\end{proof}

  \section{Pure DP Lower Bounds}

This appendix proves the pure-DP lower bound in \Cref{thm:dp-lower-bound}. Its three terms come from three different arguments: classical nonprivate estimation, a centered code packing around the uniform distribution, and a packing inside the simplex in $(k-1)$ coordinates.

\subsection{Projection Notation for Lower Bounds}

We work in $(k-1)$ coordinates for the packing argument, and introduce the required projection notation here.

\begin{definition}[Nonnegative $\ell_1$-ball (radius $1$) in $\R^{k-1}$]
  \label{def:probability-simplex-proj}
  Let $\Delta^{k-1}_{\le}$ denote
  \begin{equation*}
    \Delta^{k-1}_{\le} \coloneq \Set{x \suchthat \iprod{\bm{1}, x} \le 1, x\in \R^{k-1}_{\ge 0}} \mper
  \end{equation*}
\end{definition}

\begin{definition}[Restriction and Extension Operators]
  \label{def:restriction-extension}
  We define $\Res \colon \R^k \to \R^{k-1}$ and $\Ext \colon \R^{k-1} \to \R^k$ by
  \begin{align*}
    \Res(v) &\coloneq v_{[k-1]} \\
    \Ext(v) &\coloneq (v, 1- \sum_{i \in [k-1]} v_i) \mper
  \end{align*}
  In words, $\Res$ restricts a vector to its first $k-1$ coordinates, and $\Ext$ appends the final simplex coordinate.
\end{definition}

\begin{fact}[$\ell_1$ Distance Between Restrictions and Extensions]
  \label{fact:ell-1-distance-restriction-extension}
  For any $u,v\in\R^{k-1}$,
  \[
    \bigl\|\Ext(u)-\Ext(v)\bigr\|_1
    = \|u-v\|_1 + \Bigl|\sum_{i=1}^{k-1}(u_i-v_i)\Bigr|
    \in \bigl[\,\|u-v\|_1,\; 2\|u-v\|_1\,\bigr].
  \]
  Moreover, for any $x,y\in\R^k$ with $\sum_{i=1}^k x_i=\sum_{i=1}^k y_i$ (for example $x,y\in\Delta^{k-1}$),
  \[
    \bigl\|\Res(x)-\Res(y)\bigr\|_1
    \;\le\; \|x-y\|_1
    \;\le\; 2\,\bigl\|\Res(x)-\Res(y)\bigr\|_1.
  \]
\end{fact}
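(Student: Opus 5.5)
The first claim is a direct algebraic identity, so I will expand it and then bound the extra term. For $u,v\in\R^{k-1}$, $\Ext(u)-\Ext(v)=\bigl(u-v,\,-\sum_{i\in[k-1]}(u_i-v_i)\bigr)$. Taking the $\ell_1$ norm coordinatewise gives
\[
\|\Ext(u)-\Ext(v)\|_1=\|u-v\|_1+\Bigl|\sum_{i=1}^{k-1}(u_i-v_i)\Bigr|.
\]
The lower endpoint $\|u-v\|_1$ holds because the added term is nonnegative. The upper endpoint $2\|u-v\|_1$ follows from the triangle inequality $\bigl|\sum_i(u_i-v_i)\bigr|\le\sum_i|u_i-v_i|=\|u-v\|_1$.

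For the second claim, I will let $x,y\in\R^k$ with $\sum_i x_i=\sum_i y_i$ and split $\|x-y\|_1$ into the first $k-1$ coordinates and the last one:
\[
\|x-y\|_1=\|\Res(x)-\Res(y)\|_1+|x_k-y_k|.
\]
The left inequality is then immediate, since $|x_k-y_k|\ge0$. For the right inequality I will use the equal-sum hypothesis, which gives $x_k-y_k=-\sum_{i\le k-1}(x_i-y_i)$. The triangle inequality then bounds $|x_k-y_k|$ by $\|\Res(x)-\Res(y)\|_1$, and so $\|x-y\|_1\le 2\|\Res(x)-\Res(y)\|_1$.

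Equivalently, under equal sums $x=\Ext'(\Res x)$, where $\Ext'$ appends the coordinate making the total sum $\sum_i x_i$ rather than $1$, and the first part applies verbatim to that translated extension. The argument has no real obstacle. The only point to check is that the equal-sum hypothesis is what ties the $k$th coordinate difference to the others; for simplex points it holds since both sums equal $1$.
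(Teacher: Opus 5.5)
Your proposal is correct and follows essentially the same route as the paper. In both parts the last-coordinate difference equals minus the sum of the other coordinate differences (by the definition of $\Ext$, or by the equal-sum hypothesis), and the triangle inequality bounds that term by $\|u-v\|_1$. Your remark about the translated extension is just a restatement of the same computation.
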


\begin{proof}
  For the extension inequalities, we can write
  \[
    \Ext(u)-\Ext(v)
    =\bigl(u-v,\; \,1-\sum_{i=1}^{k-1}u_i - (1-\sum_{i=1}^{k-1}v_i)\bigr)
    =\bigl(u-v,\; -\sum_{i=1}^{k-1}(u_i-v_i)\bigr),
  \]
  so
  \[
    \|\Ext(u)-\Ext(v)\|_1
    =\sum_{i=1}^{k-1}|u_i-v_i|+\Bigl|\sum_{i=1}^{k-1}(u_i-v_i)\Bigr|
    \in \bigl[\,\|u-v\|_1,\; 2\|u-v\|_1\,\bigr]
  \]
  by the triangle inequality. For the restriction inequalities, the lower bound is immediate since restriction can only decrease $\ell_1$ norm. If $\sum_i x_i=\sum_i y_i$, then
  $x_k-y_k=-\sum_{i=1}^{k-1}(x_i-y_i)$ and hence
  \[
    \|x-y\|_1
    =\sum_{i=1}^{k-1}|x_i-y_i|+\Bigl|\sum_{i=1}^{k-1}(x_i-y_i)\Bigr|
    \le 2\,\|\Res(x)-\Res(y)\|_1.
  \]
\end{proof}

We invoke the following result.
\begin{theorem}[Packing Lower Bounds \cite{HopkinsKM22}]
  \label{thm:packing-lower-bound}
  Let $T\ge2$ and $0<\beta\le1/3$. Let $\mathcal{P}=\left\{P_1, \ldots, P_T\right\}$ be a set of distributions, and $P_0$ be a distribution such that for every $P_i \in \mathcal{P},\left\|P_i-P_0\right\|_{\mathrm{TV}} \leq \gamma$. Let $\mathcal{G}=\left\{G_1, \ldots, G_T\right\}$ be a collection of disjoint subsets of some set $\mathcal{Y}$. If there is an $\varepsilon$-DP algorithm $M$ such that $\mathbb{P}_{X \sim P_i^n}\left[M(X) \in G_i\right] \geq 1-\beta$ for all $i \in[T]$, then
  $$
  n \geq \Omega\left(\frac{\log T+\log (1 / \beta)}{\gamma\left(e^{2 \varepsilon}-1\right)}\right)
  $$
  Note that for the usual regime $\varepsilon \leq 1$, we can replace the $e^{2 \varepsilon}-1$ in the denominator with $\varepsilon$.
\end{theorem}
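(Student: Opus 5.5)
The plan is to prove the statement by a coupling plus group-privacy argument. We compare each alternative $P_i^n$ against the common reference $P_0^n$, losing a factor $e^{\eps D}$, where $D$ is the Hamming distance of a coupled pair. We then exploit the disjointness of the $G_i$ to get the $\log T$ term, and apply the same comparison in the reverse direction to get the $\log(1/\beta)$ term.

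\textbf{Step 1 (coupling).} For each $i$, take a maximal coupling of $P_0$ and $P_i$ coordinatewise. This gives a joint law of $(X,Y)$ with $X\sim P_0^n$, $Y\sim P_i^n$, and Hamming distance $D\sim\operatorname{Bin}(n,\gamma')$ for some $\gamma'\le\gamma$. Hence
\begin{equation*}
  \E\Brac{e^{\eps D}}\le(1+\gamma(e^{\eps}-1))^n\le\exp\Paren{n\gamma(e^{\eps}-1)} \mper
\end{equation*}
Write $p_x\coloneq\Pr[M(x)\in G_i]$. Group privacy gives $p_X\ge e^{-\eps D}p_Y$ pointwise on the coupling.

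\textbf{Step 2 (transfer of success probability).} By Cauchy--Schwarz,
\begin{equation*}
  \E[p_Y]\le\sqrt{\E[e^{-\eps D}p_Y]\,\E[e^{\eps D}p_Y]}\le\sqrt{\Pr_{P_0^n}[M\in G_i]\cdot\E[e^{\eps D}]} \mper
\end{equation*}
The second inequality uses Step~1 and $p_Y\le1$. Since $\E[p_Y]\ge1-\beta\ge2/3$, this yields
\begin{equation*}
  \Pr_{P_0^n}[M\in G_i]\ge\tfrac49\exp\Paren{-n\gamma(e^{\eps}-1)} \mper
\end{equation*}

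\textbf{Step 3 ($\log T$ term).} The sets $G_i$ are disjoint, so summing over $i\in[T]$ gives $\tfrac49T\exp(-n\gamma(e^\eps-1))\le1$. Therefore $n\gamma(e^{\eps}-1)\ge\log T-O(1)$.

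\textbf{Step 4 ($\log(1/\beta)$ term).} Because $T\ge2$ and the sets are disjoint, some index $i$ has $\Pr_{P_0^n}[M\in G_i^c]\ge1/2$. Rerun Steps~1--2 with the roles of $P_0$ and $P_i$ exchanged and with the event $G_i^c$ in place of $G_i$. This gives
\begin{equation*}
  \beta\ge\Pr_{P_i^n}[M\in G_i^c]\ge\tfrac14\exp\Paren{-n\gamma(e^{\eps}-1)} \mper
\end{equation*}
Hence $n\gamma(e^{\eps}-1)\ge\log(1/(4\beta))$.

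\textbf{Step 5 (combining).} Averaging the two bounds, and using $e^{\eps}-1\le e^{2\eps}-1$ together with $\beta\le1/3$ to absorb the additive constants, gives the stated bound. For $\eps\le1$ we have $e^{2\eps}-1=\Theta(\eps)$, which justifies the final remark in the statement.

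The main obstacle is Step~2. Group privacy is a pointwise, multiplicative comparison, but the success guarantee only controls the \emph{average} of $p_Y$. A naive bound $e^{-\eps\max D}$ would lose everything, since $D$ can be as large as $n$. The Cauchy--Schwarz step resolves this by converting the pointwise comparison into the moment generating function $\E[e^{\eps D}]$, which is exactly $\exp(O(n\gamma\eps))$.
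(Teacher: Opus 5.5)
The paper gives no proof of this statement; it imports it from \cite{HopkinsKM22}, so your argument has to stand on its own. Steps~1--4 are sound, and the Cauchy--Schwarz step does handle the correlation between $D$ and $p_Y$. The gap is in Step~5: the additive constants cannot be absorbed.

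Step~2 compares success probabilities only up to a square, $\Pr_{P_0^n}[M\in G_i]\ge\Pr_{P_i^n}[M\in G_i]^2e^{-x}$ with $x\coloneq n\gamma(e^{\eps}-1)$. Consequently Steps~3 and~4 give only $x\ge\log(4T/9)$ and $x\ge\log(1/(4\beta))$. For $T=2$ and $\beta\in[1/4,1/3]$, both right-hand sides are nonpositive. As $\beta\uparrow1/4$ with $T=2$, their ratio to $\log T+\log(1/\beta)$ tends to $0$. Yet the theorem asserts $x\gtrsim\log 6$ there. Averaging the two bounds, as you propose, is worse: it gives $\tfrac12\log(T/(9\beta))$, which is vacuous whenever $T/\beta\le9$.

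This is not mere bookkeeping. At $x=0$, the choice $\Pr_{P_0^n}[M\in G_1]=\Pr_{P_0^n}[M\in G_2]=1/2$ and $\Pr_{P_i^n}[M\in G_i]=0.7$ (so $\beta=0.3$) satisfies every squared comparison against $P_0$, in both directions and for $G_i$ and $G_i^c$. So Steps~3--4, which only compare against $P_0$, cannot produce a positive bound in that corner.

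The fix is a comparison that does not square the probability. The simplest is a hybrid argument. Let $f(x)=\Pr[M(x)\in E]$ and fix $x_{-j}$. Then $h=f(x_{-j},\cdot)$ satisfies $\sup h\le e^{\eps}\inf h$, so
\begin{equation*}
  \E_P h-\E_Q h\le\gamma(\sup h-\inf h)\le\gamma(e^{\eps}-1)\E_Q h \mper
\end{equation*}
Replacing one coordinate at a time gives $\Pr_{P^n}[M\in E]\le(1+\gamma(e^{\eps}-1))^n\Pr_{Q^n}[M\in E]$ for every event $E$. Your Steps~3--4 then give $x\ge\log(2T/3)$ and $x\ge\log(1/(2\beta))$. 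These are at least constant multiples of $\log T$ and $\log(1/\beta)$ throughout $T\ge2$, $\beta\le1/3$.

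Alternatively, keep your coupling but use H\"older with exponents $(3/2,3)$ instead of Cauchy--Schwarz. This gives
\begin{equation*}
  \Pr_{P_0^n}[M\in G_i]\ge\Pr_{P_i^n}[M\in G_i]^{3/2}\exp\Paren{-\tfrac12n\gamma(e^{2\eps}-1)} \mper
\end{equation*}
Since $2(2/3)^{3/2}>1$ and $3(1/2)^{3/2}>1$, the constants now absorb. This route also produces exactly the $e^{2\eps}-1$ in the statement.
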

We use \Cref{thm:packing-lower-bound} to show the following lower bound under pure DP. Similar lower bounds already exist in \cite{LiuSYKR20}. Here we focus on the dependence on the failure probability $\beta$.
\begin{theorem}
  \label{thm:dp-lower-bound}
  Assume $k\ge2$, $0<\alpha\le1/8$, $0<\beta\le1/3$, and $0<\eps\le1$. Any $\eps$-DP algorithm that, for every distribution $\mu\in\Delta^{k-1}$, takes $n$ independent batches of $m$ i.i.d.\ samples from $\mu$ and outputs an estimate $\hat\mu$ satisfying $\normo{\hat\mu-\mu}\le\alpha$ with probability at least $1-\beta$ must use
  \begin{equation*}
    n = \Omega\Paren{\frac{k + \log(1/\beta)}{\alpha^2 m} + \frac{k + \log(1/\beta)}{\alpha \sqrt{m} \eps} + \frac{k \log(1/\alpha) + \log(1/\beta)}{\eps} 
    } \mper
  \end{equation*} 
\end{theorem}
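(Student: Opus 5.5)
We prove the three terms separately, since the lower bound is a maximum of three quantities and a sum is within a factor three of a maximum. Throughout we use $\eps\le1$, so the denominator $e^{2\eps}-1$ in \Cref{thm:packing-lower-bound} may be replaced by $\eps$. We also use the tensorization bound $\normtv{\mu^{\otimes m}-\nu^{\otimes m}}\le\min\{1,\sqrt{m\,\chi^2(\mu\Vert\nu)}\}$ up to constants, which converts per-sample closeness into per-batch closeness.

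\emph{First term (nonprivate).} Privacy only makes the problem harder, and the $n$ batches give $nm$ i.i.d.\ samples from $\mu$. The classical lower bound for learning a distribution on $[k]$ to $\ell_1$ error $\alpha$ with failure probability $\beta$ requires $\Omega((k+\log(1/\beta))/\alpha^2)$ samples. This bound is cited in the introduction~\cite{Canonne20a}, so dividing by $m$ gives the first term.

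\emph{Second term (centered code packing).} Take a binary code $\mathcal C\subseteq\{\pm1\}^{k/2}$ of size $2^{\Omega(k)}$ and relative distance $\ge1/4$. For $c\in\mathcal C$, let $\mu_c$ be uniform perturbed by $\pm 4\alpha/k$ on paired symbols, exactly as in the SQ construction with $v_\sigma$. Distinct codewords then give $\normo{\mu_c-\mu_{c'}}>2\alpha$, so the $\ell_1$ balls $G_c$ of radius $\alpha$ are disjoint. Set $P_0=\mu_0^{\otimes m}$. The chi-square divergence satisfies $\chi^2(\mu_c\Vert\mu_0)=O(\alpha^2)$, so by the tensorization bound $\normtv{\mu_c^{\otimes m}-P_0}\le\gamma=O(\alpha\sqrt m)$. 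When $\alpha\sqrt m\ge 1$ we simply use $\gamma\le1$, and this case is dominated by the third term. \Cref{thm:packing-lower-bound} yields $n=\Omega(k/(\alpha\sqrt m\eps))$.

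To obtain the additive $\log(1/\beta)/(\alpha\sqrt m\eps)$, we apply the same theorem with $T=2$, using $\mu_0$ and a single $\mu_c$ at distance $>2\alpha$. Its $\log(1/\beta)$ term gives the bound. Here $T\ge2$ is needed, so we take two alternatives $\mu_c,\mu_{c'}$ against $P_0$.

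\emph{Third term (packing inside the simplex).} Work in $(k-1)$ coordinates via $\Res$ and $\Ext$. Inside $\Delta^{k-1}_{\le}$, which has $\ell_1$-volume of order $1/(k-1)!$, take a maximal $4\alpha$-separated set in $\ell_1$ restricted to a shrunk copy bounded away from the boundary. A volume comparison shows it has size $T\ge(c/\alpha)^{k-1}$. By \Cref{fact:ell-1-distance-restriction-extension}, the extensions $\Ext(u)$ remain $4\alpha$-separated in $\ell_1$, so the balls $G_u$ of radius $\alpha$ are disjoint. Taking $P_0$ arbitrary (for example uniform) with the trivial bound $\gamma=1$, \Cref{thm:packing-lower-bound} gives $n=\Omega((k\log(1/\alpha)+\log(1/\beta))/\eps)$. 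We use $\alpha\le1/8$ to ensure $\log(c/\alpha)=\Omega(\log(1/\alpha))$ and $T\ge2$.

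\emph{Main obstacle.} The main obstacle is the volume/packing count in the third term. We must get $(c/\alpha)^{k-1}$ rather than a bound losing $k^{k}$ factors, which would destroy the $\log(1/\alpha)$. The plan is to compare the volume of $\Delta^{k-1}_{\le}$, namely $1/(k-1)!$, against $\ell_1$ balls of radius $4\alpha$ in $\R^{k-1}$, of volume $(8\alpha)^{k-1}/(k-1)!$. The factorials cancel, giving $T\ge(1/(8\alpha))^{k-1}$. The boundary issue is handled by centering the points inside $\Delta^{k-1}_{\le}$ itself, since we only need the points to be distributions and not to contain their balls.
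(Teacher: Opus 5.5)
Your first two terms follow the paper's proof essentially verbatim: the nonprivate bound divided by $m$, and the paired-symbol code around the uniform distribution with $\gamma=\min\{1,O(\alpha\sqrt m)\}$. You bound the batch total variation through $\chi^2$ tensorization, while the paper uses KL tensorization and Pinsker; either works. There are three small remarks on this part. First, a single application of the packing theorem already gives $(\log T+\log(1/\beta))/(\gamma\eps)$, so your separate $T=2$ application for the $\log(1/\beta)$ part is redundant. Second, relative distance $\ge 1/4$ only gives $\|\mu_c-\mu_{c'}\|_1\ge 2\alpha$, so the closed radius-$\alpha$ balls may touch; take distance strictly above $1/4$, as the paper does. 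Third, for odd $k$, leave one symbol unperturbed.

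The third term is where you genuinely differ. The paper packs the whole unit $\ell_1$ ball of $\mathbb{R}^{k-1}$ at separation $3\alpha$, using the standard bound $(1/(3\alpha))^{k-1}$. It then pigeonholes into one of the $2^{k-1}$ reflected copies of $\Delta^{k-1}_{\le}$ and reflects back, losing $2^{k-1}$ and obtaining $(1/(6\alpha))^{k-1}$. You instead run the volumetric argument directly inside $\Delta^{k-1}_{\le}$. The radius-$\delta$ balls around a maximal $\delta$-separated set cover the simplex, so comparing $\mathrm{vol}(\Delta^{k-1}_{\le})=1/(k-1)!$ with $(2\delta)^{k-1}/(k-1)!$ gives $T\ge(1/(2\delta))^{k-1}$, with the factorials cancelling. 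This is correct and avoids the reflection step. As you note, it uses only covering, so the boundary is harmless and the shrunk copy is unnecessary.

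Your constants need one adjustment, though. With $\delta=4\alpha$ you get $T\ge(1/(8\alpha))^{k-1}$, which is trivial at $\alpha=1/8$. So the claim that $\alpha\le 1/8$ ensures $\log(1/(8\alpha))=\Omega(\log(1/\alpha))$ is false near the endpoint. There are two easy fixes:
\begin{itemize}
\item Use separation $3\alpha$, which is still $>2\alpha$. This gives $(1/(6\alpha))^{k-1}$ and $\log(1/(6\alpha))\ge 0.13\log(1/\alpha)$ for all $\alpha\le1/8$.
\item Alternatively, observe that for $\alpha$ above a small absolute constant, the required $\Omega((k+\log(1/\beta))/\eps)$ already follows from your code packing, since $\gamma\le 1$ there.
\end{itemize}
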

\begin{proof}
  \medskip\noindent First term: nonprivate estimation.
  Classical nonprivate estimation of a distribution on $[k]$ to $\ell_1$ error $\alpha$ with failure probability $\beta$ requires $\Omega((k+\log(1/\beta))/\alpha^2)$ individual samples~\cite{Canonne20a}. Since there are $nm$ samples, division by $m$ gives the first term.

  \medskip\noindent Third term: packing inside the simplex.
  From \Cref{fact:ell-1-distance-restriction-extension} we have
  \begin{equation*}
    \normo{\Res(\mu) - \Res(\mu')} \le \normo{\mu - \mu'} \le 2 \normo{\Res(\mu) - \Res(\mu')} \mper
  \end{equation*}
  If we find some points $v_1, \ldots, v_T \in \Delta^{k-1}_{\le}$ (as in \Cref{def:probability-simplex-proj}) such that $\normo{v_i - v_j} \ge 3\alpha$ for all $i \neq j$, and let $\mu_i = \Ext(v_i)$, then define
  \begin{equation*}
    P_i \coloneq \mu_i^{\tensor m}
    \qquad\text{for } i \in [T], \qquad
    P_0 \coloneq \mu_1^{\tensor m} \mper
  \end{equation*}
  Since total variation distance between any two probability distributions is at most $1$, we have $\normtv{P_i - P_0} \le 1$ for every $i \in [T]$. Thus \Cref{thm:packing-lower-bound} applies with $\gamma=1$ to the family $\set{P_i}_{i \in [T]}$ and the sets
  \begin{equation*}
    G_i = \Set{\hat{\mu} \suchthat \normo{\hat{\mu} - \mu_i} \leq \alpha} \mper
  \end{equation*}
  To construct the $3\alpha$-separated points $v_1,\ldots,v_T$, consider all mirror images of $\Delta^{k-1}_{\le}$, namely $C_{s, k}= \Set{x \suchthat \normo{x} \le 1, s_i x_i \ge 0}$ for $s \in \set{\pm 1}^{k-1}$. These sets partition the $\ell_1$ ball of radius $1$ in $\R^{k-1}$. If there exist $T'$ points in this ball at pairwise distance at least $3\alpha$, then reflecting its most populated orthant into the nonnegative orthant gives at least $T = 2^{-k+1} T'$ points in $\Delta^{k-1}_{\le}$ at the same separation. Standard $\ell_1$ packing bounds (see e.g. \cite{Wainwright19}) give $T' \ge (1/(3\alpha))^{k-1}$, hence $T \ge (1/(6\alpha))^{k-1}$, and since $\alpha\le1/8$, \Cref{thm:packing-lower-bound} yields the lower bound
  \begin{equation*}
    n \ge \Omega\Paren{\frac{k \log (1/\alpha) + \log(1/\beta)}{\eps}} \mper
  \end{equation*}

  \medskip\noindent Second term: centered code packing around the uniform distribution.
  Let $\mu_o$ be the uniform distribution on the first $2\floor{k/2}$ elements of $[k]$.
  For each $x \in \set{\pm 1}^{\floor{k/2}}$, define $s(x) \in \set{-1,0,+1}^k$ by
  \begin{equation*}
    s(x)_{2u-1} = x_u, \qquad s(x)_{2u} = -x_u \qquad \text{for } u \in [\floor{k/2}],
  \end{equation*}
  and if $k$ is odd set the last coordinate of $s(x)$ to zero.
  Define
  \begin{equation*}
    \mu(x) \coloneq \mu_o + \frac{4\alpha}{2\floor{k/2}} s(x) \mper
  \end{equation*}
  Since $\sum_{i=1}^k s(x)_i = 0$ and $4\alpha \le 1/2$ under $\alpha \le 1/8$, we have $\mu(x) \in \Delta^{k-1}$ for every $x$.
  Now note that for any such $x$ we have
  \begin{align*}
    \operatorname{KL}(\mu(x) \, \| \, \mu_o)
    &= \sum_{i=1}^{2\floor{k/2}} \mu(x)(i) \log\Paren{\frac{\mu(x)(i)}{\mu_o(i)}} \\
    &= \frac{1}{2\floor{k/2}} \sum_{i=1}^{2\floor{k/2}} \Paren{1 + 4\alpha s(x)_i} \log\Paren{1 + 4\alpha s(x)_i} \mcom
  \end{align*}
  For $\abs{t} \le 1/2$, the inequality $(1+t)\log(1+t) \le t + 2t^2$ holds.
  Applying it with $t = 4\alpha s(x)_i$ and using $\sum_i s(x)_i = 0$, we obtain
  \begin{equation*}
    \operatorname{KL}(\mu(x) \, \| \, \mu_o)
    \le
    \frac{32\alpha^2}{2\floor{k/2}}\sum_{i=1}^{2\floor{k/2}} s(x)_i^2
    \le 32\alpha^2
    = O(\alpha^2) \mper
  \end{equation*}
  By tensorization of KL and Pinsker's inequality,
  \begin{equation*}
    \normtv{\mu(x)^{\tensor m} - \mu_o^{\tensor m}}
    \le
    O(\alpha \sqrt{m}) \mper
  \end{equation*}
  A greedy binary-code packing gives $x_1,\dots,x_M\in\set{\pm1}^{\floor{k/2}}$ with $M=2^{\Omega(\floor{k/2})}=2^{\Omega(k)}$ and pairwise Hamming distance strictly greater than $\floor{k/2}/4$: each chosen word removes at most
  \begin{equation*}
    \sum_{j=0}^{\floor{\floor{k/2}/4}}\binom{\floor{k/2}}j\le2^{H_2(1/4)\floor{k/2}}
  \end{equation*}
  words, where $H_2(1/4)<1$ is the binary entropy. For $i\ne j$,
  \begin{equation*}
    \normo{\mu(x_i)-\mu(x_j)}
    =\frac{8\alpha}{\floor{k/2}}\,d_H(x_i,x_j)
    >\frac{4\alpha}{2}
    =2\alpha.
  \end{equation*}
  Thus the closed $\ell_1$ balls of radius $\alpha$ about the code distributions are disjoint.
  Therefore, applying \Cref{thm:packing-lower-bound} with $\gamma=\min\{1,O(\alpha\sqrt m)\}$ gives
  \begin{equation*}
    n \ge \Omega\Paren{\frac{k + \log(1/\beta)}{\alpha \sqrt{m} \eps}} \mper
  \end{equation*}
\end{proof}

\end{document}